\let\appendices\relax
\definecolor{ballblue}{HTML}{338EA7}
\definecolor{lightseagreen}{HTML}{759D39}
\definecolor{lightred}{HTML}{DD7769}
\definecolor{org}{HTML}{F8A145}
\definecolor{blu}{HTML}{63ACE5}
\definecolor{c1}{HTML}{41B3A3}
\definecolor{c2}{HTML}{3500D3}
\definecolor{test}{HTML}{E5B8FD}
\definecolor{bdy}{HTML}{FDADAD}
\definecolor{corr}{HTML}{B2E1EA}
\definecolor{wrng}{HTML}{CFE8BD}
\newtheorem{defi}{Definition}
\newtheorem{prop}{Proposition}
\newtheorem{lem}{Lemma}
\newtheorem{thm}{Theorem}
\newtheorem{asm}{Assumption}
\newtheorem{col}{Corollary}
\newtheorem{clm}{Claim}
\theoremstyle{definition}
\newtheorem{rem}{Remark}
\newcommand{\colt}[1]{{\color{wrng}{#1}}}
\newcommand{\eqng}[1]{{{#1}}}
\newcommand{\eqnp}[1]{{{#1}}}
\newcommand{\eqnb}[1]{{{#1}}}
\def \ri {\eqng{\rho^\mathcal{I}}}
\def \rm {\eqnp{\rho^\mathcal{M}}}
\def \tb#1{\text{\color{blue}{#1}}}
\def \dir {\mathsf{Dir}}
\def \mf {C_M}
\def \x {\bm{x}}
\def \P {\mathbb{P}}
\def \pair{(\x,y)}
\def \s {\mathsf{softmax}}
\def \R {\mathcal{R}}
\def \ours{$\mathsf{DirMixE}$}
\def \vp {{\mathbb{V}}_+(\ell_{\mathsf{LA}}) }
\def \v {{\mathbb{V}}(\ell_{\mathsf{LA}}) }
\def \la {{\ell_{\mathsf{LA}}}}
\def \vnn{\mathbb{V}_-[\ell]}
\def \vpp{\mathbb{V}_+[\ell]}
\def \vv{\V[\ell]}
\def \ld{{\ell_{{\color{lightseagreen}\mathcal{D}},\mathcal{E}}}}
\def \ls{{\hat{\ell}_{{\color{ballblue}\mathcal{S}},\mathcal{E}}}}
\def \lee {\lesssim}
\def \gee {\gtrsim}
\def  \ee{\asymp}
\def \els {\Ec_{\mathcal{E}}[\ls]}
\def \eld {\Ec_{\mathcal{E}}[\ld]}
\def \eldp {\Ec_{\mathcal{E}}[\ld']}
\def \eldt {\Ec_{\mathcal{E}’}[\ell_{\mathcal{D},\mathcal{E}'}]}
\def  \vls {\Vc_{\mathcal{E}}[\ls]}
\def  \vld {\Vc_{\mathcal{E}}[\ld]}
\def  \vpld {\Vc_{{\color{ballblue}+},\mathcal{E}}[\ld]}
\def \ehls {\Ehc_{\mathcal{E}}[\ls]}
\def \ehlsp {\Ehc_{\mathcal{E}}[\ls']}
\def \ehld {\Ehc_{\mathcal{E}}[\ld]}
\def  \vhls {\Vhc_{\mathcal{E}}[\ls]}
\def  \vphls {\Vhc_{{\color{ballblue}+},\mathcal{E}}[\ls]}
\def  \vhld {\Vhc_{\mathcal{E}}[\ld]}
\def  \vphld {\Vhc_{{\color{ballblue}+},\mathcal{E}}[\ld]}
\def  \vphld {\Vhc_{{\color{ballblue}+},\mathcal{E}}[\ld]}
\def  \ninf {\mathcal{N}_{\infty}(\mathcal{F},\epsilon,n)}
\def \el {\E[\ell]}
\def \ela {\E[\ell|\mathcal{A}(\rho)]}
\def \ehl {\Eh[\ell]}
\def \ehla {\Eh[\ell|\mathcal{A}(\rho)]}
\def \FA{ \mathcal{F}_{\mathcal{A}}(\rho,p)}
\def \d {\mathcal{D}}
\newcommand{\thi}[1] {\bm{\Theta}^{(#1)}}
\newcommand{\this}[1] {\bm{\Theta}^{(#1),\star}}
\def \Th {\bm{\Theta}}
\def \Tha {\bm{\Theta}_{\mathrm{all}}}
\def  \Ail {\bm{A}^{(i)}_l}
\def  \Ails {\bm{A}^{(i),\star}_l}
\def  \Bil {\bm{B}^{(i)}_l}
\def  \Bils {\bm{B}^{(i),\star}_l}
\def  \Uil {\bm{W}^{(i)}_{\uparrow,l}}
\def  \Uils {\bm{W}^{(i),\star}_{\uparrow,l}}
\def  \Dil {\bm{W}^{(i)}_{\downarrow,l}}
\def  \Dils {\bm{W}^{(i),\star}_{\downarrow,l}}
\newcommand{\linf}[1]{\left\|{#1}\right\|_{\infty}}
\newcommand{\thit}[1] {\bm{\tilde{\Theta}}^{(#1)}}
\newcommand{\thist}[1] {\bm{\tilde{\Theta}}^{(#1),\star}}
\newcommand{\newsec}[1]{#1}
\newcommand{\newcont}[1]{#1}
\newcommand{\infn}[1]{\|{#1}\|_\infty}
\newcommand{\dif}[1]{\Delta {#1}}
\newcommand{\supf}[1]{\sup_{ f \in \FA}\left[  {#1}\right]}
\newcommand{\supB}[1]{\sup_{ f \in \mathcal{B}_j}\left[  {#1}\right]}
\newcommand{\ninff}[2] {\mathcal{N}_{\infty}(\mathcal{F},{#1},{#2})}
\newcommand{\nls}[1]{\left\{ #1 \right\}_{l=1}^{n_L}}
\newcommand{\ppt}[1]{\P_{tr}\left[{#1}\right]}
\newcommand{\ppte}[1]{\P_{te}\left[{#1}\right]}
\DeclareMathOperator*{\E}{\mathbb{E}}
\DeclareMathOperator*{\Eh}{\hat{\mathbb{E}}}
\DeclareMathOperator*{\V}{\mathbb{V}}
\DeclareMathOperator*{\Vh}{\hat{\mathbb{V}}}
\DeclareMathOperator*{\Ec}{{\color{lightseagreen}\mathbb{E}}}
\DeclareMathOperator*{\Ehc}{{\color{ballblue}\hat{\mathbb{E}}}}
\DeclareMathOperator*{\Vc}{{\color{lightseagreen}\mathbb{V}}}
\DeclareMathOperator*{\Vhc}{{\color{ballblue}\hat{\mathbb{V}}}}
\DeclareMathOperator*{\argmax}{argmax}
\DeclareMathOperator*{\argmin}{argmin}
\newenvironment{textbo}{}{}
\begin{document}
%
% paper title
% Titles are generally capitalized except for words such as a, an, and, as,
% at, but, by, for, in, nor, of, on, or, the, to and up, which are usually
% not capitalized unless they are the first or last word of the title.
% Linebreaks \\ can be used within to get better formatting as desired.
% Do not put math or special symbols in the title.
\title{DirMixE: Harnessing Test Agnostic Long-tail Recognition with Hierarchical Label Vartiations}

\author{Zhiyong~Yang,
        Qianqian~Xu*,~\IEEEmembership{Senior ~Member,~IEEE,}
        Sicong Li,
        Zitai Wang,
        \\
        Xiaochun~Cao,~\IEEEmembership{Senior ~Member,~IEEE,}
        and~Qingming~Huang*,~\IEEEmembership{Fellow,~IEEE}% <-this % stops a space
        \IEEEcompsocitemizethanks{
        
            \IEEEcompsocthanksitem *: corresponding authors
            \IEEEcompsocthanksitem This work was supported in part by the National Natural Science Foundation of China: 62525212 and U23B2051, in part by the Fundamental Research Funds for the Central Universities Grant No. E4EQ1101, in part by the National Natural Science Foundation of China: 62236008, 62441232, 62025604, 62521007, U21B2038, 62576332, 92370102, and 62502500, in part by the Youth Innovation Promotion Association CAS, in part by the Strategic Priority Research Program of the Chinese Academy of Sciences, Grant No. XDB0680201, in part by the project ZR2025ZD01 supported by Shandong Provincial Natural Science Foundation, in part by the China National Postdoctoral Program for Innovative Talents under Grant BX20240384, in part by Beijing Natural Science Foundation under Grant No. L252144, and in part by the General Program of the Chinese Postdoctoral Science Foundation under Grant No. 2025M771558.
            \IEEEcompsocthanksitem Zhiyong Yang is with School of Computer Science and Technology, University of Chinese Academy of Sciences, Beijing 100049, China (email: \texttt{ yangzhiyong21@ucas.ac.cn}).
            \IEEEcompsocthanksitem Qianqian Xu is with the State Key Laboratory of AI Safety, Institute of Computing Technology, Chinese Academy of Sciences, Beijing 100190, China, (email: \texttt{xuqianqian@ict.ac.cn}). 
            \IEEEcompsocthanksitem Sicong Li is with Institute of Information Engineering, Chinese Academy of Sciences, Beijing 100093, China, and also with School of Cyber Security, University of Chinese Academy of Sciences, Beijing 100049, China (email: \texttt{lisicong24@mails.ucas.ac.cn}).
            \IEEEcompsocthanksitem Zitai Wang is with the State Key Laboratory of AI Safety, Institute of Computing Technology, Chinese Academy of Sciences, Beijing 100190, China, (email: \texttt{wangzitai@ict.ac.cn}).
            \IEEEcompsocthanksitem Xiaochun Cao is with School of Cyber Science and Technology, Shenzhen Campus, Sun Yat-sen University, Shenzhen 518107, China, (email:\texttt{caoxiaochun@mail.sysu.edu.cn}).
            \IEEEcompsocthanksitem Qingming Huang is with the School of Computer Science and Technology, University of Chinese Academy of Sciences, Beijing 101408, China, also with the Key Laboratory of Big Data Mining and Knowledge Management (BDKM), University of Chinese Academy of Sciences, Beijing 101408, China, also with the State Key Laboratory of AI Safety, Institute of Computing Technology, Chinese Academy of Sciences, Beijing 100190, China, and also with Peng Cheng Laboratory, Shenzhen 518055, China (e-mail: \texttt{qmhuang@ucas.ac.cn}).
            \IEEEcompsocthanksitem 
        }
}

% The paper headers
\markboth{Submitted to IEEE TRANSACTIONS ON PATTERN ANALYSIS AND MACHINE INTELLIGENCE}%
{Shell \MakeLowercase{\textit{et al.}}: Bare Demo of IEEEtran.cls for Computer Society Journals}
% The only time the second header will appear is for the odd numbered pages
% after the title page when using the twoside option.
% 
% *** Note that you probably will NOT want to include the author's ***
% *** name in the headers of peer review papers.                   ***
% You can use \ifCLASSOPTIONpeerreview for conditional compilation here if
% you desire.

% The publisher's ID mark at the bottom of the page is less important with
% Computer Society journal papers as those publications place the marks
% outside of the main text columns and, therefore, unlike regular IEEE
% journals, the available text space is not reduced by their presence.
% If you want to put a publisher's ID mark on the page you can do it like
% this:
%\IEEEpubid{0000--0000/00\$00.00~\copyright~2015 IEEE}
% or like this to get the Computer Society new two part style.
%\IEEEpubid{\makebox[\columnwidth]{\hfill 0000--0000/00/\$00.00~\copyright~2015 IEEE}%
%\hspace{\columnsep}\makebox[\columnwidth]{Published by the IEEE Computer Society\hfill}}
% Remember, if you use this you must call \IEEEpubidadjcol in the second
% column for its text to clear the IEEEpubid mark (Computer Society jorunal
% papers don't need this extra clearance.)
\IEEEtitleabstractindextext{%
    \begin{abstract}
    \justifying
    This paper explores test-agnostic long-tail recognition, a challenging long-tail task where the test label distributions are unknown and arbitrarily imbalanced. We argue that the variation in these distributions can be broken down hierarchically into global and local levels. The global ones reflect a broad range of diversity, while the local ones typically arise from milder changes, often focused on a particular neighbor. Traditional methods predominantly use a Mixture-of-Expert (MoE) approach, targeting a few fixed test label distributions that exhibit substantial global variations. However, the local variations are left unconsidered. To address this issue, we propose a new MoE strategy, $\mathsf{DirMixE}$, which assigns experts to different Dirichlet meta-distributions of the label distribution, each targeting a specific aspect of local variations. Additionally, the diversity among these Dirichlet meta-distributions inherently captures global variations. This dual-level approach also leads to a more stable objective function,  allowing us to sample different test distributions better to quantify the mean and variance of performance outcomes. Building on this idea, we develop a general Latent Skill Finetuning (LSF) framework for parameter-efficient finetuning of foundation models. We provide implementations based on LoRA and Adapter. Theoretically, we derive upper bounds on the generalization error for both standard learning and PEFT. Under mild assumptions, we show that the variance-based regularization helps tighten these bounds. Furthermore, we prove that the covering number of the PEFT hypothesis class scales with the number of trainable parameters. Finally, extensive experiments on CIFAR-10-LT, CIFAR-100-LT, ImageNet-LT, and iNaturalist validate the effectiveness of $\mathsf{DirMixE}$.    
\end{abstract}
        
    \begin{IEEEkeywords}
       Long-tail Learning, Test Agnostic Long-tail Recognition
    \end{IEEEkeywords}
}
  \maketitle
  \IEEEdisplaynontitleabstractindextext
  \IEEEpeerreviewmaketitle

\section{Introduction}
Traditional machine learning (ML) methods are usually designed for data with a balanced distribution \cite{krizhevsky2009learning,DBLP:conf/cvpr/DengDSLL009}. In contrast, most real-world data exhibits a long-tailed nature. Specifically, a few classes have a large number of samples (\textit{i.e.}, head classes), while many others have far fewer (\textit{i.e.}, tail classes) \cite{DBLP:journals/pami/ZhangKHYF23}. This pattern produces profound studies in many tasks, including species classification \cite{DBLP:conf/cvpr/HornASCSSAPB18,DBLP:journals/natmi/MiaoLGPYG21}, face recognition \cite{DBLP:conf/iccv/ZhangFWLQ17,DBLP:conf/cvpr/ZhongDWHPTH19}, medical image diagnosis \cite{DBLP:conf/miccai/GaldranCB21}, social image understanding  \cite{tang2016tri,li2018deep}, semantic segmentation \cite{zhong2023understanding,NEURIPS2020_07211688}, and object detection \cite{DBLP:conf/iccv/LinGGHD17}, recommendation system \cite{DBLP:conf/ausai/YeL23, ye2025towards, DBLP:journals/tist/YeL25}.

Hitherto, considerable methods have been developed to address this issue seeing its widespread nature. To evaluate model performance with long-tail distributions, most existing studies train their models on long-tailed data and test them on a balanced dataset \cite{DBLP:conf/cvpr/CuiJLSB19,DBLP:conf/nips/CaoWGAM19,DBLP:conf/iclr/KangXRYGFK20,DBLP:conf/iclr/MenonJRJVK21,kini2021labelimbalanced,DBLP:conf/iccv/CuiZ00J21,DBLP:conf/nips/RangwaniAMR22,DBLP:conf/cvpr/AlshammariWRK22,ddc,DBLP:conf/icml/LiXYWZCH25}. This protocol poses an implicit assumption: the test distribution is balanced and fixed. 
However, this assumption is often invalid, as the test distribution is typically unknown and can change over time.

\begin{figure}[t]  
    \centering
    \includegraphics[width=0.95\columnwidth]{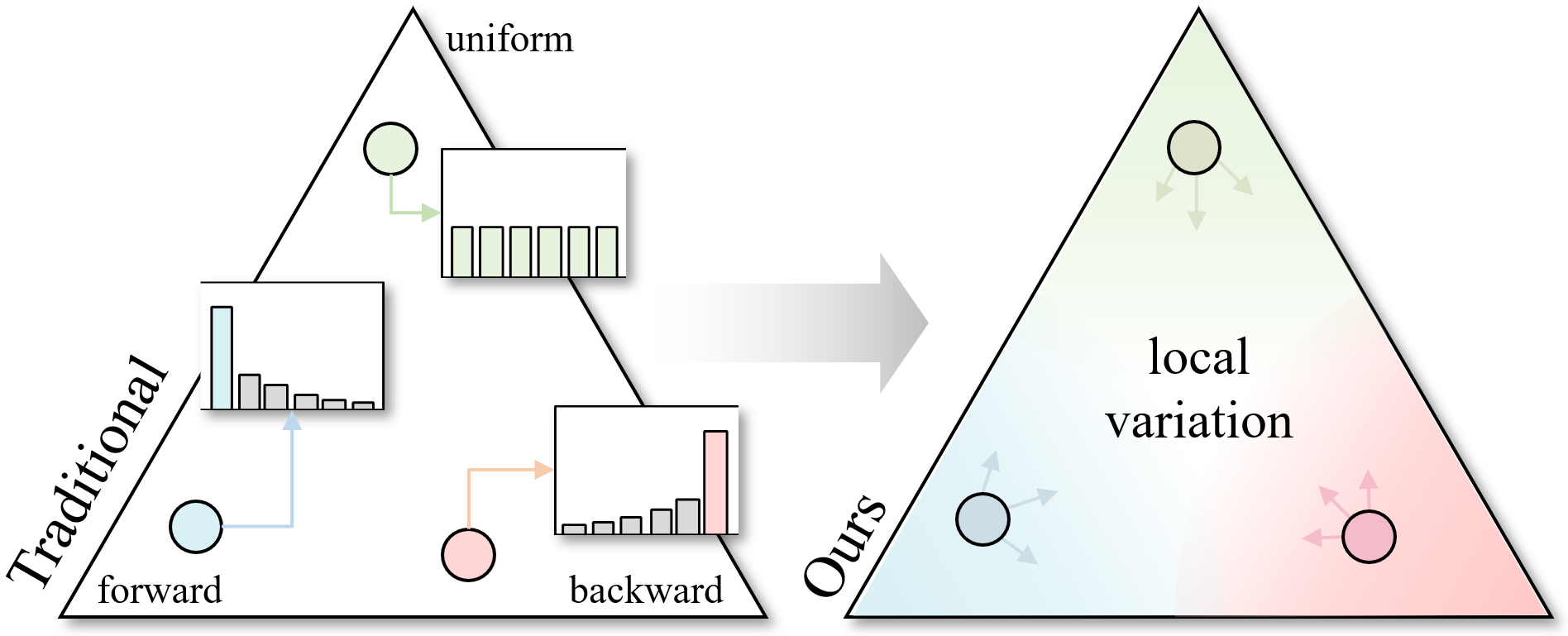} 
    \caption{\label{fig:frame}Traditional methods can only capture \textit{global} variations of label distribution. By contrast, our $\mathsf{DirMixE}$ learns from both \textit{global} and \textit{local} variations, covering more test distributions.}
\end{figure}
To reduce the bias in test distribution, \cite{DBLP:conf/nips/ZhangHHF22} recently proposed a more practical setting called test-agnostic long-tailed recognition. This setting aims at ensuring model efficacy across \textit{heterogeneous} test distributions, each exhibiting varying levels of imbalance (ranging from long-tail and nearly uniform to inversely long-tail distributions). To address this challenge, they propose a novel mixture of experts approach called SADE, crafting distinct experts for each type of distribution. A typical feature of SADE is that each expert therein is assigned to \textit{a fixed} label distribution. We argue that the random variation of the unseen test label distribution could be decomposed into {\textbf{global}} and {\textbf{local}} variations. The global variations capture the heterogeneous diversity across different distributions (say, the significant gap between long-tail and inverse long-tail distribution). The local variations capture the milder changes mostly concentrated within a neighborhood. Back to SADE, the fixed assignment scheme could capture global variations by choosing sufficiently diverse target distributions for different experts. However, the local variations of label distributions can hardly be discovered by fixed points. In this sense, how to comprehensively utilize the hierarchical random variations remains an open problem. To address this issue, we propose a new mixture-of-experts strategy with the following contributions.

First, we explore how to better model the randomness of  the unknown test label distributions in a hierarchical manner. To this end, we propose a probabilistic framework where the label distributions are presumed to be sampled from a meta-distribution. To capture the \textbf{global} variations, we formulate the meta-distribution as a mixture form of heterogeneous distributions. Herein, each component is a Dirichlet distribution capturing the \textbf{local} variations of a specific factor disentangled from the meta-distribution. Furthermore, we allocate a specific expert for each component of the meta-distribution to ensemble expert models enjoying diverse skills. The overall effect is illustrated in Fig.\ref{fig:frame}.
  
Secondly, we take a step further to explore how to evaluate the \textit{overall performance on the meta-distribution} and learn from it. We argue that the average performance is not stable enough for the complex random variations of the test label distributions. In this sense, we develop a Monte Carlo method for estimating the mean and variance of the averaged training loss obtained across different label distributions. From the optimization perspective, minimizing the variance term might prevent the model from performing better than the average level. We thus construct a semi-variance regularization as a surrogate for the empirical variance, selectively penalizing performances below the average level. The attained estimation forms the basis for our objective function in training the expert ensemble.

\newcont{On top of this, to support long-tail learning in foundation models, we propose Latent Skill Finetuning (LSF), a PEFT-based approach for test-agnostic recognition. Each expert adapts to a meta-distribution using PEFT and shares a latent skill space. Assigning proper skills across tasks helps improve performance.}

{We also study the \textit{theoretical generalization ability} of our method. Using variance-based regularization, we derive an upper bound on generalization error based on covering numbers. \newcont{We further assume that the trained models perform well over induced subclasses, which lets us apply conditional concentration inequalities to tighten the bound.} We show that Monte Carlo sampling achieves better generalization than fixed test distributions. \newcont{We implement the upper bound for LSF by leveraging a local Taylor expansion and the geometric properties of the low-rank manifold to improve the bound on the covering number.}}

Finally, we conduct extensive experiments on CIFAR-10, CIFAR-100, ImageNet, and iNaturalist, using both standard deep models and foundation models. Results demonstrate the effectiveness of our proposed methods.

\newcont{This paper extends our ICML 2024 paper \cite{oursnew}. The new version includes several major improvements:

\begin{enumerate}
  \item \textbf{New method for foundation models.} In Sec.\ref{sec:found}, we introduce LSF to improve test-agnostic performance via PEFT. Experts adapt to different meta-distributions by assigning proper latent skills. We implement this with both LoRA and Adapter.
  \item \textbf{New theoretical results.} In Sec.\ref{sec:theo_gen}, we improve our generalization bounds by introducing a prior over well-trained models, leading to induced subclasses. This tighter bound is derived in App.\tb{F.3}-\tb{F.4}. In Sec.\ref{sec:theo_found}, we analyze LSF using a sharp upper bound on covering number, dependent only on trainable parameters, based on Taylor expansion and low-rank geometry (see App.\tb{G}).
  \item \textbf{New experiments.} We evaluate LSF on CIFAR-10-LT, CIFAR-100-LT, ImageNet-LT, and iNaturalist. Sec.\ref{sec:imp} describes implementation details. Results and comparisons are in Sec.\ref{sec:compete}, Sec.\ref{sec:res}, and Sec.\ref{sec:foundexp}. Due to space, full results appear in App.\tb{N.3} and App.\tb{N.4}. An ablation study of our SVD-based initialization is in App.\tb{N.9}.
\end{enumerate}}

% TODO 查看一下实验的编号

\section{Related Work}
This paper will provide a brief overview of several areas closely related to our study.

\textbf{Loss Modification.} The main problem with traditional methods is their poor performance on tail classes. A direct solution is modifying the loss function to enhance performance for these classes. Increasing the weights of tail-class losses is a common tactic \cite{DBLP:conf/icml/MorikBJ99,DBLP:conf/cvpr/CuiJLSB19} along this course. While seemingly reasonable, it can lead to unstable optimization \cite{DBLP:conf/cvpr/CuiJLSB19,DBLP:conf/nips/CaoWGAM19,ddc} and ruin the overall performance. \cite{DBLP:conf/nips/CaoWGAM19} suggests using weighted terms mainly in the later stages of training. To further address the optimization issues caused by unequal weights, another line of research turns to adjust the logits through class-specific operations.
For instance, \cite{DBLP:conf/cvpr/TanWLLOYY20} observes that positive samples of one class can be negative for others, causing discouraging gradients for tail samples. To address this issue they proposed the equalized loss, adding class-dependent operations to ignore these gradients. LDAM imposes a larger margin penalty on tail-class logits for stronger regularization \cite{DBLP:conf/nips/CaoWGAM19}. The LA loss \cite{DBLP:conf/iclr/MenonJRJVK21} uses additive adjustments on the logits to maintain the consistency of balanced error. Later, \cite{DBLP:conf/nips/KiniPOT21} combines additive and multiplicative terms \cite{DBLP:journals/corr/abs-2001-01385} to form a unified approach called the VS loss. Most recently, \cite{ddc} provides a detailed generalization analysis of these loss-modification methods using a local contraction lemma.

% \noindent \textbf{Loss Modification.} Traditional methods often underperform in tail classes. A direct solution is modifying the loss function to enhance performance for these classes. Increasing the weights of tail-class losses is a common tactic \cite{DBLP:conf/icml/MorikBJ99,DBLP:conf/cvpr/CuiJLSB19}. However, this can lead to unstable optimization \cite{DBLP:conf/cvpr/CuiJLSB19,DBLP:conf/nips/CaoWGAM19,ddc} and degrade overall performance. \cite{DBLP:conf/nips/CaoWGAM19} recommend using weighted terms mainly in the later training stages. To alleviate optimization issues from unequal weights, another line of research adjusts logits through class-specific operations. \cite{DBLP:conf/cvpr/TanWLLOYY20} observed that positive samples of one class can be negative for others, causing discouraging gradients for tail samples. They introduced the equalized loss, adding class-dependent operations to ignore these gradients. LDAM applies a larger margin penalty to tail-class logits for stronger regularization \cite{DBLP:conf/nips/CaoWGAM19}, while the LA loss \cite{DBLP:conf/iclr/MenonJRJVK21} employs additive adjustments on logits to preserve balanced error consistency. \cite{DBLP:conf/nips/KiniPOT21} combined additive and multiplicative terms \cite{DBLP:journals/corr/abs-2001-01385} to formulate a unified approach, the VS loss. More recently, \cite{ddc} conducted an extensive generalization analysis of these loss-modification methods using a local contraction lemma.

\noindent \textbf{Experts Ensembling.} In our paper, we address the long-tail issue using a mixture of experts strategy. This approach is a specific form of ensemble learning that combines multiple models to enhance overall model generalization. Early long-tail ensemble learning methods \cite{DBLP:conf/cvpr/ZhouCWC20,DBLP:conf/cvpr/0002021} use two network branches to handle long-tail and uniform distributions separately. During training, these branches' outputs are dynamically merged, shifting the focus progressively from head to tail classes. \cite{DBLP:conf/cvpr/LiWKTWLF20} finds that balanced datasets often outperform long-tail ones. As a result, they split the long-tail dataset into several more balanced subsets and designed networks with multiple branches. Since model diversity is crucial in ensemble learning, recent developments have turned their focus to improving experts with different skills. For instance, in ACE \cite{DBLP:conf/iccv/Cai0H21}, three experts are created to learn from subsets containing all classes, middle+tail classes, and tail classes, respectively. RIDE \cite{DBLP:conf/iclr/WangLM0Y21} uses a KL-divergence based loss to encourage diversity among experts. SADE \cite{DBLP:conf/nips/ZhangHHF22} introduces a concept called test-agnostic long-tail recognition, which demands generalization across various test distributions. This is achieved by combining diverse experts trained with different logit adjustments using self-supervision. Finally, BalPoE \cite{DBLP:conf/cvpr/AimarJFK23} suggests a balanced product of experts approach, blending multiple logit-adjusted experts using a product rule to suit the chosen test distribution.

% \noindent \textbf{Experts Ensembling.} Our paper addresses the long-tail challenge using a mixture of experts strategy, a form of ensemble learning that amalgamates multiple models for enhanced generalization. Initial long-tail ensemble methods \cite{DBLP:conf/cvpr/ZhouCWC20,DBLP:conf/cvpr/0002021} employed dual network branches for long-tail and uniform distributions, merging their outputs dynamically during training. \cite{DBLP:conf/cvpr/LiWKTWLF20} discovered balanced datasets often outperform long-tail ones, leading them to divide the long-tail dataset into more balanced subsets, each managed by a network branch. Recent efforts focus on diversifying experts with distinct capabilities. ACE \cite{DBLP:conf/iccv/Cai0H21} developed three experts for all, middle+tail, and tail classes. RIDE \cite{DBLP:conf/iclr/WangLM0Y21} introduced a KL-divergence based loss to foster expert diversity. SADE \cite{DBLP:conf/nips/ZhangHHF22} proposed test-agnostic long-tail recognition, demanding generalization across varied test distributions, achieved by combining diverse experts trained with self-supervision. BalPoE \cite{DBLP:conf/cvpr/AimarJFK23} recommended a balanced product of experts approach, blending multiple experts using a product rule to match the selected test distribution.

In this paper, we construct a distributional mixture of experts model for the test-agnostic long-tail recognition task. Specifically, instead of using fixed test distributions, we adopt a Dirichlet mixture distribution to simultaneously model the global and local variations of the unseen label distribution. Moreover, our method also enjoys a sharp generalization bound.

\section{Problem Formulation}
In this study, we focus on the challenge of training a good model in the presence of a \textbf{long-tail} data distribution. We define our training data as $\mathcal{S} = \left\{ \x_i, y_i \right\}_{i=1}^N$, sampled from distribution $\mathcal{D}$, with a total of $N$ instances. Here, $\x_i \in \mathcal{X} \subseteq \mathbb{R}^d$ represents the raw input feature of dimensionality $d$ for instance $i$, and $y_i \in {1, 2, \cdots, C}$ denotes the corresponding label in a $C$-class classification problem. Under this setting, if we decompose the joint data distribution as $\d = \ppt{\x|y} \cdot \ppt{y}$, then the label distribution is often highly skewed in the sense that  $\max_i{\ppt{y=i}}/\min_{j}{\ppt{y=j}} >> 1$. This imbalance necessitates the design of specialized learning strategies to ensure adequate performance in the tail classes.

 \textbf{Test Agnostic Long-tail Recognition.}  Contrary to the assumption of a uniform test label distribution, we adopt the test agnostic long-tail recognition  in \cite{DBLP:conf/nips/ZhangHHF22}. This approach evaluates model performance across \textbf{multiple} test sets, each subjects to a different label distribution. From a distributional standpoint, we can frame this as a {label shift problem}, where i) $\mathbb{P}_{tr}\left[ \x|y \right] = \mathbb{P}_{te}\left[ \x|y \right]$ and ii) $\mathbb{P}_{tr}\left[ y \right] \neq \mathbb{P}_{te}\left[ y \right]$ generally. To model the uncertainty of the test label distributions, we assume that the labels are sampled in the following hierarchical process:
\begin{enumerate}
    \item[1)] Due to label shift problem, the test label distribution $\ppte{y}$ should be more than just a constant function.  In this paper, we assume that $\ppte{y}$ is rather sampled from a \textbf{meta-distribution} $\mathcal{E}$ over the simplex $\Delta^C$, ~\textit{i.e.},~ $\P_{te}\left[ y \right] \sim \mathcal{E}$. Moreover, the selected meta-distribution should: i) reflect the diversity of the label distributions with significant different degrees of imbalance (global variations); ii) reflect the local variations of the label distributions to make more stable predictions (local variations). 
    \item[2)] The test data is then sampled from the observed test distribution: $\pair \sim \ppte{\x,y} = \ppt{\x|y} \cdot \ppte{y}$.        
\end{enumerate}

\textbf{Goal.} We aim to develop a well-trained model that demonstrates good performance across the entire spectrum of test label distributions within $\mathcal{E}$. Moreover, if we denote the performance of a model $f$ on test data $\mathcal{D}_{te}$ by a loss function $\ell$, the goal can be expressed as:
\begin{align*}
\E_{\pair \sim \ppte{\x,y}} \ell\left(y,f(\x)\right)~\text{is small for most}~ \P_{te}\left[ y \right] \sim \mathcal{E}.
\end{align*}
\section{DirMixE for traditional Deep Learning Models}
\subsection{Preliminaries}
To initiate our discussion, we revisit the approach to addressing the label shift problem in scenarios where the test label distribution $\P_{te}\left[ y \right]$ is fixed but differs from the training distribution.

Following the convention of the classification problem, we want to learn a logit function $\{f_{\theta,1},\cdots,f_{\theta, C}\}:\mathbb{R}^d\rightarrow \Delta^C$ by minimizing the CE loss over the training data:
\begin{align*}
    \ell_{CE}(f_\theta(\x),y) = -\log\left(\s\left(f_{\theta,y}(\x)\right)\right).
\end{align*}
With the adjusted logits \cite{DBLP:conf/iclr/MenonJRJVK21}, if the softmax function is calibrated,  we have $\s(f_{\theta,y}(\x)) \propto \P_{tr}\left[ y|\x  \right]$. We can then achieve the Fisher consistency in the sense that:
\begin{align*}
    \hat{y} = \argmax_{i \in \{1,\cdots,C\}} f_{\theta,i}(\x) \approx \argmax_{i \in \{1,\cdots,C\}} \P_{tr}\left[ y=i|\x  \right]
\end{align*}
However, due to the label shift problem, the prediction rule suffers a distributional bias since $\ppt{y|\x} \neq \ppte{y|\x}$. To address this issue, \cite{lade} proposes to employ the logit adjustment method \cite{DBLP:conf/iclr/MenonJRJVK21} to modify the loss function. Specifically, one can use \textbf{the following loss function denoted as $\ell_{\mathsf{LA}}\left(f_\theta(\x),y; \P_{te}\right)$}:
\begin{align}\label{eq:la}
 \ell_{CE}\left(f_{\theta,y}(\x)-  \log\left(\frac{\ppte{y}}{\ppt{y}}\right)\right).
\end{align}
To see how the new logarithm factor eliminates the bias, we can again check the probability correspondence when calibration of the softmax score is achieved:
\begin{align*}
    \s(f_{\theta,y}(\x))  \propto \P_{tr}\left[ y|\x \right]\cdot \frac{\ppte{y}}{\ppt{y}} \propto \P_{te}\left[ y|\x \right]
\end{align*} 
This implies that when the model is effectively trained, it is possible to re-establish the Bayes rule.

\subsection{\ours:~Learning with a Dirichlet Mixture of the Experts}

We are ready to formulate our method, called \ours, for unknown label distributions $\P_{te}\left[ y \right]$. Seeing the complicated random variation, \textit{minimizing the mean error is not enough for a stable prediction}. In this sense, we aim to optimize the model performance from a distributional perspective. In other words, we want to ensure the average error is low for most possible choices of $\P_{te}\left[ y \right]$. To do this, we control the average error's first (mean) and second (variance) moment to be small. Mathematically, we try to minimize the following objective function:
\begin{align}\label{eq:obj}
    \E_{ \P_{te} \sim \mathcal{E}}\left[ \R(f;\P_{te})\right] +  \lambda \cdot   \V_{ \P_{te} \sim \mathcal{E}}\left[ \R(f;\P_{te})\right]
\end{align}
where $\R(f;\P_{te}) = \E_{\pair \in \P_{tr}} \ell_{\mathsf{LA}}(f_\theta(\x),y; \P_{te})$ is the expected logit adjustment loss (see Eq.\ref{eq:la}) given the label distribution $\P_{te}$, and $\lambda$ is a trade-off coefficient. 

It is hard to simultaneously minimize all the training losses over different distributions by learning a single model. To address this issue, we propose an ensemble learning strategy to deal with the heterogeneity among test distributions with different experts. We will elucidate the technical aspects of our approach by addressing three key questions: 1) How to model the distribution $\mathcal{E}$ to capture the test distribution of interest effectively, 2) How to assign different experts to distinct test distributions, and 3) How to approximate the objective function using sampling methods and empirical data.  

\textbf{Targeted $\mathcal{E}$.} The target $\mathcal{E}$ is designed to capture the global and local variations of the label distribution, where  we resort to the hierarchical structure of \textbf{mixture distributions}. Specifically, we define $K$ as the number of components in this mixture distribution, using a discrete distribution to select a component $i$ with probability $p_i$. Given the commonality of the Dirichlet distribution for sampling discrete probabilities (label distributions are discrete probabilities ), we define the $i$-th component itself as a Dirichlet distribution with parameters $\bm{\alpha}^{(i)} = [ \alpha^{(i)}_1,\cdots, \alpha^{(i)}_C ]$. This mixture distribution framework facilitates the hierarchical sampling of $\P_{te}$, offering a structured approach to model the variation of test label distributions:
    \begin{align}
        &\P_{te}|i ~\sim~ \mathsf{Dir}\left({\alpha}^{(i)}_1,\cdots, {\alpha}^{(i)}_C\right) \label{eq:p}\\ 
        & i|\bm{p}    ~\sim~ \mathsf{Discrete}\left(p_1,\cdots,p_K\right)\label{eq:i}
    \end{align}
In the meta-distributions, the diversity across different distributions captures the global variations. Moreover, within each component, the Dirichlet distribution makes sure that the local variations concentrated on the mean can be effectively characterized. In this sense, we can better utilize the hierarchy of the variations by virtue of the meta-distribution.
% In this sense, the \underline{\textbf{global variations}} are captured by \underline{\textbf{inter-component variations}}, while the \underline{\textbf{local variations}} are captured by the \underline{\textbf{intra-component variations}}.

    Note that prior research \cite{DBLP:conf/cvpr/AimarJFK23,lade,DBLP:conf/nips/ZhangHHF22} can be regarded as a special case of the proposed approach, wherein the Dirichlet distribution is substituted with a fixed point (Dirac delta distribution). However, as we will demonstrate in the subsequent section, this approach limits the generalization capacity of the resulting model. This limitation arises because such fixed distribution schemes inadequately represent the randomness inherent in the test label distributions.
    \begin{figure}[t]  
        \centering
        \includegraphics[width=0.95\columnwidth]{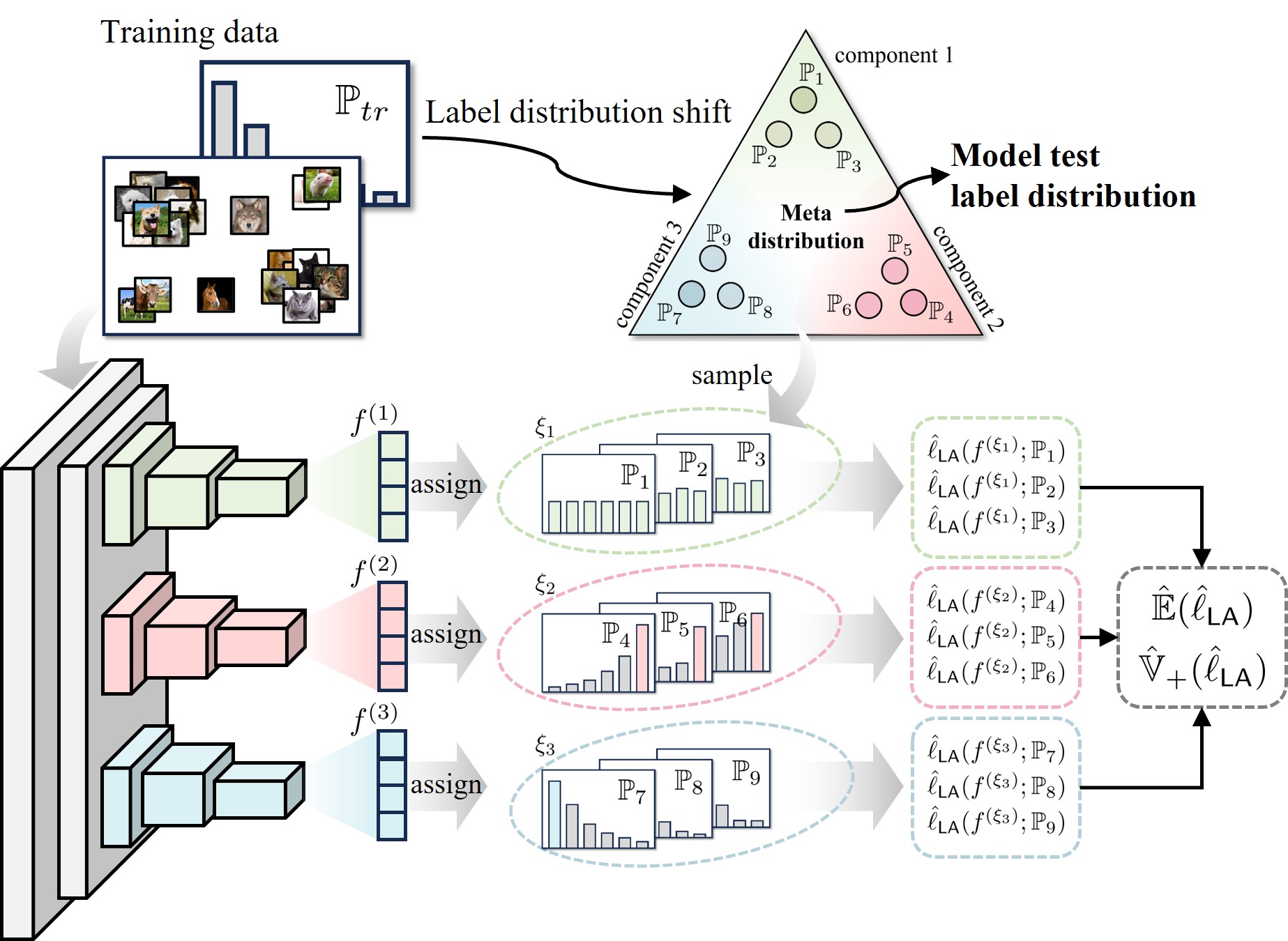} 
        \caption{\label{fig:train}Illustration of the training process, where a hierarchical sampling process is employed to esitmate the empirical risk.}
    \end{figure}
\textbf{Mixture of Experts Strategy.} The overall model is expressed in a compositional structure 
\begin{align*}
    f^{(i)}_\theta(\cdot) = g^{(i)} \circ \psi(\cdot),~ i = 1,2,\cdots,K,
\end{align*}
where $f^{(i)}$ is the model allocated for the $i$-th component, $g^{(i)}$ reflects the distributional-specific feature, $\psi$ reflects the invariant part and is shared across components. 

In the upcoming discussions, we will see how $\mathcal{E}$ and the MoE strategy will help us construct an estimation of the loss distribution. 

\textbf{Empirical Approximation of the Objective Function.} To approximate the population-level objective function (Eq.\ref{eq:obj}), we utilize a Monte Carlo method to estimate the expectation and variance over $\mathcal{E}$. This is achieved by generating empirical data for test label distributions. Each time, we sample a test label distribution ${\P}$ and a component $\xi$ from the hierarchical model as outlined in equations \eqref{eq:p}-\eqref{eq:i}. The generated data set is represented as $\mathcal{P} = \{{\P}_j, \xi_j\}_{j=1}^M$. In this process, each expert $f_\theta^{(\xi_j)}$ is assigned to the sampled pair $({\P}_j, \xi_j)$. The performance of these experts is then evaluated based on the following average loss:
\begin{align*}
    \hat{\ell}_{{\mathsf{LA}}_j} = \frac{1}{N}\sum_{\pair \in \mathcal{D}} \la\left(f_\theta^{(\xi_j)}(\x),y;\P_{j}\right).
\end{align*}
From all of the above, we get the Monte Carlo estimation of the population mean and variance in \eqref{eq:obj} as:
\begin{align*}
    &\hat{\mathbb{E}}(\ell_{\mathsf{LA}}) = \frac{1}{M}\sum_{j=1}^M \hat{\ell}_{{\mathsf{LA}}_j},~\hat{\mathbb{V}}(\ell_{\mathsf{LA}}) = \frac{1}{M}\sum_{j=1}^M \left(\hat{\ell}_{{\mathsf{LA}}_j} -  \hat{\mathbb{E}}(\ell_{\mathsf{LA}})\right)^2. 
\end{align*}
Practically, minimizing the variance term tends to punish $\hat{\ell}_{{\mathsf{LA}}_j}$ being smaller than its empirical mean. This over-regularization can impede the training process, particularly for test distributions that are relatively easier to learn. To circumvent this challenge, we replace the variance penalty with a semi-variance term, which only penalizes $\hat{\ell}_{{\mathsf{LA}}_j}$ that are larger than the mean.
\begin{align*}
    &\hat{\mathbb{V}}_+(\ell_{\mathsf{LA}}) = \frac{1}{M}\sum_{j=1}^M \left(\left(\hat{\ell}_{{\mathsf{LA}}_j} -  \hat{\mathbb{E}}(\ell_{\mathsf{LA}})\right)_+\right)^2. 
\end{align*}
Please see Appendix \tb{A} for the sampling  process in details.

Putting all together, we come to the final objective function:
\begin{align*}
    \min_{g^{(1)},\cdots,g^{(K)},~\psi} ~ \hat{\mathbb{E}}(\ell_{\mathsf{LA}}) + \lambda \cdot \hat{\mathbb{V}}_+(\ell_{\mathsf{LA}}).
\end{align*}
During testing, we adopt the self-supervision strategy as proposed in SADE \cite{DBLP:conf/nips/ZhangHHF22} to learn a set of model averaging weights $\omega^{(1)}_{te}, ~\omega^{(2)}_{te}, ~\cdots, ~\omega^{(K)}_{te}$. The prediction for a test data point is then determined by combining these weights, leading to a weighted average of the outputs from the individual models expressed as follows: 
% This approach allows for a dynamic adjustment of the model's reliance on each expert based on the specific characteristics of the test data, thereby enhancing the overall accuracy and adaptability of the predictions.
\begin{align}\label{eq:merge}
    f_{te}(\x)= \sum_{i=1}^K \omega^{(i)}_{te} \cdot f^{(i)}(\x).
\end{align}

\section{\newsec{\ours~ for Finetuning Foundation Models}}\label{sec:found}
% \subsection{Enhancing DirmixE with Latent Skill Finetuning}\

In the preceding section, we have examined our framework for traditional deep learning methods. The emergence of foundation models presents a compelling opportunity to advance our understanding through an in-depth study of larger-scale models. To capitalize on this, we will investigate the application of the \ours~ principle to facilitate efficient adaptation in foundation models. More specifically, we aim to enhance \ours~ by developing a parameter-efficient fine-tuning framework on top of our setting.

\subsection{Enhancing DirMixE with Latent Skill Finetuning (LSF)}

We adopt a standard additive model to finetune the pretrained foundation model. Specifically, let the pretrained parameters be denoted as $\bm{\theta}$. We update the model using $\bm{\theta} + \Delta{\bm{\theta}}$, where $\bm{\theta}$ remains frozen and we only learn a simple structured matrix $\Delta{\bm{\theta}}$. In our task, we need to simultaneously update the models $f^{(1)}_{\theta_1}, f^{(2)}_{\theta_2}, \cdots, f^{(C)}_{\theta_C}$ corresponding to each component of the meta-distribution. For convenience, we denote the pretrained parameter of $f^{(i)}_{\theta_i}$ as $\bm{\theta}^{(i)}$, and its corresponding update as $\Delta{\bm{\theta}^{(i)}}$. To achieve parameter-efficient finetuning, $\Delta{\bm{\theta}^{(i)}}$ should be structured to reduce model complexity and the number of parameters that need to be updated.

\textbf{Note.} \textbf{For the sake of simplicity, we only discuss our method for a single layer of the model. All the methods could be easily extended to the real-case in an layer-wise manner. }

For clarity, we treat the learning of each $\Delta{\bm{\theta}^{(i)}}$ as a task $i$. A naive approach is to train each $\Delta{\bm{\theta}^{(i)}}$ independently for each meta-distribution. However, since the meta-distributions differ only in their label distributions, they share substantial common knowledge. Thus, training each task separately is inefficient. To address this, we adopt a latent skill-based knowledge-sharing strategy, inspired by how humans learn similar tasks in parallel. Specifically, we assume all tasks share a set of $n_K$ latent skills, each with parameters $\zeta_1, \zeta_2, \cdots, \zeta_{n_K}$. For each task, an assignment function $ass^{(i)}$ maps these skills to:
\begin{align*}
  \Delta{\bm{\theta}^{(i)}} = ass^{(i)}\left(\zeta_1, \zeta_2, \cdots, \zeta_{n_K} \right)
\end{align*}
In this way, knowledge from latent skills can be shared effectively across tasks via proper assignment functions.

This general framework can be instantiated in various ways depending on the form of $\zeta$ and $ass$. In this paper, we consider two specific implementations based on two popular PEFT frameworks: LoRA and Adapter.

\textbf{LoRA-LSF.} In the original LoRA method, $\Delta{\bm{\theta}}$ is modeled with a low-rank structure:
\begin{align*}
  \Delta{\bm{\theta}} = \bm{A}\bm{B}^\top
\end{align*}
where $ \Delta{\bm{\theta}} \in \mathbb{R}^{m\times n}, \bm{A} \in \mathbb{R}^{m\times r}, \bm{B} \in \mathbb{R}^{n\times r}$. When $r \ll \min\{m, n\}$, LoRA significantly reduces the number of trainable parameters. In our LSF setting, the latent skills are parameterized by different choices of $\bm{A}$ and $\bm{B}$, i.e., $\zeta = \{\bm{A}_{(1)}, \cdots, \bm{A}_{(n_K)}\}$ and $\{\bm{B}_{(1)}, \cdots, \bm{B}_{(n_K)}\}$. We assume that $\bm{A}^{(i)}$ and $\bm{B}^{(i)}$ for each task lie in the linear subspaces spanned by $\{\bm{A}_{(j)}\}_{j=1}^{n_K}$ and $\{\bm{B}_{(j)}\}_{j=1}^{n_K}$, respectively. The assignment function $ass^{(i)}$ then produces:
\begin{align}\label{eq:align}
  \bm{A}^{(i)} = \sum_{j} w^i_{A,j} \cdot \bm{A}_{(j)}, \quad \bm{B}^{(i)} = \sum_{j} w^i_{B,j} \cdot \bm{B}_{(j)}
\end{align}
By learning the weights $w^i_j$, we determine the latent skill composition for each task. The final update is then computed as $\Delta{\bm{\theta}}^{(i)} = \bm{A}^{(i)}\bm{B}^{(i)^\top}$. To ensure the latent skill space is expressive enough, we encourage diversity among $\bm{A}_{(1)}, \cdots, \bm{A}_{(n_K)}$ and $\bm{B}_{(1)}, \cdots, \bm{B}_{(n_K)}$. One simple way is to make them mutually orthogonal, i.e., $tr(\bm{A}_{(i)}^\top \bm{A}_{(j)}) = 0$ for all $i \neq j$, and similarly for $\bm{B}$. This leads to the mutual orthogonal regularization term $\mathcal{R}_{org}$:
\begin{align}\label{eq:orgtho}
  \mathcal{R}_{org}(\bm{A},\bm{B}) = \sum_{i \neq j} tr\left(\bm{A}_{(i)}^\top \bm{A}_{(j)}\right) + tr\left(\bm{B}_{(i)}^\top \bm{B}_{(j)}\right)
\end{align}
With the LoRA-based update rule and regularization, our final objective becomes:
\begin{align*}
  \min_{\{\bm{A}_{(i)}\}_{i=1}^{n_K}, \{\bm{B}_{(i)}\}_{i=1}^{n_K}, w_A, w_B} \mathcal{L}_{DirMixE} + \lambda_2 \cdot \mathcal{R}_{org}(\bm{A}, \bm{B})
\end{align*}
where $\mathcal{L}_{DirMixE}$ is the original loss of DirMixE, updated layer-wise by $\bm{W}_0 + \bm{A}^{(i)}\bm{B}^{(i)^\top}$, and $\lambda_2$ is the trade-off coefficient.

\textbf{Adapter-LSF.} AdaptFormer is another popular PEFT method that replaces the MLP layer in Transformer-based models with an AdaptMLP layer. As shown in the figure, the AdaptMLP layer has two branches: the left branch retains the original MLP, while the right branch introduces a scalable module with a \texttt{down-projection$\rightarrow$RELU$\rightarrow$up-projection} structure to produce an intermediate variable $\tilde{\bm{x}}_\ell$:
\begin{align*}
  \tilde{\x}_\ell = \mathsf{RELU}\left( \mathsf{LN}(\x) \cdot \bm{W}_{\downarrow} \right) \cdot \bm{W}_{\uparrow}
\end{align*}
Here, $\x \in \mathbb{R}^d$ is the output from self-attention, $\bm{W}_{\downarrow} \in \mathbb{R}^{d \times \tilde{d}}$ is the down-projection, and $\bm{W}_{\uparrow} \in \mathbb{R}^{\tilde{d} \times d}$ is the up-projection. To reduce parameters, $\tilde{d}$ is set much smaller than $d$. This allows AdaptFormer to incrementally update the pretrained model as:
\begin{align*}
  \x_\ell = \underbrace{\mathsf{MLP}(\mathsf{LN}(\x)) + \x}_{\text{original}} + \underbrace{s \cdot \tilde{\x}_\ell}_{\text{new}}
\end{align*}
where $s$ is a scaling factor. The updated parameters $\Delta \theta$ thus include only $\bm{W}_{\downarrow}$ and $\bm{W}_{\uparrow}$. Similar to LoRA, for each task $i$, we define $\Delta \theta^{(i)} = \{ \bm{W}^{(i)}_{\downarrow}, \bm{W}^{(i)}_{\uparrow} \}$. The latent skill parameters are then $\zeta = \{\bm{W}_{\downarrow,(j)}\}_{j=1}^{n_K}, \{\bm{W}_{\uparrow,(j)}\}_{j=1}^{n_K}$, and each task-specific parameter lies in the subspace spanned by them:
\begin{align*}
  \bm{W}^{(i)}_{\downarrow} = \sum_{j} w^i_{\downarrow, j} \cdot \bm{W}_{\downarrow,(j)}, \quad \bm{W}^{(i)}_{\uparrow} = \sum_{j} w^i_{\uparrow, j} \cdot \bm{W}_{\uparrow,(j)}
\end{align*}
We also apply the mutual orthogonal regularization $\mathcal{R}_{org}(\bm{W}_{\downarrow}, \bm{W}_{\uparrow})$, defined similarly as in Eq.~\ref{eq:orgtho}. The final objective becomes:
\begin{align*}
  \min_{\{\bm{W}_{\downarrow,(i)}\}_{i=1}^{n_K}, \{\bm{W}_{\uparrow,(i)}\}_{i=1}^{n_K}, w_{\downarrow}, w_{\uparrow}} \mathcal{L}_{DirMixE} + \lambda_2 \cdot \mathcal{R}_{org}(\bm{W}_{\downarrow}, \bm{W}_{\uparrow})
\end{align*}

\subsection{Improving the Initialization Regime for LoRA-LSF}

For LoRA-based latent skill finetuning, we explore an improved initialization scheme to enhance gradient feedback in the early training stage. In the standard LoRA setting, $\bm{A}$ is initialized using the Kaiming scheme, while $\bm{B}$ is initialized to $\bm{0}$. A straightforward way to initialize LoRA-LSF is to apply the same setting to each $\bm{A}_{(i)}$ and $\bm{B}_{(i)}$. 

For a given task (meta-distribution) $t$, let its loss function be $\ell^{(t)}$. Then, the initial gradients are:

\begin{align*}
  \eqnb{\frac{ \partial{\ell^{(t)}} } {\partial\bm{A}_{(i)}}} &= \sum_{j=1}^{n_K} w^t_{A,i} \cdot w^t_{B,j} \cdot \bm{X} \cdot \nabla_{\bm{W}}(\ell^{(t)})\cdot \bm{B}_{(j)} \\ 
  &= \eqnb{\bm{0}}, \\ 
  \frac{ \partial{\ell^{(t)}} } {\partial\bm{B}_{(i)}} &= \sum_{j=1}^{n_K} w^t_{A,j} \cdot w^t_{B,i} \cdot \nabla_{\bm{W}}(\ell^{(t)})^\top \cdot \bm{X} \cdot \bm{A}_{(j)}, \\
  \eqnb{\frac{ \partial{\ell^{(t)}} } {w^t_{A,i}}} &= \sum_{j=1}^{n_K} w^t_{B,j} \cdot tr\left(\nabla_{\bm{W}}(\ell^{(t)})^\top \cdot \bm{A}_{(i)}\bm{B}^\top_{(j)}\right) \\
  &= \eqnb{0}, \\
  \eqnb{\frac{ \partial{\ell^{(t)}} } {w^t_{B,i}}} &= \sum_{j=1}^{n_K} w^t_{A,j} \cdot tr\left(\nabla_{\bm{W}}(\ell^{(t)})^\top \cdot \bm{A}_{(j)}\bm{B}^\top_{(i)}\right) \\
  &= \eqnb{0}.
\end{align*}

\begin{figure*}[t]  
  \centering
  
    \subfigure[Density of Gamma w.r.t $\alpha$ ]{
    
      \includegraphics[width=0.48\columnwidth]{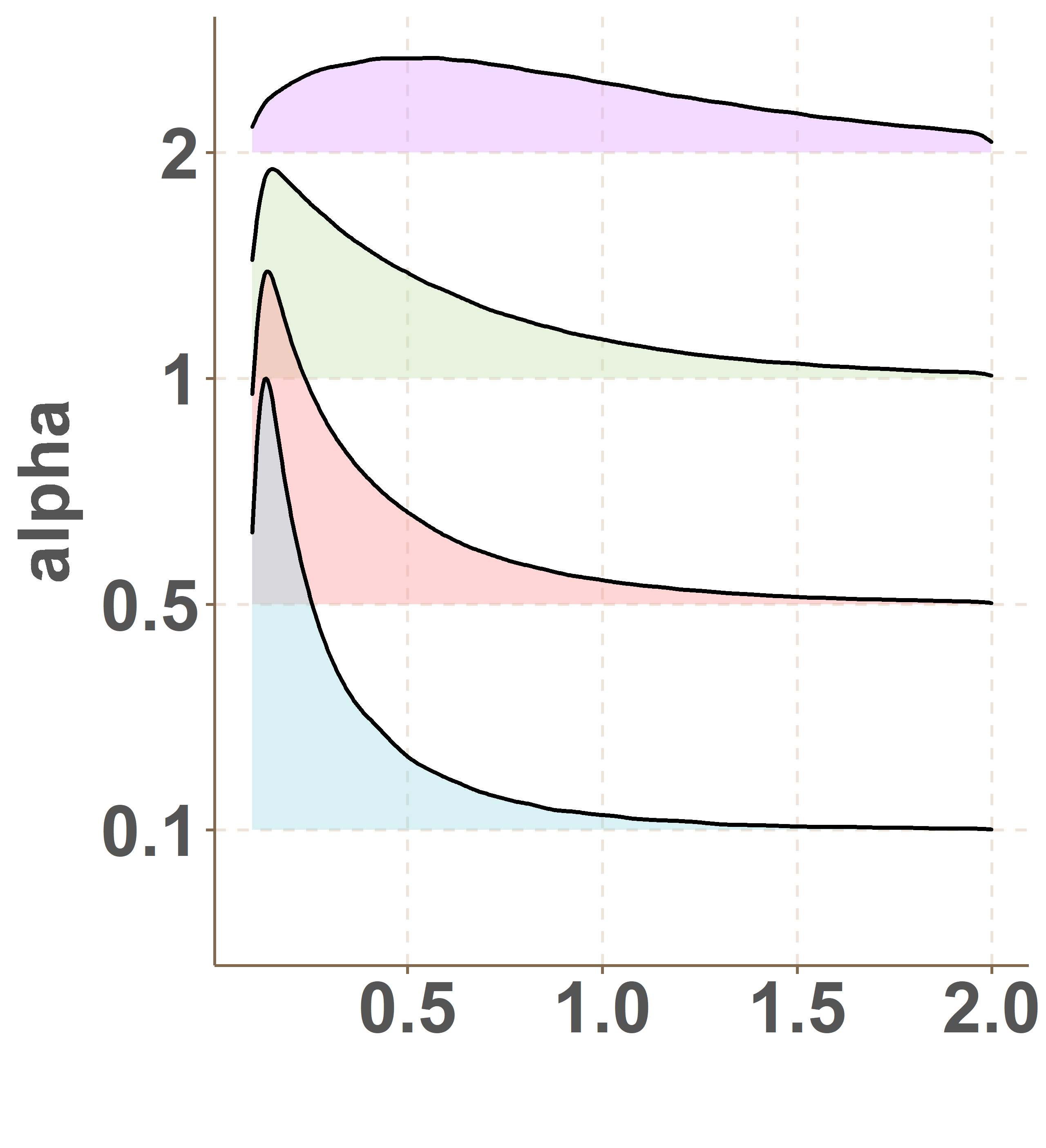} 
    }
    \subfigure[$\rho$ Bound for the Gamma ]{
      \includegraphics[width=0.48\columnwidth]{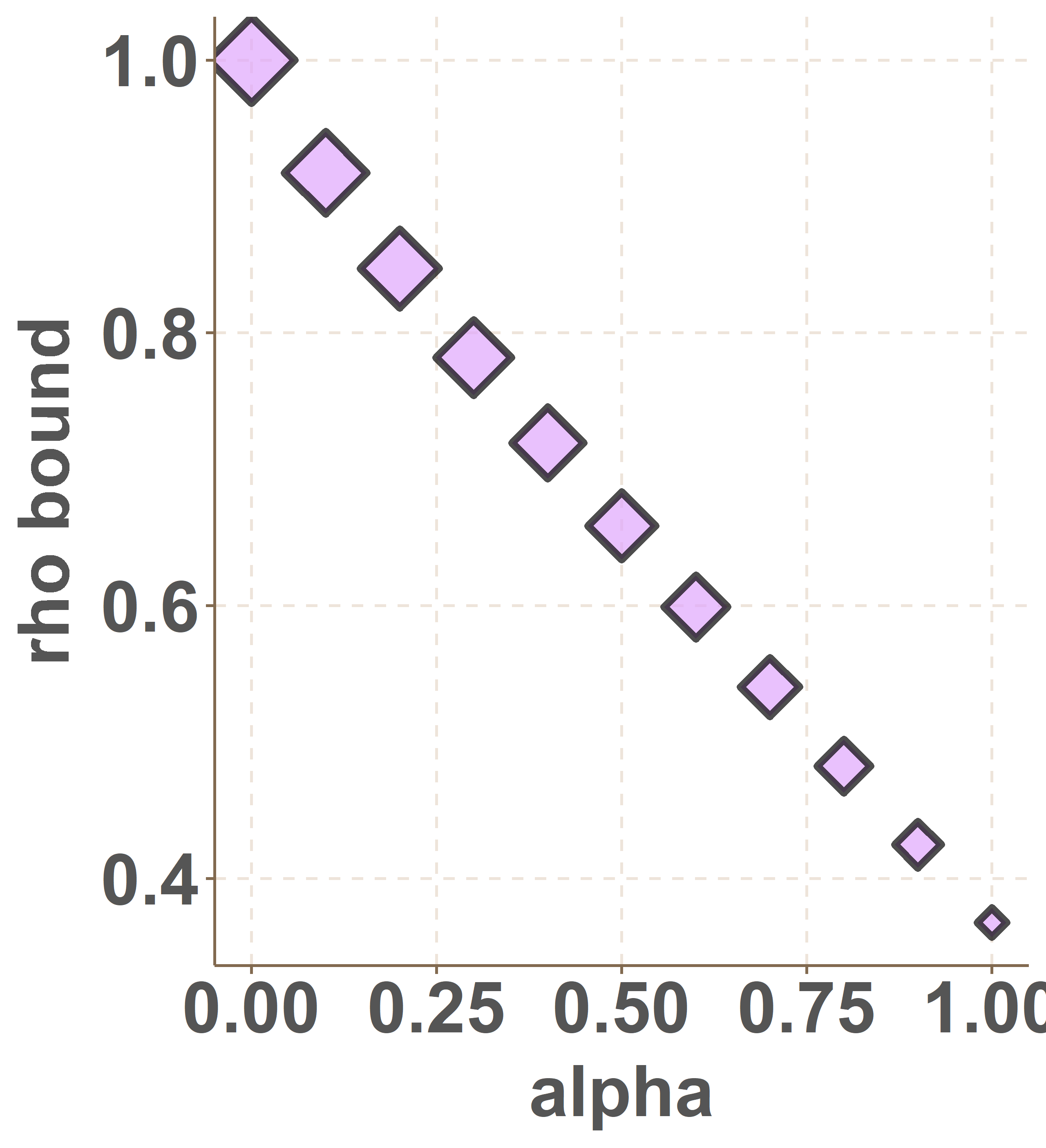} 
    }
    \subfigure[Density of Pareto w.r.t $\theta$ ]{
    
      \includegraphics[width=0.48\columnwidth]{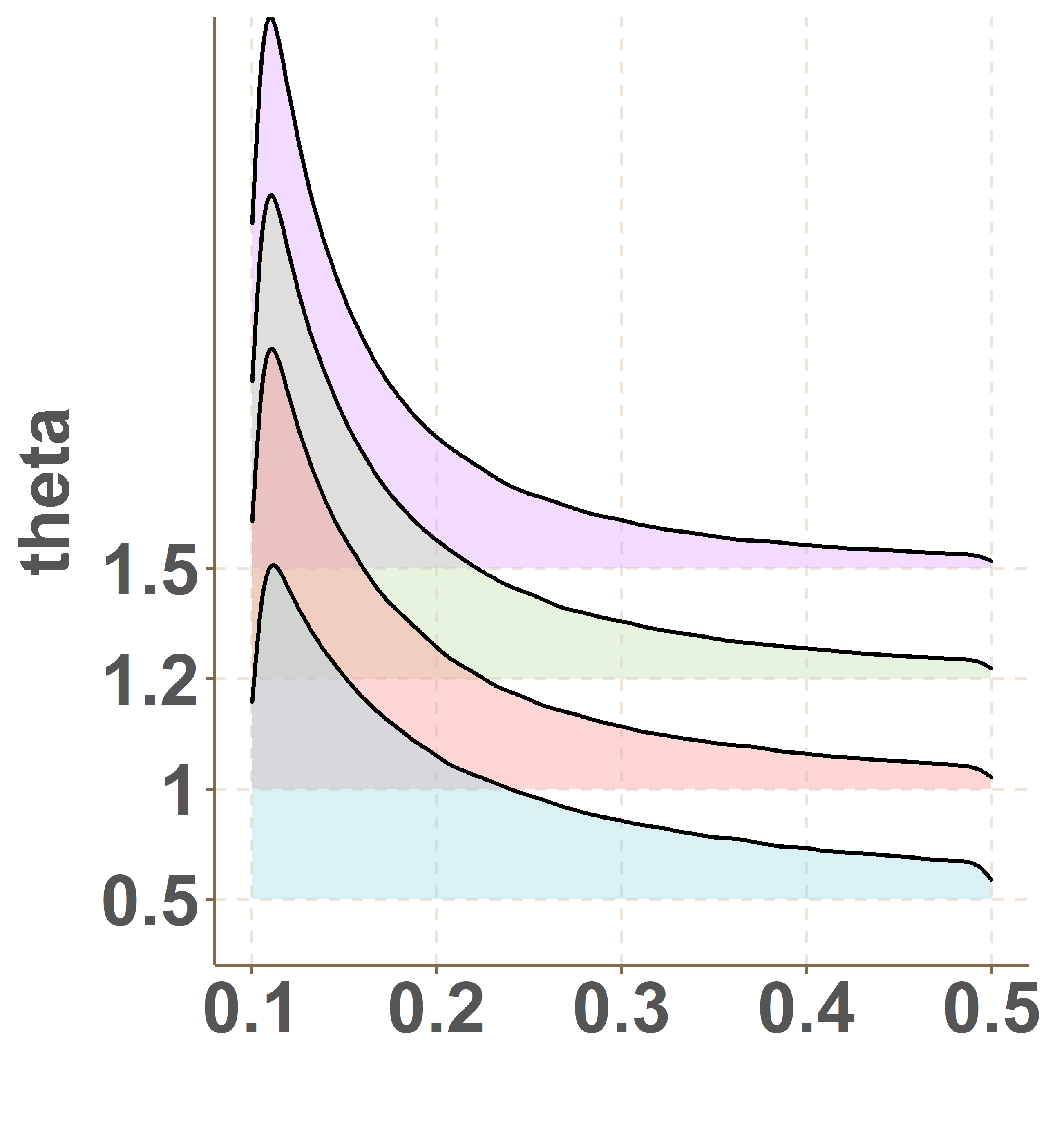} 
    }
    \subfigure[$\rho$ Bound for the Pareto]{
      \includegraphics[width=0.48\columnwidth]{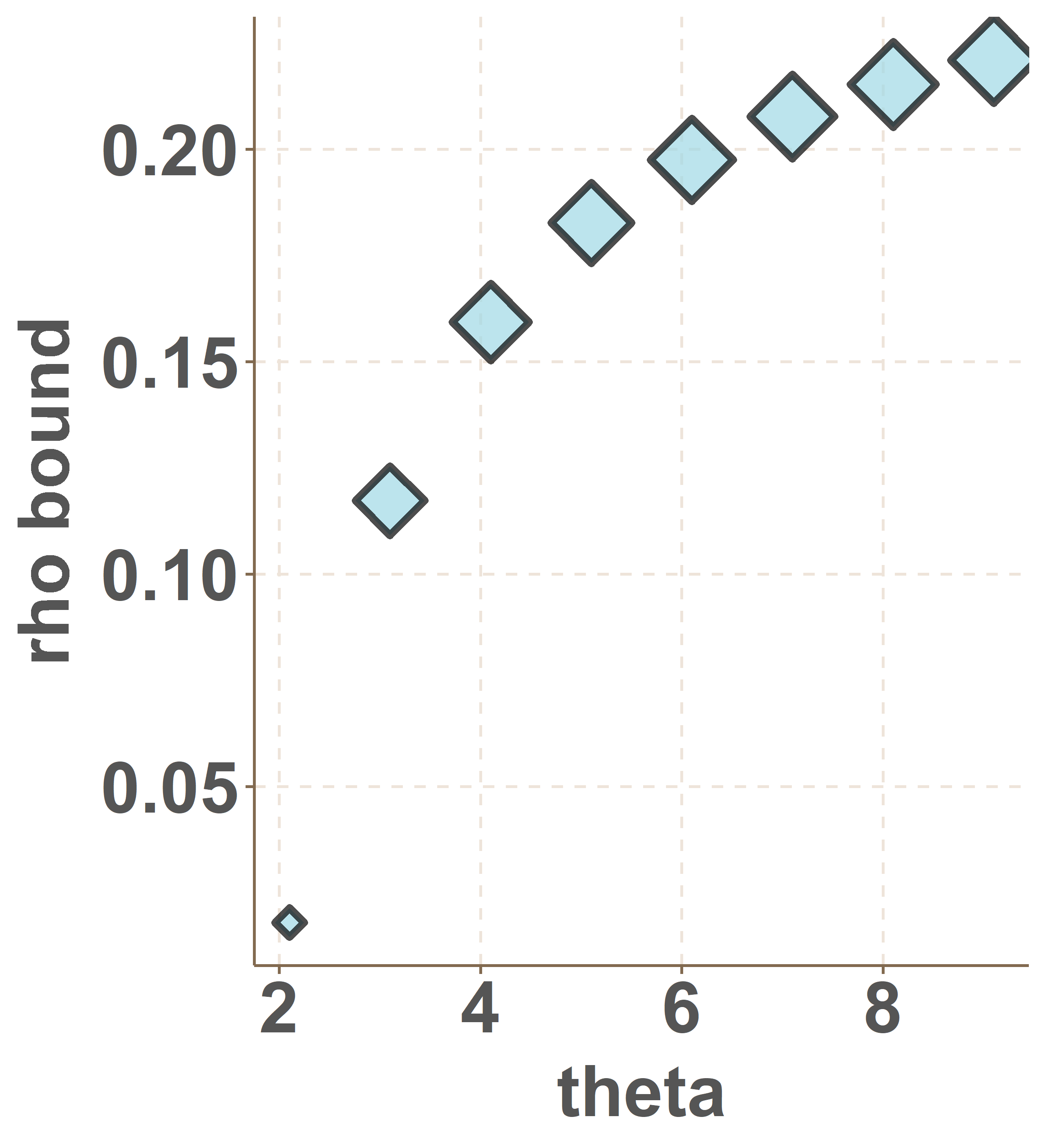} 
    }
    \caption{\label{fig:dist} \textbf{The Upper Bound for $\rho$ for different distributions.} (a) shows the distribution density of the Gamma distribution with $\beta$ fixed as 0.5 and $\alpha$ varied. The corresponding distribution has a tailed shape when $\alpha<1$. (b) shows the upper bound $\rho$ for $\alpha <1$. One can observe that $\rho$ is roughly greater than $0.4$ in this range.  (c) shows the distribution density of the Pareto distribution with $\ell_m$ fixed as $0.1$ and $\theta$ varied. (d) shows the $\rho$ value for $2<\theta \le 10$, we can see that $\rho$ is slightly greater $0.2$ for large $\theta$. In all these observed cases, the assumption $\V_{+} \ee \V$ is admissible.}
  \end{figure*}

Clearly, all gradients except for $\partial \bm{B}_{(i)}$ are initialized to zero. Moreover, the gradients for the assignment weights $w^t_{A,i}$ and $w^t_{B,i}$ are always zero regardless of their initialization. This extremely sparse gradient flow may hinder training in the early stages.

To address this issue, we follow a recent approach \cite{meng2024pissa} that proposes non-trivial initializations using singular value and singular vector reallocation.

Let the singular value decomposition of the pretrained parameter $\bm{W}_0$ be:
\begin{align*}
  \bm{W}_0 = \bm{U} \Sigma V^\top = \sum_{i=1}^{k} \sigma_i \bm{u}_i \bm{v}_i^\top
\end{align*}

Our goal is to reallocate $\sigma_i$, $\bm{u}_i$, and $\bm{v}_i$ to construct the initial parameters $W_{res}$, $\bm{A}^{init}_{(i)}$, and $\bm{B}^{init}_{(i)}$ such that, for each task $t$, we have:
\begin{align*}
  \bm{W}_0 = W_{res} + \bm{A}^{(t,init)} \bm{B}^{(t,init)^\top}
\end{align*}

To do this, we assign singular values to $W_{res}$ and $\bm{A}^{init}_{(i)}, \bm{B}^{init}_{(i)}$. Let $\mathcal{R}_{0}$ denote the index set of singular values (and vectors) used for $W_{res}$, and $\mathcal{R}_{i}$ denote the index set for $\bm{A}^{init}_{(i)}\bm{B}^{init^\top}_{(i)}$, defined as:
\begin{align*}
  &\mathcal{R}_{0} = \{\text{indices of singular values assigned to } \bm{W}_{res}\} \\
  &\mathcal{R}_{i} = \{\text{indices of singular values assigned to } \bm{A}_{(i)}\bm{B}^\top_{(i)}\}
\end{align*}
To make this allocation valid, the index sets must satisfy:
\begin{align*}
  \cup_{i=0}^k \mathcal{R}_{i} = [1:d], \quad \mathcal{R}_{i} \cap \mathcal{R}_{j} = \emptyset
\end{align*}

For notational convenience, for a matrix $\bm{Y}$ and index set $\mathcal{R}$, define $\bm{Y}^{\mathcal{R}} = \bm{Y}[:,\mathcal{R}]$, i.e., the columns of $\bm{Y}$ at indices in $\mathcal{R}$. Based on this, we define the initialization:

\begin{align}\label{eq:res}
  &\bm{W}_{res} = \bm{U}^{\mathcal{R}_{0}} \bm{\Sigma}^{\mathcal{R}_{0}} V^{{\mathcal{R}_{0}}^\top} \tag{initR} \\
  &\bm{A}^{init}_{(i)} = n_K \cdot [\bm{0}, \cdots, \bm{U}^{\mathcal{R}_{i}} \cdot \bm{\Sigma}^{\mathcal{R}_{i}^{1/2}}, \cdots, \bm{0}] \tag{initA} \\
  &\bm{B}^{init}_{(i)} = n_K \cdot [\bm{0}, \cdots, \bm{\Sigma}^{\mathcal{R}_{i}^{1/2}} \cdot \bm{V}^{\mathcal{R}_{i}^\top}, \cdots, \bm{0}]^\top \tag{initB} \\
  &w^k_{A,i} = w^k_{B,i} = \sqrt{\frac{1}{n_K}} \tag{initW}
\end{align}

Under this setting:
\begin{align}\label{eq:orthognal}
  &\bm{A}^{init}_{(i)} \bm{A}^{init^\top}_{(j)} = \bm{A}^{init^\top}_{(i)} \bm{B}^{init^\top}_{(j)} = \bm{B}^{init}_{(i)} \bm{B}^{init^\top}_{(j)} = \bm{0} \\
  &\bm{A}^{init}_{(i)} \bm{B}^{init^\top}_{(i)} = \bm{U}^{\mathcal{R}_{i}} \bm{\Sigma}^{\mathcal{R}_{i}} \bm{V}^{\mathcal{R}_{i}^\top}
\end{align}

Finally, we set:
\begin{align*}
  \bm{A}^{(i)} \bm{B}^{(i)^\top} = \sum_{j} w^i_{A,j} w^i_{B,j} \bm{A}_{(j)} \bm{B}_{(j)}^\top = \bm{U}^{\mathcal{R}_{1:n_K}} \bm{\Sigma}^{\mathcal{R}_{1:n_K}} V^{\mathcal{R}_{1:n_K}^\top}
\end{align*}
Combining this with equations $\eqnb{(initR)}–\eqnb{(initW)}$, we recover:
\begin{align*}
  \bm{W}_{0} = \bm{W}_{res} + \bm{A}^{(i)} \bm{B}^{(i)^\top}
\end{align*}
demonstrating that this initialization can reconstruct the original pretrained parameter. 

Next, we finish this topic with an interesting side-product of this initialization scheme. Specifically Eq.(\ref{eq:orthognal}) automatically ensures that the $\bm{A}^{init}_{(i)} \perp \bm{A}^{init}_{(j)}$ and $\bm{B}^{init}_{(i)} \perp \bm{B}^{init}_{(j)}$. In this way, the latent parameters are ensured to be diversified during the beginning phase.

\section{Theoretical Analysis}

For the sake of simplicity, we will adopt \textbf{asymptotic notations} to perform magnitude comparison. Specifically, $f(n) \lee g(n)$ denotes $\exists C >0, f(n) \le C \cdot g(n)$, and $f(n) \gee g(n)$ \textit{vice versa}. $f(n) \ee g(n)$ means $f(n) \lee g(n)$ and  $f(n) \gee g(n)$  hold simultaneously. All the proofs are deferred to Appendix \tb{D} and \tb{F}.

    \subsection{Upper Bounding Variance with Semi-Variance}\label{sec:semi}
    In \ours, we replace the variance in empirical estimation with the semi-variance of loss to enhance optimization. As shown in Thm.\ref{thm:gen}, the stochastic error associated with this approach remains small if $\V(\la) \ee \V_{+}(\la)$. Given that semi-variance is inherently smaller than variance, our analysis primarily focuses on validating the condition $\V(\la) \lee \V_{+}(\la)$.
    
    Note that, for a well-trained model, there should be a negative correlation between the probability density and the magnitude of loss. Based on this assumption, we show that ${\mathbb{V}}(\ell_{\mathsf{LA}}) \lee {\mathbb{V}}_+(\ell_{\mathsf{LA}})$
    for certain types of distributions satisfying the negative correlation, ranging from light-tail  (such as the exponential distribution) to heavy-tail ones (such as the Pareto distribution) distributions. For clarity and simplicity, we will henceforth use\textbf{ $\ell$ as shorthand for $\la$} in this subsection.
    
    \begin{thm}     \label{thm:rho}
        Let $\rho = \frac{\vpp}{\vv}$, then we have the following results for different distributions:
    
    \begin{enumerate}
        \item[a)] If $\ell$ subjects to an \textbf{exponential} distribution, \textit{i.e} the p.d.f  $p(\ell) \propto \exp(-c\ell)$, then we have: $\rho \ge \exp(-1)$.
        \item[b)] If $\ell$ subjects a \textbf{Gamma} distribution with parameters $\alpha,\beta$, \textit{i.e}, the p.d.f  $p(\ell) \propto \ell^{\alpha -1} \cdot \exp(-\beta \cdot\ell)$, then we have: $\rho \ge 1- \alpha \cdot \frac{\Gamma^\uparrow(\alpha,\alpha)}{\Gamma(\alpha)}$, where $\Gamma^\uparrow$ is the lower incomplete gamma : $\Gamma^\uparrow(s,x) = \frac{1}{\Gamma(\alpha)} \cdot \int_0^x t^{s -1} \cdot \exp(-t) dt  $. 
        \item[c)] If $\ell$ subjects to a \textbf{Pareto} distribution with parameter $\theta$, i.e the p.d.f  $p(\ell) \propto (\ell/\ell_m)^{-\theta}$ for some $\ell_m >0$, then we have: 
        \begin{align*}
            \rho=&   (\theta -1)^2\cdot\left[ 1- \phi(\theta)^{2-\theta} \right] +\\ 
            &\theta\cdot (\theta -2) \cdot \left[ 2\cdot \phi(\theta)^{1-\theta} - \phi(\theta)^{-\theta} -1 \right]
        \end{align*}
        where $\phi(\theta) = {\theta}/({\theta-1})$. 
    \end{enumerate}
    \end{thm}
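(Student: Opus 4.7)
The plan is to exploit the decomposition $\vv = \vpp + \vnn$, where $\vnn := \mathbb{E}\big[((\mu - \ell)_+)^2\big]$ collects the below-mean contribution and $\mu := \mathbb{E}[\ell]$, so that $\rho = 1 - \vnn/\vv$. It then suffices to upper bound $\vnn$ in parts (a) and (b), and to compute it exactly in part (c).

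For parts (a) and (b), the support of $\ell$ is $[0,\infty)$, so on the sub-mean region $[0,\mu]$ one has the crude estimate $(\mu - \ell)^2 \le \mu^2$. This yields the uniform bound $\vnn \le \mu^2 \cdot \mathbb{P}[\ell < \mu]$ and therefore $\rho \ge 1 - \mu^2 \mathbb{P}[\ell < \mu]/\vv$. For the exponential distribution, plugging in $\mu = 1/c$, $\vv = 1/c^2$ and $\mathbb{P}[\ell < 1/c] = 1 - e^{-1}$ gives $\rho \ge e^{-1}$. For the Gamma distribution, I would use $\mu = \alpha/\beta$, $\vv = \alpha/\beta^2$, and the substitution $t = \beta\ell$ to recognize $\mathbb{P}[\ell < \alpha/\beta]$ as the normalized lower incomplete gamma $\Gamma^\uparrow(\alpha,\alpha)/\Gamma(\alpha)$, delivering $\rho \ge 1 - \alpha \cdot \Gamma^\uparrow(\alpha,\alpha)/\Gamma(\alpha)$.

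For part (c), the Pareto support is $[\ell_m, \infty)$ with $\ell_m > 0$, so the bound $(\mu - \ell)^2 \le \mu^2$ is far too loose and a direct computation is required. I would substitute $t = \ell/\mu$, expand $(\ell - \mu)^2 = \ell^2 - 2\mu\ell + \mu^2$, and integrate each monomial against the Pareto density $p(\ell) \propto \ell^{-(\theta+1)}$ over $[\ell_m,\mu]$ (equivalently over $[\mu,\infty)$, since the two contributions sum to $\vv$) to obtain closed forms in $\phi(\theta) = \mu/\ell_m = \theta/(\theta-1)$. Dividing through by the Pareto variance $\vv = \theta\ell_m^2/[(\theta-1)^2(\theta-2)]$ and collecting the three resulting polynomial pieces yields the stated closed-form expression in $\phi(\theta)$.

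The main technical obstacle is the algebraic bookkeeping in part (c): three incomplete-integral contributions in powers of $\phi(\theta)$ must be combined so that the constant term cancels and the coefficients of $\phi^{2-\theta}$, $\phi^{1-\theta}$, and $\phi^{-\theta}$ assemble into the prescribed form, with the two prefactors $(\theta-1)^2$ and $\theta(\theta-2)$ arising from the identities $\phi(\theta-1) = \theta$ and $(\theta-1)^2 - \theta(\theta-2) = 1$. Parts (a) and (b), by contrast, rest only on the elementary inequality $(\mu - \ell)^2 \le \mu^2$ on $[0,\mu]$ and are otherwise routine.
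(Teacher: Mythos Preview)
Your proposal is correct and matches the paper's proof essentially line for line: the paper also uses $\rho = 1 - \vnn/\vv$, bounds $\vnn \le \mu^{2}\,\mathbb{P}[\ell<\mu]$ via $(\mu-\ell)^{2}\le\mu^{2}$ on $[0,\mu]$ for parts (a) and (b), and for part (c) expands $(\ell-\mu)^{2}$ and integrates the three resulting monomials against the Pareto density over $[\ell_m,\mu]$, then divides by $\vv=\theta\ell_m^{2}/[(\theta-1)^{2}(\theta-2)]$ to obtain the stated closed form in $\phi(\theta)$.
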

    % \begin{figure}[h]  
    %     \centering
    %       \subfigure[Our's Setting]{
          
    %         \includegraphics[width=0.44\textwidth]{./figs/weights_figure(1)/weights_figure/cifar100_ours_setting.png} 
    %       }
    %       \subfigure[SADE's Setting]{
    %         \includegraphics[width=0.44\textwidth]{./figs/weights_figure(1)/weights_figure/cifar100_sade_setting.png} 
    %       }
         
    %       \caption{\label{fig:weights} \textbf{Weight Assignment in the Self-supervised Aggregations on CIFAR-100.} \textbf{F,U,B} represent the forward, uniform and backward distributions, see Appendix \ref{app:exp} for more.}
    %     \end{figure} 
    According to the theorem, we present practical observation on the Gamma and Pareto distribution in Fig.\ref{fig:dist}-(a). The gamma distributions are tail-shaped when $\alpha <1$. In this sense, we plot the theoretical lower bound $\rho$ for $\alpha <1$ in Fig.\ref{fig:dist}-(b). The results show that $\V_+/\V > 0.4$ in most cases. Similar Fig.\ref{fig:dist}-(c) shows that the Pareto distributions are generally tail-shaped. We then plot $\rho$ for $2<\theta<10$ in Fig.\ref{fig:dist}-(d). The results show that the  $\V_+/\V$ roughly resides in $[0.1,0.3]$ when $\theta \ge 3$.  Moreover, for the exponential distribution, we have a universal upper bound for $\rho$ as $\exp(-1)$, which is greater than $0.2$. In this sense, the proposed claim $\V_{+} \ee \V$ holds for a wide span of tail-shaped distributions.

    In practical terms, $\P_{te}$ is sampled from a mixture distribution. Consequently, it is probable that the loss function is subject to a mixture distribution rather than a single distribution. The following theorem shows that $\rho$ for a mixture distribution can also be lower bounded by the mean of the respective $\rho_i$ values of its component distributions.

    \begin{thm}     \label{thm:multi}
            Let $\ell$ be sampled from a \textbf{mixture} distribution such that $p(\ell) = \sum_{k=1}^K \omega_k \cdot p_k(\ell)$, where $p_k(\ell)$ is the p.d.f. for the component $k$. Furthermore, for each component, we denote: $\E_{k}[\ell] = \mu_k, \V_k[\ell] = \sigma^2_k$. Under the assumption that 1) $\P\left[ \ell \neq \E[\ell] \right]$ with probability one w.r.t the mixture distribution,
            2) $\vv/\sigma_i^2  \ge \max\left\{  \frac{\vnn_{i,j}}{\vnn_{i }} ,1\right\}$, then $\rho$ can be upper bounded by the average of $\rho_i$ in the sense that:
        \begin{align*}
            \rho \le \sum_{i} \omega_i \cdot \rho_i,
        \end{align*}
        where $\vnn =  \vv- \vpp$;~ $\rho_i,\vnn_i$ are the corresponding versions of $\rho,\vnn$ on the $i$-th component, and
        \begin{align*}
            \vnn_{i,j} = \int_{0}^\infty \left((\ell - \mu_i)_-\right)^2\cdot p_{j}(\ell) \cdot d\ell.
        \end{align*}
    \end{thm}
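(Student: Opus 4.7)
The plan is to exploit convexity of the mixture's centered lower semi-variance in its centering parameter, combined with the stated cross-variance hypothesis. First observe that since $\rho = 1 - \vnn/\vv$ and $\rho_i = 1 - \vnn_i/\sigma_i^2$, the target $\rho \le \sum_i \omega_i \rho_i$ is equivalent to the lower-semi-variance bound
\begin{align*}
\frac{\vnn}{\vv} \;\ge\; \sum_i \omega_i \frac{\vnn_i}{\sigma_i^2}.
\end{align*}

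The main technical object will be $V(c) := \int_0^\infty (\ell - c)_-^2\, p(\ell)\, d\ell$, the mixture's lower semi-variance centered at $c$. Since $(c-\ell)_+^2$ is convex in $c$ for each fixed $\ell$, integration preserves convexity and $V$ is convex in $c$. Two identities drive the argument: (a) $V(\mu) = \vnn$, and (b) expanding the mixture pointwise gives $V(\mu_i) = \sum_j \omega_j\, \vnn_{i,j}$, which links $V$ at a component mean to the cross-variances $\vnn_{i,j}$ that feature in the assumption.

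The key steps, in order: (i) apply Jensen's inequality to the convex $V$ at weights $\omega_i$ and points $\mu_i$, using $\mu = \sum_i \omega_i \mu_i$, to relate $V(\mu)$ and $\sum_i \omega_i V(\mu_i)$; (ii) rewrite the hypothesis as $\vnn_{i,j} \le (\vv/\sigma_i^2)\, \vnn_i$ and sum over $j$ with weights $\omega_j$ (summing to one) to obtain $V(\mu_i) \le (\vv/\sigma_i^2)\, \vnn_i$; (iii) chain (i) and (ii) to compare $\vnn$ with $\vv \sum_i \omega_i \vnn_i/\sigma_i^2$; (iv) divide by $\vv$ and translate back via $\rho = 1 - \vnn/\vv$ to recover the stated inequality. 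Assumption~(1) enters only to guarantee $\sigma_i^2, \vv > 0$ so that all the ratios above are well defined.

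The main obstacle is aligning the direction of the chained inequality: the natural combination of convex Jensen with the stated cross-variance bound produces $V(\mu) \le \sum_i \omega_i V(\mu_i) \le \vv \sum_i \omega_i \vnn_i/\sigma_i^2$, which corresponds to $\rho \ge \sum_i \omega_i \rho_i$ rather than the stated direction. To recover the stated $\rho \le \sum_i \omega_i \rho_i$, I would run the symmetric argument with the convex upper semi-variance $W(c) := \int_0^\infty (\ell-c)_+^2\, p(\ell)\, d\ell$ under the positive-part analog $\vpp_{i,j}/\vpp_i \le \vv/\sigma_i^2$ of the cross-variance bound; the same Jensen-plus-assumption chain then directly yields $\vpp \le \vv \sum_i \omega_i \vpp_i/\sigma_i^2$, and dividing by $\vv$ gives $\rho \le \sum_i \omega_i \rho_i$ as claimed. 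Verifying that this dual hypothesis is what the argument requires, and that it interacts correctly with the between-mean contribution $\sum_i \omega_i (\mu_i - \mu)^2$ in the law of total variance $\vv = \sum_i \omega_i \sigma_i^2 + \sum_i \omega_i (\mu_i - \mu)^2$, is the delicate calibration step of the proof.
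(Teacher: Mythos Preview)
Your steps (i)--(iv) are exactly the paper's proof: apply convexity of $x \mapsto ((x)_-)^2$ (equivalently, Jensen for your $V(c)$) to get $\vnn \le \sum_{i,j}\omega_i\omega_j\,\vnn_{i,j}$, use the assumption $\vnn_{i,j}/\vnn_i \le \vv/\sigma_i^2$ to replace each $\vnn_{i,j}/\vv$ by $\vnn_i/\sigma_i^2$, sum out $j$, and translate back via $\rho = 1 - \vnn/\vv$. The paper's own derivation terminates at $\rho \ge \sum_i \omega_i \rho_i$, exactly the direction you obtained.

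The ``obstacle'' you identify is not a flaw in your argument but a typo in the theorem statement: the inequality should read $\rho \ge \sum_i \omega_i \rho_i$ (a \emph{lower} bound), and the paper's proof establishes precisely that. The surrounding discussion confirms this is the intended direction --- the whole point of the section is to show $\vp \asymp \vv$, i.e., that $\rho$ stays bounded \emph{away from zero}; an upper bound on $\rho$ would be useless for that purpose, while a lower bound in terms of the component ratios $\rho_i$ (each controlled by Theorem~\ref{thm:rho}) is exactly what is needed. Your proposed dual argument with the upper semi-variance $W(c)$ is therefore unnecessary, and in any case would require the positive-part hypothesis $\vpp_{i,j}/\vpp_i \le \vv/\sigma_i^2$ rather than the negative-part one actually assumed.
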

    
    This implies that controlling $\rho$ is feasible if we can manage each $\rho_i$. Ultimately, this leads to the conclusion that $\v \asymp \vp$ holds under mild conditions.

\subsection{\newsec{Generalization Ability of \ours}}\label{sec:theo_gen}
In this section, we begin exploring the generalization ability of the proposed framework.

\begin{textbo}
The theoretical framework developed in this work addresses several fundamental challenges that are not fully explored by existing studies, particularly in the context of test-agnostic long-tail recognition and the fine-tuning of foundation models.
\begin{enumerate}
\item[a)] Classical generalization theory assumes a fixed data distribution. Recent methods (e.g., SADE) use multiple experts for fixed test distributions but lack a stochastic model for test-label distribution, preventing analysis when distributions change. We model the test distribution as drawn from a meta-distribution $\mathcal{E}$ (Dirichlet mixture). How to derive a uniform generalization bound over hierarchical generalization error is a key challenge.

\item[b)] Traditional worst-case uniform convergence bounds are loose for over-parameterized models, as they ignore that training finds models in good regions. It is crucial to mathematically incorporate such regions into the hypothesis space for sharper bounds.

\item[c)] Variance-penalized objectives yield sharp generalization bounds under fixed test-label distributions. But it is still challenging to extend these to our complex scenario.

\item[d)] PEFT updates only a small parameter subset of foundation models, but traditional complexity theory ties complexity to total parameters. It is necessary to improve the conventional bound to bridge this gap.

\end{enumerate}
\end{textbo}

\textbf{Please See App.\tb{B}, for a detailed discussion.
}

To demonstrate the generalization ability of our proposed method, we aim to answer the question that how well does the proposed training method generalize to unseen label distributions. Specifically, we address this by presenting an upper bound of the expected loss over all test label distributions sampled from $\mathcal{E}'$ (which may differ from $\mathcal{E}$) and $\mathcal{D}$. 

\textbf{The Hierarchy of Stochastic Error.} Recall that we adopt a stratified sampling process for the Dirichlet mixture distribution. For each mixture component $j$, we sample $M$ label distributions:
\begin{align*}
    \mathcal{P} \sim \mathcal{E}^M, \quad \mathcal{P} = \left\{\P_1,\cdots,\P_M \right\}
\end{align*}
This forms an empirical sample of the test label distributions. We denote the corresponding overall estimations as $\ehld, \vhld$. In addition, we have a fixed training dataset:
\begin{align*}
    \mathcal{S} \sim \mathcal{D}^N, \quad \mathcal{S} = \{\x_i, y_i\}_{i=1}^N,
\end{align*}
which serves as an empirical approximation of the training data distribution. We denote empirical (and population) estimations using the subscript $\mathcal{S}$ (and $\mathcal{D}$, respectively). To estimate the excess risk, we analyze the hierarchical stochastic error caused by both label distribution and data sampling. 

In our proof, we use the following intermediate empirical estimations.

The first set relates to a fixed test label distribution $\P_i \in \mathcal{P}$. Specifically, we define $\ell_{\mathcal{D},\mathcal{E},i}$ and $\ell_{\mathcal{S},\mathcal{E},i}$ as the expected loss over $\mathcal{D}$ and the empirical average loss over $\mathcal{S}$, respectively:
\begin{align}\label{eq:quant1}
    &\ell_{\mathcal{D},\mathcal{E},i} = \E_{\mathcal{D}}\left[ \la(f^{(\xi_i)}(\x), y; \P_i) \right] \notag \\
    &\ell_{\mathcal{S},\mathcal{E},i} = \Eh_{\mathcal{S}}\left[ \la(f^{(\xi_i)}(\x), y; \P_i) \right]
\end{align}
We further define $\ehls$ as the empirical average of $\ell_{\mathcal{S},\mathcal{E},i}$ over sampled label distributions in the Monte Carlo process; $\eld$ as the expected loss over the meta-distribution of label distributions; and $\vphls$ as the empirical semi-variance of $\ell_{\mathcal{S},\mathcal{E},i}$ over the sampled distributions. Our goal is to upper-bound the excess risk:
\begin{align*}
  \sup_{f \in \mathcal{H}} \left[\eld - \ehls \right]
\end{align*}
where $\mathcal{H}$ is a predefined hypothesis set. We aim to explore two important properties of this upper bound. First, since we use semi-variance as a regularization term, the bound should decrease as the regularization diminishes. Second, the choice of $\mathcal{H}$ is crucial. It is well known that worst-case uniform convergence leads to loose bounds for deep learning and foundation models. To avoid purely uniform arguments, we introduce a natural prior: our target models are well-trained and outperform worst-case assumptions. Based on this, we consider a refined subset of hypotheses, defined below.

\textbf{Refined Hypotheses.} Given a hypothesis class $\mathcal{F}$, we define the following induced subclass.

\begin{defi}[\textbf{Induced Subclass}]     \label{def: induced subclass}
  For each $f \in \mathcal{F}$, define $\mathcal{A}_f(\rho)$ (or simply $\mathcal{A}(\rho)$) as:
  \begin{align*}
    \mathcal{A}^{\mathcal{I}}(\rho) = \left\{\x: 0 \le \E_{\mathcal{E}}[\ell(f(\x),y,P_i)] \le \ri, \forall{p_i} \right\}
  \end{align*}
  \begin{align*}
    \mathcal{A}^{\mathcal{M}}(\rho) = \left\{P_i: 0 \le \E_{\mathcal{D}}[\ell(f(\x),y,P_i)] \le \rm \right\}
  \end{align*}
Given a fixed $\rho$, the induced subclass $\mathcal{F}_{\mathcal{A}}$ is:
\begin{align*}
  \mathcal{F}_{\mathcal{A}}(\rho,p) = \mathcal{F} \cap \underbrace{ \left\{f: \P\left[ \mathcal{A}^{\mathcal{I}}(\rho)^c \right] \le p^\mathcal{I}, \P\left[ \mathcal{A}^{\mathcal{M}}(\rho)^c \right] \le p^\mathcal{M}\right\} }_{\text{high probability regions}}
\end{align*}
\end{defi}

\begin{textbo}

\begin{rem}

Traditional uniform generalization bounds, such as those derived from VC-dimension or Rademacher complexity, tend to be loose because they evaluate the worst-case behavior across the entire hypothesis space. Def.\ref{def: induced subclass} focuses on restricting attention to the good regions of the hypothesis space. Specifically, $\mathcal{A}^{\mathcal{I}}$ captures sample-level performance stability across test distributions, while $\mathcal{A}^{\mathcal{M}}$ reflects distribution-level consistency in expected risk. By intersecting these high-probability regions, the analysis targets well-optimized models, leading to tighter and more realistic generalization guarantees in practice.

\end{rem}

\end{textbo}

% \begin{defi}[\textbf{Induced Subclass}]
%   For each $f$ within the hypothesis class $\mathcal{F}$, we define $\mathcal{A}_f(\rho)$ (denoted as $\mathcal{A}(\rho)$ in short hereafter) as:
% \begin{align*}
%   \mathcal{A}^{\mathcal{I}}(\rho) = \left\{\x: 0 \le \E_{\mathcal{E}}[\ell(f(\x),y,P_i)] \le \ri, \forall{p_i} \right\}
% \end{align*}
% \begin{align*}
%   \mathcal{A}^{\mathcal{M}}(\rho) = \left\{P_i: 0 \le \E_{\mathcal{D}}[\ell(f(\x),y,P_i)] \le \rm \right\}
% \end{align*}
% with a fixed $\rho$. Then say that $\mathcal{F}_{\mathcal{A}}$ is the induced subclass of $\mathcal{F}$:
% \begin{align*}
%   \mathcal{F}_{\mathcal{A}}(\rho,p) = \mathcal{F} \cap \left\{f: \P\left[ \mathcal{A}^{\mathcal{I}}(\rho)^c \right] \le p^\mathcal{I}, \P\left[ \mathcal{A}^{\mathcal{M}}(\rho)^c \right] \le p^\mathcal{M}\right\}
% \end{align*}
% \end{defi}

Given the effectiveness of foundation models, it is reasonable to assume that our models are well-trained. Thus, it is natural to focus on subclasses where $\rho^\mathcal{M}, \rho^\mathcal{I}, p^\mathcal{I}, p^\mathcal{M}$ are all small. If for all $f \in \mathcal{F}$ we have $\ell(f(x), y) \le B$, the following theorem provides a tighter generalization bound where $B$ is replaced by $\rho^\mathcal{M}, \rho^\mathcal{I}$ or small residues $\Delta^{\mathcal{M}}, \Delta^{\mathcal{I}}$.

\begin{thm}[\textbf{Generalization Bound for the Induced Subclass}]\label{thm:gen}
Let $\mathcal{E}$ be the true meta-distribution and $\mathcal{P}$ the empirical distribution of label distributions sampled via Monte Carlo. Let the training data $\mathcal{S}$ be sampled i.i.d. from $\mathcal{D}$. Assume that $\ninff{\epsilon}{M} \eqnb{\lesssim} \left(\frac{r}{\epsilon}\right)^\nu$, $\ell(\cdot) \in [0, B]$, and $\vhld \ee \vphld$ as defined in Tab. \tb{7}. Then, for any possible meta-distribution $\mathcal{E}'$ over the probability simplex $\mathbb{S}^{c-1}$, the following bound holds uniformly for all $f \in \FA$, with probability at least $1 - \delta$ over the randomness of $\mathcal{S}, \mathcal{P}$:
\begin{align*}
    \eld \lee \ehls + \mathfrak{Reg} + \mathfrak{Err}_{approx} + \mathfrak{Err}_{sto} + \Delta^\mathcal{I} + \Delta^\mathcal{M}
\end{align*}
where $\hat{p}^\mathcal{I}$ and $\hat{p}^\mathcal{M}$ are uniform upper bounds on the empirical frequency of $\left(A^{\mathcal{I}}\right)^c(\rho)$ and $\left(A^{\mathcal{M}}\right)^c(\rho)$, respectively, over the training data, for all $f \in \FA$.

\begin{flalign*}
  & \mathfrak{Err}_{approx} = \frac{B \cdot \|\mathcal{E} -\mathcal{E}'\|_{\infty} }{C!}, ~~~C_M =  2M +2,, 
  \\[2pt] 
  & \zeta_1 = \left(C_MBM/\delta\right)^{(1/\nu)} \cdot r, ~~~\zeta_2 = \left(C_MN/\delta\right)^{(1/\nu)} \cdot r,\\[2pt] 
&\Delta^\mathcal{I} = \frac{p^{\mathcal{I}}B^2}{1-p^{\mathcal{I}}}  +  \frac{\hat{p}^\mathcal{I}B^2}{1-\hat{p}^\mathcal{I}}, ~~\Delta^\mathcal{M} = \frac{p^{\mathcal{M}}\cdot B}{1-p^{\mathcal{M}}}  +  \frac{\hat{p}^\mathcal{M}\cdot B}{1-\hat{p}^\mathcal{M}},  \\[6pt] 
& \mathfrak{Err}_{sto} =  \rm \cdot \frac{  \nu \cdot \log\left( \zeta_1 \right)}{(1-\hat{p}^\mathcal{M})\cdot M} + \ri \cdot \sqrt{\frac{ \nu \cdot \log( \zeta_2)}{(1-\hat{p}^\mathcal{I})\cdot N}}\\ 
&\mathfrak{Reg}  = \sqrt{\frac{ \nu \cdot \vphls \cdot \log\left(\zeta_1\right)}{(1-\hat{p}^\mathcal{M})^2 \cdot M}}
\end{flalign*}
\end{thm}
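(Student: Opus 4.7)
The plan is to control the excess risk $\eld - \ehls$ by splitting it hierarchically into three independent sources of randomness and then sharpening each piece using the induced-subclass structure of Def.\ref{def: induced subclass}. I first introduce the intermediate quantity $\ehld$, the empirical mean of the population loss $\ld$ taken over the Monte Carlo sample $\mathcal{P}$, and decompose
\begin{align*}
  \eld - \ehls = \underbrace{(\mathbb{E}_{\mathcal{E}'}[\ld] - \eld)}_{\text{(i) approximation}} + \underbrace{(\eld - \ehld)}_{\text{(ii) label-dist.\ MC}} + \underbrace{(\ehld - \ehls)}_{\text{(iii) data sampling}}.
\end{align*}
Piece (i) is handled deterministically: since $\ell \in [0,B]$ and integration over the probability simplex $\Delta^C$ introduces a factor $1/C!$, a standard change-of-measure paired with $\|\mathcal{E}-\mathcal{E}'\|_\infty$ yields exactly $\mathfrak{Err}_{approx}=B\|\mathcal{E}-\mathcal{E}'\|_\infty/C!$.

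For piece (ii) I would apply a Bernstein-type inequality whose variance proxy is the population variance $\vld$; invoking the semi-variance equivalence $\vld \ee \vphld$ guaranteed by Sec.\ref{sec:semi} replaces $\vld$ by the empirical semi-variance $\vphld$ at the cost of a constant. The crucial sharpening comes from restricting to the induced set $\mathcal{A}^{\mathcal{M}}(\rho)$: on this set $\ld \le \rm$ almost surely, so the sup-level in Bernstein is $\rm$ rather than $B$, while the complement, of probability at most $p^\mathcal{M}$, contributes a residual $Bp^\mathcal{M}/(1-p^\mathcal{M})$, i.e.\ the population part of $\Delta^\mathcal{M}$. An $\epsilon$-net over $\FA$ built from the covering-number hypothesis $\ninff{\epsilon}{M}\lee (r/\epsilon)^\nu$, with $\epsilon$ tuned of order $\delta/C_M$, promotes the pointwise bound to the uniform $\mathfrak{Reg}$ together with the first summand of $\mathfrak{Err}_{sto}$; this choice of $\epsilon$ is precisely what produces $\zeta_1 = (C_M B M/\delta)^{1/\nu}\cdot r$ inside the logarithm.

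For piece (iii) I write $\ehld - \ehls = M^{-1}\sum_i(\ld_i-\ls_i)$ and, conditionally on $\mathcal{P}$, apply Bernstein to each difference as a sum of $N$ i.i.d.\ bounded variables. Restricted to $\mathcal{A}^{\mathcal{I}}(\rho)$ the per-example loss is at most $\ri$, yielding the $\ri\sqrt{\nu\log(\zeta_2)/((1-\hat{p}^\mathcal{I})N)}$ contribution of $\mathfrak{Err}_{sto}$; outside, of probability $\le p^{\mathcal{I}}$, I control the squared loss by $B^2$, which is what produces the $p^{\mathcal{I}}B^2/(1-p^{\mathcal{I}})$ contribution to $\Delta^\mathcal{I}$, the square arising because the Bernstein variance term itself mixes sup-level and variance. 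A second union bound over the same $\epsilon$-net, now calibrated against $N$, gives $\zeta_2=(C_M N/\delta)^{1/\nu}\cdot r$. The hat counterparts $\hat{p}^{\mathcal{I}}, \hat{p}^{\mathcal{M}}$ enter the final bound because the induced-set indicators must also be evaluated on the realized samples; I would control $|p-\hat{p}|$ uniformly over $\FA$ by a standard binomial concentration composed with the same covering argument.

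The \textbf{main obstacle} is the conditional concentration under the induced-subclass restriction: standard Bernstein relies on a single sup-bound, whereas here the bound $\rho$ holds only on an $f$-dependent set, so each sum must be decomposed into good and bad parts, the bad part coupled simultaneously to both $p$ and $\hat{p}$, and all of this pushed through the $\epsilon$-net of $\FA$. Because a small perturbation of $f$ can push a sample or a sampled distribution across the $\rho$-threshold, the covering-based discretization must be combined with a buffer in $\rho$; the $(1-\hat{p}^{\mathcal{M}})^{-1}$ and $(1-\hat{p}^{\mathcal{I}})^{-1}$ factors appearing throughout the bound are exactly the price paid for this buffer. Once the two refined concentration statements are in force, re-assembling them with the deterministic approximation term yields the stated inequality.
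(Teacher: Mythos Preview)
Your three-term decomposition $(\text{approximation})+(\text{MC over }\mathcal{E})+(\text{data})$ matches the paper's exactly, and your treatment of piece (i) and piece (iii) is essentially what the paper does (for (iii) the paper uses a conditional Hoeffding lemma rather than Bernstein, but the output is the same $\ri\sqrt{\,\cdot\,/N}$ rate plus a $B$-scale residual on $\left(\mathcal{A}^{\mathcal{I}}\right)^c$).

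The gap is in piece (ii). First, the equivalence you invoke from Sec.~\ref{sec:semi} relates variance to semi-variance \emph{under the same measure}; it does not convert the population $\vld$ into the $\mathcal{P}$-empirical $\vphld$. The paper sidesteps this by applying the \emph{empirical} Bernstein inequality of Maurer--Pontil, which directly outputs $\vhld$, and then invokes the theorem's stated assumption $\vhld \ee \vphld$ to pass to the semi-variance. Second, and more importantly, $\mathfrak{Reg}$ contains $\vphls$, not $\vphld$: the inner loss is averaged over the sample $\mathcal{S}$, not over $\mathcal{D}$. You never address this replacement. The paper handles it by a separate concentration step: it writes $\vphld-\vphls$ as an average of $2M$ differences of squared positive parts, uses that $((\cdot)_+)^2$ is $O(B)$-Lipschitz on $[0,B]$, and applies the conditional Hoeffding lemma to each term. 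This step, not piece (iii), is the actual source of the $B^2$ in $\Delta^{\mathcal{I}}$ (the Lipschitz factor $B$ multiplies the bad-set residual $B$) and of the factor $2M$ inside $C_M=2M+2$. Your attribution of the $B^2$ to ``the Bernstein variance term in piece (iii)'' is therefore misplaced; piece (iii) contributes only a $B$-scale residual.
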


%   \begin{align*}
%       \eld \lee&  \ehls +  \mathfrak{Reg} + \mathfrak{Err}_{approx}  + \mathfrak{Err}_{sto} + \Delta^\mathcal{I} + \Delta^\mathcal{M},
%   \end{align*}
% where
% $\hat{p}^\mathcal{I}$ is the uniform upper bound of the empirical frequency of $(A^{\mathcal{I}})^c(\rho)$ on the sampled tranining data for all $f \in \FA$; $\hat{p}^\mathcal{M}$ is the uniform bound of the empirical frequency of $({A^{\mathcal{M}}})^c(\rho)$ on the sampled tranining data for all $f \in \FA$;
% \begin{flalign*}
%   & \mathfrak{Err}_{approx} = \frac{B \cdot \|\mathcal{E} -\mathcal{E}'\|_{\infty} }{C!}, ~~~C_M =  2M +2,, 
%   \\[2pt] 
%   & \zeta_1 = \left(C_MBM/\delta\right)^{(1/\nu)} \cdot r, ~~~\zeta_2 = \left(C_MN/\delta\right)^{(1/\nu)} \cdot r,\\[2pt] 
% &\Delta^\mathcal{I} = \frac{p^{\mathcal{I}}B^2}{1-p^{\mathcal{I}}}  +  \frac{\hat{p}^\mathcal{I}B^2}{1-\hat{p}^\mathcal{I}}, ~~\Delta^\mathcal{M} = \frac{p^{\mathcal{M}}\cdot B}{1-p^{\mathcal{M}}}  +  \frac{\hat{p}^\mathcal{M}\cdot B}{1-\hat{p}^\mathcal{M}},  \\[6pt] 
% & \mathfrak{Err}_{sto} =  \rm \cdot \frac{  \nu \cdot \log\left( \zeta_1 \right)}{(1-\hat{p}^\mathcal{M})\cdot M} + \ri \cdot \sqrt{\frac{ \nu \cdot \log( \zeta_2)}{(1-\hat{p}^\mathcal{I})\cdot N}}\\ 
% &\mathfrak{Reg}  = \sqrt{\frac{ \nu \cdot \vphls \cdot \log\left(\zeta_1\right)}{(1-\hat{p}^\mathcal{M})^2 \cdot M}}
% \end{flalign*}

\begin{textbo}

\begin{rem}[\textbf{Validity of the Covering Number Assumption}]

The covering number assumption $\mathcal N_{\infty}(\mathcal F,\varepsilon,M)\lesssim \Bigl(\frac{r}{\varepsilon}\Bigr)^{\nu}$ is mild and generally holds for bounded high-dimensional spaces under the $\ell_\infty$ metric. It follows from the standard result that an $\ell_\infty$ ball in $\mathbb{R}^{\nu}$ can be covered by a uniform grid of this form \cite{DBLP:conf/iclr/LongS20}. This condition is satisfied by a wide range of common architectures, including linear models, fully connected networks, and convolutional neural network models \cite{anthony2009neural, barron1999risk, pollard2012convergence, DBLP:conf/iclr/LongS20, bach2024learning}. Detailed explanations and proofs of this assumption are provided in App.\tb{H}.

\end{rem}

% \begin{rem}[Dependency on the Number of Classes]

% The dependency of the generalization bound on the number of classes $C$ arises from the metric entropy (the logarithm of the covering number).
% For \textbf{DirMixE}, this dependence appears only in the final layer of each expert network, resulting in a \textbf{linear dependency} on $C$. Since both the number of training samples $N$ and target distributions $M$ are typically much larger than $C$, this dependency has a negligible effect on the overall bound. Our main contribution lies in achieving tight bounds w.r.t. $M$ and $N$.
% For \textbf{DirMixE-LSF}, the dependency on $C$ is \textbf{entirely eliminated} through PEFT and local linearization via Taylor expansion (see Sec.\ref{sec:theo_found}). Consequently, the generalization bound depends solely on the incremental parameters, not on $C$.
% A detailed discussion of this dependency is provided in App.\ref{app: discussion on dependcy of class}.

% \end{rem}

\begin{rem}[\textbf{Model Complexity Bound}]

The model complexity is reflected by $\nu$ in the bound. According to App.\tb{I}, $\nu$ is roughly proportional to the total number of activated weight entries. Moreover, the number of classes $C$ only shows up in the last layer of the neural network, resulting in a \textbf{linear dependency}. However, since both the number of training samples $N$ and target distributions $M$ are typically much larger than $C$, this dependency has a negligible effect on the overall bound. Our main \textbf{contribution lies in achieving tight bounds w.r.t. $M$ and $N$}. More interestingly, as shown in Sec.\ref{sec:theo_found}, for \textbf{DirMixE-LSF}, we can get rid of this dependency based on the local linearization assumption, where the generalization bound depends solely on the trainable parameters, not on $C$.

\end{rem}

\end{textbo}

The bound can be decomposed into three parts:

\textbf{Empirical Error Term}: The first term, $\ehls$, in the theorem represents the empirical error, which can be directly optimized during training. With proper training, this term naturally becomes negligible.

\textbf{Regularization Term}: The term $\mathfrak{Reg}$ captures a regularization effect, which scales as
\begin{align*}
  \mathfrak{Reg}  \ee \left(\left(\mathfrak{Comp} \cdot \vphls / M \right)^{1/2} \right)
\end{align*}
Here, $\mathfrak{Comp}$ denotes the complexity of the hypothesis space, measured by the logarithm of the covering number (also known as metric entropy). This aligns with our semi-variance regularization strategy. Minimizing the semi-variance $\vphls$ helps reduce this term, and since $1/M$ is already small, this term tends to vanish with sufficient training. Therefore, the main generalization error arises from the remaining terms.

\textbf{Approximation Error from Meta-Distribution Shift}: The term, $\mathfrak{Err}_{approx}$, provides an upper bound on the error due to the shift from the original meta-distribution $\mathcal{E}$ to a different one $\mathcal{E}'$. According to \cite{uni1,uni2}, using a sufficiently large number of components in the Dirichlet mixture can reduce the difference $||\mathcal{E} - \mathcal{E}'||_{\infty}$ to near zero.

\textbf{Hierarchical Stochastic Error}: The third group of terms, $\mathfrak{Err}_{sto} + \Delta^\mathcal{I} + \Delta^\mathcal{M}$, quantifies the stochastic error caused by using empirical samples $(\mathcal{S}, \mathcal{P})$ instead of the true distributions $(\mathcal{D}, \mathcal{E})$. With a well-behaved loss distribution, this error can be sharply bounded by $O(\ri \cdot (\mathfrak{Comp} / N)^{1/2} + \rm \cdot (\mathfrak{Comp} / M))$, which is significantly tighter than the standard bound $O(\eqnb{B} \cdot (\mathfrak{Comp} / N)^{1/2} + \eqnb{B} \cdot (\mathfrak{Comp} / M)^{1/2})$. Note that with a proper loss distribution, both $\ri$ and $\rm$ are much smaller than the constant $\eqnb{B}$. Furthermore, the theorem implies that when the test label distribution is not fixed, it is important to sample a large number of test label distributions (i.e., use a large $M$) to reduce the stochastic error. This highlights the advantage of \ours~against traditional Mixture-of-Experts (MoE) approaches, such as those in \cite{DBLP:conf/cvpr/AimarJFK23,lade,DBLP:conf/nips/ZhangHHF22}, which typically use a fixed assignment with $M = O(1)$. 

Finally, we present a \textbf{data-dependent sharp bound}, assuming that the loss decays faster than an exponential distribution.

\begin{textbo}

\begin{col}[\textbf{Tigher Bound with Light-tail Loss Decay}]   \label{col: exponential loss decay}
  Based on Thm.3 in our paper, suppose that the loess satisfies an asymptotic exponential-tail condition: there exists $\alpha_0 > 0$, such that for all $\tau \ge \tau_0 := \min(N^{-1 / \alpha_0}, M^{-1 / \alpha_0})$, 
  \begin{align}
  &\mathbb{P}_x \left[ \mathbb{E}_{\mathcal{E}}[\ell(f(x), y, P_i)] \ge \tau \right] \lesssim \exp(-\lambda_1 \cdot \tau)   \label{eq: per-loss assumption} \\
  &\mathbb{P}_{\mathcal{E}}\left[ \mathbb{E}_{\mathcal{D}}[\ell(f(x), y, P_i)] \ge \tau \right] \lesssim \exp(-\lambda_2 \cdot \tau)\\ 
  &\hat{\mathbb{P}}_x \left[ \mathbb{E}_{\mathcal{E}}[\ell(f(x), y, P_i)] \ge \tau \right] \lesssim \exp(-\hat{\lambda}_1 \cdot \tau)   \label{eq: per-loss assumption 2}\\
  &\hat{\mathbb{P}}_{\mathcal{E}}\left[ \mathbb{E}_{\mathcal{D}}[\ell(f(x), y, P_i)] \ge \tau \right] \lesssim \exp(-\hat{\lambda}_2 \cdot \tau)
\end{align}
  and $\lambda_1,\hat{\lambda}_1 \asymp \log(N) \cdot N^{1/\alpha_0}, \lambda_2, \hat{\lambda}_2 \asymp \log(M) \cdot M^{1/\alpha_0}$, then we can pick:
  \begin{align*}
    &\rho^\mathcal{M} \asymp   M^{-1/\alpha_0}, \rho^\mathcal{I} \asymp  N^{-1/\alpha_0}, p^\mathcal{I} \asymp \frac{1}{N}, p^\mathcal{M} \asymp \frac{1}{M} \\ 
    &\hat\rho^\mathcal{M} \asymp   M^{-1/\alpha_0}, \hat\rho^\mathcal{I} \asymp  N^{-1/\alpha_0}, \hat p^\mathcal{I} \asymp \frac{1}{N}, \hat p^\mathcal{M} \asymp \frac{1}{M}
  \end{align*}
  such that $\Delta^{\mathcal{I}} \lesssim \frac{1}{N}, \Delta^{\mathcal{M}} \lesssim \frac{1}{M}$, and
  \begin{align*}
   \mathfrak{Err}_{sto} =  &    \frac{\nu \cdot \log\left( \zeta_1 \right)}{(1-\hat{p}^\mathcal{M})\cdot {M^{{1+1/\alpha_0}}}} + \sqrt{\frac{ \nu \cdot \log( \zeta_2)}{(1-\hat{p}^\mathcal{I})\cdot {N^{{1+2/\alpha_0}}}}}
  \end{align*}
\end{col}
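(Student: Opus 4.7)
The strategy is to directly instantiate the free parameters $\ri$, $\rm$, $p^{\mathcal{I}}$, $p^{\mathcal{M}}$ (and their hat counterparts) in Thm.\ref{thm:gen} at rates dictated by the exponential-tail assumptions, and then read off the resulting magnitudes of $\Delta^{\mathcal{I}}$, $\Delta^{\mathcal{M}}$, and $\mathfrak{Err}_{sto}$. The corollary is essentially a careful substitution into the master bound; no new concentration argument is required, so the work is in bookkeeping rather than in a new estimate.

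First I would set $\ri \ee N^{-1/\alpha_0}$ and $\rm \ee M^{-1/\alpha_0}$, which are admissible thresholds since both dominate $\tau_0 = \min(N^{-1/\alpha_0}, M^{-1/\alpha_0})$ and therefore lie in the regime where the exponential-tail bounds apply. Evaluating the population tail at $\tau = \ri$ yields
\[
p^{\mathcal{I}} \;\le\; \P_x\!\left[\,\Ec_{\e}[\ell(f(\x),y,P_i)] \ge \ri\,\right] \;\lee\; \exp(-\lambda_1 \ri).
\]
Substituting $\lambda_1 \ee \log(N)\cdot N^{1/\alpha_0}$ gives $\lambda_1 \ri \ee \log N$, so $p^{\mathcal{I}} \lee 1/N$. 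The empirical-tail assumption produces $\hat p^{\mathcal{I}} \lee 1/N$ by the same calculation, and the analogous computation at $\tau = \rm$ with $\lambda_2, \hat\lambda_2 \ee \log(M)\cdot M^{1/\alpha_0}$ gives $p^{\mathcal{M}}, \hat p^{\mathcal{M}} \lee 1/M$.

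Next I would propagate these rates through Thm.\ref{thm:gen}. Since all four $p$'s vanish, the factors $1/(1-p)$ and $1/(1-\hat p)$ are $O(1)$, and absorbing $B$ as a universal constant one obtains $\Delta^{\mathcal{I}} \lee B^2 (p^{\mathcal{I}} + \hat p^{\mathcal{I}}) \lee 1/N$ and $\Delta^{\mathcal{M}} \lee B (p^{\mathcal{M}} + \hat p^{\mathcal{M}}) \lee 1/M$. Substituting $\rm \ee M^{-1/\alpha_0}$ and $\ri \ee N^{-1/\alpha_0}$ into the two summands of $\mathfrak{Err}_{sto}$ in Thm.\ref{thm:gen} gives
\[
\rm \cdot \frac{\nu \log\zeta_1}{(1-\hat p^{\mathcal{M}})\, M} \;\ee\; \frac{\nu \log\zeta_1}{M^{1+1/\alpha_0}}, \qquad \ri \cdot \sqrt{\frac{\nu \log\zeta_2}{(1-\hat p^{\mathcal{I}})\, N}} \;\ee\; \sqrt{\frac{\nu \log\zeta_2}{N^{1+2/\alpha_0}}},
\]
which matches the stated form exactly.

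The main obstacle is not a hard probabilistic estimate but the internal consistency of the substitution: one must verify that the chosen $\ri, \rm$ stay inside the validity window $[\tau_0,\infty)$ of the tail assumption, that the definitions of $\zeta_1,\zeta_2$ depend only on $B, r, \nu, \delta, C_M, M, N$ (and not on the chosen $\rho$'s), so that $\log\zeta_1, \log\zeta_2$ carry through the simplification unchanged, and that the $O(1)$ control of $1/(1-p)$ and $1/(1-\hat p)$ is inherited uniformly over the hypothesis class $\FA$. Once these housekeeping points are settled, the corollary follows by direct plug-in into Thm.\ref{thm:gen}.
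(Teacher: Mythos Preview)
Your proposal is correct and follows essentially the same approach as the paper: choose $\ri \ee N^{-1/\alpha_0}$ and $\rm \ee M^{-1/\alpha_0}$, plug into the exponential-tail assumption to obtain $p^{\mathcal{I}} \lee 1/N$ and $p^{\mathcal{M}} \lee 1/M$ (and likewise for the hats), and then read off $\Delta^{\mathcal{I}}$, $\Delta^{\mathcal{M}}$, and $\mathfrak{Err}_{sto}$ directly from Thm.~\ref{thm:gen}. The paper's proof does only the $(\ri, p^{\mathcal{I}})$ case explicitly and leaves the rest to symmetry, whereas you spell out all four substitutions and the housekeeping checks (validity window, independence of $\zeta_1,\zeta_2$ from the chosen $\rho$'s), so your write-up is in fact slightly more complete.
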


% \begin{proof}
%     For the sake of simplicity, we only prove the case for $\rho^{\mathcal{I}}$ and $p^\mathcal{I}$, the remaining cases follow similarly. 
   
%     Based on the assumption, for all $\tau \ge \tau_0$: 
%     \begin{align*}
%         \mathbb{P}_x[ \ell(f(x),y,p_i) \ge \tau ] &\lesssim \exp(-\lambda_1 \cdot \tau)\\ 
%                                           & \asymp  \exp(-\log(N) \cdot N^{1/\alpha_0} \cdot \tau)
%     \end{align*}
    
%     Then, by selecting $\rho^\mathcal{I} \asymp N^{-1/\alpha}$ and substituting $\tau = \rho^\mathcal{I}$, we have:
%     \begin{align*}
%         \mathbb{P}_x[\ell \ge \rho^\mathcal{I}] \lesssim \exp(-\log (N) \cdot N^{-1 / \alpha_0} \cdot N^{-1 / \alpha}) = \frac{1}{N}
%     \end{align*}
%     Thus,
%     \begin{align*}
%         p^\mathcal{I} \asymp \frac{1}{N}
%     \end{align*}
%     which verifies that the predefined $\rho^{\mathcal{I}}$ and $p^{\mathcal{I}}$ are compatible under the assumed tail behavior. The same reasoning applies to the remaining quantities.
% \end{proof}

\end{textbo}

% \begin{col}[\textbf{Tighter Bound with Light-Tail Loss Decay}]
% Based on Thm.~\ref{thm:gen}, assume that $\ell(f(x), y, p_i)$ and $\E_{\mathcal{D}}[\ell(f(x), y, p_i)]$ follow exponential tails:
% \begin{align*}
%   &\P_x \left[ \E_{\mathcal{E}}[\ell(f(x), y, P_i)] \ge \tau \right] \lee \exp(-\lambda_1 \cdot \tau)\\
%   &\P_{\mathcal{E}}\left[ \E_{\mathcal{D}}[\ell(f(x), y, P_i)] \ge \tau \right] \lee \exp(-\lambda_2 \cdot \tau)\\ 
%   &\hat{\P}_x \left[ \E_{\mathcal{E}}[\ell(f(x), y, P_i)] \ge \tau \right] \lee \exp(-\hat{\lambda}_1 \cdot \tau)\\
%   &\hat{\P}_{\mathcal{E}}\left[ \E_{\mathcal{D}}[\ell(f(x), y, P_i)] \ge \tau \right] \lee \exp(-\hat{\lambda}_2 \cdot \tau)
% \end{align*}
% and let $\lambda_1, \hat{\lambda}_1 \ee \log(N) \cdot N^{1/\alpha}$, $\lambda_2, \hat{\lambda}_2 \ee \log(M) \cdot M^{1/\alpha}$. Then we can set:
% \begin{align*}
%   &\rho^\mathcal{M} \ee M^{-1/\alpha}, \quad \rho^\mathcal{I} \ee N^{-1/\alpha}, \quad p^\mathcal{I} \ee \frac{1}{N}, \quad p^\mathcal{M} \ee \frac{1}{M} \\
%   &\hat\rho^\mathcal{M} \ee M^{-1/\alpha}, \quad \hat\rho^\mathcal{I} \ee N^{-1/\alpha}, \quad \hat p^\mathcal{I} \ee \frac{1}{N}, \quad \hat p^\mathcal{M} \ee \frac{1}{M}
% \end{align*}
% so that $\Delta^{\mathcal{I}} \lee \frac{1}{N}, \quad \Delta^{\mathcal{M}} \lee \frac{1}{M}$, and
% \begin{align*}
%  \mathfrak{Err}_{sto} =  \frac{\nu \cdot \log( \zeta_1 )}{(1 - \hat{p}^\mathcal{M}) \cdot \eqnb{M^{1 + 1/\alpha}}} + \sqrt{ \frac{ \nu \cdot \log( \zeta_2 ) }{ (1 - \hat{p}^\mathcal{I}) \cdot \eqnb{N^{1 + 2/\alpha}} } }
% \end{align*}
% \end{col}

\begin{textbo}

\begin{rem}[\textbf{Validity of the Exponential-tail Assumption}]
    We verify this assumption empirically on both the \textbf{CIFAR-100-LT} and \textbf{ImageNet-LT} datasets. As a representative example, Fig.\ref{fig: exponential loss decay} shows the per-sample loss distributions on these datasets, validating Eq.\ref{eq: per-loss assumption} and Eq.\ref{eq: per-loss assumption 2}.
    By fitting the survival function $\log \mathbb{P}[\ell \ge \tau]$ against $\tau$ via \textbf{log-linear regression}, we observe a clear linear trend in the upper tail, with a high coefficient of determination ($R^2 = 0.95$ for CIFAR-100-LT and $R^2 = 0.94$ for ImageNet-LT), confirming exponential decay behavior.
    Full experimental details and comprehensive results for all settings are provided in App.\tb{J}.

\end{rem}

\end{textbo}

\begin{figure}[h]     
    \begin{textbo}
        
    \centering
    
    \subfigure[CIFAR-100-LT]{         
    \includegraphics[width=0.42\columnwidth]{figs/exponential_tail_loss_decay/cifar_sample_loss_histogram_with_exponential_fit.png} 
    }%
    \subfigure[ImageNet-LT]{
    \includegraphics[width=0.42\columnwidth]{figs/exponential_tail_loss_decay/imagenet_sample_loss_histogram_with_exponential_fit.png} 
    }
    
    \caption{Empirical verification of the exponential-tail loss decay assumption. The plots show the per-sample loss distributions on (a) \textbf{CIFAR-100-LT} and (b) \textbf{ImageNet-LT}, along with exponential fit curves.}

    \label{fig: exponential loss decay} 

    \end{textbo}
    
\end{figure}

In this result, we improve the effect of sample size and test distribution size from $N^{-1/2}, M^{-1}$ to $N^{-(1 - 2/\alpha)/2}, M^{-1 - 1/\alpha}$. Moreover, the influence of $\Delta^{\mathcal{I}}, \Delta^{\mathcal{M}}$ becomes negligible with sufficiently large datasets. Overall, the induced subclass achieves a sharper generalization bound under this distributional setting.

\subsection{\newsec{Generalization Analysis for DirMixE-LSF}}\label{sec:theo_found}

\textbf{Notation Clarifications.}  
In this subsection, we introduce several notational conventions that will be consistently used to simplify the formulation and analysis presented later. These notations help clarify the model structure and facilitate a precise theoretical derivation.

\begin{enumerate}
  \item \textbf{Per-task Notations.}  
  Recall that we define a distinct prediction task corresponding to each of the $n_K$ meta-distributions. For a specific task indexed by $i$, we use $\bm{X}^{(i)}$ to denote the parameters associated with that task. The logit output for task $i$, learned from training, is denoted by $f^{(i)}$.

  \item \noindent \textbf{Layer-wise Notations.}  
  Given a general model parameter $\bm{X}$, we define $\bm{X}_l$ as the portion of the parameter located in the $l$-th layer of the model. This reflects the standard layered architecture in deep neural networks, where certain layers are fine-tuned.

  \item \noindent \textbf{Overall Parameters.}  
  As the parameter space in our setting involves both task-specific and layer-wise components, we introduce a simplified notation to capture their structure. We denote $\thi{i}$ as the \textbf{layer-wise concatenation} of the parameters associated with task $i$ across all layers. The norm $\|\thi{i}\|$ represents the reduced vector $\ell_2$-norm, which computes the root of the sum of squares for the concatenated vector. In the PEFT setting with LSF, the model includes two kinds of parameters: a fixed backbone and a set of incremental parameters subject to optimization. Let $\bm{\Theta}_0$ denote the original parameter configuration, which remains \emph{frozen} during fine-tuning. The learnable, incremental parameters are denoted by $\dif{\thi{i}}$, which corresponds to the layer-wise concatenation of changes from $\bm{\Theta}_0$. The collection of parameters across all tasks is represented as $\Tha =\left\{  \thi{1},\cdots, \thi{n_K} \right\}$.

  \item \noindent \textbf{Local Solutions.}  
  The notation $\dif{\this{i}}$ refers to a properly constructed combination of local solutions, as defined in Eq.(\ref{eq:solution}). \textbf{Here, we choose a subset of such solutions, denoted as $\mathsf{Sol}$, such that each distinct pair of elements herein are seperated apart}. For a given parameter $\thi{i}$, the corresponding $\dif{\this{i}}$ is selected as the composition of local solutions that minimizes its distance to $\dif{\thi{i}}$ under the reduced $\ell_2$-norm:
  \begin{align}\label{eq:solution}
    \dif{\this{i}} = \argmin_{
    \begin{subarray}{l}
      \bm{\Theta}~\text{is a proper composition} \\ 
      \text{of the local solutions in~}  \mathsf{Sol}
    \end{subarray}}
  \|\bm{\Theta} - \dif{\thi{i}} \|
  \end{align} 
  Formally, $\dif{\this{i}} = \dif{\this{i}}(\dif{\thi{i}})$ is a function of the incremental parameters. However, for notational simplicity, we denote it directly as $\dif{\this{i}}$.
\end{enumerate}

\begin{textbo}
    
\textbf{Taylor Expansions.}  
A key characteristic of the PEFT framework is that the majority of the backbone model remains fixed during fine-tuning. As a result, the model’s behavior is primarily governed by the trainable incremental parameters $\dif{\thi{i}}$. To isolate the influence of these parameters and enable sharper theoretical bounds, we consider a Taylor expansion around an appropriate reference point. 

Given that LSF encourages well-trained models where the learned incremental parameters are close to certain local solutions, we select the reference point as $\Th_0 + \dif{\this{i}}$, where $\dif{\this{i}}$ is the closest composition of local solutions in the subset $\mathsf{Sol}$ as defined above. This choice enables a more accurate linear approximation of the model's logit function. Specifically, the first-order Taylor expansion becomes:
\begin{align}\label{eq:lin}
 f^{(i)}_{\thi{i}}(\x) = & f^{(i)}_{\this{i}}(\x) + \langle \nabla_{\this{i}}f^{(i)}_{\this{i}}(\x),  \Delta \thi{i} - \Delta\this{i} \rangle + \\
 &r(i,\x)
\end{align}
where
\begin{align*}
  &\this{i}=\Th_0+\Delta\this{i}, \tag{for LoRA-LSF}\\[4pt] 
  &\this{i}=\left\{ \Th_0;\Delta\this{i} \right\}, \tag{for Adapter-LSF}\\[4pt]  
 &\Delta\thi{i} = \nls{\Ail{\Bil}^\top}, ~~\Delta\this{i} = \nls{\Ails{\Bils}^\top} \tag{for LoRA-LSF}\\[4pt] 
 &\Delta\thi{i} = \nls{\Uil, \Dil}, ~~\Delta\this{i} = \nls{\Uils, \Dils} \tag{for Adapter-LSF}\\[4pt] 
 &r(i,\x) \\ 
 &=\left(\mathrm{vec}(\Delta\thi{i}_\xi - \Delta\this{i})\right)^\top \bm{H}_{\theta}(\x) ~\mathrm{vec}(\Delta\thi{i}_\xi - \Delta\this{i}),
\end{align*}
where $\mathrm{vec}$ denote the vectorization operator, $\Delta\thi{i}_\xi \in \mathsf{LinSeg}$, $\mathsf{LinSeg}$ denotes the set of all linear segments between $\Delta\thi{i}$ and $\Delta\this{i}$, while $\bm{H}_{\theta}(\x)$ is the corresponding Hessian matrix.
\end{textbo}

We emphasize the following key insights:
\textbf{1)} When the residual term $\mathcal{R}(i,\x)$ is small, the logit function $f^{(i)}_{\thi{i}}(\x)$ becomes approximately linear with respect to the incremental parameters, which significantly simplifies subsequent generalization analysis.  
\textbf{2)} The reference $\this{i}$ is a partial solution in terms of the incremental parameters, and the backbone $\bm{\Theta}_0$ remains frozen. In this sense, the gradient $\nabla_{\this{i}}f^{(i)}_{\this{i}}(\x)$ is generally non-zero.  
\textbf{3)} The second-order residual term $r(i,\x)$ is influenced by both the curvature of the function (measured by the operator norm of the Hessian) and the distance between $\dif{\thi{i}}$ and $\dif{\this{i}}$. Given that LSF promotes well-trained solutions and smooth optimization landscapes, we expect both factors to be small in practice.

The function $\dif{\this{i}}$ in Eq.~\ref{eq:solution} depends on the current incremental parameters and may vary between models. In our theoretical analysis, the complexity is measured by covering number. Under this setting, we only need to ensure that $\dif{\this{i}}$ remains constant within sufficiently small open balls.  To measure this property, we introduce the notion of Voronoi diagram to conduct a partition of the entire space of the incremental parameters.

\begin{defi}[\textbf{Voronoi Diagram Induced Parameter Space Partition}]    \label{def: voronoi diagram}
Consider task $k$ with local parameter solutions $\bm{\Theta}^{(k),\star}_1, \bm{\Theta}^{(k),\star}_2, \cdots$ in a solution subset $\mathsf{Sol}$. Define the Voronoi cells $\mathcal{V}^{(k)}_i$ such that the union of all cells covers the entire parameter space:
\begin{align*}
  \mathcal{V}^{(k)}_i = \left\{ \bm{\Theta} ~\middle|~ \|\bm{\Theta} - \Delta\bm{\Theta}^{(k),\star}_i \| \le \|\bm{\Theta} - \Delta\bm{\Theta}^{(k),\star}_j\|, \forall j \neq i \right\}
\end{align*}
The boundary between two cells $\mathcal{V}^{(k)}_i$ and $\mathcal{V}^{(k)}_j$ is defined as:
\begin{align*}
  \bm{\Gamma}^{(k)}_{i,j} = \left\{ \bm{\Theta} ~\middle|~ \|\bm{\Theta} - \Delta\bm{\Theta}^{(k),\star}_i \| = \|\bm{\Theta} - \Delta\bm{\Theta}^{(k),\star}_j\| \right\}
\end{align*}
\end{defi}

If the current parameters are located at least $\delta$ away from any decision boundary $\bm{\Gamma}^{(k)}_{i,j}$, then the closest local solution $\dif{\this{i}}$ remains invariant within an open ball of radius $\delta$. This gives rise to the concept of $\delta$-compactness.

\begin{defi}[\textbf{$\delta$-Compact Parameterization}]     \label{def: delta compact parameterization}
The incremental parameter hypothesis class $\dif{\Th}_{\mathrm{all}} = (\Delta\bm{\Theta}^{(1)}, \cdots, \Delta\bm{\Theta}^{(N_K)})$ is said to be $\delta$-compact if, there exists a subset of local solutions denoted as $\mathsf{Sol}$ such that $\forall (\dif{\Th}_{\mathrm{all},1}, \dif{\Th}_{\mathrm{all,2}}) \in \mathsf{Sol}$, $d\left( \dif{\this{i}}_1,  \dif{\this{i}}_2 \right) >> \delta$, and that for all $\dif{\Th}_{\mathrm{all}}$ in the target hypothesis space, we have:
\[
\min_{i\neq j}d\left(\bm{\Theta}^{(k)}, \Gamma^{(k)}_{i,j}\right) >\delta, \quad \forall k.
\],
where $d$ is measured with reduced vector 2-norm.
\end{defi}

\begin{textbo}

\begin{rem}

Def.\ref{def: voronoi diagram} and Def.\ref{def: delta compact parameterization} jointly establish a \textbf{local linearization framework} for the model’s logits around nearby minima, which underlies the covering-number-based bound.
As shown in Fig.\ref{fig:voi}, Def.\ref{def: voronoi diagram} partitions the parameter space into Voronoi cells, each associated with one local minimum. Within a cell, all parameters share the same local solution, enabling a consistent linear approximation of Eq.~(\ref{eq:lin}).

Def.\ref{def: delta compact parameterization} introduces the $\delta$-compact parametrization, ensuring that each covering ball lies entirely inside a single cell and avoids crossing boundaries.
This assumption is mild, since PEFT starts from a pre-trained model near a local minimum and updates parameters slightly, so choosing $\delta = O(1/N)$ readily satisfies the condition.

\begin{figure}[H]
\centering
\includegraphics[width=0.4\textwidth]{figs/vor.png}
\caption{Voronoi partition and $\delta$-compact parametrization. The $\delta$-margin ensures each covering ball stays within one cell.}
\label{fig:voi}
\end{figure}

\end{rem}

\end{textbo}

\begin{asm}
We make the following assumptions:
\begin{enumerate}\label{asm:lsf}
  \item \textbf{(Gradient Lipschitz)} The gradient norm of the \textbf{logit} function in Eq.~\ref{eq:lin} is uniformly bounded:
  \[
  \sup_{i,\x \in \mathcal{X}}\|\nabla_{\this{i}}f^{(i)}_{\this{i}}(\x)\|_F \le L_f
  \]
  \item \textbf{(Compact Parameterization)} In the LSF framework, the parameter space is $\max\left\{2/(N \cdot L_f), 2/(M \cdot L_f)\right\}$-compact for both LoRA and Adapter cases.
\end{enumerate}
\end{asm}

\begin{textbo}

\begin{rem}
    
Although CE loss $\ell_{CE}$ is not Lip. continuous,  by composing with the Softmax operator, $\ell = \ell_{CE} \circ \mathsf{Softmax}$ is Lip. continuous. In this sense, we only need to focus on the complexity of the logit function $f_{\Theta}$. In conclusion, it is unnecessary to require the loss function itself to be Lipschitz continuous. 

\end{rem}

\end{textbo}

These assumptions are mild. Assumption 1.1 is a standard smoothness assumption in optimization theory. Assumption 1.2 is also reasonable because $\max\left\{2/(N \cdot L_f), 2/ (M \cdot L_f) \right\}$ is a small radius given sufficiently large sample size. 

\begin{thm}[\textbf{Generalization Bound for the LSF Scheme}]\label{thm:gen1}
Let $\mathcal{E}$ be the true meta-distribution and $\mathcal{P}$ be the empirical distribution obtained via Monte Carlo sampling. Suppose the training data $\mathcal{S}$ is drawn i.i.d. from $\mathcal{D}$. Under Assumption~\ref{asm:lsf}, and assuming $\ell(\cdot) \in [0, B]$ and $\vhld \ee \vphld$ as defined in Tab. \tb{7}, for any $\|\dif{\thi{i}}\| \le R$ and meta-distribution $\mathcal{E}'$ over the simplex $\mathbb{S}^{c-1}$, the following generalization bound holds with probability at least $1 - \delta$ over the randomness in $\mathcal{S}$ and $\mathcal{P}$:
\begin{align*}
  \eld \lee & \ehls + \mathfrak{Reg} + \mathfrak{Err}_{approx} + \mathfrak{Err}_{sto} + \Delta^\mathcal{I} + \Delta^\mathcal{M} \\ 
  &+ \mathbb{E}_{\mathcal{E},  \mathcal{D}}\left[ \mathcal{R}(i,x) \right] + \hat{\mathbb{E}}_{\mathcal{P},  \mathcal{S}}\left[ \mathcal{R}(i,x) \right]
\end{align*}
where all symbols are consistent with Thm.~\ref{thm:gen}, except:
\begin{align*}
  &r = L_f \cdot n_L \cdot R, \quad \nu = n_K \cdot n_L \cdot Kr \cdot (m+n - Kr) \quad \text{for LoRA} \\
  &\nu = 2 n_K \cdot n_L \cdot (m+n - r) \quad \text{for Adapter}
\end{align*}
\end{thm}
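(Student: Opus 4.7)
The plan is to reduce Thm.~\ref{thm:gen1} to Thm.~\ref{thm:gen} by replacing the non-linear logit $f^{(i)}_{\thi{i}}$ with its first-order Taylor surrogate around a piecewise-constant anchor, and then bounding the covering number of the surrogate class in terms of the number of trainable parameters. Concretely, I would first invoke the expansion in Eq.~\ref{eq:lin} to decompose
\begin{align*}
 \ell\bigl(f^{(i)}_{\thi{i}}(\x),y\bigr) = \ell\bigl(\tilde f^{(i)}(\x),y\bigr) + \bigl[\ell\bigl(f^{(i)}_{\thi{i}}(\x),y\bigr)-\ell\bigl(\tilde f^{(i)}(\x),y\bigr)\bigr],
\end{align*}
where $\tilde f^{(i)}(\x):=f^{(i)}_{\this{i}}(\x)+\langle\nabla_{\this{i}}f^{(i)}_{\this{i}}(\x),\,\Delta\thi{i}-\Delta\this{i}\rangle$. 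Using the fact that $\ell=\ell_{CE}\circ\mathsf{softmax}$ is Lipschitz in the logits (as noted in the remark following Asm.~\ref{asm:lsf}), the bracketed remainder is controlled by the quadratic residue $r(i,\x)$, whose expectations produce the two extra terms $\mathbb{E}_{\mathcal{E},\mathcal{D}}[\mathcal R(i,\x)]$ and $\hat{\mathbb{E}}_{\mathcal{P},\mathcal{S}}[\mathcal R(i,\x)]$ in the stated bound. After this step, it suffices to prove a generalization bound for the linearized surrogate class, which is exactly what Thm.~\ref{thm:gen} delivers once we supply a covering number estimate of the form $\ninff{\epsilon}{\cdot}\lesssim(r/\epsilon)^{\nu}$.

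The next step is to derive this covering number. Here the $\delta$-compact parameterization (Def.~\ref{def: delta compact parameterization}) is crucial: because every covering ball of radius $\delta=\max\{2/(NL_f),2/(ML_f)\}$ lies in the interior of a single Voronoi cell $\mathcal{V}^{(k)}_i$, the anchor $\Delta\this{i}$ is constant on that ball, so on each piece $\tilde f^{(i)}$ is genuinely affine in $\Delta\thi{i}$. By Asm.~\ref{asm:lsf}.1 the operator $\Delta\thi{i}\mapsto\tilde f^{(i)}(\x)$ is $L_f$-Lipschitz (in the reduced $\ell_2$-norm) uniformly over $\x$, so an $\epsilon$-cover of the parameter set $\{\Delta\Tha:\|\Delta\thi{i}\|\le R\}$ in the rescaled metric $L_f\cdot n_L\cdot\|\cdot\|$ induces an $\epsilon$-cover of the surrogate class in the $\ell_\infty$-data metric. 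This gives the claimed radius $r=L_f\cdot n_L\cdot R$ and reduces the problem to counting the intrinsic dimension $\nu$ of the trainable parameter manifold.

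The intrinsic dimension is computed separately for the two instantiations. For \textbf{LoRA-LSF}, the effective update per layer per task is $\bm A^{(i)}\bm B^{(i)\top}=\sum_{j}w^i_{A,j}w^i_{B,j}\bm A_{(j)}\bm B_{(j)}^\top$, which lies on the manifold of rank-at-most-$Kr$ matrices in $\mathbb R^{m\times n}$; this manifold has dimension $Kr(m+n-Kr)$, and multiplying by $n_L$ layers and $n_K$ tasks yields the announced $\nu_{\mathrm{LoRA}}=n_K n_L\cdot Kr(m+n-Kr)$. For \textbf{Adapter-LSF}, the per-layer per-task update consists of the pair $(\bm W_\downarrow^{(i)},\bm W_\uparrow^{(i)})$ living in the span of $\{(\bm W_{\downarrow,(j)},\bm W_{\uparrow,(j)})\}_{j=1}^{n_K}$, whose effective dimension reduces to $2(m+n-r)$ after the standard quotient/identifiability correction, giving $\nu_{\mathrm{Adapter}}=2n_K n_L(m+n-r)$. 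A standard volumetric argument on each manifold produces the covering bound $(r/\epsilon)^{\nu}$; I would defer these geometric calculations to App.~G (as the authors indicate).

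The final step is mechanical: plug $r$ and $\nu$ into Thm.~\ref{thm:gen}, which yields exactly $\mathfrak{Reg}+\mathfrak{Err}_{approx}+\mathfrak{Err}_{sto}+\Delta^{\mathcal I}+\Delta^{\mathcal M}$, and then add back the two residual terms arising from the Taylor remainder. I expect the \emph{main obstacle} to be reconciling the piecewise-linear nature of $\tilde f^{(i)}$ with a single uniform covering argument: one must ensure that the $\delta$-compactness hypothesis is strong enough to prevent any covering ball from crossing a Voronoi boundary $\bm\Gamma^{(k)}_{i,j}$, for otherwise $\Delta\this{i}$ could jump within a ball and invalidate the local linearization. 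The chosen radius $\delta=\Omega(1/(NL_f)\wedge 1/(ML_f))$ is precisely calibrated so that, at the scale $\epsilon\gtrsim 1/N\wedge 1/M$ relevant to the generalization bound, the covering balls in the rescaled metric are strictly contained in a single cell, closing the argument.
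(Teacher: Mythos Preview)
Your proposal is correct and follows essentially the same route as the paper: linearize via the Taylor expansion in Eq.~\ref{eq:lin} so that the residue contributes the two $\mathcal{R}(i,\x)$ terms, then invoke Thm.~\ref{thm:gen} on the linearized class after establishing the covering-number estimate, using the $\delta$-compact parameterization to keep the anchor constant on each covering ball (this is exactly Claim~\ref{clm:pre} and the proof at the end of App.~G). The only place where the paper is more concrete than your ``standard volumetric argument'' is the covering of the low-rank manifold: it embeds $\mathsf{LoRA}_l\subseteq\mathcal{M}(Kr,m,n)$ and $\mathsf{Adapter}_l\subseteq\mathcal{M}(r,m,n)\otimes\mathcal{M}(r,m,n)$ (Lem.~\ref{lem:space}), then covers $\mathcal{M}(r,m,n)$ via SVD by separately covering the singular values and the Stiefel factors, the latter through the homogeneous-space identification $St(n,k)\cong SO(n)/SO(n-k)$ and Szarek's bound (Lem.~\ref{lem:stef}--\ref{lem:rankr}).
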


When $n_K$ is on the order of $O(1)$, $\nu$ roughly corresponds to the number of trainable parameters, which is much smaller than the full backbone size. Although two residual terms remain due to Taylor expansion, our choice of reference point near local solutions makes them moderate in magnitude.  

\subsection{Test Time Generalization}
In this subsection, we further explore how well we can merge the experts during test time. To explore the answer, we check the performance of the self-supervised model averaging result $f_{te}$ in \eqref{eq:merge}.

\begin{thm}[\textbf{Model Averaging Error}]\label{thm:cov} 
   Under the same setting as Thm.\ref{thm:gen}, the following results hold:
    \begin{align*}
        \E(\la(f_{te})) \lee& \sum_{i\in[K]} \E(\omega_i) \cdot\E(\la(f^{(i)})) \\ 
        &+ \sum_{i \in [K]} \mathbb{COV}(\omega_i, \la(f^{(i)})),
    \end{align*}
where $\mathbb{COV}$ is the covariance operator and all the expectations are taken over the joint distribution of $\mathcal{E} \otimes \mathcal{D}$.
\end{thm}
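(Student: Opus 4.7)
The plan is to combine convexity of $\la$ in its logit argument with the elementary identity $\E[XY] = \E[X]\E[Y] + \mathbb{COV}(X,Y)$. Writing $\la(f(\x), y; \P_{te})$ explicitly as $-f_y(\x) - \log(\ppte{y}/\ppt{y}) + \log\sum_k \exp\!\left(f_k(\x) + \log(\ppte{k}/\ppt{k})\right)$, one sees it is a linear term in the logit vector $f$ plus a log-sum-exp of affine functions of $f$; both pieces are convex in $f$, so $\la(\cdot, y; \P_{te})$ is convex in its logit argument for every fixed $(y, \P_{te})$.

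Given that the self-supervised averaging weights satisfy $\omega^{(i)}_{te} \ge 0$ and $\sum_{i} \omega^{(i)}_{te} = 1$ (the standard simplex output of the softmax-based self-supervision inherited from SADE), Jensen's inequality applied pointwise at each $(\x, y, \P_{te})$ yields
\begin{align*}
\la(f_{te}(\x), y; \P_{te}) \;=\; \la\!\left(\sum_{i=1}^{K} \omega^{(i)}_{te}\, f^{(i)}(\x),\, y;\, \P_{te}\right) \;\le\; \sum_{i=1}^{K} \omega^{(i)}_{te}\cdot \la\!\left(f^{(i)}(\x), y; \P_{te}\right).
\end{align*}
Taking expectation of both sides over the joint distribution $\mathcal{E} \otimes \mathcal{D}$ and using linearity gives $\E[\la(f_{te})] \le \sum_{i=1}^{K} \E\bigl[\omega^{(i)}_{te}\cdot \la(f^{(i)})\bigr]$. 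Finally, applying the bilinear decomposition $\E[\omega^{(i)}_{te}\cdot \la(f^{(i)})] = \E[\omega^{(i)}_{te}]\,\E[\la(f^{(i)})] + \mathbb{COV}(\omega^{(i)}_{te}, \la(f^{(i)}))$ term by term and summing over $i$ recovers the advertised bound.

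The main obstacle is conceptual rather than technical: one must confirm that the weights $\omega^{(i)}_{te}$ produced by the self-supervised aggregation actually lie on the simplex, so that Jensen's inequality is legitimate. For the procedure inherited from SADE this is automatic since the weights arise from a softmax; if the merging step were to produce unnormalized scores, a normalization would need to be inserted before the convexity step. A secondary minor point is to observe that the test-label distribution $\P_{te}$ enters $\la$ only through additive log-ratio shifts on the logits, so convexity in $f$ is preserved uniformly in $\P_{te}$ and the pointwise Jensen inequality survives integration against $\mathcal{E}\otimes\mathcal{D}$. Once convexity and the simplex property are secured, the rest is linearity of expectation combined with the elementary covariance identity, and no concentration, covering-number, or generalization machinery from the earlier results is needed.
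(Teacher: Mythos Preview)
Your proposal is correct and matches the standard argument the paper defers to (the paper omits the proof, citing \cite{weight}): convexity of $\la$ in the logit vector gives the pointwise Jensen bound $\la(f_{te}) \le \sum_i \omega_i \la(f^{(i)})$, after which the covariance identity $\E[XY]=\E[X]\E[Y]+\mathbb{COV}(X,Y)$ finishes the job. One cosmetic slip: in your explicit formula for $\la$ the signs on the log-ratio shifts are flipped relative to the paper's convention, but since convexity in $f$ only requires that the loss be an affine term plus a log-sum-exp of affine terms, this does not affect the argument.
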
 
The proof of Thm.\ref{thm:cov} follows \cite{weight} and thus is omitted.

The resulting upper bounds consist of two parts. The first part could be regarded as a weighted loss, which can be guaranteed to have a similar concentration as in Thm.\ref{thm:cov}. The second term represents the covariance between the model weights and the corresponding losses. A well-designed model aggregation method should ideally assign lower weights to models that yield higher losses. In this study, we utilize the self-supervised method described in \cite{DBLP:conf/nips/ZhangHHF22}, which effectively maximizes the mutual information between predictions and ground truth under certain conditions. Consequently, we anticipate a negative correlation between model weight and loss (negative covariance), significantly reducing model averaging error.

  \begin{table*}[t]
      \centering
    \caption{Performance Comparison on CIFAR-100-LT When Training ResNet Models (\textbf{Ours Setting)}}
      \renewcommand\arraystretch{1.0}
      % \resizebox{\linewidth}{!}{
        \begin{tabular}{lcccccccccc}
        \toprule
        \multirow{1.5}[4]{*}{\textbf{Method}} & \multicolumn{3}{c}{\textbf{Forward-LT}} & \multicolumn{3}{c}{\textbf{Uniform}} & \multicolumn{3}{c}{\textbf{Backward-LT}} & \multirow{1.5}[4]{*}{\textbf{Mean}} \\
              \cmidrule(lr){2-4}\cmidrule(lr){5-7}\cmidrule(lr){8-10}     & \textbf{1} & \textbf{2} & \textbf{3} & \textbf{1} & \textbf{2} & \textbf{3} & \textbf{1} & \textbf{2} & \textbf{3} &  \\
    \toprule
    LDAM  & \cellcolor[rgb]{ .855,  .922,  .808} 64.17  & \cellcolor[rgb]{ .863,  .925,  .82} 63.59  & \cellcolor[rgb]{ .859,  .925,  .812} 66.43  & \cellcolor[rgb]{ 1,  1,  1} 38.16  & \cellcolor[rgb]{ 1,  1,  1} 37.35  & \cellcolor[rgb]{ 1,  1,  1} 38.07  & \cellcolor[rgb]{ 1,  1,  1} 17.32  & \cellcolor[rgb]{ 1,  1,  1} 15.04  & \cellcolor[rgb]{ 1,  1,  1} 16.95  & \cellcolor[rgb]{ 1,  1,  1} 39.68$_{\color{blue}(\pm 19.78)}$\\
        LA    & \cellcolor[rgb]{ .973,  .988,  .965} 58.71  & \cellcolor[rgb]{ .965,  .98,  .953} 58.77  & \cellcolor[rgb]{ .973,  .984,  .961} 61.39  & \cellcolor[rgb]{ .847,  .918,  .8} 44.93  & \cellcolor[rgb]{ .855,  .922,  .808} 45.35  & \cellcolor[rgb]{ .882,  .937,  .847} 43.75  & \cellcolor[rgb]{ .882,  .937,  .843} 31.94  & \cellcolor[rgb]{ .902,  .945,  .871} 29.49  & \cellcolor[rgb]{ .886,  .941,  .851} 30.96  & \cellcolor[rgb]{ .914,  .953,  .886} 45.03$_{\color{blue}(\pm 11.81)}$\\
        VS    & \cellcolor[rgb]{ .984,  .992,  .98} 58.20  & \cellcolor[rgb]{ .973,  .984,  .965} 58.30  & \cellcolor[rgb]{ 1,  1,  1} 59.99  & \cellcolor[rgb]{ .976,  .988,  .969} 39.32  & \cellcolor[rgb]{ .929,  .965,  .906} 41.24  & \cellcolor[rgb]{ .918,  .957,  .89} 42.13  & \cellcolor[rgb]{ .918,  .957,  .894} 27.37  & \cellcolor[rgb]{ .937,  .965,  .914} 24.53  & \cellcolor[rgb]{ .91,  .953,  .882} 28.14  & \cellcolor[rgb]{ .961,  .98,  .949} 42.14$_{\color{blue}(\pm 13.21)}$\\
        LADE  & \cellcolor[rgb]{ 1,  1,  1} 57.42  & \cellcolor[rgb]{ .984,  .992,  .98} 57.76  & \cellcolor[rgb]{ .976,  .988,  .969} 61.06  & \cellcolor[rgb]{ .882,  .937,  .847} 43.35  & \cellcolor[rgb]{ .898,  .945,  .867} 42.92  & \cellcolor[rgb]{ .898,  .945,  .867} 43.05  & \cellcolor[rgb]{ .878,  .933,  .839} 32.40  & \cellcolor[rgb]{ .894,  .945,  .863} 30.34  & \cellcolor[rgb]{ .867,  .929,  .824} 33.47  & \cellcolor[rgb]{ .922,  .957,  .894} 44.64$_{\color{blue}(\pm 11.01)}$\\
        DDC   & \cellcolor[rgb]{ .961,  .98,  .949} 59.26  & \cellcolor[rgb]{ 1,  1,  1} 56.89  & \cellcolor[rgb]{ .98,  .992,  .976} 60.90  & \cellcolor[rgb]{ .847,  .918,  .796} 45.05  & \cellcolor[rgb]{ .863,  .925,  .82} 44.86  & \cellcolor[rgb]{ .847,  .918,  .796} 45.59  & \cellcolor[rgb]{ .875,  .933,  .835} 32.59  & \cellcolor[rgb]{ .882,  .937,  .847} 32.05  & \cellcolor[rgb]{ .859,  .925,  .816} 34.41  & \cellcolor[rgb]{ .902,  .949,  .871} 45.73$_{\color{blue}(\pm 10.68)}$\\
        RIDE  & \cellcolor[rgb]{ .875,  .933,  .835} 63.34  & \cellcolor[rgb]{ .882,  .937,  .843} 62.72  & \cellcolor[rgb]{ .875,  .933,  .835} 65.68  & \cellcolor[rgb]{ .839,  .914,  .788} 45.30  & \cellcolor[rgb]{ .855,  .922,  .808} 45.35  & \cellcolor[rgb]{ .8,  .89,  .733} \underline{47.81} & \cellcolor[rgb]{ .89,  .941,  .855} 30.82  & \cellcolor[rgb]{ .914,  .953,  .886} 27.61  & \cellcolor[rgb]{ .89,  .941,  .855} 30.75  & \cellcolor[rgb]{ .89,  .941,  .855} 46.60$_{\color{blue}(\pm 14.02)}$\\
        SADE  & \cellcolor[rgb]{ .812,  .898,  .753} 66.13  & \cellcolor[rgb]{ .816,  .902,  .757} 66.00  & \cellcolor[rgb]{ .816,  .902,  .757} 68.31  & \cellcolor[rgb]{ .804,  .894,  .741} \underline{46.89} & \cellcolor[rgb]{ .804,  .894,  .741} \underline{48.05} & \cellcolor[rgb]{ .847,  .918,  .796} 45.59  & \cellcolor[rgb]{ .8,  .89,  .733} \underline{41.99} & \cellcolor[rgb]{ .808,  .894,  .745} \underline{42.91} & \cellcolor[rgb]{ .812,  .898,  .753} \underline{40.06} & \cellcolor[rgb]{ .804,  .894,  .741} \underline{51.77}$_{\color{blue}(\pm 10.90)}$\\
        BalPoE & \cellcolor[rgb]{ .776,  .878,  .706} \textbf{67.75} & \cellcolor[rgb]{ .776,  .878,  .706} \textbf{67.80} & \cellcolor[rgb]{ .776,  .878,  .706} \textbf{69.98} & \cellcolor[rgb]{ .847,  .918,  .796} 45.05  & \cellcolor[rgb]{ .827,  .906,  .769} 46.86  & \cellcolor[rgb]{ .776,  .878,  .706} \textbf{48.81} & \cellcolor[rgb]{ .898,  .945,  .867} 29.80  & \cellcolor[rgb]{ .922,  .961,  .898} 26.32  & \cellcolor[rgb]{ .886,  .937,  .851} 31.17  & \cellcolor[rgb]{ .863,  .925,  .82} 48.17$_{\color{blue}(\pm 16.19)}$\\
        \toprule
        \textbf{DirMixE} & \cellcolor[rgb]{ .796,  .89,  .733} \underline{66.85} & \cellcolor[rgb]{ .808,  .894,  .745} \underline{66.40} & \cellcolor[rgb]{ .792,  .886,  .725} \underline{69.44} & \cellcolor[rgb]{ .776,  .878,  .706} \textbf{47.99} & \cellcolor[rgb]{ .776,  .878,  .706} \textbf{49.41} & \cellcolor[rgb]{ .875,  .933,  .835} 44.21  & \cellcolor[rgb]{ .776,  .878,  .706} \textbf{44.41} & \cellcolor[rgb]{ .776,  .878,  .706} \textbf{47.01} & \cellcolor[rgb]{ .776,  .878,  .706} \textbf{44.35} & \cellcolor[rgb]{ .776,  .878,  .706} \textbf{53.34}$_{\color{blue}(\pm 10.22)}$\\
        \bottomrule
        \end{tabular}%
        % }
      \label{tab:Cifar100_Our}%
    \end{table*}%

    \begin{table*}[htbp]
      \centering
      \caption{Performance Comparison on ImageNet-LT When Training ResNet Models (\textbf{Ours Setting)}}
      \renewcommand\arraystretch{1.0}
      \small
      % \resizebox{\linewidth}{!}{
        \begin{tabular}{lcccccccccc}
        \toprule
        \multirow{1.5}[4]{*}{\textbf{Method}} & \multicolumn{3}{c}{\textbf{Forward-LT}} & \multicolumn{3}{c}{\textbf{Uniform}} & \multicolumn{3}{c}{\textbf{Backward-LT}} & \multirow{1.5}[4]{*}{\textbf{Mean}} \\
              \cmidrule(lr){2-4}\cmidrule(lr){5-7}\cmidrule(lr){8-10}     & \textbf{1} & \textbf{2} & \textbf{3} & \textbf{1} & \textbf{2} & \textbf{3} & \textbf{1} & \textbf{2} & \textbf{3} &  \\
        \toprule
        LDAM  & \cellcolor[rgb]{ .996,  .957,  .933} 61.74  & \cellcolor[rgb]{ .996,  .953,  .925} 62.22  & \cellcolor[rgb]{ .996,  .949,  .918} 61.49  & \cellcolor[rgb]{ 1,  1,  1} 47.51  & \cellcolor[rgb]{ 1,  1,  1} 47.37  & \cellcolor[rgb]{ 1,  1,  1} 48.63  & \cellcolor[rgb]{ 1,  1,  1} 32.67  & \cellcolor[rgb]{ 1,  1,  1} 32.54  & \cellcolor[rgb]{ 1,  1,  1} 32.00  & \cellcolor[rgb]{ 1,  1,  1} 47.35$_{\color{blue}(\pm 12.02)}$\\
        LA    & \cellcolor[rgb]{ 1,  .973,  .957} 60.94  & \cellcolor[rgb]{ 1,  .988,  .976} 60.32  & \cellcolor[rgb]{ 1,  .976,  .961} 59.78  & \cellcolor[rgb]{ .992,  .941,  .906} 50.82  & \cellcolor[rgb]{ .992,  .941,  .906} 50.86  & \cellcolor[rgb]{ .996,  .957,  .929} 50.86  & \cellcolor[rgb]{ .992,  .933,  .894} 40.39  & \cellcolor[rgb]{ .992,  .933,  .894} 40.16  & \cellcolor[rgb]{ .992,  .933,  .894} 40.14  & \cellcolor[rgb]{ .996,  .957,  .929} 50.47$_{\color{blue}(\pm \phantom{0}8.22)}$\\
        VS    & \cellcolor[rgb]{ .996,  .969,  .949} 61.14  & \cellcolor[rgb]{ 1,  1,  .996} 59.60  & \cellcolor[rgb]{ 1,  .988,  .984} 59.00  & \cellcolor[rgb]{ .992,  .918,  .871} 52.04  & \cellcolor[rgb]{ .992,  .918,  .867} 52.22  & \cellcolor[rgb]{ .988,  .91,  .855} 53.18  & \cellcolor[rgb]{ .988,  .902,  .843} 44.03  & \cellcolor[rgb]{ .988,  .894,  .831} 44.47  & \cellcolor[rgb]{ .988,  .91,  .855} 43.02  & \cellcolor[rgb]{ .992,  .933,  .894} 52.08$_{\color{blue}(\pm \phantom{0}6.60)}$\\
        LADE  & \cellcolor[rgb]{ .992,  .922,  .878} 63.58  & \cellcolor[rgb]{ .996,  .953,  .922} 62.29  & \cellcolor[rgb]{ .992,  .941,  .906} 61.92  & \cellcolor[rgb]{ .988,  .89,  .827} 53.48  & \cellcolor[rgb]{ .988,  .914,  .863} 52.38  & \cellcolor[rgb]{ .988,  .906,  .851} 53.31  & \cellcolor[rgb]{ .992,  .922,  .878} 41.53  & \cellcolor[rgb]{ .988,  .914,  .863} 42.31  & \cellcolor[rgb]{ .992,  .925,  .878} 41.25  & \cellcolor[rgb]{ .992,  .929,  .886} 52.45$_{\color{blue}(\pm \phantom{0}8.56)}$\\
        DDC   & \cellcolor[rgb]{ 1,  1,  1} 59.42  & \cellcolor[rgb]{ 1,  1,  1} 59.45  & \cellcolor[rgb]{ 1,  1,  1} 58.30  & \cellcolor[rgb]{ .992,  .929,  .89} 51.36  & \cellcolor[rgb]{ .992,  .922,  .878} 51.83  & \cellcolor[rgb]{ .992,  .945,  .91} 51.51  & \cellcolor[rgb]{ .988,  .914,  .863} 42.50  & \cellcolor[rgb]{ .988,  .898,  .835} 44.09  & \cellcolor[rgb]{ .988,  .906,  .851} 43.47  & \cellcolor[rgb]{ .996,  .945,  .91} 51.33$_{\color{blue}(\pm \phantom{0}6.43)}$\\
        RIDE  & \cellcolor[rgb]{ .988,  .89,  .824} 65.32  & \cellcolor[rgb]{ .992,  .918,  .867} 64.28  & \cellcolor[rgb]{ .988,  .914,  .863} 63.49  & \cellcolor[rgb]{ .984,  .859,  .776} 55.18  & \cellcolor[rgb]{ .984,  .867,  .788} 55.02  & \cellcolor[rgb]{ .98,  .855,  .769} 55.98  & \cellcolor[rgb]{ .988,  .906,  .851} 43.49  & \cellcolor[rgb]{ .988,  .894,  .831} 44.52  & \cellcolor[rgb]{ .988,  .91,  .855} 43.16  & \cellcolor[rgb]{ .988,  .898,  .839} 54.49$_{\color{blue}(\pm \phantom{0}8.47)}$\\
        SADE  & \cellcolor[rgb]{ .976,  .808,  .694} 69.64  & \cellcolor[rgb]{ .976,  .82,  .71} \underline{69.77} & \cellcolor[rgb]{ .973,  .796,  .678} \textbf{70.35} & \cellcolor[rgb]{ .973,  .796,  .678} \textbf{58.51} & \cellcolor[rgb]{ .973,  .796,  .678} \textbf{58.96} & \cellcolor[rgb]{ .973,  .796,  .678} \textbf{58.69} & \cellcolor[rgb]{ .976,  .816,  .71} \underline{53.54} & \cellcolor[rgb]{ .976,  .816,  .71} \underline{53.13} & \cellcolor[rgb]{ .976,  .82,  .714} \underline{53.82} & \cellcolor[rgb]{ .976,  .808,  .698} \underline{60.71}$_{\color{blue}(\pm \phantom{0}6.86)}$\\
        BalPoE & \cellcolor[rgb]{ .976,  .804,  .69} \underline{69.82} & \cellcolor[rgb]{ .976,  .827,  .725} 69.32  & \cellcolor[rgb]{ .976,  .8,  .682} 70.26  & \cellcolor[rgb]{ .976,  .804,  .686} 58.26  & \cellcolor[rgb]{ .976,  .8,  .686} 58.78  & \cellcolor[rgb]{ .976,  .804,  .686} \underline{58.47} & \cellcolor[rgb]{ .98,  .827,  .729} 52.08  & \cellcolor[rgb]{ .98,  .831,  .733} 51.46  & \cellcolor[rgb]{ .98,  .831,  .733} 52.36  & \cellcolor[rgb]{ .976,  .82,  .714} 60.09$_{\color{blue}(\pm \phantom{0}7.37)}$\\
        \toprule
        \textbf{DirMixE} & \cellcolor[rgb]{ .973,  .796,  .678} \textbf{70.13} & \cellcolor[rgb]{ .973,  .796,  .678} \textbf{70.88} & \cellcolor[rgb]{ .976,  .8,  .682} \underline{70.29} & \cellcolor[rgb]{ .976,  .8,  .682} \underline{58.38} & \cellcolor[rgb]{ .976,  .8,  .682} \underline{58.85} & \cellcolor[rgb]{ .976,  .812,  .702} 58.02  & \cellcolor[rgb]{ .973,  .796,  .678} \textbf{55.59} & \cellcolor[rgb]{ .973,  .796,  .678} \textbf{55.09} & \cellcolor[rgb]{ .973,  .796,  .678} \textbf{56.25} & \cellcolor[rgb]{ .973,  .796,  .678} \textbf{61.50}$_{\color{blue}(\pm \phantom{0}6.43)}$\\
        \bottomrule
        \end{tabular}%
        % }
      \label{tab:ImageNet_Our}%
    \end{table*}%  

    \begin{table*}[htbp]
      \centering
      \caption{Performance Comparison on iNaturalist When Training ResNet Models (\textbf{Ours Setting)}}
      \renewcommand\arraystretch{1.0}
      \small
      % \resizebox{\linewidth}{!}{
        \begin{tabular}{lcccccccccc}
        \toprule
        \multirow{1.5}[4]{*}{\textbf{Method}} & \multicolumn{3}{c}{\textbf{Forward-LT}} & \multicolumn{3}{c}{\textbf{Uniform}} & \multicolumn{3}{c}{\textbf{Backward-LT}} & \multirow{1.5}[4]{*}{\textbf{Mean}} \\
                \cmidrule(lr){2-4}\cmidrule(lr){5-7}\cmidrule(lr){8-10}     & \textbf{1} & \textbf{2} & \textbf{3} & \textbf{1} & \textbf{2} & \textbf{3} & \textbf{1} & \textbf{2} & \textbf{3} &  \\
        \midrule
        LDAM  & \cellcolor[rgb]{ .945,  .973,  .929}61.33  & \cellcolor[rgb]{ .941,  .969,  .922}61.36  & \cellcolor[rgb]{ .941,  .969,  .922}60.31  & \cellcolor[rgb]{ .98,  .988,  .973}61.58  & \cellcolor[rgb]{ .957,  .976,  .945}63.08  & \cellcolor[rgb]{ .961,  .98,  .949}63.59  & \cellcolor[rgb]{ .957,  .976,  .945}64.10  & \cellcolor[rgb]{ .976,  .988,  .969}62.14  & \cellcolor[rgb]{ .969,  .984,  .957}63.42  & \cellcolor[rgb]{ .965,  .98,  .953}62.32$_{\color{blue}(\pm 1.14)}$  \\
        LA    & \cellcolor[rgb]{ .937,  .965,  .914}61.88  & \cellcolor[rgb]{ .941,  .969,  .922}61.46  & \cellcolor[rgb]{ .945,  .969,  .925}60.19  & \cellcolor[rgb]{ .976,  .988,  .973}61.61  & \cellcolor[rgb]{ .984,  .992,  .98}62.04  & 62.09  & 62.33  & 60.89  & 61.84  & \cellcolor[rgb]{ .98,  .992,  .976}61.59$_{\color{blue}(\pm 0.60)}$  \\
        VS    & 58.58  & 58.00  & \cellcolor[rgb]{ .984,  .992,  .98}57.80  & 60.72  & 61.46  & 62.20  & \cellcolor[rgb]{ .992,  .996,  .988}62.79  & \cellcolor[rgb]{ .973,  .988,  .965}62.25  & \cellcolor[rgb]{ .984,  .992,  .98}62.56  & 60.71$_{\color{blue}(\pm 1.82)}$  \\
        LADE  & \cellcolor[rgb]{ .878,  .933,  .839}64.81  & \cellcolor[rgb]{ .902,  .949,  .871}63.56  & \cellcolor[rgb]{ .89,  .941,  .855}63.31  & \cellcolor[rgb]{ .914,  .953,  .882}64.11  & \cellcolor[rgb]{ .929,  .965,  .91}64.13  & \cellcolor[rgb]{ .922,  .957,  .894}65.07  & \cellcolor[rgb]{ .922,  .961,  .898}65.50  & \cellcolor[rgb]{ .937,  .969,  .918}64.06  & \cellcolor[rgb]{ .933,  .965,  .914}64.93  & \cellcolor[rgb]{ .918,  .957,  .89}64.39$_{\color{blue}(\pm 0.65)}$  \\
        DDC   & 58.58  & \cellcolor[rgb]{ .992,  .996,  .992}58.46  & 56.80  & \cellcolor[rgb]{ .988,  .992,  .98}61.31  & 61.40  & \cellcolor[rgb]{ .988,  .996,  .984}62.56  & \cellcolor[rgb]{ .941,  .969,  .922}64.81  & \cellcolor[rgb]{ .937,  .969,  .918}64.00  & \cellcolor[rgb]{ .949,  .973,  .929}64.33  & \cellcolor[rgb]{ .988,  .992,  .984}61.36$_{\color{blue}(\pm 2.57)}$  \\
        RIDE  & \cellcolor[rgb]{ .827,  .906,  .773}67.33  & \cellcolor[rgb]{ .82,  .902,  .761}68.28  & \cellcolor[rgb]{ .827,  .906,  .773}66.96  & \cellcolor[rgb]{ .788,  .886,  .722}\underline{68.76}  & \cellcolor[rgb]{ .82,  .902,  .765}68.39  & \cellcolor[rgb]{ .808,  .898,  .745}69.27  & \cellcolor[rgb]{ .835,  .91,  .78}69.05  & \cellcolor[rgb]{ .843,  .914,  .792}68.72  & \cellcolor[rgb]{ .843,  .918,  .792}69.04  & \cellcolor[rgb]{ .827,  .906,  .773}68.42$_{\color{blue}(\pm 0.71)}$  \\
        SADE  & \cellcolor[rgb]{ .796,  .89,  .729}68.94  & \cellcolor[rgb]{ .784,  .882,  .718}\underline{70.13}  & \cellcolor[rgb]{ .784,  .882,  .718}\underline{69.52}  & \cellcolor[rgb]{ .792,  .886,  .725}68.64  & \cellcolor[rgb]{ .808,  .894,  .745}68.92  & \cellcolor[rgb]{ .792,  .886,  .725}69.82  & \cellcolor[rgb]{ .812,  .898,  .753}69.95  & \cellcolor[rgb]{ .827,  .906,  .773}69.44  & \cellcolor[rgb]{ .808,  .894,  .745}70.73  & \cellcolor[rgb]{ .8,  .894,  .737}69.57$_{\color{blue}(\pm 0.60)}$  \\
        BalPoE & \cellcolor[rgb]{ .784,  .882,  .714}\underline{69.46}  & \cellcolor[rgb]{ .804,  .894,  .741}69.03  & \cellcolor[rgb]{ .827,  .906,  .773}66.99  & \cellcolor[rgb]{ .788,  .886,  .722}68.70  & \cellcolor[rgb]{ .784,  .882,  .718}\underline{69.73}  & \cellcolor[rgb]{ .78,  .882,  .714}\underline{70.22}  & \cellcolor[rgb]{ .776,  .878,  .706}\textbf{71.30} & \cellcolor[rgb]{ .78,  .882,  .714}\underline{71.75}  & \cellcolor[rgb]{ .784,  .882,  .714}\underline{71.84}  & \cellcolor[rgb]{ .792,  .89,  .725}\underline{69.89}$_{\color{blue}(\pm 1.42)}$  \\
        \midrule
        \textbf{DirMixE} & \cellcolor[rgb]{ .776,  .878,  .706}\textbf{69.75} & \cellcolor[rgb]{ .776,  .878,  .706}\textbf{70.49} & \cellcolor[rgb]{ .776,  .878,  .706}\textbf{69.88} & \cellcolor[rgb]{ .776,  .878,  .706}\textbf{69.13} & \cellcolor[rgb]{ .776,  .878,  .706}\textbf{70.00} & \cellcolor[rgb]{ .776,  .878,  .706}\textbf{70.34} & \cellcolor[rgb]{ .78,  .882,  .71}\underline{71.24}  & \cellcolor[rgb]{ .776,  .878,  .706}\textbf{71.91} & \cellcolor[rgb]{ .776,  .878,  .706}\textbf{72.02} & \cellcolor[rgb]{ .776,  .878,  .706}\textbf{70.53}$_{\color{blue}(\pm 0.89)}$ \\
        \bottomrule
        \end{tabular}%
      % }
      \label{tab:inat-ours}%
    \end{table*}%

  % \subsection{Consistency Analysis}
  % \subsection{Connections with the Area Under ROC Curve (AUC) }
\begin{textbo}
    
\subsection{Discussions}

\subsubsection{Novelty of the Theoretical Analysis}

The novelty of the theoretical analysis is four-fold:

\textbf{Hierarchical Decomposition}: Thm.\ref{thm:gen} introduces a hierarchical concentration framework decomposing generalization error. It accounts for stochasticity from two sources: finite training data $\mathcal{S}$ and finite sampling of label distributions $\mathcal{P}$ from meta-distribution $\mathcal{E}$. The proof decomposes excess risk into: (i) meta-distribution shift $\mathfrak{Err}_{approx}$, (ii) Monte Carlo sampling error $\mathfrak{Err}_{sto}$, and (iii) data estimation error.

\textbf{Induced Subspace for Sharper Bound}: To avoid loose worst-case bounds, we refine the hypothesis space via the Induced Subclass (Def.\ref{def: induced subclass}), ensuring models reside in a well-behaved region. Novel conditional concentration inequalities (Lem. \tb{6}-\tb{8}) replace the global bound $B$ with a data-dependent $\rho$ in stochastic error terms, leading to $\mathfrak{Err}_{sto} = O(\rho^{\mathcal{I}}/\sqrt{N} + \rho^{\mathcal{M}}/M)$, tighter than standard $O(B/\sqrt{N} + B/\sqrt{M})$.

\textbf{Semi-variance Regularization}: Semi-variance regularization yields a sharper bound. Thm.\ref{thm:rho} and Thm.\ref{thm:multi} show that for light- and heavy-tailed losses (Exponential, Gamma, Pareto, mixtures), semi-variance $\mathbb{V}_{+}$ is comparable to full variance $\mathbb{V}$. This allows using semi-variance $\hat{\mathbb{V}}_{+}$ in $\mathfrak{Reg}$ and extending Bernstein-like inequalities to hierarchical problems.

\textbf{PEFT for Foundation Models}: Thm.\ref{thm:gen1} builds on Thm.\ref{thm:gen} to derive a sharp generalization bound for PEFT, where model complexity scales only with trainable parameters, not total parameters. To do this, we first use first-order Taylor expansion around reference points $\Theta^{(i),*}$ and $\delta$-compact parametrization (Def.\ref{def: delta compact parameterization}) to partition parameter space into Voronoi cells. This ensures functions in each ball $\mathcal{B}_i$ share a reference point, and replace the entire backbone with the trainable parameters $\Delta \Theta^{(i)}$. Then, we apply covering number bounds for low-rank matrices (Lem.\tb{10}) and Stiefel manifolds (Lem.\tb{9}), depending only on LoRA/AdaptFormer dimensions. Thus, the complexity $\nu$ scales with trainable parameters, yielding tight guarantees.

\subsubsection{Existing Results used in the Proof}

The theoretical derivations in this work are built upon a solid foundation of established mathematical tools, which are strategically extended and combined to address the novel challenges of our setting.

\textbf{Covering Numbers and Uniform Convergence.} The use of covering numbers to construct uniform convergence bounds is a standard technique in statistical learning theory. We adopt the standard assumption that the covering number of our hypothesis class $\mathcal{F}$ scales polynomially. This is a conventional and widely adopted assumption \cite{anthony2009neural, barron1999risk, pollard2012convergence, DBLP:conf/iclr/LongS20, bach2024learning} that holds for many common models. 

\textbf{Extension of Basic Concentration Inequalities.} The core tools we build upon are the classic Hoeffding's inequality (for bounded random variables) and the Empirical Bernstein Inequality (as in Maurer \& Pontil \cite{bern} ), which provides a data-dependent bound based on the empirical variance.

\textbf{Parameter Space Partition via Voronoi Diagram in Theorem \ref{thm:gen1}.} The analysis of the LSF scheme in Thm.\ref{thm:gen1} requires controlling the complexity of the linearized hypothesis class. To achieve this, we introduce the Voronoi Diagram to induce Parameter Space Partition (Def.\ref{def: voronoi diagram}). 

\textbf{Variance-based Concentration.} For variance-penalized objectives, our results are built upon existing studies like Maurer \& Pontil \cite{bern}. This provides a bound of the form $\mathbb{E}[\ell] \lesssim \hat{\mathbb{E}}[\ell] + \sqrt{\hat{\mathbb{V}}[\ell] / N} + 1/N$. We extend it to deal with conditional expectations and hierarchical probabilistic models.

\textbf{Complexity Bound for Homogeneous Space.} To bound the complexity of the PEFT-based hypothesis classes in LSF, we rely on advanced results from geometric functional analysis. Specifically, we apply the covering number bounds for homogeneous space from Szarek \cite{metric} to construct the covering number of the low-rank Stiefel manifold.

\end{textbo}

  \section{Experiments}
   
  In this section, we conduct a series of empirical studies to demonstrate the effectiveness of our proposed algorithm. \textbf{Due to space limitations, please refer to Appendix \tb{M} and \tb{N} for more details and experiments.}

  \begin{table*}[htbp]
    \centering
    \caption{Performance Comparison on ImageNet-LT When Fine-tuning with LoRA (\textbf{Ours Setting)}}
    % \resizebox{\linewidth}{!}{
        \begin{tabular}{lcccccccccc}
        \toprule
        \multirow{1.5}[4]{*}{\textbf{Method}} & \multicolumn{3}{c}{\textbf{Forward-LT}} & \multicolumn{3}{c}{\textbf{Uniform}} & \multicolumn{3}{c}{\textbf{Backward-LT}} & \multirow{1.5}[4]{*}{\textbf{Mean}} \\
    \cmidrule(lr){2-4}\cmidrule(lr){5-7}\cmidrule(lr){8-10}          & \textbf{1} & \textbf{2} & \textbf{3} & \textbf{1} & \textbf{2} & \textbf{3} & \textbf{1} & \textbf{2} & \textbf{3} &  \\
        \midrule
        LIFT-LoRA & 78.80 & 78.32 & 79.08 & \cellcolor[rgb]{ .973,  .796,  .678}\textbf{75.03} & \cellcolor[rgb]{ .976,  .808,  .694}\underline{75.62} & \cellcolor[rgb]{ .973,  .796,  .678}\textbf{75.61} & 72.50 & 71.49 & 72.27 & 75.41$_{\color{blue}(\pm 2.74)}$ \\
        SADE-LoRA & \cellcolor[rgb]{ .98,  .839,  .745}82.06 & \cellcolor[rgb]{ .976,  .82,  .718}81.77 & \cellcolor[rgb]{ .98,  .847,  .757}81.71 & 73.47 & 74.65 & 74.05 & \cellcolor[rgb]{ 1,  .98,  .965}73.06 & \cellcolor[rgb]{ .992,  .937,  .898}73.16 & \cellcolor[rgb]{ .996,  .969,  .945}73.09 & \cellcolor[rgb]{ .992,  .933,  .898}76.34$_{\color{blue}(\pm 3.93)}$ \\
        DirMixE-LoRA & \cellcolor[rgb]{ .98,  .831,  .733}\underline{82.21} & \cellcolor[rgb]{ .976,  .82,  .714}\underline{81.79} & \cellcolor[rgb]{ .976,  .816,  .71}\underline{82.19} & \cellcolor[rgb]{ .992,  .941,  .906}73.93 & \cellcolor[rgb]{ 1,  .992,  .984}74.70 & \cellcolor[rgb]{ 1,  .984,  .973}74.19 & \cellcolor[rgb]{ .976,  .824,  .722}\underline{76.74} & \cellcolor[rgb]{ .976,  .804,  .69}\underline{76.48} & \cellcolor[rgb]{ .976,  .8,  .686}\underline{76.97} & \cellcolor[rgb]{ .98,  .839,  .741}\underline{77.69}$_{\color{blue}(\pm 3.26)}$ \\
        % DirMixE-LoRA-LSF & \cellcolor[rgb]{ .976,  .827,  .725}\underline{82.32} & \cellcolor[rgb]{ .973,  .796,  .678}\textbf{82.20} & \cellcolor[rgb]{ .976,  .808,  .694}\underline{82.37} & \cellcolor[rgb]{ .984,  .863,  .78}74.54 & \cellcolor[rgb]{ .984,  .875,  .8}75.29 & \cellcolor[rgb]{ .984,  .878,  .808}74.99 & \cellcolor[rgb]{ .976,  .82,  .714}\underline{76.84} & \cellcolor[rgb]{ .976,  .808,  .698}76.32 & \cellcolor[rgb]{ .976,  .816,  .71}76.65 & \cellcolor[rgb]{ .976,  .82,  .714}\underline{77.95} \\
        \midrule
        \textbf{DirMixE-LoRA-LSF} & \cellcolor[rgb]{ .973,  .796,  .678}\textbf{82.87} & \cellcolor[rgb]{ .976,  .808,  .694}\textbf{82.03} & \cellcolor[rgb]{ .973,  .796,  .678}\textbf{82.51} & \cellcolor[rgb]{ .976,  .808,  .694}\underline{74.96} & \cellcolor[rgb]{ .973,  .796,  .678}\textbf{75.66} & \cellcolor[rgb]{ .984,  .867,  .788}\underline{75.08} & \cellcolor[rgb]{ .973,  .796,  .678}\textbf{77.36} & \cellcolor[rgb]{ .973,  .796,  .678}\textbf{76.61} & \cellcolor[rgb]{ .973,  .796,  .678}\textbf{77.06} & \cellcolor[rgb]{ .973,  .796,  .678}\textbf{78.24}$_{\color{blue}(\pm 3.10)}$ \\
        \bottomrule
        \end{tabular}%
    % }
    \label{tab:performance_comparison_imagenet_lora}%
  \end{table*}%

  % Table generated by Excel2LaTeX from sheet 'ImageNet'
  \begin{table*}[htbp]
    \centering
    \caption{Performance Comparison on ImageNet-LT When Fine-tuning with AdaptFormer (\textbf{Ours Setting)}}
    \small
    % \resizebox{\linewidth}{!}{
        \begin{tabular}{lcccccccccc}
        \toprule
        \multirow{1.5}[4]{*}{\textbf{Method}} & \multicolumn{3}{c}{\textbf{Forward-LT}} & \multicolumn{3}{c}{\textbf{Uniform}} & \multicolumn{3}{c}{\textbf{Backward-LT}} & \multirow{1.5}[4]{*}{\textbf{Mean}} \\
    \cmidrule(lr){2-4}\cmidrule(lr){5-7}\cmidrule(lr){8-10}          & \textbf{1} & \textbf{2} & \textbf{3} & \textbf{1} & \textbf{2} & \textbf{3} & \textbf{1} & \textbf{2} & \textbf{3} &  \\
        \midrule
        LIFT-AF & 79.86 & 79.89 & 80.63 & \cellcolor[rgb]{ .973,  .796,  .678}\textbf{76.80} & \cellcolor[rgb]{ .973,  .796,  .678}\textbf{77.51} & \cellcolor[rgb]{ .973,  .796,  .678}\textbf{77.06} & 74.01 & 73.48 & 73.74 & 77.00$_{\color{blue}(\pm 2.62)}$ \\
        SADE-AF & \cellcolor[rgb]{ .976,  .812,  .702}83.03 & \cellcolor[rgb]{ .976,  .82,  .714}82.98 & \cellcolor[rgb]{ .976,  .808,  .698}\underline{83.38} & \cellcolor[rgb]{ .992,  .925,  .882}\underline{76.46} & 75.54 & \cellcolor[rgb]{ .996,  .965,  .945}76.40 & \cellcolor[rgb]{ .992,  .933,  .894}75.36 & \cellcolor[rgb]{ .988,  .89,  .824}75.58 & \cellcolor[rgb]{ .992,  .922,  .871}75.31 & \cellcolor[rgb]{ .988,  .894,  .835}78.16$_{\color{blue}(\pm 3.49)}$ \\
        DirMixE-AF & \cellcolor[rgb]{ .976,  .8,  .682}\underline{83.25} & \cellcolor[rgb]{ .976,  .808,  .694}\underline{83.19} & \cellcolor[rgb]{ .976,  .812,  .702}83.35 & \cellcolor[rgb]{ 1,  .996,  .988}76.28 & \cellcolor[rgb]{ .98,  .847,  .757}77.04 & 76.26 & \cellcolor[rgb]{ .98,  .839,  .745}\underline{77.24} & \cellcolor[rgb]{ .976,  .824,  .718}\underline{76.82} & \cellcolor[rgb]{ .98,  .839,  .745}\underline{76.84} & \cellcolor[rgb]{ .976,  .827,  .725}\underline{78.92}$_{\color{blue}(\pm 3.09)}$ \\
        \midrule
        \textbf{DirMixE-AF-LSF} & \cellcolor[rgb]{ .973,  .796,  .678}\textbf{83.27} & \cellcolor[rgb]{ .973,  .796,  .678}\textbf{83.35} & \cellcolor[rgb]{ .973,  .796,  .678}\textbf{83.54} & 76.26 & \cellcolor[rgb]{ .98,  .839,  .745}\underline{77.12} & \cellcolor[rgb]{ .992,  .929,  .89}\underline{76.54} & \cellcolor[rgb]{ .973,  .796,  .678}\textbf{78.04} & \cellcolor[rgb]{ .973,  .796,  .678}\textbf{77.27} & \cellcolor[rgb]{ .973,  .796,  .678}\textbf{77.63} & \cellcolor[rgb]{ .973,  .796,  .678}\textbf{79.22}$_{\color{blue}(\pm 2.99)}$ \\
        \bottomrule
        \end{tabular}%
    % }
    \label{tab:performance_comparison_imagenet_adaptformer}%
  \end{table*}%

  \begin{table*}[htbp]
    \centering
    \caption{Performance Comparison on iNaturalist When Fine-tuning with AdaptFormer (\textbf{Ours Setting)}}
    \small
    % \resizebox{\linewidth}{!}{
        \begin{tabular}{lcccccccccc}
        \toprule
        \multirow{1.5}[4]{*}{\textbf{Method}} & \multicolumn{3}{c}{\textbf{Forward-LT}} & \multicolumn{3}{c}{\textbf{Uniform}} & \multicolumn{3}{c}{\textbf{Backward-LT}} & \multirow{1.5}[4]{*}{\textbf{Mean}} \\
    \cmidrule(lr){2-4}\cmidrule(lr){5-7}\cmidrule(lr){8-10}          & \textbf{1} & \textbf{2} & \textbf{3} & \textbf{1} & \textbf{2} & \textbf{3} & \textbf{1} & \textbf{2} & \textbf{3} &  \\
        \midrule
        LIFT-AF & \cellcolor[rgb]{ .922,  .953,  .98}\underline{75.69} & \cellcolor[rgb]{ .941,  .965,  .988}76.15 & 74.06 & \cellcolor[rgb]{ .898,  .941,  .976}\underline{77.43} & \cellcolor[rgb]{ .878,  .925,  .969}77.18 & \cellcolor[rgb]{ .937,  .961,  .984}77.92 & \cellcolor[rgb]{ .957,  .976,  .992}79.06 & 78.18 & 78.80 & 77.16$_{\color{blue}(\pm 1.52)}$ \\
        SADE-AF & \cellcolor[rgb]{ .741,  .843,  .933}\textbf{76.68} & \cellcolor[rgb]{ .757,  .851,  .937}\underline{76.92} & \cellcolor[rgb]{ .741,  .843,  .933}\textbf{75.18} & 76.74 & 76.66 & 77.68 & 78.72 & \cellcolor[rgb]{ .957,  .976,  .992}78.52 & \cellcolor[rgb]{ .961,  .976,  .992}79.04 & \cellcolor[rgb]{ .953,  .973,  .988}77.35$_{\color{blue}(\pm 1.17)}$ \\
        DirMixE-AF & 75.25 & 75.90 & \cellcolor[rgb]{ .878,  .929,  .969}\underline{74.59} & \cellcolor[rgb]{ .925,  .957,  .98}77.25 & \cellcolor[rgb]{ .867,  .918,  .969}\underline{77.23} & \cellcolor[rgb]{ .788,  .871,  .945}\underline{78.47} & \cellcolor[rgb]{ .859,  .914,  .965}\underline{79.82} & \cellcolor[rgb]{ .761,  .855,  .941}\underline{80.07} & \cellcolor[rgb]{ .816,  .89,  .953}\underline{79.87} & \cellcolor[rgb]{ .89,  .933,  .973}\underline{77.61}$_{\color{blue}(\pm 1.96)}$ \\
        \midrule
        \textbf{DirMixE-AF-LSF} & \cellcolor[rgb]{ .933,  .961,  .984}75.64 & \cellcolor[rgb]{ .741,  .843,  .933}\textbf{76.97} & \cellcolor[rgb]{ .741,  .843,  .933}\textbf{75.18} & \cellcolor[rgb]{ .741,  .843,  .933}\textbf{78.47} & \cellcolor[rgb]{ .741,  .843,  .933}\textbf{77.74} & \cellcolor[rgb]{ .741,  .843,  .933}\textbf{78.63} & \cellcolor[rgb]{ .741,  .843,  .933}\textbf{80.69} & \cellcolor[rgb]{ .741,  .843,  .933}\textbf{80.21} & \cellcolor[rgb]{ .741,  .843,  .933}\textbf{80.29} & \cellcolor[rgb]{ .741,  .843,  .933}\textbf{78.20}$_{\color{blue}(\pm 1.89)}$ \\
        \bottomrule
        \end{tabular}%
    % }
    \label{tab:performance_comparison_inaturalist_adaptformer}%
  \end{table*}%

  \subsection{Implentation Details.}\label{sec:imp}

  \textbf{Training ResNet Models.} 
  Following~\cite{DBLP:conf/nips/ZhangHHF22, DBLP:conf/cvpr/AimarJFK23}, we employ ResNeXt-50~\cite{xie2017aggregated} and ResNet-32~\cite{he2016deep} as the backbones for ImageNet-LT and CIFAR-LT datasets, respectively. In CIFAR-LT experiments, we train the model for $200$ epochs using Stochastic Gradient Descent (SGD)~\cite{robbins1951stochastic}. The initial learning rate is set at $0.1$, with $0.9$ momentum rate and $128$ batch size. Moreover, a step learning rate schedule is adopted, which reduces the learning rate by a factor of $10$ at the $160$-th and $180$-th epoch, respectively. Regarding the ImageNet-LT dataset, the model is trained $180$ epochs using SGD. Here, the initial learning rate is $0.025$ with $0.9$ momentum and $64$ batch size. Then, the learning rate is adjusted through a cosine annealing schedule, which gradually declines from $0.025$ to $0$ over $180$ epochs. Finally, for the sake of fair comparisons, we re-implement the above methods using their publicly available code and conduct experiments on the same device.

  \newcont{

  \vspace{1\baselineskip}

  \noindent \textbf{Fine-tuning Foundation Models.} 
  We follow the protocol described in LIFT \cite{DBLP:conf/icml/Shi00SH024}. Specifically, we fine-tune the image encoder of CLIP \cite{DBLP:conf/icml/RadfordKHRGASAM21} with a ViT-B/16 \cite{DBLP:conf/iclr/DosovitskiyB0WZ21} backbone. We apply two PEFT methods: LoRA \cite{DBLP:conf/iclr/HuSWALWWC22} and AdaptFormer \cite{DBLP:conf/nips/ChenGTWSWL22}. A cosine classifier is added on top of the image encoder, with its weights initialized using the text features of class prompts (\textit{e.g.}, "a photo of a [CLASS]") extracted by the CLIP text encoder. After initialization, the text encoder is removed. We use stochastic gradient descent (SGD) as the optimizer, with a batch size of 128 and a momentum of 0.9. The initial learning rate varies by dataset: 0.1 for CIFAR-LT, 0.025 for ImageNet-LT, and 0.01 for iNaturalist. A cosine learning rate scheduler is used to gradually reduce the learning rate to zero during training. The model is trained for 10 epochs on CIFAR-LT and ImageNet-LT, and for 20 epochs on iNaturalist. 

  }

  \subsection{Datasets}
  \textbf{Dataset Descriptions.} We conduct experiments on four popular benchmark datasets for imbalanced learning: \textbf{(a) CIFAR-10-LT and CIFAR-100-LT} datasets. The original CIFAR-10 and CIFAR-100 datasets~\cite{krizhevsky2009learning} have $50,000$ images for training and $10,000$ images for validation with $10$ and $100$ categories, respectively. Following~\cite{DBLP:conf/cvpr/CuiJLSB19}, we use the long-tailed version of CIFAR 10 and CIFAR 100 datasets with imbalanced ratio $\rho = N_{\max}/N_{\min} = 100$. \textbf{(b) ImageNet-LT} dataset. We adopt the ImageNet-LT dataset proposed by~\cite{openlongtailrecognition}, which is sampled from ImageNet~\cite{DBLP:conf/cvpr/DengDSLL009} following the \textit{Pareto} distribution. Briefly, it consists of $115.8K$ images from $1000$ classes, with $1280$ images in the most frequent class and $5$ images in the minority. \newcont{ \textbf{(c) iNaturalist} dataset. We use the iNaturalist 2018 dataset~\cite{DBLP:conf/cvpr/HornASCSSAPB18}, which contains $437K$ images from $8142$ classes. The most frequent class contains $1,000$ images, while the least frequent class has only $2$ images. }

  \subsection{Competitors}\label{sec:compete}

  \textbf{Training ResNet Models.} We compare our method with several baselines for training ResNet models, including:

  \begin{itemize}
  \item \textbf{Label-Distribution-Aware Margin (LDAM)} ~\cite{DBLP:conf/nips/CaoWGAM19} improves the performance of minority classes by encouraging larger margins for minority classes. 
  \item \textbf{Logit Adjustment (LA)} ~\cite{DBLP:conf/iclr/MenonJRJVK21} advances the conventional softmax cross-entropy by ensuring Fisher consistency in minimizing the balanced error.
  \item \textbf{Vector Scaling (VS)} ~\cite{kini2021labelimbalanced} proposes to leverage both multiplicative and additive logit adjustments to address label imbalance problems.
  \item \textbf{LAbel distribution DisEntangling (LADE)} ~\cite{lade} X regards the long-tailed learning as a label shift problem and aims to disentangle the source label distribution from the model prediction to match the target label distribution during training.
  
  \item \textbf{Data Dependent Contraction (DDC)} ~\cite{ddc} designs a Deferred Re-Weighting scheme to boost performance for imbalanced learning, which is also compatible with the VS loss.
  \item \textbf{RoutIng Diverse Experts (RIDE)} ~\cite{DBLP:conf/iclr/WangLM0Y21} proposes a distribution-aware multiple expert routing system, which can efficiently reduce both model bias and variance for imbalanced learning.
  
  \item \textbf{Self-supervised Aggregation of Diverse Experts (SADE)} ~\cite{DBLP:conf/nips/ZhangHHF22} is a state-of-the-art test-agnostic long-tailed learning method, which trains multiple diverse experts to tackle different class distributions and adopts self-supervision to aggregate the decisions of multiple experts to adapt unknown test distributions.
  \item \textbf{Balanced Product of Experts (BalPoE)} ~\cite{DBLP:conf/cvpr/AimarJFK23} is another SOTA long-tailed learning algorithm that successfully extends logit adjustment to the mixture of experts.

  \end{itemize}

  \newcont{

  \noindent \textbf{Fine-tuning Foundation Models.} We also compare our method with several competitors for fine-tuning foundation models, including:

  \begin{itemize}
    \item \textbf{Self-supervised Aggregation of Diverse Experts (SADE)} ~\cite{DBLP:conf/nips/ZhangHHF22} We re-implement SADE based on the official code to fine-tune the CLIP foundation model using LoRA and AdaptFormer.

    \item \textbf{LIghtweight Fine-Tuning (LIFT)} \cite{DBLP:conf/icml/Shi00SH024} is a state-of-the-art method that applies PEFT techniques to adjust the image encoder of CLIP. It uses LA loss as the training objective and leverages semantic knowledge from CLIP's text encoder to initialize the classifier added during fine-tuning.
  \end{itemize}

  }

  \subsection{The Choice of meta-distribution and Experts}\label{sec:meta}
  We briefly introduce the choice of \eqref{eq:p} and \eqref{eq:i} to define the mixture distribution. Detailed implementations are shown in the appendix. Drawing inspiration from the skill-diverse expert learning approach in prior art, we employ a three-component mixture model for \eqref{eq:p} to encapsulate three critical skills. Each component is determined by a specific choice of $\alpha$, the parameter for Dirichlet distribution. The \underline{forward} component $\bm{\alpha}^{(f)}$ aligns with the training label distribution, indicative of performance in the head classes. The \underline{uniform} component $\bm{\alpha}^{(u)}$ corresponds to a uniform distribution, reflecting adherence to the conventional long-tail testing protocol.
  The \underline{backward} component represents an inverse long-tail distribution of the training set (where head classes are transformed into tail classes and \textit{vice versa}), signifying performance in the tail distribution. Furthermore, for \eqref{eq:i},  we chose a uniform distribution ($p_1 = p_2 = p_3 = 1/3$) in our model, considering the equal significance of all three skills. This prevents the unfair oversight of any particular skill.
  
  % \subsection{The Choice of meta-distribution and Experts}\label{sec:meta}
  % We briefly introduce the choice of \eqref{eq:p} and \eqref{eq:i} to define the mixture distribution. Detailed implementations are shown in the appendix.  Drawing inspiration from the skill-diverse expert learning approach in prior art, we employ a three-component mixture model for \eqref{eq:p} to encapsulate three critical skills: \underline{forward} component aligns with the training label distribution, indicative of performance in the head classes;  \underline{uniform} component corresponds to the local variation around uniform distribution;  \underline{backward} component represents the local variation around an inverse long-tail distribution of the training set, signifying performance in the tail distribution. Please refer to Appendix \ref{sec:meta} for more details.

  \subsection{Experiment Protocols}
  
  \textbf{Evaluation Protocols.} We evaluate the performance of various methods across multiple test datasets. Specifically, we employ \textbf{two regimes} to generate test datasets: \textbf{(a) Ours Setting.} We generate test data by sampling from the perturbed version of the meta-distribution in Sec.\tb{M}. Subsequently, for each Dirichlet distribution, we sample \textbf{three label distributions} for testing on three forward/uniform/backward LT distributions, respectively (As in Tab.\ref{tab:Cifar100_Our}). \textbf{(b) SADE's Setting}:~\cite{DBLP:conf/nips/ZhangHHF22}. Following SADE, the test datasets usually fall into one of three distribution types: (forward) \textit{long-tail, uniform, and backward long-tail}, each defined by a different imbalance degree $\rho$. \textbf{Please see Appendix \tb{M} for more details.}

  \subsection{\newsec{Overall Performance When Training ResNet Models}}
  \label{sec:res}
  \newcont{Tab.\ref{tab:Cifar100_Our}-\ref{tab:inat-ours} compare the overall performance on CIFAR-100, ImageNet and iNaturalist for Our's setting when training ResNet models, while the results for SADE's setting are shown in Tab.\tb{10}-\tb{12} in Appendix \tb{N}}. For fairness, we only compare the performance using MoE scheme and do not use mixup for all the competitors. Moreover, we adopt the self-supervised aggregation method of SADE to align test-time operations for all the MoE-based Models (SADE, BalPoE, \ours) except RIDE. The rationale for the exception is that all the experts in RIDE are designed for the same distribution, and there is already a routing strategy to choose the experts.
  We have the following observations on the results:
  
  \begin{enumerate}
      \item[a)] \textbf{Overall empirical trends}: 1) Our method shows similar performance to SOTA methods like BalPoE and SADE for \textbf{Forward-LT} and \textbf{Uniform}. 2) However, in \textbf{Backward-LT}, our improvements are more substantial. For instance, in Tab.\ref{tab:Cifar100_Our} (CIFAR-100, our setting), the performance gain varies from 2.4 to 4.1. In Tab.\ref{tab:ImageNet_Our} (ImageNet-LT, our setting), it ranges from 2.0 to 2.4. Thanks to these gains in \texttt{Backward-LT}, our method consistently achieves the best average performances, demonstrating its effectiveness.
      \item[b)] \textbf{Performance differences across datasets}: CIFAR series performances are comparable, likely due to their similar data distributions and scales. In contrast, the ImageNet dataset, with its distinct data distribution and scale, shows slightly different trends. Yet, the results still follow the pattern noted in a). Even though our method slightly lags behind SOTA methods on the ImageNet dataset for \texttt{Uniform}, the performance differences are mostly less than 0.5. Conversely, the improvement in \texttt{Backward} distributions is more significant, ensuring our method maintains the best average performances.
      \item[c)] \textbf{Explaining the differences}: The distinct distribution of the ImageNet dataset compared to CIFAR-10 and CIFAR-100 may account for these discrepancies. The greater variation between different label distributions means the model must focus more on \texttt{Backward} distributions, slightly compromising performance in \texttt{Uniform} and \texttt{Forward} distributions to lead in overall performance.
  \end{enumerate}
  See Appendix \tb{N.7} for comparisions with other baselines.

  \begin{figure*}[t]  
      \centering
      \subfigure[Expert 1 (Forward)]{
        \includegraphics[width=0.31\textwidth]{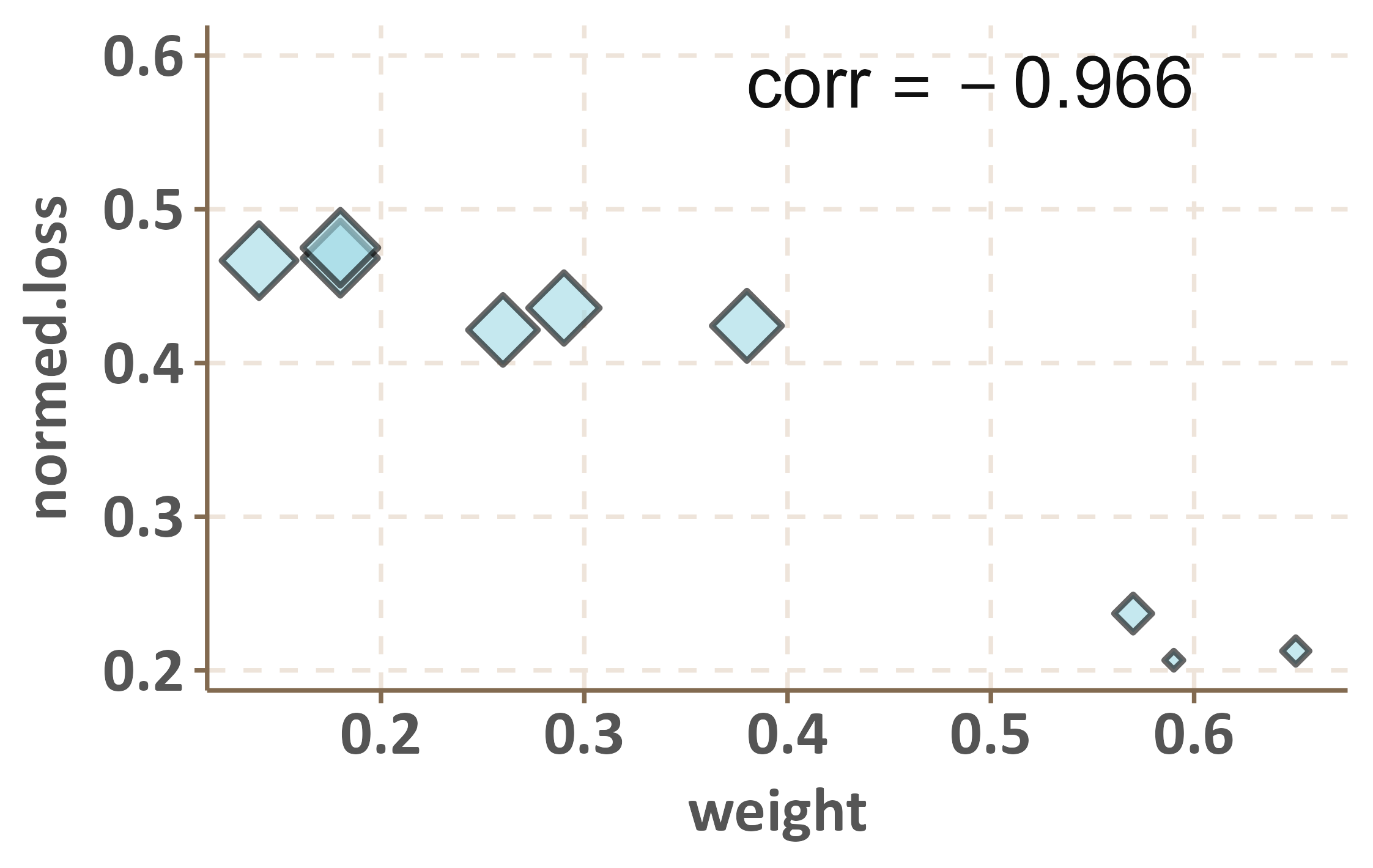} 
      }
      \subfigure[Expert 2 (Uniform)]{
        \includegraphics[width=0.31\textwidth]{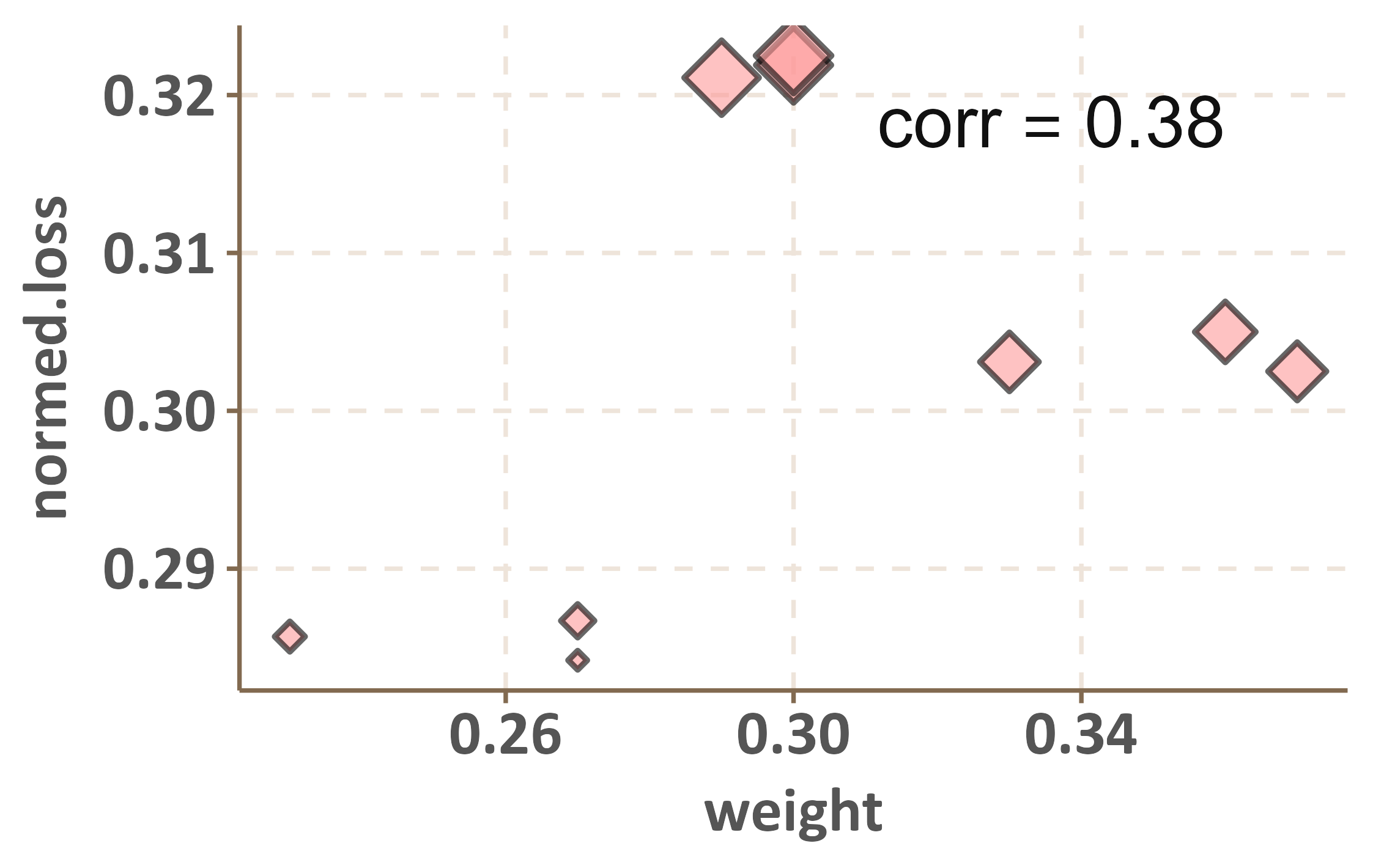} 
      }
      \subfigure[Expert 3 (Backward)]{
        \includegraphics[width=0.31\textwidth]{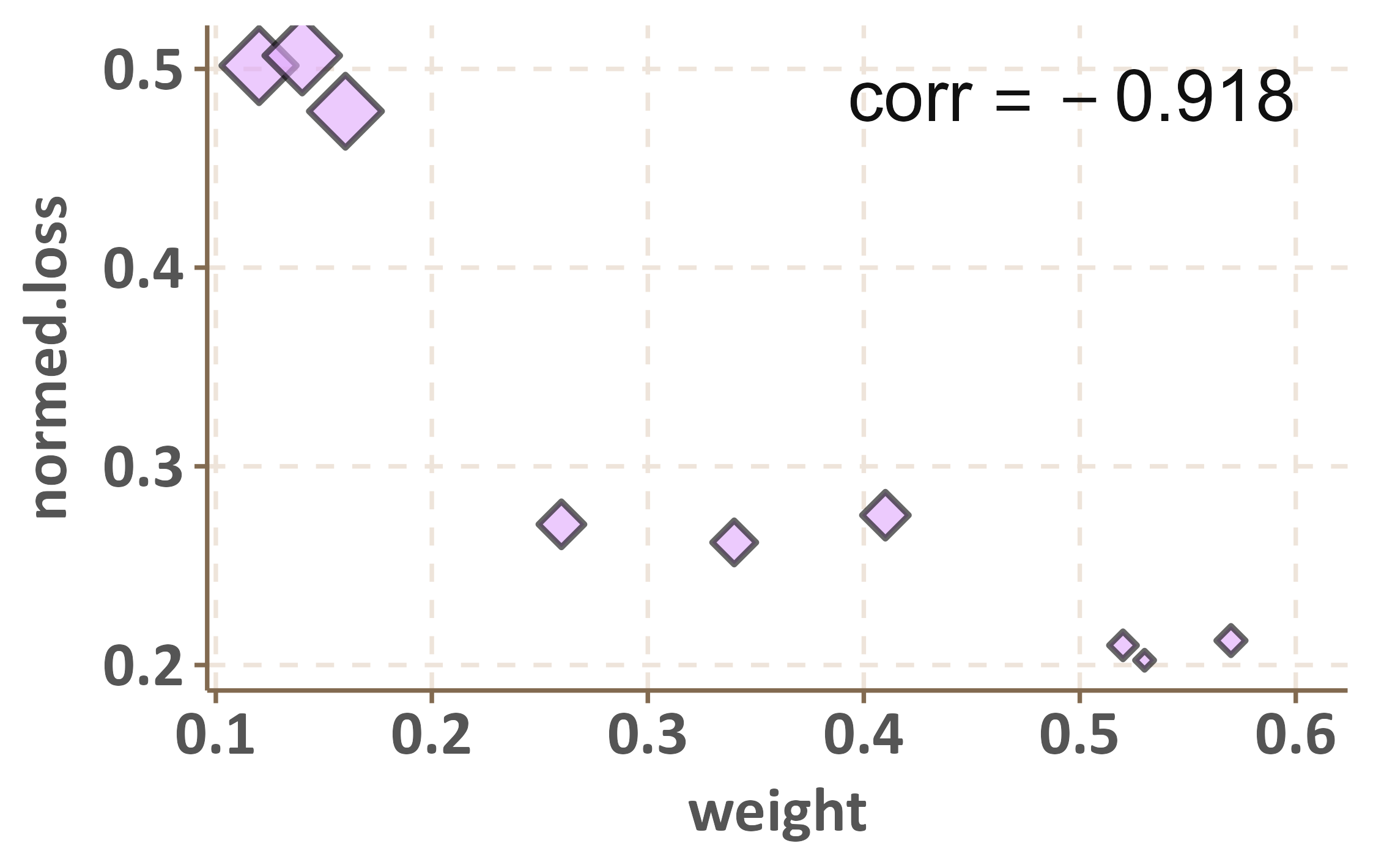} 
      }
      \caption{\label{fig:corr} \textbf{The Correlation between Expert Weights and Loss.}}
  \end{figure*}

  \begin{figure}[h]  
    \centering
    \subfigure[Our's Setting]{
    
      \includegraphics[width=0.42\textwidth]{./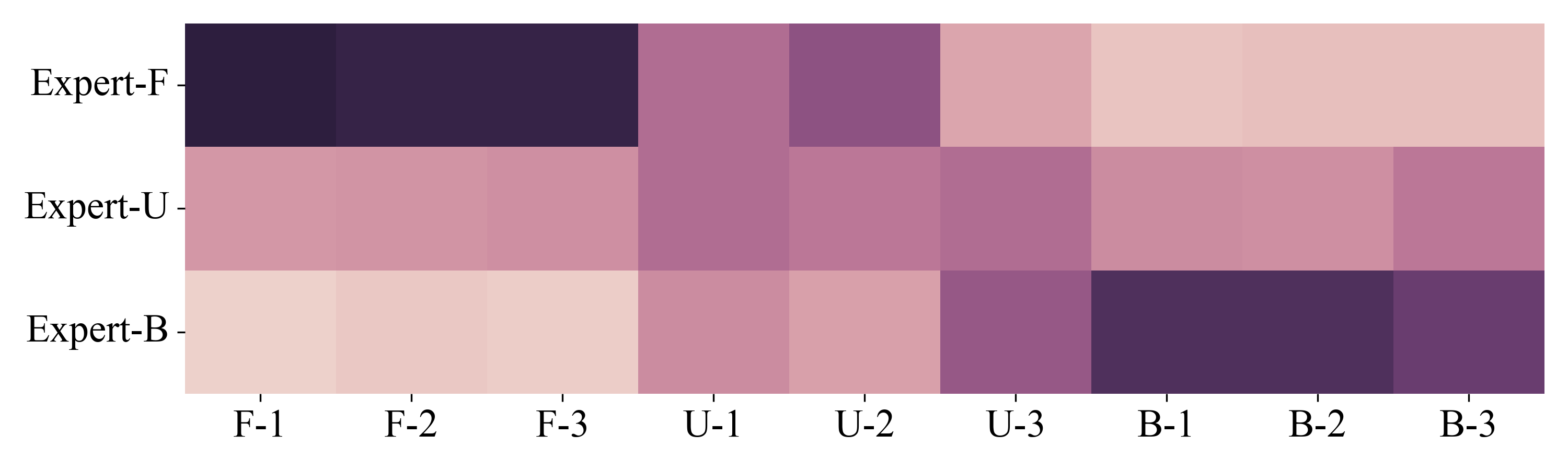} 
    }
    \subfigure[SADE's Setting]{
      \includegraphics[width=0.42\textwidth]{./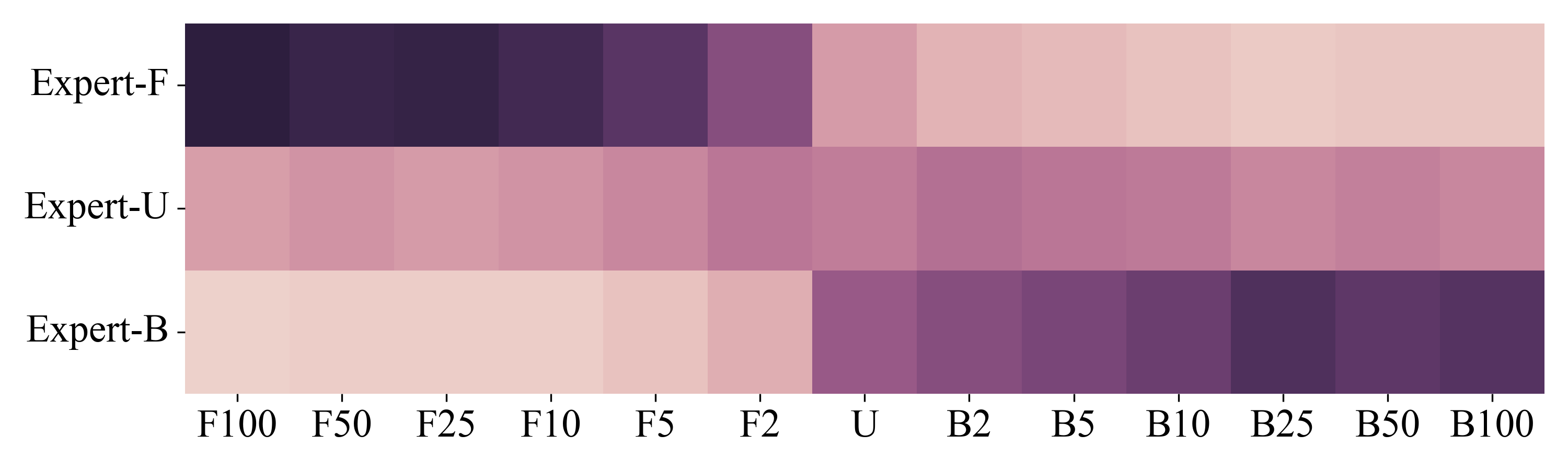} 
    }
    
    \caption{\label{fig:weights} \textbf{Weight Assignment in the Self-supervised Aggregations on CIFAR-100.} \textbf{F,U,B} represent the forward, uniform and backward distributions. For the x-axis, \textbf{F-1,F-2,F-3} in \textbf{(a)} denote the three observed label distributions in the test data respectively. \textbf{F-2,F-5,$\cdots$,F-100} represents the corresponding imbalance ratio of the forward distribution under SADE's setting. The case for \textbf{the suffices of U and B} are similar. \textbf{Expert-F,U,B} represents the experts assigned to the forward, backward, uniform Dirichlet distribution, respectively.}
  \end{figure}

  \subsection{\newsec{Overall Performance When Fine-tuning Foundation Models}}\label{sec:foundexp}
  \newcont{
  Tab.\ref{tab:performance_comparison_imagenet_lora}-\ref{tab:performance_comparison_inaturalist_adaptformer} show the overall performance on ImageNet and iNaturalist for Ours setting when fine-tuning foundation models using LoRA and AdaptFormer. The results on CIFAR-10 and CIFAR-100 for Ours setting, and on CIFAR-10, CIFAR-100, ImageNet, and iNaturalist for SADE's setting, are provided in Appendix \tb{N.3} and \tb{N.4}, respectively. From these results, we observe that DirMixE consistently outperforms the other baselines across all datasets when fine-tuning with the same PEFT method (LoRA or AdaptFormer), demonstrating its effectiveness. We further observe that DirMixE-LSF achieves better performance than DirMixE across all datasets. This suggests that the latent skill fine-tuning (LSF) strategy can further improves DirMixE's performance, confirming the effectiveness of LSF.
  
  }

  \subsection{Experts Assignment}
  In this part, we validate the ability of the test-time self-supervised aggregation by visualizing the weight assignments for different label distributions in Fig.\ref{fig:weights}. The forward and backward experts always tend to have a significant weight for their corresponding distributions. Uniform distributions tend to utilize all three experts. This is because tail and head classes are equally crucial for uniform distribution. Please see Appendix \tb{N.5} for more results.

  \subsection{Correlation between Weights and Losses}
  Fig.\ref{fig:corr} shows the correlation between expert weights of \ours~ during the test phase and their corresponding loss. We normalize the losses to align the magnitude of the loss on different distributions, where the normalized loss of the expert $i$ is $\ell_i /\sum_{i=1}^3 \ell_i$. The results show a strong negative correlation on the forward and backward experts, and a much weaker positive correlation for uniform ones. This is because uniform distributions do not have a significant bias on head/tail classes, producing a relatively stable average performance across different distributions. Above all, in most cases, we can observe negative correlations between loss and expert weight. According to Thm.\ref{thm:cov}, the negative correlation tends to reduce the generalization error of the test-time aggregation scheme, validating the reasonability of observed performance advantage.
  \subsection{The Semi-Variance/Variance Ratio}
  Recall Thm.\ref{thm:gen}, we adopt the assumption that $\V(\la) \ee V_+(\la)$. To validate this assumption, we calculate the semi-variance/variance ratio ($\rho$) in CIFAR-10 and CIFAR-100. We find that $\rho =0.503, 0.509$, respectively for CIFAR-10 and 100, which obviously aligns with the assumption.
  
  % \subsection{Experts Assignment}
  % In this part, we validate the ability of the test-time self-supervised aggregation by visualizing the weight assignments for different label distributions in Fig.\ref{fig:weight100}-\ref{fig:weightI}. The forward and backward experts always tend to have a significant weight for their corresponding distributions. Uniform distributions tend to utilize all three experts. This is because tail and head classes are equally crucial for uniform distribution.

  \subsection{Fine-grained Performance}
  In addition to the overall accuracy of test datasets, we also examine the effectiveness of DirMixE across many-shot, medium-shot, and few-shot classes. For CIFAR 100-LT and ImageNet-LT, classes are divided into many-shot ($> 100$), medium-shot ($20 \sim 100$), and few-shot ($< 20$) categories. In the case of CIFAR 10-LT, classes are split into many-shot ($> 1000$), medium-shot ($200 \sim 1000$), and few-shot ($< 200$) categories. All these statistics are based on the training label distribution. The results are illustrated in heat maps in Fig.\ref{fig:fine-cifar100} for CIFAR-100. Please see Appendix \tb{N.6} for more results.

  \begin{figure}[t]  
    \centering
    \subfigure[Our's Setting]{
    
      \includegraphics[width=0.86\columnwidth]{./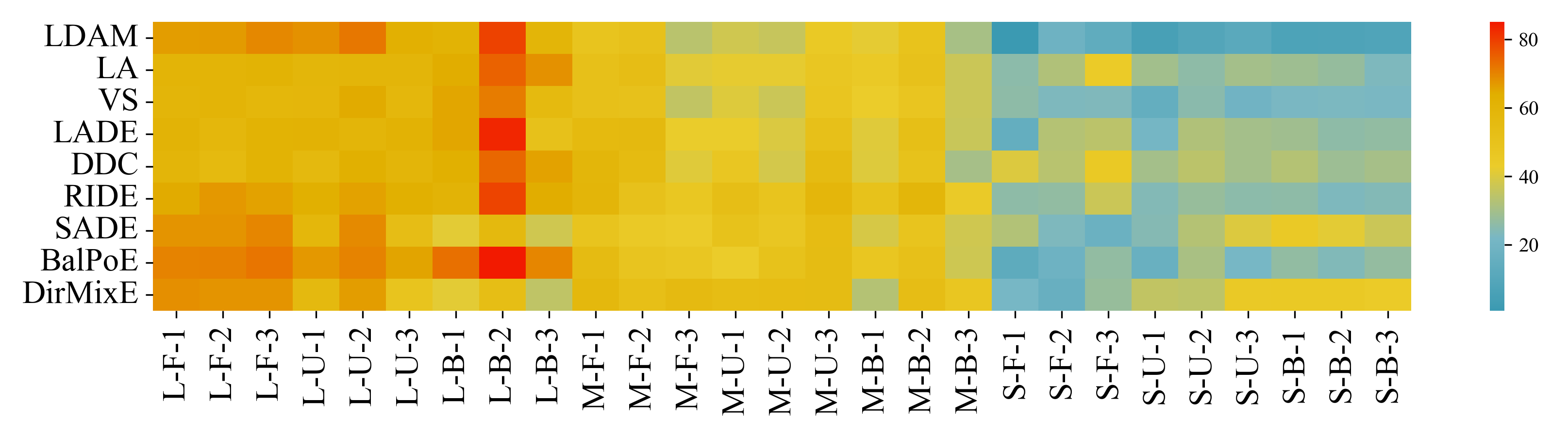} 
    }
    \subfigure[SADE's Setting]{
      \includegraphics[width=0.86\columnwidth]{./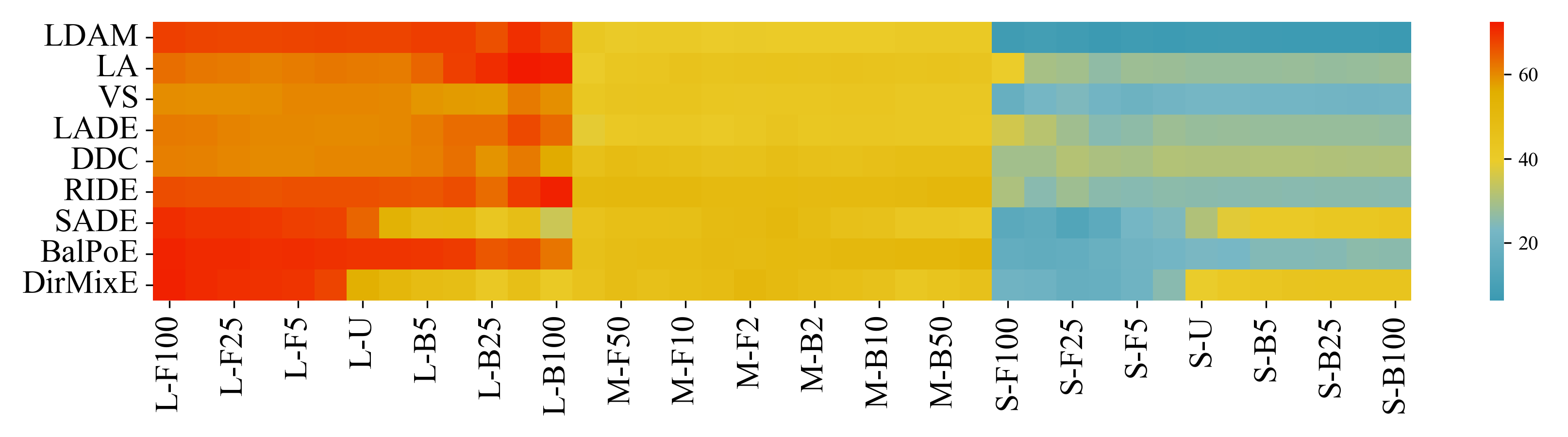} 
    }
    \caption{\label{fig:fine-cifar100} \textbf{Fine-grained Performance of DirMixE on CIFAR-100-LT} \textbf{F,U,B} represent the forward, uniform and backward distributions. For the x-axis, \textbf{F-1,F-2,F-3} in \textbf{(a)} denote the three observed label distributions in the test data respectively. \textbf{F2,F5,$\cdots$,F100} in \textbf{(b)} represents the corresponding imbalance ratio of the forward distribution under SADE's setting. The case for \textbf{the suffixes of U and B} are similar. \textbf{L, M, S} denote the many-shot, medium-shot and few-shot classes respectively. For example, \textbf{L-B100} indicates the performance of medium-shot classes on the backward distribution with an imbalance ratio of 100.}
  \end{figure}

  \section{Conclusion}
  We consider the hierarchy of the global and local variations of the test label distributions in test agnostic long-tail recognition. To this end, we propose a Dirichlet MoE method named \ours. The label distributions are sampled from a meta-distribution, characterized by a mixture of Dirichlet distribution. In the proposed MoE strategy, each expert is assigned to a local Dirichlet distribution for a specific skill. The global and local variations are then captured by inter- and intra-component variations of the meta-distribution. This  also leverages a stable objective function minimizing the mean and semi-variance of the loss with the help of Monte Carlo method. Furthermore, we extend this method to PEFT of foundation models to improve model performance.  When $\V_+(\la) \ee \V(\la)$, we show that the proposed objective function enjoys an sharper bound by semi-variance regularization. Finally, extensive experiments demonstrate the efficacy of \ours.
  
  \bibliographystyle{IEEEtran}
  \bibliography{example_paper}

@inproceedings{DBLP:conf/cvpr/AimarJFK23,
  author       = {Emanuel Sanchez Aimar and
                  Arvi Jonnarth and
                  Michael Felsberg and
                  Marco Kuhlmann},
  title        = {Balanced Product of Calibrated Experts for Long-Tailed Recognition},
  booktitle    = {{IEEE/CVF} Conference on Computer Vision and Pattern Recognition},
  pages        = {19967--19977},
  year         = {2023},
}

@article{DBLP:journals/pami/ZhangKHYF23,
  author       = {Yifan Zhang and
                  Bingyi Kang and
                  Bryan Hooi and
                  Shuicheng Yan and
                  Jiashi Feng},
  title        = {Deep Long-Tailed Learning: {A} Survey},
  journal      = {{IEEE} Trans. Pattern Anal. Mach. Intell.},
  volume       = {45},
  number       = {9},
  pages        = {10795--10816},
  year         = {2023},
}

@inproceedings{DBLP:conf/iclr/KangXRYGFK20,
  author       = {Bingyi Kang and
                  Saining Xie and
                  Marcus Rohrbach and
                  Zhicheng Yan and
                  Albert Gordo and
                  Jiashi Feng and
                  Yannis Kalantidis},
  title        = {Decoupling Representation and Classifier for Long-Tailed Recognition},
  booktitle    = {International Conference on Learning Representations},
  year         = {2020},
}

@article{tang2016tri,
  title={Tri-clustered tensor completion for social-aware image tag refinement},
  author={Tang, Jinhui and Shu, Xiangbo and Qi, Guo-Jun and Li, Zechao and Wang, Meng and Yan, Shuicheng and Jain, Ramesh},
  journal={IEEE transactions on pattern analysis and machine intelligence},
  volume={39},
  number={8},
  pages={1662--1674},
  year={2016},
  publisher={IEEE}
}

@article{li2018deep,
  title={Deep collaborative embedding for social image understanding},
  author={Li, Zechao and Tang, Jinhui and Mei, Tao},
  journal={IEEE transactions on pattern analysis and machine intelligence},
  volume={41},
  number={9},
  pages={2070--2083},
  year={2018},
  publisher={IEEE}
}

@inproceedings{zhong2023understanding,
  title={Understanding imbalanced semantic segmentation through neural collapse},
  author={Zhong, Zhisheng and Cui, Jiequan and Yang, Yibo and Wu, Xiaoyang and Qi, Xiaojuan and Zhang, Xiangyu and Jia, Jiaya},
  booktitle={Proceedings of the IEEE/CVF Conference on Computer Vision and Pattern Recognition},
  pages={19550--19560},
  year={2023}
}

@inproceedings{NEURIPS2020_07211688,
 author = {Zhang, Dong and Zhang, Hanwang and Tang, Jinhui and Hua, Xian-Sheng and Sun, Qianru},
 booktitle = {NeurIPS},
 editor = {H. Larochelle and M. Ranzato and R. Hadsell and M.F. Balcan and H. Lin},
 pages = {655--666},
 publisher = {Curran Associates, Inc.},
 title = {Causal Intervention for Weakly-Supervised Semantic Segmentation},
 volume = {33},
 year = {2020}
}

@inproceedings{DBLP:conf/miccai/GaldranCB21,
  author       = {Adrian Galdran and
                  Gustavo Carneiro and
                  Miguel {\'{A}}ngel Gonz{\'{a}}lez Ballester},
  title        = {Balanced-MixUp for Highly Imbalanced Medical Image Classification},
  booktitle    = {Medical Image Computing and Computer Assisted Intervention},
  pages        = {323--333},
  year         = {2021},
}

@inproceedings{DBLP:conf/iccv/ZhangFWLQ17,
  author       = {Xiao Zhang and
                  Zhiyuan Fang and
                  Yandong Wen and
                  Zhifeng Li and
                  Yu Qiao},
  title        = {Range Loss for Deep Face Recognition with Long-Tailed Training Data},
  booktitle    = {{IEEE} International Conference on Computer Vision},
  pages        = {5419--5428},
  year         = {2017},
}

@inproceedings{DBLP:conf/iccv/LinGGHD17,
  author       = {Tsung{-}Yi Lin and
                  Priya Goyal and
                  Ross B. Girshick and
                  Kaiming He and
                  Piotr Doll{\'{a}}r},
  title        = {Focal Loss for Dense Object Detection},
  booktitle    = {{IEEE} International Conference on Computer Vision},
  pages        = {2999--3007},
  year         = {2017},
}

@inproceedings{DBLP:conf/icml/MorikBJ99,
  author    = {Katharina Morik and
               Peter Brockhausen and
               Thorsten Joachims},
  title     = {Combining Statistical Learning with a Knowledge-Based Approach - {A} Case Study in Intensive Care Monitoring},
  booktitle = {International Conference on Machine Learning},
  pages     = {268--277},
  year      = {1999},
}

@inproceedings{DBLP:conf/cvpr/CuiJLSB19,
  author       = {Yin Cui and
                  Menglin Jia and
                  Tsung{-}Yi Lin and
                  Yang Song and
                  Serge J. Belongie},
  title        = {Class-Balanced Loss Based on Effective Number of Samples},
  booktitle    = {{IEEE} Conference on Computer Vision and Pattern Recognition},
  pages        = {9268--9277},
  year         = {2019},
}

@InProceedings{ddc,
    title = {A Unified Generalization Analysis of Re-Weighting and Logit-Adjustment for Imbalanced Learning},
    author = {Zitai Wang and  Qianqian Xu and Zhiyong Yang and Yuan He and Xiaochun Cao and Qingming Huang},
    booktitle = {Annual Conference on Neural Information Processing Systems},
    year = {2023},
}

@inproceedings{DBLP:conf/nips/CaoWGAM19,
  author    = {Kaidi Cao and
               Colin Wei and
               Adrien Gaidon and
               Nikos Ar{\'{e}}chiga and
               Tengyu Ma},
  title     = {Learning Imbalanced Datasets with Label-Distribution-Aware Margin Loss},
  booktitle = {Annual Conference on Neural Information Processing Systems},
  pages     = {1565--1576},
  year      = {2019},
}

@inproceedings{DBLP:conf/iclr/MenonJRJVK21,
  author    = {Aditya Krishna Menon and
               Sadeep Jayasumana and
               Ankit Singh Rawat and
               Himanshu Jain and
               Andreas Veit and
               Sanjiv Kumar},
  title     = {Long-tail learning via logit adjustment},
  booktitle = {International Conference on Learning Representations},
  year      = {2021},
}

@article{DBLP:journals/corr/abs-2001-01385,
  author       = {Han{-}Jia Ye and
                  Hong{-}You Chen and
                  De{-}Chuan Zhan and
                  Wei{-}Lun Chao},
  title        = {Identifying and Compensating for Feature Deviation in Imbalanced Deep Learning},
  journal      = {CoRR},
  volume       = {abs/2001.01385},
  year         = {2020},
}

@inproceedings{DBLP:conf/cvpr/TanWLLOYY20,
  author       = {Jingru Tan and
                  Changbao Wang and
                  Buyu Li and
                  Quanquan Li and
                  Wanli Ouyang and
                  Changqing Yin and
                  Junjie Yan},
  title        = {Equalization Loss for Long-Tailed Object Recognition},
  booktitle    = {{IEEE/CVF} Conference on Computer Vision and Pattern Recognition},
  pages        = {11659--11668},
  year         = {2020},
}

@inproceedings{DBLP:conf/nips/KiniPOT21,
author= {Ganesh Ramachandra Kini and Orestis Paraskevas and Samet Oymak and Christos Thrampoulidis},
title = {Label-Imbalanced and Group-Sensitive Classification under Overparameterization},
booktitle = {Annual Conference on Neural Information Processing Systems},
pages= {18970--18983},
year = {2021},
}

@inproceedings{DBLP:conf/cvpr/ZhouCWC20,
  author       = {Boyan Zhou and
                  Quan Cui and
                  Xiu{-}Shen Wei and
                  Zhao{-}Min Chen},
  title        = {{BBN:} Bilateral-Branch Network With Cumulative Learning for Long-Tailed Visual Recognition},
  booktitle    = {{IEEE/CVF} Conference on Computer Vision and Pattern Recognition},
  pages        = {9716--9725},
  year         = {2020},
}

@inproceedings{DBLP:conf/cvpr/0002021,
  author       = {Hao Guo and
                  Song Wang},
  title        = {Long-Tailed Multi-Label Visual Recognition by Collaborative Training on Uniform and Re-Balanced Samplings},
  booktitle    = {{IEEE} Conference on Computer Vision and Pattern Recognition},
  pages        = {15089--15098},
  year         = {2021},
}

@inproceedings{DBLP:conf/cvpr/LiWKTWLF20,
  author       = {Yu Li and
                  Tao Wang and
                  Bingyi Kang and
                  Sheng Tang and
                  Chunfeng Wang and
                  Jintao Li and
                  Jiashi Feng},
  title        = {Overcoming Classifier Imbalance for Long-Tail Object Detection With Balanced Group Softmax},
  booktitle    = {{IEEE/CVF} Conference on Computer Vision and Pattern Recognition},
  pages        = {10988--10997},
  year         = {2020},
}

@inproceedings{DBLP:conf/iccv/Cai0H21,
  author       = {Jiarui Cai and
                  Yizhou Wang and
                  Jenq{-}Neng Hwang},
  title        = {{ACE:} Ally Complementary Experts for Solving Long-Tailed Recognition in One-Shot},
  booktitle    = {{IEEE/CVF} International Conference on Computer Vision},
  pages        = {112--121},
  year         = {2021},
}

@inproceedings{DBLP:conf/iclr/WangLM0Y21,
  author       = {Xudong Wang and
                  Long Lian and
                  Zhongqi Miao and
                  Ziwei Liu and
                  Stella X. Yu},
  title        = {Long-tailed Recognition by Routing Diverse Distribution-Aware Experts},
  booktitle    = {International Conference on Learning Representations},
  year         = {2021},
}

@inproceedings{DBLP:conf/nips/ZhangHHF22,
  author       = {Yifan Zhang and
                  Bryan Hooi and
                  Lanqing Hong and
                  Jiashi Feng},
  title        = {Self-Supervised Aggregation of Diverse Experts for Test-Agnostic Long-Tailed Recognition},
  booktitle    = {Annual Conference on Neural Information Processing Systems},
  year         = {2022},
  pages        = {34077--34090},
}

@inproceedings{DBLP:conf/cvpr/HornASCSSAPB18,
  author       = {Grant Van Horn and
                  Oisin Mac Aodha and
                  Yang Song and
                  Yin Cui and
                  Chen Sun and
                  Alexander Shepard and
                  Hartwig Adam and
                  Pietro Perona and
                  Serge J. Belongie},
  title        = {The INaturalist Species Classification and Detection Dataset},
  booktitle    = {{IEEE} Conference on Computer Vision and Pattern Recognition},
  pages        = {8769--8778},
  year         = {2018},
}

@article{DBLP:journals/natmi/MiaoLGPYG21,
  author       = {Zhongqi Miao and
                  Ziwei Liu and
                  Kaitlyn M. Gaynor and
                  Meredith S. Palmer and
                  Stella X. Yu and
                  Wayne M. Getz},
  title        = {Iterative human and automated identification of wildlife images},
  journal      = {Nat. Mach. Intell.},
  volume       = {3},
  number       = {10},
  pages        = {885--895},
  year         = {2021},
}

@inproceedings{DBLP:conf/cvpr/DengDSLL009,
  author       = {Jia Deng and
                  Wei Dong and
                  Richard Socher and
                  Li{-}Jia Li and
                  Kai Li and
                  Li Fei{-}Fei},
  title        = {ImageNet: {A} large-scale hierarchical image database},
  booktitle    = {{IEEE} Conference on Computer Vision and Pattern Recognition},
  pages        = {248--255},
  year         = {2009},
}

@inproceedings{DBLP:conf/cvpr/ZhongDWHPTH19,
  author       = {Yaoyao Zhong and
                  Weihong Deng and
                  Mei Wang and
                  Jiani Hu and
                  Jianteng Peng and
                  Xunqiang Tao and
                  Yaohai Huang},
  title        = {Unequal-Training for Deep Face Recognition With Long-Tailed Noisy Data},
  booktitle    = {{IEEE} Conference on Computer Vision and Pattern Recognition},
  pages        = {7812--7821},
  year         = {2019},
}

@inproceedings{DBLP:conf/nips/RangwaniAMR22,
  author       = {Harsh Rangwani and
                  Sumukh K. Aithal and
                  Mayank Mishra and
                  Venkatesh Babu R.},
  title        = {Escaping Saddle Points for Effective Generalization on Class-Imbalanced Data},
  booktitle    = {Annual Conference on Neural Information Processing Systems},
  year         = {2022},
  pages = {22791--22805},
}

@article{krizhevsky2009learning,
  title={Learning multiple layers of features from tiny images},
  author={Krizhevsky, Alex and Hinton, Geoffrey and others},
  year={2009},
  pages={1--60},
}

@inproceedings{DBLP:conf/cvpr/AlshammariWRK22,
  author       = {Shaden Alshammari and
                  Yu{-}Xiong Wang and
                  Deva Ramanan and
                  Shu Kong},
  title        = {Long-Tailed Recognition via Weight Balancing},
  booktitle    = {{IEEE/CVF} Conference on Computer Vision and Pattern Recognition,},
  pages        = {6887--6897},
  year         = {2022},
}

@inproceedings{DBLP:conf/iccv/CuiZ00J21,
  author       = {Jiequan Cui and
                  Zhisheng Zhong and
                  Shu Liu and
                  Bei Yu and
                  Jiaya Jia},
  title        = {Parametric Contrastive Learning},
  booktitle    = {{IEEE/CVF} International Conference on Computer Vision},
  pages        = {695--704},
  year         = {2021},
}

@inproceedings{bern,
  author       = {Andreas Maurer and
                  Massimiliano Pontil},
  title        = {Empirical Bernstein Bounds and Sample-Variance Penalization},
  booktitle    = {Conference on Learning Theory},
  year         = {2009}
}

@article{uni1,
  title={Approximation by finite mixtures of continuous density functions that vanish at infinity},
  author={Nguyen, T Tin and Nguyen, Hien D and Chamroukhi, Faicel and McLachlan, Geoffrey J},
  journal={Cogent Mathematics \& Statistics},
  volume={7},
  number={1},
  pages={1750861},
  year={2020}
  }

@article{uni2,
  title={Approximation of probability density functions via location-scale finite mixtures in Lebesgue spaces},
  author={Nguyen, TrungTin and Chamroukhi, Faicel and Nguyen, Hien D and McLachlan, Geoffrey J},
  journal={Communications in Statistics-Theory and Methods},
  volume={52},
  number={14},
  pages={5048--5059},
  year={2023}
}

@article{weight,
  title={Provable Dynamic Fusion for Low-Quality Multimodal Data},
  author={Zhang, Qingyang and Wu, Haitao and Zhang, Changqing and Hu, Qinghua and Fu, Huazhu and Zhou, Joey Tianyi and Peng, Xi},
  journal={ICML},
  year={2023}
}

@inproceedings{lade,
  title={Disentangling Label Distribution for Long-tailed Visual Recognition},
  author={Hong, Youngkyu and Han, Seungju and Choi, Kwanghee and Seo, Seokjun and Kim, Beomsu and Chang, Buru},
  booktitle={IEEE/CVF Conference on Computer Vision and Pattern Recognition (CVPR)},
  pages={6622--6632},
  year={2021},
}

@inproceedings{openlongtailrecognition,
  title={Large-Scale Long-Tailed F},
  author={Liu, Ziwei and Miao, Zhongqi and Zhan, Xiaohang and Wang, Jiayun and Gong, Boqing and Yu, Stella X.},
  booktitle={IEEE/CVF Conference on Computer Vision and Pattern Recognition (CVPR)},
  year={2019}
}

@inproceedings{kini2021labelimbalanced,
 author = {Kini, Ganesh Ramachandra and Paraskevas, Orestis and Oymak, Samet and Thrampoulidis, Christos},
 booktitle = {NIPS},
 pages = {18970--18983},
 title = {Label-Imbalanced and Group-Sensitive Classification under Overparameterization},
 volume = {34},
 year = {2021}
}

@inproceedings{xie2017aggregated,
  title={Aggregated residual transformations for deep neural networks},
  author={Xie, Saining and Girshick, Ross and Doll{\'a}r, Piotr and Tu, Zhuowen and He, Kaiming},
  booktitle={CVPR},
  pages={1492--1500},
  year={2017}
}

@inproceedings{he2016deep,
  title={Deep residual learning for image recognition},
  author={He, Kaiming and Zhang, Xiangyu and Ren, Shaoqing and Sun, Jian},
  booktitle={CVPR},
  pages={770--778},
  year={2016}
}

@article{robbins1951stochastic,
  title={A stochastic approximation method},
  author={Robbins, Herbert and Monro, Sutton},
  journal={The annals of mathematical statistics},
  pages={400--407},
  year={1951}
}

@ARTICLE{GPACO,
  author={Cui, Jiequan and Zhong, Zhisheng and Tian, Zhuotao and Liu, Shu and Yu, Bei and Jia, Jiaya},
  journal={IEEE Transactions on Pattern Analysis and Machine Intelligence}, 
  title={Generalized Parametric Contrastive Learning}, 
  year={2023},
  pages={1-12},
  doi={10.1109/TPAMI.2023.3278694}}

@inproceedings{BCL,
  author       = {Jianggang Zhu and
                  Zheng Wang and
                  Jingjing Chen and
                  Yi{-}Ping Phoebe Chen and
                  Yu{-}Gang Jiang},
  title        = {Balanced Contrastive Learning for Long-Tailed Visual Recognition},
  booktitle    = {CVPR},
  pages        = {6898--6907},
}

@inproceedings{DBLP:conf/icml/LiXYWZCH25,
  author       = {Sicong Li and
                  Qianqian Xu and
                  Zhiyong Yang and
                  Zitai Wang and
                  Linchao Zhang and
                  Xiaochun Cao and
                  Qingming Huang},
  title        = {Focal-SAM: Focal Sharpness-Aware Minimization for Long-Tailed Classification},
  booktitle    = {International Conference on Machine Learning},
  year         = {2025},
}

@article{metric,
  title={Metric Entropy of Homogeneous Spaces},
  author={Szarek, Stanis{\l}aw},
  journal={Banach Center Publications},
  volume={43},
  number={1},
  pages={395--410},
  year={1998}
}

@book{lie1,
  title={An introduction to Lie groups and Lie algebras},
  author={Kirillov, Alexander A},
  volume={113},
  year={2008},
  publisher={Cambridge University Press}
}

@article{geostat,
year = {2023},
volume = {16},
journal = {Foundations and Trends® in Machine Learning},
title = {Introduction to Riemannian Geometry and Geometric Statistics: From Basic Theory to Implementation with Geomstats},
issn = {1935-8237},
number = {3},
pages = {329-493},
author = {Nicolas Guigui and Nina Miolane and Xavier Pennec}
}

@inproceedings{DBLP:conf/icml/Shi00SH024,
  author       = {Jiang{-}Xin Shi and
                  Tong Wei and
                  Zhi Zhou and
                  Jie{-}Jing Shao and
                  Xin{-}Yan Han and
                  Yufeng Li},
  title        = {Long-Tail Learning with Foundation Model: Heavy Fine-Tuning Hurts},
  booktitle    = {International Conference on Machine Learning},
  year         = {2024},
}

@inproceedings{DBLP:conf/icml/RadfordKHRGASAM21,
  author       = {Alec Radford and
                  Jong Wook Kim and
                  Chris Hallacy and
                  Aditya Ramesh and
                  Gabriel Goh and
                  Sandhini Agarwal and
                  Girish Sastry and
                  Amanda Askell and
                  Pamela Mishkin and
                  Jack Clark and
                  Gretchen Krueger and
                  Ilya Sutskever},
  title        = {Learning Transferable Visual Models From Natural Language Supervision},
  booktitle    = {International Conference on Machine Learning},
  volume       = {139},
  pages        = {8748--8763},
  year         = {2021},
}

@inproceedings{DBLP:conf/iclr/DosovitskiyB0WZ21,
  author       = {Alexey Dosovitskiy and
                  Lucas Beyer and
                  Alexander Kolesnikov and
                  Dirk Weissenborn and
                  Xiaohua Zhai and
                  Thomas Unterthiner and
                  Mostafa Dehghani and
                  Matthias Minderer and
                  Georg Heigold and
                  Sylvain Gelly and
                  Jakob Uszkoreit and
                  Neil Houlsby},
  title        = {An Image is Worth 16x16 Words: Transformers for Image Recognition
                  at Scale},
  booktitle    = {International Conference on Learning Representations},
  year         = {2021},
}

@inproceedings{DBLP:conf/iclr/HuSWALWWC22,
  author       = {Edward J. Hu and
                  Yelong Shen and
                  Phillip Wallis and
                  Zeyuan Allen{-}Zhu and
                  Yuanzhi Li and
                  Shean Wang and
                  Lu Wang and
                  Weizhu Chen},
  title        = {LoRA: Low-Rank Adaptation of Large Language Models},
  booktitle    = {International Conference on Learning Representations},
  year         = {2022},
}

@inproceedings{DBLP:conf/nips/ChenGTWSWL22,
  author       = {Shoufa Chen and
                  Chongjian Ge and
                  Zhan Tong and
                  Jiangliu Wang and
                  Yibing Song and
                  Jue Wang and
                  Ping Luo},
  title        = {AdaptFormer: Adapting Vision Transformers for Scalable Visual Recognition},
  booktitle    = {Annual Conference on Neural Information Processing Systems},
  year         = {2022}
}

@inproceedings{oursnew,
  title={Harnessing hierarchical label distribution variations in test agnostic long-tail recognition},
  author={Yang, Zhiyong and Xu, Qianqian and Wang, Zitai and Li, Sicong and Han, Boyu and Bao, Shilong and Cao, Xiaochun and Huang, Qingming},
  booktitle={Proceedings of the 41st International Conference on Machine Learning},
  pages={56624--56664},
  year={2024}
}

@article{meng2024pissa,
  title={Pissa: Principal singular values and singular vectors adaptation of large language models},
  author={Meng, Fanxu and Wang, Zhaohui and Zhang, Muhan},
  journal={Advances in Neural Information Processing Systems},
  volume={37},
  pages={121038--121072},
  year={2024}
}

@inproceedings{DBLP:conf/iclr/LongS20,
  author       = {Philip M. Long and
                  Hanie Sedghi},
  title        = {Generalization bounds for deep convolutional neural networks},
  booktitle    = {International Conference on Learning Representations},
  year         = {2020},
}

@book{anthony2009neural,
  title={Neural network learning: Theoretical foundations},
  author={Anthony, Martin and Bartlett, Peter L},
  year={2009},
  publisher={cambridge university press}
}

@article{barron1999risk,
  title={Risk bounds for model selection via penalization},
  author={Barron, Andrew and Birg{\'e}, Lucien and Massart, Pascal},
  journal={Probability theory and related fields},
  volume={113},
  number={3},
  pages={301--413},
  year={1999}
}

@book{pollard2012convergence,
  title={Convergence of stochastic processes},
  author={Pollard, David},
  year={2012},
  publisher={Springer Science \& Business Media}
}

@book{bach2024learning,
  title={Learning theory from first principles},
  author={Bach, Francis},
  year={2024},
  publisher={MIT press}
}

@book{de2008computational,
  title={Computational geometry: algorithms and applications},
  author={De Berg, Mark and Cheong, Otfried and Van Kreveld, Marc and Overmars, Mark},
  year={2008},
  publisher={Springer}
}

@book{okabe2009spatial,
  title={Spatial tessellations: concepts and applications of Voronoi diagrams},
  author={Okabe, Atsuyuki and Boots, Barry and Sugihara, Kokichi and Chiu, Sung Nok},
  year={2009},
  publisher={John Wiley \& Sons}
}

@book{edelsbrunner2010computational,
  title={Computational topology: an introduction},
  author={Edelsbrunner, Herbert and Harer, John},
  year={2010},
  publisher={American Mathematical Soc.}
}

@article{DBLP:journals/csur/Aurenhammer91,
  author       = {Franz Aurenhammer},
  title        = {Voronoi Diagrams - {A} Survey of a Fundamental Geometric Data Structure},
  journal      = {{ACM} Comput. Surv.},
  volume       = {23},
  number       = {3},
  pages        = {345--405},
  year         = {1991},
}

@article{erwig2000graph,
  title={The graph Voronoi diagram with applications},
  author={Erwig, Martin},
  journal={Networks: An International Journal},
  volume={36},
  number={3},
  pages={156--163},
  year={2000}
}

@inproceedings{DBLP:conf/ausai/YeL23,
  author       = {Shanshan Ye and
                  Jie Lu},
  title        = {Sequence Unlearning for Sequential Recommender Systems},
  booktitle    = {Australasian Joint Conference on Artificial Intelligence},
  volume       = {14471},
  pages        = {403--415},
  year         = {2023}
}

@inproceedings{ye2025towards,
    title={Towards Safe Machine Unlearning: a Paradigm that Mitigates Performance Degradation},
    author={Shanshan Ye and Jie Lu and Guangquan Zhang},
    booktitle={THE WEB CONFERENCE},
    year={2025},
}

@article{DBLP:journals/tist/YeL25,
  author       = {Shanshan Ye and
                  Jie Lu},
  title        = {Robust Recommender Systems with Rating Flip Noise},
  journal      = {{ACM} Trans. Intell. Syst. Technol.},
  volume       = {16},
  number       = {1},
  pages        = {11:1--11:19},
  year         = {2025}
}

% biography section
%\begin{IEEEbiography}[{\includegraphics[width=1in,height=1.25in,clip,keepaspectratio]{mshell}}]{Michael Shell}
\begin{IEEEbiography}
	[{\includegraphics[width=1in,height=1.25in,clip,keepaspectratio]{zhiyong.jpg}}]{Zhiyong Yang} received his M.S. degree in computer science and technology from the University of Science and Technology Beijing (USTB) in 2017, and Ph.D. degree from the University of Chinese Academy of Sciences (UCAS) in 2021. He is currently an Associate Professor at the University of Chinese Academy of Sciences. His research interests include trustworthy machine learning, long-tail learning, and optimization frameworks for complex metrics. He is one of the key developers of the XCurve learning framework (\url{https://xcurveopt.github.io/}), designed to address decision biases between model trainers and users. His work has been recognized with various awards, including Top 100 Baidu AI Chinese Rising Stars Around the World, Top-20 Nomination for the Baidu Fellowship, Asian Trustworthy Machine Learning (ATML) Fellowship, and the China Computer Federation (CCF) Doctoral Dissertation Award. He has authored or co-authored over 60 papers in top-tier international conferences and journals, including more than 30 papers in T-PAMI, ICML, and NeurIPS. He has also served as an Area Chair (AC) for NeurIPS 2024/ICLR 2025/ICML 2025, a Senior Program Committee (SPC) member for IJCAI 2021, and as a reviewer for several prestigious journals and conferences, such as T-PAMI, IJCV, TMLR, ICML, NeurIPS, and ICLR.
\end{IEEEbiography}

\begin{IEEEbiography}
	[{\includegraphics[width=1in,height=1.25in,clip,keepaspectratio]{Xu.png}}]{Qianqian Xu} received the B.S. degree in computer science from China University of Mining and Technology in 2007 and the Ph.D. degree in computer science from University of Chinese Academy of Sciences in 2013. She is currently a Professor with the Institute of Computing Technology, Chinese Academy of Sciences, Beijing, China. Her research interests include statistical machine learning, with applications in multimedia and computer vision. She has authored or coauthored 100+ academic papers in prestigious international journals and conferences (including T-PAMI, IJCV, T-IP, NeurIPS, ICML, CVPR, AAAI, etc). Moreover, she serves as an associate editor of IEEE Transactions on Circuits and Systems for Video Technology, IEEE Transactions on Multimedia, and ACM Transactions on Multimedia Computing, Communications, and Applications.
\end{IEEEbiography}

\begin{IEEEbiography}
	[{\includegraphics[width=1in,height=1.25in,clip,keepaspectratio]{Sicong_Li.jpg}}]{Sicong Li}  received the B.E. degree in Computer Science and Technology from Beihang University in 2024. He is currently pursuing his Ph.D. degree at the University of Chinese Academy of Sciences. His research interests include machine learning and computer vision.
\end{IEEEbiography}

\begin{IEEEbiography}
    [{\includegraphics[width=1in,height=1.25in,clip,keepaspectratio]{Zitai.jpg}}]{Zitai Wang} received the B.E. degree in computer science and technology from Beijing Jiaotong University in 2019 and the Ph.D. degree in University of Chinese Academy of Sciences in 2024. He is currently a post-doctoral research fellow with the Institute of Computing Technology, Chinese Academy of Sciences. His research interests include machine learning and data mining. He has authored or coauthored 10+ academic papers in top-tier international conferences and journals including T-PAMI, IJCV, ICML, NeurIPS, AAAI, and ACM Multimedia. He served as a reviewer for several top-tier journals and conferences such as T-PAMI, T-CSVT, NeurIPS, ICLR, CVPR, AISTATS, and AAAI.
\end{IEEEbiography}

\begin{IEEEbiography}
	[{\includegraphics[width=1in,height=1.25in,clip,keepaspectratio]{Cao.png}}]{Xiaochun Cao} is a Professor of School of Cyber Science and Technology, Shenzhen Campus of Sun Yat-sen University. He received the B.E. and M.E. degrees both in computer science from Beihang University (BUAA), China, and the Ph.D. degree in computer science from the University of Central Florida, USA, with his dissertation nominated for the university level Outstanding Dissertation Award. After graduation, he spent about three years at ObjectVideo Inc. as a Research Scientist. From 2008 to 2012, he was a professor at Tianjin University. Before joining SYSU, he was a professor at Institute of Information Engineering, Chinese Academy of Sciences. He has authored and coauthored over 200 journal and conference papers. In 2004 and 2010, he was the recipients of the Piero Zamperoni best student paper award at the International Conference on Pattern Recognition. He is on the editorial boards of IEEE Transactions on Image Processing and IEEE Transactions on Multimedia, and was on the editorial board of IEEE Transactions on Circuits and Systems for Video Technology.
\end{IEEEbiography}

\begin{IEEEbiography}
	[{\includegraphics[width=1in,height=1.25in,clip,keepaspectratio]{Huang.png}}]{Qingming Huang} is a chair professor in the University of Chinese Academy of Sciences and an adjunct research professor in the Institute of Computing Technology, Chinese Academy of Sciences. He graduated with a Bachelor degree in Computer Science in 1988 and Ph.D. degree in Computer Engineering in 1994, both from Harbin Institute of Technology, China. His research areas include multimedia computing, image processing, computer vision and pattern recognition. He has authored or coauthored more than 400 academic papers in prestigious international journals and top-level international conferences. He was the associate editor of IEEE Trans. on CSVT and Acta Automatica Sinica, and the reviewer of various international journals including IEEE Trans. on PAMI, IEEE Trans. on Image Processing, IEEE Trans. on Multimedia, etc. He is a Fellow of IEEE and has served as general chair, program chair, area chair and TPC member for various conferences, including ACM Multimedia, CVPR, ICCV, ICME, ICMR, PCM, BigMM, PSIVT, etc.
\end{IEEEbiography}

%\newpage
%\vfill
%\enlargethispage{-5in}

\clearpage
\appendices
\onecolumn

% \textcolor{white}{dasdsa}\\
\section*{\textcolor{blue}{\Large{Contents}}}

\startcontents[sections]
\printcontents[sections]{l}{1}{\setcounter{tocdepth}{3}}
\newpage

\section{Detailed Explanation of the Sampling process}\label{app:detail}

In our study, we aim to ensure good overall performance across the entire meta distribution $\mathcal{E}$ of label distributions. Since our access is limited to training data with a long-tailed distribution, we employ the Logit Adjustment (LA) loss to tailor the model for specific test-label distributions using only training data. Our objectives are to 1) construct a meta-distribution $\mathcal{E}$ for test-label distributions, 2) assess the loss for different test-label distributions in $\mathcal{E}$, and 3) maintain minimal change in the mean and variance of the loss over $\mathcal{E}$. For 1), we model $\mathcal{E}$ as a Dirichlet mixture distribution, representing global and local variations through the randomness across and within its components, respectively. The Monte Carlo method is vital for implementing 2) and 3), detailed further in below.

We have to deal with both the training data distribution and a meta-distribution for test-label distribution, leading to a hierarchical process for the calculation:

\textbf{Layer 1}: For a given test-label distribution $\mathbb{P}_{te}$, we measure a model's adaptability to this new distribution through the expected LA loss over training data parametrized by $\mathbb{P}_{te}$: 

  $$\hat{\ell}_{\mathsf{LA}}(f,\mathbb{P}_{te})\approx  \frac{1}{N} \cdot \sum_{i=1}^N \ell_{\mathsf{LA}}(f_\theta(x_i),y_i; \mathbb{P}_{te}),$$   

  where $x_i,y_i$ represent the $i$-th instance's feature and label in the training data.

\textnormal{Layer 2}: Across various $\mathbb{P}_{te}$s, we calculate the mean and variance of $\ell_{\mathsf{LA}}(f,\mathbb{P}_{te})$ over meta-distribution $\mathcal{E}$, represented as 
  $$ \mathbb{E}_{ \mathbb{P}_{te} \sim \mathcal{E}}\left[ \hat{\ell}_\mathsf{LA}(f;\mathbb{P}_{te})\right],~ \mathbb{V}_{ \mathbb{P}_{te} \sim \mathcal{E}}\left[ \hat{\ell}_\mathsf{LA}(f;\mathbb{P}_{te})\right],$$
  respectively.

To minimize both the mean and variance of the loss over $\mathcal{E}$, we employ the following objective:

   $$ \mathbb{E}_{ \mathbb{P}_{te} \sim \mathcal{E}}\left[ \hat{\ell}_\mathsf{LA}(f;\mathbb{P}_{te})\right] +  \lambda \cdot   \mathbb{V }_{ \mathbb{P}_{te} \sim \mathcal{E}}\left[ \hat{\ell}_\mathsf{LA}(f;\mathbb{P}_{te})\right], $$
where $\lambda$ is a balancing coefficient.

Given the model $f$ as a neural network, we can't obtain the mean and variance in the closed-form. Instead, we have to approximate the mean and variance for layer 2 using the Monte Carlo method, sampling a finite number of test distributions $\mathbb{P}_{te}$ from $\mathcal{E}$ and then do the estimation.

Our training pipeline, based on the Monte Carlo method, includes:

- a) Sampling a set of test-label distributions from the Dirichlet mixture distribution $\{(\mathbb{P}_j,\xi_j)\}_{i=1}^M$, where $\mathbb{P}_j$ is the sampled test-label-distribution, and $\xi_j$ indicates its component.
- b) Evaluating the empirical LA loss for each $\mathbb{P}_j$ on the training data, adjusting the label distribution accordingly, then computing the mean and (semi-)variance of these losses.
- c) Training the model using backpropagation (BP).

Next, we outline each step of our process.

\subsection{Step a)}

In the main paper, we defined the meta distribution $\mathcal{E}$ as a Dirichlet mixture distribution:

$$
\mathbb{P}_{te}|\xi \sim \mathsf{Dir}\left({\alpha}^{(\xi)}\right),
$$

$$
\xi|\boldsymbol{p} \sim \mathsf{Discrete}\left(p_1,\cdots,p_K\right).
$$

To sample a test distribution $\mathbb{P}_{te}$, we first select a Dirichlet component $\xi$ from $K$ options based on the discrete probability $\left(p_1,\cdots,p_K\right)$. We then sample $\mathbb{P}_{te}$ from the $\xi$-th component $\mathsf{Dir}\left({\alpha}^{(\xi)}\right)$. This process repeats to generate a set of distributions $\{\mathbb{P}_j, \xi_j\}_{i=1}^M$.

\subsection{Step b)}

For each pair $(\mathbb{P}_{j}, \xi_j)$, we then calculate the loss. In our Mixture of Experts (MOE) strategy, the expert $\xi_j$ is assigned to its corresponding Dirichlet distribution component, and we train $f_\theta^{(\xi_j)}$ for this task. With this assignment,, we get the empirical LA loss average on the training data: $\hat{\ell}_\mathsf{LA}(f^{(\xi_j)};\mathbb{P}_j)$ for this specific test-label distribution. Repeating the calculate for each $(\mathbb{P}_{j}, \xi_j)$, we can then obtain the empirical mean and variance of $\hat{\ell}_\mathsf{LA}(f^{(\xi_j)};\mathbb{P}_j)$ as:

$$
  \hat{\mathbb{E}}(\hat{\ell}_{\mathsf{LA}}) = \frac{1}{M}\sum_{j=1}^M \hat{\ell}_\mathsf{LA}(f^{(\xi_j)};\mathbb{P}_j),
$$

$$
  \hat{\mathbb{V}}(\hat{\ell}_{\mathsf{LA}}) = \frac{1}{M}\sum_{j=1}^M \left(\hat{\ell}_\mathsf{LA}(f^{(\xi_j)};\mathbb{P}_j) -  \hat{\mathbb{E}}\left(\hat{\ell}_\mathsf{LA}(f^{(\xi_j)};\mathbb{P}_j)\right)\right)^2.
$$

Moreover, we find that the variance regularization tends to punish loss functions smaller than its mean. We use semi-variance regularization, which only penalizes loss deviations larger than the mean, as a surrogate to  avoid penalizing the smaller-than-mean losses:

$$
\hat{\mathbb{V}}_+(\hat{\ell}_{\mathsf{LA}}) = \frac{1}{M}\sum_{j=1}^M \left(\left(\hat{\ell}_\mathsf{LA}(f^{(\xi_j)};\mathbb{P}_j) -  \hat{\mathbb{E}}\left(\hat{\ell}_\mathsf{LA}(f^{(\xi_j)};\mathbb{P}_j)\right)\right)_+\right)^2.
$$

\subsection{Step c)}

We perform backpropagation (BP) based on the loss and train the MOE model.

Putting altogther, we summarize this procedure as the following algorithm.

\begin{algorithm}
    \caption{Training Algorithm}
    \label{alg:training}
    \begin{algorithmic}[1]
    \REQUIRE Batch size $bs$, Dirichlet components $\bm{\alpha}^{(1)}, \bm{\alpha}^{(2)}, \cdots, \bm{\alpha}^{(K)}$
    \ENSURE Trained models $f_{\theta}^{(1)}, f_{\theta}^{(2)}, \cdots, f_{\theta}^{(K)}$
    
    \FOR{$j \leftarrow 1$ \textbf{to} $M$}
        \STATE $\xi_{j} \leftarrow \text{RandomInteger}(1, K)$ with probability $(p_1,\cdots,p_K)$
        \STATE Sample $\mathbb{P}_{j}$ from Dirichlet Distribution $Dir(\bm{\alpha}^{(\xi_j)})$
        \STATE Add sampled pair $(\mathbb{P}_{j}, \xi_j)$ to the test-label-distribution set $\mathcal{P}$.
    \ENDFOR
    
    \WHILE{not converged}
        \STATE Sample training data Batch $B$
        \STATE Sample a subset $\mathcal{P}'$ from $\mathcal{P}$
        \FOR{each $(\mathbb{P}_j,\xi_j) \in \mathcal{P}'$}
            \STATE Calculate $\hat{\ell}_{LA_j} = \frac{1}{bs} \sum_{(\bm{x}, y) \in B_i} \hat{\ell}_{LA}(f_\theta^{(\xi_j)}(\bm{x}),y;\mathbb{P}_j)$
        \ENDFOR
        \STATE Calculate the mean $\hat{\mathbb{E}}(\ell_{\mathsf{LA}})$ and semi-variance $\hat{\mathbb{V}}_+(\ell_{\mathsf{LA}})$
        \STATE Calculate the objective function $L \leftarrow \hat{\mathbb{E}}(\ell_{\mathsf{LA}}) + \lambda \cdot \hat{\mathbb{V}}_+(\ell_{\mathsf{LA}})$
        \STATE Perform SGD with respect to $L$ to update the model
    \ENDWHILE
    
    \STATE \textbf{return} $f_{\theta}^{(1)}, f_{\theta}^{(2)}, \cdots, f_{\theta}^{(K)}$
    \end{algorithmic}
\end{algorithm}

% ·

% - **Input:** 
%   - Batch size $bs$
%   - Dirichlet components $\bm{\alpha}^{(1)}, \bm{\alpha}^{(2)}, \cdots, \bm{\alpha}^{(K)}$.
  
% - **Output:** 
%   - Trained models $f_{\theta}^{(1)}, f_{\theta}^{(2)}, \cdots, f_{\theta}^{(K)}$

% - **for** $j \leftarrow 1$ to $M$
%   - $\xi_{j} \leftarrow \text{RandomInteger}(1, K)$  with probability $\left(p_1,\cdots,p_K\right)$
%   - Sample $\mathbb{P}_{j}$ from Dirichlet Distribution $Dir(\bm{\alpha}^{(\xi_j)})$ 
%   - Add sampled pair $(\mathbb{P}_{j}, \xi_j)$ to the test-label-distribution set $\mathcal{P}$.

% - **while** *not converged*

%     - Sample training data Batch $B$
%     - Sample a subset $\mathcal{P}'$ from $\mathcal{P}$
%     - **for each**  $(\mathbb{P}_j,\xi_j) \in \mathcal{P}'$
%       - Calculate $\hat{\ell}_{LA_j} = \frac{1}{bs} \sum_{(\bm{x}, y) \in B_i} \hat{\ell}_{LA}(f_\theta^{(\xi_j)}(\bm{x}),y;\mathbb{P}_j)$

%     - Calculate the mean $\hat{\mathbb{E}}(\ell_{\mathsf{LA}})$ and semi-variance $\hat{\mathbb{V}}_+(\ell_{\mathsf{LA}})$ 
%     - Calculate the objective function $L \leftarrow \hat{\mathbb{E}}(\ell_{\mathsf{LA}}) + \lambda \cdot \hat{\mathbb{V}}_+(\ell_{\mathsf{LA}})$  
%     - Perform SGD with respect to $L$  to update the model  

% - **return** $f_{\theta}^{(1)}, f_{\theta}^{(2)}, \cdots, f_{\theta}^{(K)}$

\begin{textbo}

\section{Theoretical Challenges}    \label{app:theoretical_challenge}

The theoretical framework developed in this work addresses and resolves several fundamental challenges that are at least inadequately addressed by existing studies, particularly in the context of test-agnostic long-tail recognition and the fine-tuning of foundation models.
\begin{enumerate}
\item[a)]  \textbf{Hierarchical Generalization under a Meta-Distribution}

Classical generalization theory bounds the expected error over a single, fixed data distribution. Recent work on test-agnostic recognition (e.g., SADE) introduces multiple experts for \textbf{a few fixed} test distributions but lacks a stochastic model for the test-label distribution. In this way, it is impossible to construct a theoretical analysis of the generalization error when not only the data but also the test label distribution is changed over time.  By contrast, we model the unknown test distribution as being drawn from a meta-distribution $\mathcal{E}$, specifically a mixture of Dirichlet distributions. This introduces a hierarchical stochastic process: one samples a label distribution from $\mathcal{E}$, and then samples data conditioned on that label distribution. The core theoretical challenge is to derive a generalization bound that holds uniformly over this meta-distribution of tasks.

\item[b)] \textbf{Going Beyond Worst-case analysis in Uniform Convergence}

 Worst-case uniform convergence bounds over an entire hypothesis class $\mathcal{F}$ are loose for over-parameterized models. The results ignore the fact that training algorithms often find models in a good region of the space. Hence, the second challenge is to mathematically introduce such good regions into the definition of the hypothesis space and derive sharper bounds.

\item[c)] \textbf{Introduce Variance-Based Optimization and Generalization}

While variance-penalized objectives are well established in ML theory, their main goal is to achieve a sharp generalization error bound under a single, fixed test-label distribution. How to extend the bounds to our complicated scenario is the third challenge.

\item[d)] \textbf{Analyzing Parameter-Efficient Fine-Tuning (PEFT) in Foundation Models}

The PEFT framework updates only a small subset of the foundation model's parameters. But the traditional result shows that model complexity is proportional to the total number of parameters. In this work, we also aim to identify appropriate techniques and assumptions to mitigate this gap.

\end{enumerate}

To address these challenges, by moving beyond worst-case analysis, incorporating data-dependent priors, and developing new tools for PEFT, we establish a theoretical foundation that is both rigorous and relevant to practice.

\end{textbo}

\begin{textbo}

\section{Theoretical Novelty}       \label{app: theoretical novelty}

\subsection{Theoretical Novelty}

Our theoretical analysis introduces several foundational innovations that yield significantly sharper generalization bounds for test-agnostic long-tailed recognition. The major novelty for Thm.\ref{thm:gen} and Thm.\ref{thm:gen1} are as follows:

\begin{enumerate}
    \item[a)] \textbf{Hierarchical Decomposition of the Error Bound}: 
    
    The primary innovation in \textbf{Theorem \ref{thm:gen}} is the \textbf{hierarchical} concentration framework that decomposes the generalization error. Unlike standard analyses, we must account for stochasticity from two sources: the finite training dataset $\mathcal{S}$ and the finite sampling of label distributions $\mathcal{P}$ from the meta-distribution $\mathcal{E}$. Our proof structure explicitly decomposes the excess risk into: (i) the meta-distribution shift error $\mathfrak{Err}_{approx}$, (ii) the Monte Carlo sampling error $\mathfrak{Err}_{sto}$, and (iii) the data estimation error. This decomposition is pivotal, as it allows us to isolate and tightly control the errors stemming from our novel training paradigm.
    \item[b)] \textbf{Sharper Bound Based on the Induced Subspace}: 
    
    To avoid the loose bounds of worst-case analysis, we introduce a data-dependent refinement of the hypothesis space via the Induced Subclass (Def.\ref{def: induced subclass} in the original paper). This is a critical step. Instead of considering all possible functions in $\mathcal{F}$, we restrict our analysis to the subset $\mathcal{F}_{\mathcal{A}}(\rho, p)$ where the model's loss is, with high probability, below a threshold $\rho$. This is a reasonable prior, grounded in the empirical effectiveness of foundation models and our training objective, ensuring that most learned models reside in this well-behaved region. To make this theoretical construction work, we develop novel conditional concentration inequalities in Lem.\ref{lem:cov}-\ref{lem:varb}. The key contribution here is that traditional techniques like Hoeffding and empirical Bernstein bounds focus on the overall loss, while we consider conditioned expectation on the good event of low loss, effectively replacing the global bound $B$ with the much smaller data-dependent bound $\rho$ in the stochastic error terms. This leads to the $\mathfrak{Err}_{sto}$ term scaling as $O(\rho^{\mathcal{I}}/\sqrt{N} + \rho^{\mathcal{M}}/M)$, which is far tighter than the standard $O(B/\sqrt{N} + B/\sqrt{M})$ (please see the discussion in \textbf{c)}).
    
    \item[c)] \textbf{Sharper Bound Based on the Semi-variance Regularization}:
    
    Another novelty comes from the result induced by semi-variance regularization. Theoretically, we justify this by proving in Thm.\ref{thm:rho} and Thm.\ref{thm:multi} in the main paper that for a wide class of light- and heavy-tailed loss distributions (Exponential, Gamma, Pareto, and their mixtures), the semi-variance $\mathbb{V}_{+}$ is comparable to the full variance $\mathbb{V}$. This key insight allows us to use the empirical semi-variance $\hat{\mathbb{V}}_{+}$ in the regularization term $\mathfrak{Reg}$ of our bound. Minimizing this term during training directly tightens the generalization bound. Under the assumption of an exponentially decaying loss tail, this entire framework enables a stochastic error rate of $O(N^{-(1+2/\alpha)/2} + M^{-1-1/\alpha})$, a drastic improvement over the conventional $O(N^{-1/2} + M^{-1/2})$.
    \item[d)] \textbf{Extending the Results to Parameter-efficient Finetuning of Foundation Models}:

  Based on the generic result in Thm.\ref{thm:gen}, Thm.\ref{thm:gen1} takes a further step to construct a sharp generalization bound for PEFT-based fine-tuning. The non-trivial contribution is that the model's complexity is proportional only to the number of trainable parameters, not to the total number of parameters. This induces a sharp improvement of the bound because the former is way smaller. Technically, the novelty is two-fold.
  
  First, we perform a first-order Taylor expansion of the scoring function around a reference point $\Theta^{(i),*}$, which is chosen as the closest point in a set of local solutions. Moreover, we introduce the concept of $\delta$-compact parametrization (Def.\ref{def: delta compact parameterization}), which partitions the parameter space into Voronoi cells around local solutions. By choosing sufficiently small cells, when applying covering number-based analysis with a covering set of balls $\mathcal{B}_1, \cdots, \mathcal{B}_N$, $\delta$-compact parametrization ensures that functions within each $\mathcal{B}_i$ share the same reference point.
  As a result, this strategy effectively decouples (other parameters when calculating Lip. constants within each $\mathcal{B}_i$) the fixed, high-complexity backbone from the low-dimensional, trainable increments $\Delta \Theta^{(i)}$. Then, we only need to analyze the complexity of the low-dimensional manifold.

  Second, for the low-dimensional manifold where the trainable parameters reside, we leverage advanced results from geometric functional analysis, specifically tight covering number bounds for low-rank matrices (Lemma \ref{lem:rankr}) and Stiefel manifolds (Lemma \ref{lem:stef}), which depend only on the dimensions of the LoRA/Adapter components. Consequently, the complexity term $\nu$ in the bound scales only with the number of trainable parameters in LSF, not the whole backbone, yielding a tight and practical guarantee for large-scale foundation models.

\end{enumerate}

In summary, our theoretical contributions are built upon both novel conceptual frameworks (hierarchical error decomposition, induced subclasses) and the non-trivial application of sophisticated mathematical tools. We move beyond standard uniform convergence by incorporating data-dependent priors and problem-specific structure.

\subsection{Existing Results used in the Proof}

The theoretical derivations in this work are built upon a solid foundation of established mathematical tools, which are strategically extended and combined to address the novel challenges of our setting.

\begin{enumerate}
    \item[a)] \textbf{Covering Numbers and Uniform Convergence}
    
The use of covering numbers to construct uniform convergence bounds is a standard technique in statistical learning theory. In Thm.\ref{thm:gen}, we operate under the standard assumption that the covering number of our hypothesis class $\mathcal{F}$ scales polynomially. Specifically, we assume $\mathcal{N}_{\infty}(\mathcal{F}, \epsilon, n) \lesssim (r/\epsilon)^\nu$ for some $\nu, r > 0$, as defined in Sec. D.1. This is a conventional and widely adopted assumption \cite{anthony2009neural, barron1999risk, pollard2012convergence, DBLP:conf/iclr/LongS20, bach2024learning} that holds for many common function classes, including those parameterized by neural networks with bounded norms. The metric entropy, quantified by $\log \mathcal{N}_{\infty}$, then directly appears in the logarithmic terms of our generalization bound (e.g., $\log(\zeta_1), \log(\zeta_2)$), capturing the complexity of the model.

\item[b)]  \textbf{Extension of Basic Concentration Inequalities}

\textbf{Basic Tools}: The core tools we build upon are the classic Hoeffding's inequality (for bounded random variables) and the Empirical Bernstein Inequality (as in Maurer \& Pontil \cite{bern} ), which provides a data-dependent bound based on the empirical variance.

\textbf{Our contribution}: The primary innovation is found in Lem. \ref{lem:cov}-\ref{lem:varb}, where we generalize these classical results to hold uniformly over conditional expectations. This is the key to sharpen the bound for the induced subclass $\mathcal{F}_{\mathcal{A}}(\rho, p)$.

\item[c)] \textbf{Parameter Space Partition via Voronoi Diagram in Theorem \ref{thm:gen1}}

\textbf{Basic Tool}: The analysis of the LSF scheme in Thm.\ref{thm:gen1} requires controlling the complexity of the linearized hypothesis class. To achieve this, we introduce the Voronoi Diagram to induce Parameter Space Partition (Def.\ref{def: voronoi diagram}). 

\textbf{Usage}: This construction partitions the space of incremental parameters $\Delta \bm{\Theta}^{(i)}$ into cells $\mathcal{V}_{k}^{(i)}$ based on their proximity to a discrete set of local solutions in $\mathsf{Sol}$. The key utility of this partition is that within each Voronoi cell, the closest local solution $\Delta \bm{\Theta}^{(i),*}$ remains constant. This leads to a sharper bound.

\item[d)] \textbf{Variance-based Concentration}

\textbf{Existing Results}: For variance-penalized objectives, our results are built upon existing studies like Maurer \& Pontil \cite{bern}. This provides a bound of the form $\mathbb{E}[\ell] \lesssim \hat{\mathbb{E}}[\ell] + \sqrt{\hat{\mathbb{V}}[\ell] / N} + 1/N$.

\textbf{Our Contribution}:  We make two critical extensions.
    
    a) \textbf{Conditional Concentration} (Lemma \ref{lem:varb}): We integrate the variance-based bound with our novel conditional concentration framework. Lemma 8 is a variance-penalized analog that holds for the induced subclass, where the variance term is discounted by the conditioning on the good event $\mathcal{A}(\rho)$.
    
    b) \textbf{Hierarchical Application} (Proof of Thm.\ref{thm:gen}): We apply this principle to our hierarchical problem. The variance term $\hat{\mathbb{V}}_{+, \mathcal{E}}[\hat{\ell}_{\mathcal{S}, \mathcal{E}}]$ that appears in the $\mathfrak{Reg}$ term of Theorem 3 originates from applying a variance-penalized concentration argument to the outer layer (the Monte Carlo sampling over label distributions). This constructs a bound that is tight in both the number of data samples $N$ and the number of sampled label distributions $M$.

\item[e)] \textbf{Complexity Bound for Homogeneous Space}

To bound the complexity of the PEFT-based hypothesis classes in LSF, we rely on advanced results from geometric functional analysis.

\textbf{Homogeneous Spaces (\cite{metric})}: We apply the covering number bounds for homogeneous space from Szarek \cite{metric} to construct the covering number of the low-rank Stiefel mainfold.

\textbf{Lie Group Theory (\cite{geostat, lie1})}: The fundamental isomorphism $St(n, k) \cong \text{SO}(n)/\text{SO}(n-k)$ (Lem.12) is proven using basic Lie group theory from sources like Kirillov \cite{lie1} and Guigui et al. \cite{geostat}. This involves showing the transitivity of the $\text{SO}(n)$ action on the Stiefel manifold and characterizing the stabilizer subgroup. This geometrical perspective is crucial because the parameters of the low-rank factors in LoRA and Adapter inherently reside on or near such manifolds. By combining this with a low-rank matrix decomposition, Lem.10 provides the final covering number bound for the set of low-rank increments $\mathcal{M}(r, m, n)$, which depends only on the trainable parameters and is independent of the frozen backbone.
\end{enumerate}

\end{textbo}

\section{Proof for the Upper Bound of $\rho$} \label{app:rho}

%%%%%%%%%%%%%%%%%%%%%%%%%%%%%%%%%%%%%%%%%%%%%%%%%%%%%%%%%%%%%%%%%%%%%%%%%%%%%%%
%%%%%%%%%%%%%%%%%%%%%%%%%%%%%%%%%%%%%%%%%%%%%%%%%%%%%%%%%%%%%%%%%%%%%%%%%%%%%%%
\subsection{Proof for Thm.1}
\begin{proof}
1): For exponential distribution, we have:
\begin{align*}
    p_\lambda(\ell) = \lambda \cdot \exp(-\lambda \cdot \ell), ~~ \E_{\lambda}[\ell]  = \frac{1}{\lambda}, \vv_{\lambda} = \frac{1}{\lambda^2}.
\end{align*}

In this sense, we have: 
\begin{align*}
    \vv - \vpp &= \int_{0}^{1/\lambda} (\ell - \frac{1}{\lambda})^2  \cdot p_\lambda(\ell) \cdot d\ell\\ 
&\le \frac{1}{\lambda^2} \cdot \P\left[ \ell < \frac{1}{\lambda} \right]\\ 
&\le \frac{1-\exp(-1)}{\lambda^2}
\end{align*}
Hence:
\begin{align*}
 \frac{\vpp}{\vv} \ge 1- (1-\exp(-1)) = \exp(-1)
\end{align*}

2): For the gamma distribution, we have:
\begin{align*}
p_{\alpha,\beta}(\ell) = \frac{\beta^\alpha}{\Gamma(\alpha)} \cdot \ell^{\alpha-1}\cdot \exp(-\beta \cdot \ell), ~\E[\ell] = \frac{\alpha}{\beta}, ~\V[\ell] =  \frac{\alpha}{\beta^2}.
\end{align*}
Similarly, we have:

\begin{align*}
\vv- \vpp  &= \int_{0}^{\alpha/\beta} (\ell - \alpha/\beta)^2 \cdot  p_{\alpha,\beta}(\ell) \cdot d\ell \\ 
&\le \left(\frac{\alpha}{\beta}\right)^2 \cdot \frac{1}{\Gamma(\alpha)} \cdot \int_{0}^{\alpha/\beta} (\beta\cdot \ell )^{\alpha-1} \cdot \exp(-\beta\cdot \ell) \cdot d(\beta\cdot \ell) \\ 
&= \left(\frac{\alpha}{\beta}\right)^2 \cdot \frac{1 }{\Gamma(\alpha)} \cdot \int_{0}^{\alpha} (\ell )^{\alpha-1} \cdot \exp(- \ell) \cdot d( \ell)\\ 
& = \left(\frac{\alpha}{\beta}\right)^2  \cdot \frac{\Gamma^\uparrow(\alpha,\alpha)}{\Gamma(\alpha)}
\end{align*}
Then we have: 
\begin{align*}
    \frac{\vpp}{\vv} \ge 1- \alpha  \cdot \frac{\Gamma^\uparrow(\alpha,\alpha)}{\Gamma(\alpha)}
\end{align*}
3): For a Pareto distribution: we have 
\begin{align*}
    p_{\theta}(\ell) = \begin{cases}
        \frac{\theta \ell_m^\theta}{\ell^{\theta+1}}, &\ell \ge \ell_m,\\
        0, & \ell \le \ell_m
    \end{cases},~ 
    \E[\ell] = \begin{cases}
        \infty, & \theta \le 1,\\ 
        \frac{\theta}{\theta-1} \ell_m, &\theta > 1
    \end{cases},
    \V[\ell]  = \begin{cases}
        \infty, & \theta \le 2,\\ 
        \nicefrac{\theta}{(\theta-1)^2\cdot (\theta-2)}\cdot \ell_m^2, &\theta > 2
    \end{cases}
\end{align*}
In this sense, we have:
\begin{align*}
\vpp & = \int_{\ell_m}^{\frac{\theta}{\theta-1} \cdot \ell_m } \left( \ell - \frac{\theta}{\theta-1} \cdot \ell_m \right)^2 \cdot \frac{\theta \ell_m^\theta}{\ell^{\theta+1}} d\ell\\ 
& = \int_{\ell_m}^{\frac{\theta}{\theta-1} \cdot \ell_m } \frac{\theta \ell_m^\theta}{\ell^{\theta-1}} d\ell ~-~ 2\cdot \int_{\ell_m}^{\frac{\theta}{\theta-1} \cdot \ell_m } \frac{\theta \cdot \ell_m^{\theta+1}}{(\theta-1)\cdot\ell^{\theta-1}} d\ell ~+~ \left( \frac{\theta}{\theta-1} \cdot \ell_m \right)^2 \cdot  \int_{\ell_m}^{\frac{\theta}{\theta-1} \cdot \ell_m } \frac{\theta \ell_m^\theta}{\ell^{\theta+1}}  d\ell
\end{align*}

\begin{align*}
  &\int_{\ell_m}^{\frac{\theta}{\theta-1} \cdot \ell_m } \frac{\theta \ell_m^\theta}{\ell^{\theta-1}} d\ell ~ =  ~\frac{\theta}{\theta-2} \cdot \ell_m^2 \cdot \left[ 1- \left(   \frac{\theta}{\theta-1}\right)^{2-\theta} \right]\\
  &2\cdot \int_{\ell_m}^{\frac{\theta}{\theta-1} \cdot \ell_m } \frac{\theta \cdot \ell_m^{\theta+1}}{(\theta-1)\cdot\ell^{\theta-1}} \cdot d\ell ~ =  ~ 2 \cdot \left( \frac{\theta}{\theta -1} \cdot \ell_m \right)^2\cdot \left[ 1- \left( \frac{\theta}{\theta -1} \right)^{1-\theta} \right]\\ 
  &\left( \frac{\theta}{\theta-1} \cdot \ell_m \right)^2 \cdot  \int_{\ell_m}^{\frac{\theta}{\theta-1} \cdot \ell_m } \frac{\theta \ell_m^\theta}{\ell^{\theta+1}}  d\ell ~=~ \left( \frac{\theta}{\theta -1} \cdot \ell_m \right)^2\cdot \left[ 1- \left( \frac{\theta}{\theta -1} \right)^{\theta} \right]
    \end{align*}
Above all, we come to the conclusion:

\begin{align*}
    \frac{\vpp}{\vv}  = (\theta -1)^2\cdot\left[ 1- \left( \frac{\theta}{\theta -1} \right)^{2-\theta} \right] + \theta\cdot (\theta -2) \cdot \left[ 2\cdot \left( \frac{\theta}{\theta -1} \right)^{1-\theta} - \left( \frac{\theta-1}{\theta} \right)^{\theta} -1 \right]
    \end{align*}

\end{proof}

\subsection{Proof for Thm.2}

\begin{lem}
The function $f(x) = \left((x)_-\right)^2$ is a convex for $x \neq 0$.
\end{lem}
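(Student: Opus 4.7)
The plan is to establish convexity of $f(x) = ((x)_-)^2$ by recognizing it as a composition of two convex functions, with a brief direct verification as a backup. Recall that $(x)_- = \max(-x,0)$, so $f(x) = x^2$ for $x \le 0$ and $f(x) = 0$ for $x \ge 0$. The qualifier ``for $x \neq 0$'' in the statement presumably refers to the fact that $f''$ fails to exist at the origin; convexity itself holds on all of $\mathbb{R}$, and that is what the lemma is used for in the bound on $\mathbb{V}$ vs.\ $\mathbb{V}_{+}$.

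The cleanest route is the composition rule. First I would observe that $h(x) := (x)_- = \max(-x, 0)$ is convex on $\mathbb{R}$, being the pointwise maximum of the two linear (hence convex) functions $x \mapsto -x$ and $x \mapsto 0$, and moreover $h$ is nonnegative. Next, the outer function $g(u) := u^2$ is convex and nondecreasing on $[0, \infty)$. Applying the standard composition rule (if $g$ is convex and nondecreasing on the range of $h$, and $h$ is convex, then $g \circ h$ is convex) yields that $f = g \circ h$ is convex on $\mathbb{R}$.

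As a self-contained alternative, I would verify convexity by piecewise differentiation. Direct computation gives $f'(x) = 2x$ for $x < 0$ and $f'(x) = 0$ for $x > 0$, and the left and right derivatives at $0$ both equal $0$, so $f \in C^{1}(\mathbb{R})$. The derivative $f'$ is plainly nondecreasing on $\mathbb{R}$, which is equivalent to convexity of $f$ in one variable. Since $f''(x) = 2 \cdot \mathbb{1}[x<0]$ exists and is nonnegative for every $x \neq 0$, this matches the stated form of the lemma.

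There is no substantive obstacle here; the only nuance is interpretive. If the intended reading is strictly ``convex on each open half-line $\{x < 0\}$ and $\{x > 0\}$,'' then either piece is trivially convex ($x^2$ and $0$ respectively). If instead the intended reading is global convexity (as used downstream in controlling $\mathbb{V} - \mathbb{V}_{+}$ via Jensen-type inequalities), then the composition argument suffices. I would state both interpretations in one line and carry out the composition argument as the proof.
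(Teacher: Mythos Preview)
Your proof is correct. Your alternative route---piecewise computation of the second derivative---is essentially what the paper does: it writes $f = f_1 \circ f_2$ with $f_1(x)=x^2$, $f_2(x)=(x)_-$, applies the chain rule for the second derivative to get $f''(x)=0$ for $x>0$ and $f''(x)=2$ for $x<0$, and concludes from $f''\ge 0$.

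Your primary route via the composition rule (convex nondecreasing $g(u)=u^2$ on $[0,\infty)$ composed with convex $h(x)=\max(-x,0)$) is genuinely different and arguably cleaner: it gives global convexity on $\mathbb{R}$ directly, avoids the chain-rule bookkeeping, and sidesteps any fuss about differentiability at the origin. The paper's calculation, by contrast, only establishes $f''\ge 0$ on each open half-line, which is why the statement carries the qualifier ``for $x\neq 0$''---your reading of that qualifier is right. Since the downstream use (the Jensen step $(\cdot)_-^2$ applied to a convex combination inside the integral) needs global convexity, your composition argument actually delivers exactly what is required without the caveat.
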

\begin{proof}
Denote $f_1(x) = x^2, f_2(x) = (x)_-$, then we have:
\begin{align*}
    f''(x) =  f''_2(x) \cdot f'_1(f_2(x)) + f''_1(f_2(x)) \cdot (f'_2(x))^2 = \begin{cases}
        0, & x> 0\\ 
        2, & x <0.
    \end{cases}
\end{align*} 
Obviously, we have $f''(x) \ge 0$. The proof is thus finished.

\end{proof}

\begin{proof}
We consider a mixture distribution of K-components with a p.d.f function 
\begin{align*}
    p_m(\ell) = \sum_{k=1}^K \omega_k \cdot p_k(\ell),
\end{align*}
where $p_k(\ell)$ is the p.d.f. for the $k$-th component, and $\omega_k$ is probability to observe the $k$-th component.
In this sense, we have:
\begin{align*}
\E[\ell] = \sum_{k=1}^K \omega_k \cdot \mu_k, ~ \V[\ell] \overset{(a)}{\ge} \sum_{k=1}^K \omega_k \cdot \sigma^2_k.
\end{align*}
where $\mu_k, \sigma^2_k$ are the corresponding means and variance given the component $k$. Here $(a)$ follows from the total law of variance.

In this sense, we can bound the semi-variance as:

\begin{align*}
    \vnn &= \int_{0}^{\infty} \left((\ell - \E[\ell])_-\right)^2  \cdot p_m(\ell) \cdot d\ell \\
& = \int_{0}^{\infty} \left((\ell - \sum_{k=1}^k \omega_k \cdot \mu_k)_-\right)^2 \cdot  p_m(\ell) \cdot d\ell\\ 
&\overset{(*)}{\le} \sum_{k=1}^k \omega_k \cdot \int_{0}^{\infty} \left((\ell - \mu_k)_-\right)^2  \cdot p_m(\ell) \cdot d\ell \\ 
& =  \sum_{i,j} \omega_i \cdot \omega_j \cdot \int_{0}^{\infty} \left((\ell - \mu_i)_-\right)^2  \cdot p_j(\ell) \cdot d\ell
\end{align*}
\begin{align*}
    1- \rho &~ {\le} \frac{1}{\vv} \cdot \left(\sum_{i,j} \omega_i \cdot \omega_j \cdot \int_{0}^{\infty} \left((\ell - \mu_i)_-\right)^2  \cdot p_j(\ell) \cdot d\ell  \right)\\ 
    &~\overset{(**)}{\le} \sum_{i,j} \left( \omega_i \cdot\omega_j \cdot \int_{0}^\infty\left((\ell - \mu_i)_-\right)^2  \cdot p_i(\ell) \cdot d\ell   \cdot \frac{1}{\sigma_i^2}\right) \\
    &~\le \sum_{i} \omega_i \cdot \frac{\vnn_i}{\vv_i} 
\end{align*}
$(*)$ is from the fact that $((\cdot)_-)^2$ is a convex function when $x\neq 0$, and that $(**)$ is from the assumption that:
\begin{align*}
    \frac{\vnn_{i,j}}{\vnn_{i}} \le \frac{\V[\ell]}{ \sigma_i^2},~ \frac{\V[\ell]}{ \sigma_i^2} > 1 
\end{align*}
which further implies that
\begin{align*}
   \frac{1}{\vv } \cdot {\int_{0}^\infty\left((\ell - \mu_i)_-\right)^2  \cdot p_j(\ell) \cdot d\ell} \le     \frac{1}{ \sigma_i^2 } \cdot {\int_{0}^\infty\left((\ell - \mu_i)_-\right)^2  \cdot p_i(\ell) \cdot d\ell}
\end{align*}
 It thus becomes clear that:
 \begin{align*}
    \rho &= 1- (1-\rho)\\ 
     &\overset{(***)}{\ge} 1- \sum_{i} \omega_i \cdot \frac{\vnn_i}{\vv_i}\\ 
     & = \sum_{i} \omega_i -  \sum_{i} \omega_i \cdot \frac{\vnn_i}{\vv_i} \\ 
     & = \sum_{i} \omega_i \cdot \left(1- \frac{\vnn_i}{\vv_i}\right)\\
     & = \sum_{i} \omega_i \cdot \left(1- (1-\rho_i)\right)\\
     & = \sum_{i=1}^K \omega_i \cdot \rho_i
 \end{align*}
Here $(***)$ is from the proven fact that $ 1- \rho ~\le~ \sum_{i} \omega_i \cdot \frac{\vnn_i}{\vv_i} $

\end{proof}

\section{Explanations for the Theoretical Results}\label{app:explan}
\textbf{Basic Notations}: In our approach, we use a natural training dataset consisting of images and their labels from the distribution $\mathcal{D}$, and a constructed set of potential test-label distributions from $\mathcal{E}$. We denote the training dataset as $\mathcal{S}$ and the constructed set as $\mathcal{P}$. Additionally, the training sample size is $N$ and the constructed set size is $M$.

Our primary theoretical findings relate to how well our method generalizes to new, unseen test-label distributions. According to standard learning theory, we measure generalization error by the difference between a) the expected error across the joint distribution of test-label and training data, and b) the empirical average across the sampled $\mathcal{S}$ and $\mathcal{P}$. This difference results from stochastic errors in sampling and estimation. Reflecting on the previous question, the Monte Carlo process incorporates a hierarchical sampling approach, also leading to a hierarchical structure of stochastic errors, a significant challenge in our theoretical analysis. This explain this below.

\subsection{Stochastic Errors}
\textbf{Inner Layer (training data)}: For a specific test label distribution $\mathbb{P}_i$ in $\mathcal{P}$, we define $\ell_{\mathcal{D},\mathcal{E},i}$ and $\ell_{\mathcal{S},\mathcal{E},i}$ as the expected loss on training distribution $\mathcal{D}$ and the empirical average loss on training data $\mathcal{S}$, respectively:
$$
\ell_{\mathcal{D},\mathcal{E},i} = \mathbb{E}_{(x,y) \sim \mathcal{D}}\left[ \ell_{\mathsf{LA}}(f^{(\xi_i)}(x),y;\mathbb{P}_i)   \right]
$$

$$
  \ell_{\mathcal{S},\mathcal{E},i} = \frac{1}{N}\sum_{j=1}^N\ell_{\mathsf{LA}}(f^{(\xi_i)}(x_j), y_j; \mathbb{P}_i)   
$$

These values estimate the error of the inner layer concerning the training data, specifically measuring the discrepancy when substituting expectation with empirical average for a given test-label distribution $\mathbb{P}_i$.

\textbf{Outer Layer (test label distributions):}  We also consider the expectation over the meta-distribution $\mathcal{E}$ to assess the outer Monte Carlo sampling error.

In practical training, we rely on Monte Carlo estimation results and finite training data to compute the empirical average:

$$
\hat{\mathbb{E}}_{\mathcal{\mathcal{E}}}\left[\ell_{\mathcal{S},\mathcal{E},i}\right]  =\frac{1}{NM} \sum_{i=1}^M \sum_{j=1}^N \ell_{\mathsf{LA}}(f^{(\xi_i)}(x_j),y_j;\mathbb{P}_i)
$$

But Theoretically, the genearlization error should be measured by the expected loss on the joint distribution of test label distribution $\mathcal{E}'$ (which could differ from $\mathcal{E}$ used in training) and training data $\mathcal{D}$, expressed as:  

$$
{\mathbb{E}}_{\mathbb{P} \sim \mathcal{\mathcal{E}'}}\left[\ell_{\mathcal{D},\mathcal{E}', i}\right]  = \mathbb{E}_{\mathbb{P} \sim \mathcal{E}'}\left[\mathbb{E}_{(x,y) \sim \mathcal{D}} \left[  \ell_{\mathsf{LA}}(f^{(\xi)}(x),y;\mathbb{P})\right]\right]
$$

Finally, we discuss the upper bound of the generalization error and briefly introduce the proof's key idea.

\subsection*{Error Decomposition}

Assume that our model is chosen from a hypothesis space $\mathcal{F}$ (for instance, CNNs within certain weight norm limits). The generalization ability of the entire hypothesis set is often measured by the worst-case performance gap:

$$
\Delta = \sup_{f \in \mathcal{F}} \left[{\mathbb{E}}_{\mathbb{P} \sim \mathcal{\mathcal{E}'}}\left[\ell_{\mathcal{D}, \mathcal{E}', i}\right] - \hat{\mathbb{E}}_{\mathcal{\mathcal{E}}}\left[\ell_{\mathcal{S},\mathcal{E},i}\right] \right].
$$

Analyzing this error directly is challenging due to its hierarchical nature. Nevertheless, we can further derive an upper bound by summing three types of error:
\begin{enumerate}
    \item[i)]  \textbf{Meta-distribution Approximation Error}: This error arises from approximating the ideal meta-distribution $\mathcal{E}'$ with $\mathcal{E}$, expressed as:
    $$
    \sup_{f \in \mathcal{F}} \left[{\mathbb{E}}_{\mathbb{P} \sim \mathcal{\mathcal{E}'}}\left[\ell_{\mathcal{D},\mathcal{E}',i}\right] - {\mathbb{E}}_{\mathbb{P} \sim \mathcal{\mathcal{E}}}\left[\ell_{\mathcal{D},\mathcal{E},i}\right] \right].
    $$
\item[ii)] \textbf{Label Dist}: This error stems from the approximation of the expected values over $\mathcal{E}$ with a Monte Carlo average, which can be shown as:
$$
\sup_{f \in \mathcal{F}} \left[{\mathbb{E}}_{\mathbb{P} \sim \mathcal{\mathcal{E}}}\left[\ell_{\mathcal{D},\mathcal{E},i}\right] - \frac{1}{M} \sum_{i=1}^M \ell_{\mathcal{D},\mathcal{E}, i} \right].
$$
\item[iii)] \textbf{Data Estimation Error}: This error is due to approximating the expectation over the training distribution $\mathcal{D}$ with the empirical average from the training data $S$:

$$
\sup_{f \in \mathcal{F}} \left[{\mathbb{E}}_{\mathbb{P} \sim \mathcal{\mathcal{E}}}\left[ \frac{1}{M} \sum_{i=1}^M \ell_{\mathcal{D},\mathcal{E}, i} - \frac{1}{M} \sum_{i=1}^M \ell_{\mathcal{S},\mathcal{E}, i} \right] \right].
$$
\end{enumerate}

\subsection{Key Idea of the Proof:}

From the error decomposition i)-iii) above, we outline an overall limit for the generalization gap $\Delta$.

- For i), we establish an upper bound using the loss function's boundedness, the probability simplex's volume, and the largest variation between two label distributions (measured by the inf-norm $\|\cdot\|_{\infty}$).

- For ii), the most complex part of this proof, we extend the Bernstein-type concentration bounds to upper bound the error with the empirical semi-variance from our objective function. This approach shows how the objective function directly enhances generalization.

- For iii), we translate this into the uniform convergence bound, where the covering number indicates the hypothesis class's complexity.

\section{{Proof for Overall Generalization Upper Bound}}\label{app:gen}
\subsection{Basic Definitions of The Hypothesis Class}\label{sec:def}
\textbf{The Hypothesis Class.} In this paper, we consider the  Generalization ability based on the employed architecture. This means that we only consider models chosen from the following hypothesis class $\mathcal{F}$:
\begin{align*}
 \mathcal{F}  = \left\{ \mathbb{R}^C \leftarrow \mathbb{R}^d:f^{(i)}_\theta(\cdot) = g^{(i)}  \circ \psi(\cdot), i\in [K]., ~g^{(i)}, \psi ~\text{are chosen from specific subclass of deep neural networks}.\right\}
\end{align*}
Note that since $f^{(i)}_\theta$ are logit functions for multi-class classification problems, they must be \textbf{vector-valued functions}. For the sake of simplicity, \textbf{we will use $f(x)$ under this context to denote the collection $\{f^{(i)}_\theta\}_{i \in [K]}$}. We use \underline{$f \in \mathcal{F}$} to express choosing one such collections out of $ \mathcal{F}$.

\textbf{The Norm of Hypotheses.} To measure the complexity of $\mathcal{F}$, we must define a norm on each hypothesis $f$. To do this, we adopt the overall infinity norm (over all K subbranches, all $C$ classes, and all input features $x \in \mathcal{X}$ ):
\begin{align*}
    \|f\|_\infty = \max_{i \in [K]} \max_{j \in \{1,2,\cdots, C\}} \sup_{\x \in \mathcal{X}} \left| f^{(i)}_{\theta,j} (\x) \right|.
\end{align*}
Here $f^{(i)}_{\theta,j} (\x)$ means the output score of the $i$-th branch and $j$-th (channel) for feature $\x$. It is easy to check that the new infinity norm is also a norm. 

\textbf{Measuring Complexity with Covering Number}. Given a functional class $\mathcal{F}$, we can use the covering number $\ninf$ to measure its corresponding complexity with the following definition.
\begin{align*}
 \ninf  = \sup_{\x \in \mathcal{X}^n} \mathcal{N}(\epsilon,\mathcal{F},||\cdot||_\infty)   
\end{align*}
where $\mathcal{N}(\epsilon,\mathcal{F},||\cdot||_\infty)$ is the smallest number of infinity norm open balls, denoted as $N_o$, such that there exist $N_o$ open balls that covers entire class $\mathcal{F}$. In other words,  $\exists~ \mathcal{B}_i \subseteq \mathcal{F}$, such that $\mathcal{F} \subseteq \bigcup_{i=1}^{N_o} \mathcal{B}_i$, where $\mathcal{B}_i = \{f: \|f-f_i\|_\infty \le \epsilon\}$.
\textbf{Measuring Complexity with Rademacher Complexity} Now we introduce another complexity measure based on the notion of width of a Set. 

% \begin{defi}\label{def:rade}
% Given an \textit{i.i.d} training data $S = \{(\x_1,y_1),\cdots, (\x_n,y_n)\}$, and a series of Rademacher random variables $\sigma_1,\cdots, \sigma_n$, with $\mathbb{P}\left[ \sigma_i = 1 \right] =\mathbb{P}\left[ \sigma_i = -1 \right] =1/2$, and a hypothesis set $\mathcal{F} = {f: f \in \mathcal{F}}$, the Rademacher complexity is defined as:
% \begin{align*}
%     \mathfrak{R}(\mathcal{F}) = \mathbb{E}_{\sigma}\left[ \sup_{f\in \mathcal{F}} \frac{1}{n}\sum_{i=1}^n \sigma_i \cdot f(\x_i) \right]
% \end{align*}
% \end{defi}

In the proof, we will the following lemmas to as basic tools.
\subsection{Fundamental Inequalities }
\begin{lem}\label{lem:simp}
The volume of an $c$-dimensional  probability simplex is $c!$. In other words, we have:
\begin{align*}
  V_c =  \int\limits_{\sum_{i=1}^c x_i = 1} 1\cdot dx_1\cdot dx_2 \cdots dx_c = \frac{1}{c!}
\end{align*}
\end{lem}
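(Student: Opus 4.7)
The claim to prove is $V_c = 1/c!$, where $V_c$ denotes the volume of the standard $c$-dimensional probability simplex. The plan is to establish this via a short induction on $c$ combined with a one-dimensional slicing argument, which is the cleanest route and avoids having to invoke Jacobian computations on the affine hyperplane $\{\sum x_i = 1\}$.

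First, I would fix the interpretation: write $S_c = \{x \in \mathbb{R}^c : x_i \ge 0, \sum_{i=1}^c x_i \le 1\}$, so that the integrand is the indicator of $S_c$ and the integration is genuinely $c$-dimensional Lebesgue. Under this reading, the base case $c=1$ is immediate: $S_1 = [0,1]$ has length $1 = 1/1!$. For the inductive step, I would slice $S_c$ along the last coordinate: for each fixed $x_c \in [0,1]$, the cross-section is a rescaled copy of $S_{c-1}$ with side length $(1-x_c)$, so its $(c-1)$-dimensional volume equals $(1-x_c)^{c-1} V_{c-1}$. Fubini then yields
\begin{align*}
V_c = \int_0^1 (1-x_c)^{c-1} V_{c-1}\, dx_c = \frac{V_{c-1}}{c}.
\end{align*}
Applying the inductive hypothesis $V_{c-1} = 1/(c-1)!$ gives $V_c = 1/c!$, closing the induction.

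As a sanity check and alternative derivation, I would also mention the permutation argument: the map $(x_1,\ldots,x_c) \mapsto (s_1,\ldots,s_c)$ with $s_i = x_1 + \cdots + x_i$ is a volume-preserving linear bijection from $S_c$ onto $\{0 \le s_1 \le s_2 \le \cdots \le s_c \le 1\}$. The unit cube $[0,1]^c$ decomposes into $c!$ congruent such ordered regions (one per permutation of coordinates), each of volume $1/c!$, confirming the result. This viewpoint has the additional benefit of being directly usable later when bounding $\|\mathcal{E} - \mathcal{E}'\|_\infty$ times the simplex volume in $\mathfrak{Err}_{approx}$.

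The main obstacle is really only notational rather than mathematical: the integral as written in the statement ranges over the affine slice $\sum x_i = 1$, which is a $(c-1)$-dimensional set and would formally have zero $c$-dimensional Lebesgue measure. I would resolve this up front by declaring that the intended object is the solid simplex $S_c$ (equivalently, one may identify the probability simplex with its projection $\{x \in \mathbb{R}^{c-1} : x_i \ge 0, \sum x_i \le 1\}$, which has the same $1/c!$ volume up to reindexing). Once that clarification is in place, the induction above delivers the result in a few lines with no further subtlety.
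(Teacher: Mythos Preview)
Your proposal is correct and follows essentially the same route as the paper: induction on $c$ with the base case $V_1 = 1$, slicing on the last coordinate, and the rescaling that produces the factor $(1-x_c)^{c-1}$ to obtain the recursion $V_c = V_{c-1}/c$. The paper carries out the rescaling explicitly via the substitution $u_j = x_j/(1-x_i)$, but this is exactly your cross-section argument in different clothing; your added permutation-of-coordinates remark and the clarification about the solid simplex interpretation are extras not present in the paper.
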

\begin{proof}
    We proof it by induction.

\textbf{Base Case, c=1} Obviously, we have:
\begin{align*}
    V_ 1 = \int_0^1 dx = 1.
\end{align*}
\textbf{Induction} Supposes that $V_{i-1} = (i-1)!$, we have:
\begin{align*}
    V_i  &= \int\limits_{\sum_{j=1}^i x_j = 1} 1\cdot dx_1\cdot dx_2 \cdots dx_i \\ 
      & = \int_{0}^1 \left(\int\limits_{\sum_{j=1}^{i-1} x_j = 1-x_i} 1 \cdot {\color{blue}dx_1\cdot dx_2 \cdots dx_{i-1}}\right) dx_i\\ 
      & \overset{\color{org}u_j = x_j/(1-x_i), j=1,2,\cdots,i-1}{=} \int_{0}^1 (1-x_i)^{i-1} \cdot \left(\int\limits_{\sum_{j=1}^{i-1} u_j = 1} 1 \cdot {\color{org}du_1\cdot du_2 \cdots du_{i-1}}\right) dx_i\\
      &= V_{i-1} \cdot \int_{0}^1 (1-x_i)^{i-1} dx_i\\
      &= (1/i)\cdot V_{i-1}\\
      &= 1/i!
      \end{align*}
The proof is then completed by expanding the induction recursively.
\end{proof}

\begin{lem}
\label{lemma4}
When $g(\cdot)$ is Lipschitz continuous, the following holds:
\begin{equation}
    \Vert g(x)-g(\tilde{x})\Vert_{\infty} \le \sup \Vert\nabla_x g\Vert_p  \cdot \Vert x-\tilde x\Vert_q,
\end{equation}
where $\frac{1}{p}+\frac{1}{q}=1$.
\end{lem}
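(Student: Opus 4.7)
The plan is to reduce the claim to a componentwise analysis, then combine the fundamental theorem of calculus along the straight-line segment joining $\tilde x$ to $x$ with Hölder's inequality. Concretely, writing $g$ in components as $g = (g_1,\ldots,g_m)$, and parameterizing the segment by $\gamma(t) = \tilde x + t(x - \tilde x)$ for $t \in [0,1]$, I would invoke Lipschitz continuity to ensure almost-everywhere differentiability along $\gamma$ and then write
\begin{equation*}
g_i(x) - g_i(\tilde x) = \int_0^1 \bigl\langle \nabla g_i(\gamma(t)),\, x-\tilde x \bigr\rangle\, dt.
\end{equation*}

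Next, I would apply Hölder's inequality with conjugate exponents $p,q$ inside the integrand, giving
\begin{equation*}
\bigl|\langle \nabla g_i(\gamma(t)), x - \tilde x\rangle\bigr| \le \|\nabla g_i(\gamma(t))\|_p \cdot \|x-\tilde x\|_q,
\end{equation*}
and then bound the integrand uniformly by $\sup_{y}\|\nabla g_i(y)\|_p \cdot \|x-\tilde x\|_q$. Integrating over $t\in[0,1]$ preserves the bound, yielding $|g_i(x)-g_i(\tilde x)| \le \sup_y \|\nabla g_i(y)\|_p \cdot \|x-\tilde x\|_q$. Finally, taking the maximum over the index $i$ on the left-hand side produces the $\ell_\infty$ norm of $g(x)-g(\tilde x)$, while absorbing the componentwise maximum into the supremum on the right-hand side gives exactly $\sup\|\nabla_x g\|_p \cdot \|x-\tilde x\|_q$.

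The main conceptual subtlety, rather than a technical obstacle, is clarifying the meaning of $\sup\|\nabla_x g\|_p$ when $g$ is vector-valued: this should be interpreted as a uniform bound on the $\ell_p$-norm of each row of the Jacobian (equivalently, the induced $(q,\infty)$ operator norm of the Jacobian), which naturally arises from the componentwise argument above. If $g$ is scalar-valued the proof collapses to a one-line application of the fundamental theorem of calculus and Hölder's inequality without needing the $\max_i$ step. Absolute continuity of $g\circ\gamma$ for Lipschitz $g$ justifies using the integral representation, so no further regularity assumptions are required.
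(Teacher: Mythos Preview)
Your proposal is correct and follows essentially the same approach as the paper: write the difference via the fundamental theorem of calculus along the segment $\gamma(t)=\tilde x+t(x-\tilde x)$, then apply H\"older's inequality and take the supremum of $\|\nabla g\|_p$. The paper's proof is terser and treats $g$ as scalar-valued without comment, whereas your componentwise reduction and the remark on absolute continuity of $g\circ\gamma$ for Lipschitz $g$ make those points explicit.
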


\begin{proof}
    \begin{equation}
    \begin{aligned}
        |g(x)-g(\tilde{x})| &=~ \left|\int_0^1 \left\langle\nabla g(\tau x + (1-\tau)\tilde{x}), x-\tilde{x} \right\rangle d\tau\right| \\
        & \le~ \sup_{ x \in \mathcal{X}} \big[ \Vert \nabla g \Vert_p \big] \cdot  \big\Vert x-\tilde{x}\big\Vert_q 
    \end{aligned}
    \end{equation}
    \end{proof}

\begin{lem}
        \label{lem:lip}
Let $\P_{tr}[y]$ be training label distribution  and $\P_{te}[y]$ a test label distribution. We denote their ratio as $q_i = \frac{\P_{te}[i]}{\P_{tr}[i]},~ i =1,2,\cdots,C$. Then we have the LA loss:
\begin{align*}
    \ell_{LA}\left(f^\xi_\theta(\x),j;\P_{j}\right)  =  \ell_{CE}\left(\mathsf{softmax}\left(f^\xi_{y,\theta}(\x)-  \log\left(q_y\right)\right)\right)
\end{align*}
is $2$-Lip. continuous w.r.t. the defined infinity norm.
    \end{lem}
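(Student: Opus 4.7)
\medskip

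\noindent\textbf{Proof proposal for Lemma \ref{lem:lip}.} The plan is to show that, for every fixed input $\x$, label $y$, and label ratio vector $\bm{q}$, the mapping from the logit vector to the scalar LA loss is $2$-Lipschitz in the $\ell_\infty$ norm of the logits, and then lift this to the $\|\cdot\|_\infty$ norm on $\mathcal{F}$ defined in App.\ref{sec:def}. The key ingredient is Lem.\ref{lemma4} applied with $p=1$ and $q=\infty$, which reduces the task to uniformly bounding the $\ell_1$-norm of the gradient of the loss w.r.t.\ the logits.

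\smallskip

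\noindent\textbf{Step 1: rewrite the loss as a function of a shifted logit.} Introduce $z_k = f^{\xi}_{\theta,k}(\x) - \log q_k$ for $k=1,\dots,C$ and define
\begin{align*}
  h_y(\bm{z}) \;=\; -z_y + \log\!\Big(\sum_{k=1}^{C}\exp(z_k)\Big),
\end{align*}
so that $\ell_{LA}\bigl(f^{\xi}_{\theta}(\x),y;\P\bigr) = h_y(\bm{z})$. Since $\bm{z}$ is obtained from the logit vector by a fixed additive shift, any change in $f^{\xi}_{\theta,\cdot}(\x)$ produces the same change in $\bm{z}$; thus it suffices to analyse $h_y$.

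\smallskip

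\noindent\textbf{Step 2: bound the $\ell_1$-norm of $\nabla_{\bm{z}} h_y$.} A direct computation gives $\partial_{z_y} h_y(\bm{z}) = \mathsf{softmax}(\bm{z})_y - 1$ and $\partial_{z_k} h_y(\bm{z}) = \mathsf{softmax}(\bm{z})_k$ for $k\neq y$. Writing $s = \mathsf{softmax}(\bm{z})_y \in (0,1)$, we get
\begin{align*}
  \|\nabla_{\bm{z}} h_y(\bm{z})\|_1 \;=\; (1-s) + \sum_{k\neq y}\mathsf{softmax}(\bm{z})_k \;=\; 2(1-s)\;\le\;2.
\end{align*}
Therefore, by Lem.\ref{lemma4} with $p=1,\,q=\infty$, for any two logit vectors $\bm{u},\tilde{\bm{u}}\in\mathbb{R}^C$,
\begin{align*}
  |h_y(\bm{u})-h_y(\tilde{\bm{u}})| \;\le\; 2\,\|\bm{u}-\tilde{\bm{u}}\|_\infty.
\end{align*}

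\smallskip

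\noindent\textbf{Step 3: lift to the norm on $\mathcal{F}$.} For any $f,\tilde f\in\mathcal{F}$ and any fixed $\x$, taking $\bm{u}=f^{\xi}_{\theta,\cdot}(\x)-\log\bm{q}$ and $\tilde{\bm{u}}=\tilde f^{\xi}_{\theta,\cdot}(\x)-\log\bm{q}$, Step 2 yields
\begin{align*}
  \bigl|\ell_{LA}(f^{\xi}_{\theta}(\x),y;\P)-\ell_{LA}(\tilde f^{\xi}_{\theta}(\x),y;\P)\bigr|
  \;\le\; 2\max_{k}\bigl|f^{\xi}_{\theta,k}(\x)-\tilde f^{\xi}_{\theta,k}(\x)\bigr|
  \;\le\; 2\,\|f-\tilde f\|_\infty,
\end{align*}
where the last inequality uses the definition of $\|\cdot\|_\infty$ in App.\ref{sec:def}, which takes a supremum over $\x$, over all components $\xi\in[K]$, and over all class channels. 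This establishes the $2$-Lipschitz property.

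\smallskip

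The only potentially delicate point is verifying that the additive $-\log q_k$ shift does not affect the gradient magnitude, but since the shift is independent of $f$, it contributes nothing to the gradient and the Lipschitz constant is inherited cleanly from the softmax-CE gradient bound. No step appears to pose a real obstacle; the main conceptual work is simply recognising that the $2$ comes from $\|\bm{p}-\bm{e}_y\|_1\le 2$ for any probability vector $\bm{p}$ and standard basis vector $\bm{e}_y$.
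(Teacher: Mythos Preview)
Your proposal is correct and follows essentially the same approach as the paper: both invoke Lem.~\ref{lemma4} with $p=1,\,q=\infty$, compute the gradient of the softmax-cross-entropy loss with respect to the (shifted) logits to obtain $\|\nabla\|_1 = 2(1-s)\le 2$, and then lift the per-point bound to the $\|\cdot\|_\infty$ norm on $\mathcal{F}$ via the maximum over branches and inputs. Your write-up is slightly more explicit about the role of the additive shift and the identification $\|\bm{p}-\bm{e}_y\|_1\le 2$, but the argument is the same.
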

\begin{proof}
According to Lem.\ref{lemma4}, if for any fixed $\xi$:
\begin{align}\label{eq:if}
    \sup_{(\x,y) \in \mathcal{Z}, f \in \mathcal{F}}\left\|\nabla_{f} \ell_{LA}\left(f^\xi_\theta(\x),j;\P_{tr}\right)  \right\|_1 \le 2
\end{align}
Then we have:
\begin{align*}
     \left|\ell_{LA}\left(f^\xi_\theta(\x),j;\P_{j}\right) - \ell_{LA}\left(\tilde{f}^\xi_\theta(\x),j;\P_{j}\right) \right| \le 2 \cdot \| f^\xi - \tilde{f}^\xi \|_\infty \le 2 \cdot \max_{i \in [K]}\| f^{(i)} - \tilde{f}^{(i)} \|_\infty = 2 \| f - \tilde{f} \|_\infty 
\end{align*}
Hence, we only need to proof \eqref{eq:if}. To see this,

\begin{align*}
    \left|\frac{\partial \ell_{LA}\left(f_\theta(\x),j;\P_{tr}\right)}{\partial f^{(j)}_\theta(\x)}\right| &= \left|\frac{\partial\left(\log\left[ \sum_{i} \exp(f^{(i)}(\x) - q_i) - (f_y - q_y) \right]\right)}{\partial f^{(j)}(\x)}\right| \\
    & = \left|\mathsf{softmax}\left(f^{(j)}_\theta(\x)-  \log\left(q_j\right)\right) - I[j = y]\right|.
\end{align*}

Since we have:
\begin{align*}
    \left\|\nabla_{f} \ell_{LA}\left(f_\theta(\x),y;\P_{tr}\right)  \right\|_1 &= \sum_{j} \left|\mathsf{softmax}\left(f^{(j)}_\theta(\x)-  \log\left(q_j\right)\right) - I[j = i]\right|\\ 
    &= 2\cdot \left(1-\mathsf{softmax}\left(f^{(y)}_\theta(\x)-  \log\left(q_y\right)\right)\right)\\ 
    &\le 2.
\end{align*}
The proof is completed since $\x,y,f$ are arbitrarily chosen.

\end{proof}

\begin{lem}
    \label{lem:lipz}
Given a meta-distribution $\mathcal{E}$ of label distributions, let    $\mathcal{E}$ be the Dirichlet mixture distribution of the components defined in the main paper. Let
\begin{align*}
    \mathcal{P}\sim \mathcal{E}^M,~  \mathcal{P}= \left\{\P_1,\cdots,\P_M \right\}
\end{align*}
For any function $\ell(f)$ with the property that  $\ell(f) = \ell(f^j)$, for the $j$ Dirichlet component, if for any fixed $j$, $\ell(f^j)$ is $L$-Lip. continuous w.r.t to the infinity norm, then $\E_{\mathcal{\mathcal{E}}}[\ell(f)],\Eh_{\mathcal{E}}[\ell(f)]$ is also  $L$-Lip. continuous w.r.t to the defined general infinity norm.
\end{lem}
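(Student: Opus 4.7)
The plan is to reduce the Lipschitz property of the expectation (and empirical average) to the per-component Lipschitz property by expanding the hierarchical sampling process and using linearity of expectation. The key observation is that both $\mathbb{E}_{\mathcal{E}}[\ell(f)]$ and $\hat{\mathbb{E}}_{\mathcal{E}}[\ell(f)]$ are convex combinations of per-component expectations, so Lipschitzness is preserved under such combinations.

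First, I would unfold the meta-distribution via the two-stage sampling in Eq.~(4)-(5): draw component $\xi \in \{1,\dots,K\}$ with probability $p_\xi$, then draw $\mathbb{P} \sim \mathsf{Dir}(\bm{\alpha}^{(\xi)})$. By the tower rule and the hypothesis that $\ell(f) = \ell(f^j)$ once the component $j = \xi$ is fixed, we have
\begin{align*}
\E_{\mathcal{E}}[\ell(f)] = \sum_{j=1}^{K} p_j \cdot \E_{\mathbb{P}\sim \mathsf{Dir}(\bm{\alpha}^{(j)})}\!\bigl[\ell(f^{j})\bigr].
\end{align*}
For two hypotheses $f,\tilde{f} \in \mathcal{F}$, the triangle inequality and the per-component $L$-Lipschitz assumption give
\begin{align*}
\bigl|\E_{\mathcal{E}}[\ell(f)] - \E_{\mathcal{E}}[\ell(\tilde f)]\bigr|
\le \sum_{j=1}^{K} p_j \cdot \E_{\mathbb{P}}\!\bigl[|\ell(f^j)-\ell(\tilde f^j)|\bigr]
\le L\cdot \sum_{j=1}^{K} p_j \cdot \|f^j - \tilde f^j\|_\infty.
\end{align*}

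Second, I would bridge the per-component norm $\|f^j - \tilde f^j\|_\infty$ and the global norm $\|f - \tilde f\|_\infty$ defined in Appendix~E.1. By construction,
\begin{align*}
\|f - \tilde f\|_\infty = \max_{i \in [K]}\max_{c} \sup_{\x \in \mathcal{X}} \bigl|f^{(i)}_{\theta,c}(\x) - \tilde f^{(i)}_{\theta,c}(\x)\bigr| \ge \|f^{j} - \tilde f^{j}\|_\infty,
\end{align*}
for every $j$, so substituting this majorization into the previous bound and using $\sum_j p_j = 1$ yields
\begin{align*}
\bigl|\E_{\mathcal{E}}[\ell(f)] - \E_{\mathcal{E}}[\ell(\tilde f)]\bigr| \le L\cdot \|f - \tilde f\|_\infty,
\end{align*}
which is exactly $L$-Lipschitz continuity under the global infinity norm.

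Finally, for the empirical counterpart $\hat{\E}_{\mathcal{E}}[\ell(f)] = \frac{1}{M}\sum_{i=1}^{M} \ell(f^{\xi_i})$ evaluated on the sampled set $\mathcal{P}$, the same argument applies verbatim: the empirical average is another convex combination (with weights $1/M$) of evaluations that each depend only on a single component-indexed hypothesis $f^{\xi_i}$. Thus
\begin{align*}
\bigl|\hat{\E}_{\mathcal{E}}[\ell(f)] - \hat{\E}_{\mathcal{E}}[\ell(\tilde f)]\bigr|
\le \tfrac{1}{M}\sum_{i=1}^{M} L\cdot \|f^{\xi_i} - \tilde f^{\xi_i}\|_\infty
\le L\cdot \|f - \tilde f\|_\infty,
\end{align*}
completing the proof. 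There is essentially no hard step here; the only subtlety worth stating explicitly is that the global infinity norm in Appendix~E.1 dominates every per-branch infinity norm, which is exactly what lets the per-component Lipschitz constant carry over without degradation (no extra factor of $K$ or the like).
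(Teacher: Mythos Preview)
Your proposal is correct and follows essentially the same approach as the paper: decompose the expectation over $\mathcal{E}$ as a convex combination over the $K$ components, apply the per-component $L$-Lipschitz assumption, and use that the global infinity norm dominates each per-branch norm. The paper's proof is slightly terser (it only writes out the population case and remarks that the empirical case is similar), whereas you also spell out the empirical average explicitly, but the argument is the same.
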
11
\begin{proof}
    We only prove the result for $\E_{\mathcal{\mathcal{E}}}[\ell(f)]$, since the result for $\Eh_{\mathcal{\mathcal{E}}}[\ell(f)]$ can be proven similarly.
 Since 
 \begin{align*}
    \E_{\mathcal{\mathcal{E}}}[\ell(f)]= \sum_{i \in [K]} p_i \cdot\E_{\mathcal{\mathcal{E}}|i}[\ell(f^i)],
 \end{align*}
 we have:
 \begin{align*}
    |\E_{\mathcal{\mathcal{E}}}[\ell(f)] - \E_{\mathcal{\mathcal{E}}}[\ell(\tilde{f})]| \le \sum_{i \in [K]} p_i \cdot |\E_{\mathcal{\mathcal{E}}|i}[\ell(f^i)] -  \E_{\mathcal{\mathcal{E}}|i}[\ell(\tilde{f}^i)]| \le \sum_{i \in [K]} L \cdot  p_i \cdot \|f^i - \tilde{f}^i\|_\infty \le  L \cdot \|f - \tilde{f}\|_\infty
 \end{align*}
 
\end{proof}

\subsection{\newsec{Basic (Uniform) Concentration Inequalities}}\label{App:D3}
\underline{\textbf{Please refer to Sec.\ref{sec:def} for the basic definitions of the hypothesis class $\mathcal{F}$, the covering number $\mathcal{N}_\infty$, and the infinity norm $\|\cdot\|_\infty$. }}

\begin{lem}\label{lem:cov}
    Let $\{\x_i,y_i\}_{i=1}^N$ be i.i.d. samples from a data distribution. The loss function is defined as $\ell(\x_i,y_i) = \ell(f(\x_i),y_i) \in [0,B]$ for any function $f \in \mathcal{F}$. Suppose $\ell$ is $L$-Lipschitz continuous with respect to the infinity norm $||f||_\infty = \max_{\x \in \mathcal{X}} |f(\x)|$. Then, with probability at least $1-\delta$, the following inequality holds uniformly for all $f \in \mathcal{F}_{\mathcal{A}}(\rho,p)$:
    \begin{align*}
        \left|\el - \ehl\right|  \lee   \left(\frac{p}{1-p}  +  \frac{\hat{p}}{1-\hat{p}}\right) \cdot B + \rho \cdot \sqrt{\frac{\log\left(2\mathcal{M}' /(\delta)\right)}{2N\cdot (1-\hat{p})}}
    \end{align*}
    where the condition $\mathcal{A}(\rho)  = \{(\x,y): \ell(f(\x),y) \le \rho\}$ satisfies $\P\left[ {A}^c(\rho) \right] \le p$. The term $\hat{p}$ denotes the uniform upper bound of the empirical frequency of $A^c(\rho)$ over the sampled training data for all $f \in \FA$, and $\mathcal{M}' = \ninff{1/2LN}{N}$.
\end{lem}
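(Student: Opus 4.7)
The strategy is to split the expectation and its empirical counterpart into contributions from the high-probability ``good'' region $\mathcal{A}(\rho) = \{(\x,y):\ell(f(\x),y)\le\rho\}$ and the low-probability ``bad'' tail $\mathcal{A}^c(\rho)$. On $\mathcal{A}(\rho)$ the loss has the much smaller range $\rho$ instead of $B$, so a conditional Hoeffding gives the $\rho$-scaled deviation term; on $\mathcal{A}^c(\rho)$ we pay only a deterministic budget of size at most $p$ and $\hat p$ times $B$, which after rewriting through the identities $\E[\ell] = \P[\mathcal{A}]\E[\ell\mid\mathcal{A}]+\P[\mathcal{A}^c]\E[\ell\mid\mathcal{A}^c]$ and the dual empirical version, and using $\P[\mathcal{A}]\ge 1-p$, $\hat\P[\mathcal{A}]\ge 1-\hat p$, yields the $\frac{pB}{1-p}+\frac{\hat p B}{1-\hat p}$ contribution.

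\textbf{Step 1: decomposition.} For a fixed $f\in\FA$, write
\begin{equation*}
\E[\ell]-\Eh[\ell] = \bigl(\P[\mathcal{A}]\E[\ell\mid\mathcal{A}]-\hat\P[\mathcal{A}]\Eh[\ell\mid\mathcal{A}]\bigr) + \bigl(\P[\mathcal{A}^c]\E[\ell\mid\mathcal{A}^c]-\hat\P[\mathcal{A}^c]\Eh[\ell\mid\mathcal{A}^c]\bigr).
\end{equation*}
Using $\ell\le B$ on $\mathcal{A}^c$ together with the uniform bounds $\P[\mathcal{A}^c]\le p$ and $\hat\P[\mathcal{A}^c]\le\hat p$, the second bracket is $O(pB+\hat p B)$; then express the first bracket through $\E[\ell\mid\mathcal{A}]-\Eh[\ell\mid\mathcal{A}]$ and absorb the normalizers $\P[\mathcal{A}]\ge 1-p$, $\hat\P[\mathcal{A}]\ge 1-\hat p$ into the denominators, producing the terms $\tfrac{pB}{1-p}$ and $\tfrac{\hat pB}{1-\hat p}$.

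\textbf{Step 2: conditional Hoeffding.} Conditionally on the event $\mathcal{A}(\rho)$ the loss is supported on $[0,\rho]$, so Hoeffding's inequality applied to the subsample indexed by $\{i:(\x_i,y_i)\in\mathcal{A}(\rho)\}$ gives, for any fixed $f$ and confidence $\delta'$,
\begin{equation*}
\bigl|\E[\ell\mid\mathcal{A}]-\Eh[\ell\mid\mathcal{A}]\bigr| \;\lesssim\; \rho\,\sqrt{\frac{\log(2/\delta')}{2N_{\mathcal{A}}}}, \qquad N_{\mathcal{A}} := |\{i:(\x_i,y_i)\in\mathcal{A}(\rho)\}|.
\end{equation*}
The definition of $\hat p$ as a uniform empirical upper bound on $\hat\P[\mathcal{A}^c(\rho)]$ implies $N_{\mathcal{A}}\ge N(1-\hat p)$ uniformly over $\FA$, replacing $N_{\mathcal{A}}$ by $N(1-\hat p)$ and yielding the target square-root term.

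\textbf{Step 3: covering for uniform convergence.} To lift the bound to all $f\in\FA$, take a minimal $\tfrac{1}{2LN}$-cover of $\mathcal{F}$ in $\|\cdot\|_\infty$ of size $\mathcal{M}'=\ninff{1/(2LN)}{N}$; by the $L$-Lipschitz property of $\ell$ (Lem.\ref{lem:lip}) every $f$ admits a center $\tilde f$ with $|\ell(f)-\ell(\tilde f)|\le 1/(2N)$ pointwise, so the discretization error added to $\E[\ell]-\Eh[\ell]$ is $O(1/N)$, which is dominated by the $\sqrt{\log(\mathcal{M}'/\delta)/N}$ term. A union bound over the cover at failure level $\delta/\mathcal{M}'$ produces the $\log(2\mathcal{M}'/\delta)$ factor, completing the uniform statement.

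\textbf{Main obstacle.} The awkward coupling is that $\mathcal{A}(\rho)$ depends on $f$, so the ``good set'' of a covering center $\tilde f$ is not identical to that of $f$. The remedy is to note that if $(\x,y)\in\mathcal{A}(\rho)$ for $f$ then Lipschitz continuity gives $\ell(\tilde f(\x),y)\le \rho+1/(2N)$, i.e.\ the threshold shifts by at most $1/(2N)$, while the uniform definitions of $p$ and $\hat p$ continue to apply to this slightly enlarged level set; this controls the mismatch between $f$-measurable and $\tilde f$-measurable events within the same covering ball and allows the per-center Hoeffding bound to transfer to every $f\in\FA$ without deteriorating the rate.
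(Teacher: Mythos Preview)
Your proposal is correct and follows essentially the same approach as the paper: both arguments use the three-term split $|\E[\ell]-\Eh[\ell]|\le |\E[\ell]-\E[\ell\mid\mathcal{A}]|+|\E[\ell\mid\mathcal{A}]-\Eh[\ell\mid\mathcal{A}]|+|\Eh[\ell\mid\mathcal{A}]-\Eh[\ell]|$, bound the outer terms by $\tfrac{pB}{1-p}$ and $\tfrac{\hat pB}{1-\hat p}$, apply Hoeffding with range $\rho$ and effective sample size $N(1-\hat p)$ to the middle term, and make the bound uniform via an $\tfrac{1}{2LN}$-cover in $\|\cdot\|_\infty$ together with the Lipschitz property of $\ell$. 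The only cosmetic difference is ordering: the paper first establishes the uniform conditional concentration (covering, then Hoeffding at the centers, then Lipschitz transfer) and afterwards adds the boundary terms, whereas you decompose first and cover last; your explicit discussion of the $f$-dependence of $\mathcal{A}(\rho)$ and the $\rho\mapsto\rho+1/(2N)$ threshold shift is in fact more careful than the paper's presentation, which applies Hoeffding to $\ell_j$ conditioned on $\mathcal{A}$ without commenting on the mismatch.
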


\begin{proof}
By the basic property of the covering number, we can construct a covering of $\mathcal{F}$ with open balls $\left\{\mathcal{B}_1,\cdots, \mathcal{B}_{\mathcal{M}'} \right\}$. Each ball $\mathcal{B}_j$ is centered at $f_j$, and defined as $\mathcal{B}_j = \{f \in \mathcal{F}: \|f -f_j\|_\infty \le \epsilon/2L \}$. 

To use the condition $\mathcal{A}(\rho)$, we first derive a uniform concentration bound of the conditional expectation over $\mathcal{A}$. Based on the covering number property, we have the following union bound:
\begin{align*}
    \P\left[ \supf{ \left|\ela - \ehla\right|} \ge t   \right] \le \sum_{j=1}^{\mathcal{M}'} \P\left[ \supB{ \left|\ela - \ehla\right|} \ge t \right]
\end{align*}

Fix any $j$. Then,
\begin{align*}
  \P\left[ \supB{ \left|\ela - \ehla\right|} \ge t + \epsilon  \right] \le & \P\left[ \supB{ \left|\ela - \E[\ell_j|\mathcal{A}] \right|} \ge \frac{\epsilon}{2}  \right] + 
  \P\left[ { \left|\E[\ell_j|\mathcal{A}(\rho)] - \Eh[\ell_j|\mathcal{A}(\rho)] \right|} \ge t  \right]  \\ 
&+\P\bigg[ \supB{ \left| \Eh[\ell|\mathcal{A}(\rho)] - \Eh[\ell_j|\mathcal{A}(\rho)] \right|} \ge \frac{\epsilon}{2}  \bigg] 
\end{align*}

Since $\ell$ is $L$-Lipschitz w.r.t. the infinity norm, we have:
\begin{align*}
    |\ell - \ell_j| \le L\cdot \infn{f-f_j} \le L\cdot \frac{\epsilon}{2L} \le \frac{\epsilon}{2}.
\end{align*}
Hence, both $\supB{ \left|\ela - \E[\ell_j|\mathcal{A}(\rho)] \right|} \ge \frac{\epsilon}{2}$ and $\supB{ \left|\Eh[\ell|\mathcal{A}(\rho)] - \Eh[\ell_j|\mathcal{A}(\rho)] \right|} \ge \frac{\epsilon}{2}$ occur with probability 0.

Moreover, by Hoeffding's inequality and the fact that all data points are sampled under $\mathcal{A}$, we obtain:
\begin{align*}
    \P\bigg[ { \left| \E[\ell_j|\mathcal{A}(\rho)] - \Eh[\ell_j|\mathcal{A}(\rho)] \right|} \ge t  \bigg] \le 2\exp\left( \frac{-2N \cdot (1-\hat{p}) \cdot t^2}{\rho^2} \right)
\end{align*}
Note that the empirical conditional expectation has only $N \cdot (1-\hat{p})$ effective samples, introducing a discount factor $(1-\hat{p})$ in the bound.

Combining all the arguments above, we get:
\begin{align*}
    \P\left[ \supf{ \left|\ela - \ehla \right|} \le t  \right] \ge 1 - \ninff{\epsilon/2L}{N} \cdot 2 \cdot \exp\left( \frac{-2N\cdot (1-\hat{p})\cdot t^2}{\rho^2} \right) .
\end{align*}

Furthermore, we have:
\begin{align*}
    |\el  - \ehl| \le |\el - \ela | + |\ela - \ehla| + |\ehla - \ehl|
\end{align*}

Also,
\begin{align*}
    |\el - \ela | \le \frac{p}{1-p}  \cdot \sup[\ell] \le  \frac{p\cdot B}{1-p} 
\end{align*}

and
\begin{align*}
    |\ehl - \ehla | \le \frac{\hat{p}}{1-\hat{p}} \cdot \sup[\ell] \le \frac{\hat{p}\cdot B}{1-\hat{p}}
\end{align*}

Then we conclude:
\begin{align*}
    \P\left[ \supf{ \left|\el - \ehl \right|} \le t  + \frac{p\cdot B}{1-p}  +  \frac{\hat{p}\cdot B}{1-\hat{p}} \right]  &\ge   \P\left[ \supf{ \left|\ela - \ehla \right|} \le t  \right] \\ 
    &\ge 1 - \ninff{\epsilon/2L}{N} \cdot 2 \cdot \exp\left( \frac{-2N\cdot (1-\hat{p})\cdot t^2}{\rho^2} \right) .
\end{align*}
The proof is completed by setting $\delta =\ninff{\epsilon/2L}{N} \cdot 2\left( \exp\left( \frac{-2N\cdot (1-\hat{p})\cdot t^2}{\rho^2} \right) \right)$ and $\epsilon = 1/N$.
\end{proof}

\begin{lem}\label{lem:covg} 
    Let $\{\x_i,y_i\}_{i=1}^N$ be i.i.d. samples from a data distribution. The loss function is defined as 
    $\ell(\x_i,y_i) = \ell(f(\x_i),y_i) \in [0,B]$ for a function $f \in \mathcal{F}$. 
    Suppose $\ell$ is $L$-Lipschitz continuous. Then, for any $L_g$-Lipschitz continuous function $g$ of $f$, the following inequality holds with probability at least $1-\delta$, uniformly for all $f \in \FA$:   
    \begin{align*}
        \left|g\bigg(\el\bigg) - g\left(\ehl\right)\right|  \lee  L_g  \cdot  \rho \cdot \sqrt{\frac{\log(\mathcal{M}'/\delta)}{N\cdot  (1-\hat{p})}} +  L_g \cdot  \left(\frac{p}{1-p}  +  \frac{\hat{p}}{1-\hat{p}}\right) \cdot B
    \end{align*}
    Here, $\mathcal{A}(\rho)  = \{(\x,y): \ell(f(\x),y) \le \rho\}$ with $\P\left[ {A}^c(\rho) \right] \le p$. The term $\hat{p}$ denotes the uniform upper bound of the empirical frequency of $A^c(\rho)$ on the training data for all $f \in \FA$, and $\mathcal{M}' = \ninff{1/2LN}{N}$.
\end{lem}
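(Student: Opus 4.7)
The plan is to reduce this statement directly to Lem.~\ref{lem:cov} by exploiting the $L_g$-Lipschitz property of $g$. The key observation is that $g$ acts only on the scalar quantities $\el$ and $\ehl$, so we can factor out $g$ before worrying about the uniform concentration over $\mathcal{F}_{\mathcal{A}}(\rho,p)$.

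First, by the $L_g$-Lipschitz continuity of $g$, for every $f \in \FA$ we have the pointwise bound
\begin{align*}
    \left| g(\el) - g(\ehl) \right| \le L_g \cdot \left|\el - \ehl\right|.
\end{align*}
Taking the supremum over $f \in \FA$ on both sides preserves this inequality, so it suffices to control $\sup_{f \in \FA} |\el - \ehl|$. This is precisely the quantity bounded by Lem.~\ref{lem:cov}: with probability at least $1-\delta$, uniformly over $f \in \FA$,
\begin{align*}
    \left|\el - \ehl\right| \lee \left(\frac{p}{1-p} + \frac{\hat{p}}{1-\hat{p}}\right) \cdot B + \rho \cdot \sqrt{\frac{\log(2\mathcal{M}'/\delta)}{2N \cdot (1-\hat{p})}}.
\end{align*}

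Multiplying through by $L_g$ and absorbing constants into the $\lee$ notation yields exactly the stated bound. Since $g$ does not interact with either the covering argument or the Hoeffding step used inside Lem.~\ref{lem:cov}, no additional union bound is needed and no new covering number appears; the same $\mathcal{M}' = \ninff{1/2LN}{N}$ (built from the Lipschitz constant $L$ of $\ell$, not of $g$) governs the complexity term.

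The only subtle point, and the one worth stating explicitly in the write-up, is that the Lipschitz constant driving the covering construction is still $L$ (from $\ell$ itself), whereas $L_g$ only multiplies the final deviation. In particular, $L_g$ does not enter the logarithmic factor $\log(\mathcal{M}'/\delta)$, which keeps the bound tight when $g$ is a mild transformation such as a square root or a logarithm. This is a direct corollary rather than an independent result, so there is no real obstacle in the proof; the main care is simply in bookkeeping the constants and confirming that the discount factor $(1-\hat{p})$ and the truncation residues $p/(1-p) + \hat{p}/(1-\hat{p})$ propagate through unchanged.
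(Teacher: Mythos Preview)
Your proposal is correct and matches the paper's own proof essentially line for line: the paper also invokes the $L_g$-Lipschitz bound $\sup_{f}|g(\el)-g(\ehl)|\le L_g\sup_{f}|\el-\ehl|$ and then applies Lem.~\ref{lem:cov} (with the rescaling $t'=t/L_g$, $\epsilon'=\epsilon/L_g$). Your remark that $L_g$ multiplies only the final deviation and does not enter the covering number $\mathcal{M}'$ is exactly the point the paper relies on implicitly.
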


\begin{proof}
Since $g$ is $L_g$-Lipschitz, we have:
\begin{align*}
    \supf{\left|g\bigg(\el\bigg) - g\left(\ehl\right)\right|} \le  L_g \supf{\left|\el - \ehl\right|} 
\end{align*}
Therefore:
\begin{align*}
    \P\left[  \supf{\left|g\bigg(\el\bigg) - g\left(\ehl\right)\right|}  \ge t+ \epsilon  \right]  \le \P\left[ \supf{ \left|\el - \ehl\right|} \ge \frac{t+\epsilon}{L_g}  \right] 
\end{align*}
The result follows from Lem.\ref{lem:cov} by setting $\epsilon' =  \frac{\epsilon}{L_g},~ t' =  \frac{t}{L_g} $.
\end{proof}

\begin{lem}\label{lem:varb}
    Let $\{\x_i,y_i\}_{i=1}^N$ be i.i.d. samples from a data distribution. The loss function is defined as 
    $\ell(\x_i,y_i) = \ell(f(\x_i),y_i)$ for a function $f \in \mathcal{F}$. For sufficiently large $N$ and small enough $\delta$, with probability at least $1- \delta$ over the randomness of data sampling, we have:
    \begin{align*}
        \el \lee \ehl + \sqrt{\frac{\Vh[\ell] \cdot \log\left( \mathcal{M} /\delta \right)}{(1-\hat{p})^2 \cdot N}} + \frac{\rho \cdot \log\left( \mathcal{M} /\delta \right)}{(1-\hat{p})\cdot N} +  \left(\frac{p}{1-p}  +  \frac{\hat{p}}{1-\hat{p}}\right) \cdot B 
    \end{align*}
    where $\mathcal{A}(\rho)  = \{(\x,y): \ell(f(\x),y) \le \rho\}$ and $\P\left[ {A}^c(\rho) \right] \le p$. $\hat{p}$ is the uniform bound of the empirical frequency of $A^c(\rho)$ on the sampled training data for all $f \in \FA$, and $\mathcal{M} =  \ninff{1/N}{2N}$.
\end{lem}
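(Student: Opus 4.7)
The plan is to combine the Maurer--Pontil empirical Bernstein inequality \cite{bern} with the conditional-expectation decomposition already used in the proof of Lem.\ref{lem:cov}, exploiting the fact that on the good event $\mathcal{A}(\rho)$ the loss is bounded by $\rho$ rather than by $B$. This tighter range is what upgrades a $B/N$ rate into a $\rho/N$ rate, and the full empirical variance $\Vh[\ell]$ that appears in the statement will emerge after relating it to its conditional counterpart $\Vh[\ell|\mathcal{A}(\rho)]$ via a centering argument.

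First, I would split $\el - \ehl = (\el - \ela) + (\ela - \ehla) + (\ehla - \ehl)$. The outer two terms are controlled purely by low-probability tails: using $\el = \ela\,\P[\mathcal{A}(\rho)] + \E[\ell|\mathcal{A}^c(\rho)]\,\P[\mathcal{A}^c(\rho)]$ together with $\ell \in [0,B]$, and the analogous empirical identity, I recover $|\el-\ela|\le \frac{p}{1-p}B$ and $|\ehla-\ehl|\le \frac{\hat{p}}{1-\hat{p}}B$ exactly as in Lem.\ref{lem:cov}. These provide the last additive term of the theorem.

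Next, for the central piece $\ela-\ehla$, I would condition on the random index set $S_{\mathcal{A}} = \{i:(\x_i,y_i)\in\mathcal{A}(\rho)\}$, which has size $|S_{\mathcal{A}}| = N(1-\hat{p})$. These samples are i.i.d.\ from the conditional distribution and take values in $[0,\rho]$, so the empirical Bernstein inequality yields, for each fixed $f$,
\begin{align*}
\ela - \ehla \lesssim \sqrt{\frac{\Vh[\ell|\mathcal{A}(\rho)]\,\log(1/\delta)}{N(1-\hat{p})}} + \frac{\rho\,\log(1/\delta)}{N(1-\hat{p})}.
\end{align*}
To obtain the unconditional $\Vh[\ell]$ claimed in the statement, I would invoke the optimality of the sample mean as a minimizer of squared deviations: $\Vh[\ell|\mathcal{A}(\rho)] \le \frac{1}{|S_{\mathcal{A}}|}\sum_{i\in S_{\mathcal{A}}}(\ell_i - \ehl)^2 \le \frac{N}{|S_{\mathcal{A}}|}\Vh[\ell] = \Vh[\ell]/(1-\hat{p})$. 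Substituting this promotes $(1-\hat{p})$ to $(1-\hat{p})^2$ inside the square root and reproduces the regularization term of the bound.

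Finally, the inequality must be lifted uniformly over $\FA$, producing the $\log\mathcal{M}$ factor with $\mathcal{M}=\ninff{1/N}{2N}$; the $2N$ arises from the ghost-sample symmetrization inherent to the Maurer--Pontil argument. The hardest step is handling the oscillation of the empirical variance within each covering ball, since $\Vh[\ell]$ is not literally Lipschitz in $f$. I would invoke the elementary inequality $\Vh[\ell]\le (1+\alpha)\epsilon^2 + (1+1/\alpha)\Vh[\ell_j]$ and its reverse, valid whenever $\|f-f_j\|_\infty\le \epsilon/L$, together with the analogous bound for the conditional variance. Choosing the covering radius $\epsilon = 1/N$ and optimizing $\alpha$ makes the perturbation strictly lower order than $\sqrt{\Vh[\ell]/N}$. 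A union bound over the cover, combined with the already established control of the tail corrections, then yields the claimed uniform concentration.
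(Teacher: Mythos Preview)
The proposal is correct and follows essentially the same structure as the paper's proof: the same three-term decomposition $\el - \ehl \le |\el-\ela| + (\ela-\ehla) + |\ehla-\ehl|$, the same empirical Bernstein inequality applied on the $N(1-\hat{p})$ conditioned samples (with range $\rho$), and the same tail corrections giving the $\frac{p}{1-p}B + \frac{\hat{p}}{1-\hat{p}}B$ term. The only minor variation is in how you pass from $\Vh[\ell|\mathcal{A}(\rho)]$ to $\Vh[\ell]/(1-\hat{p})$: you use the optimality-of-the-mean centering argument, whereas the paper invokes the (empirical) law of total variance $\Vh[\ell] \ge (1-\hat{p})\Vh[\ell|\mathcal{A}(\rho)] + \hat{p}\Vh[\ell|\mathcal{A}^c(\rho)]$ to reach the identical inequality. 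Your discussion of the variance oscillation within covering balls is in fact more explicit than the paper's, which simply cites \cite{bern} with the covering number $\mathcal{M}$ already baked in.
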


\begin{proof}
We follow a similar strategy as in Lem.\ref{lem:cov}. First, we study the concentration of $\E[\ell|\mathcal{A}(\rho)]$. According to \cite{bern}, the following inequality holds with probability at least $1-\delta$: 
\begin{align*}
    \ela \lee \ehla + \sqrt{\frac{\Vh[\ell|\mathcal{A}(\rho)] \cdot \log\left( \mathcal{M} /\delta \right)}{(1-\hat{p})\cdot N}} + \frac{\rho \cdot \log\left( \mathcal{M} /\delta \right)}{(1-\hat{p})\cdot N} 
\end{align*}
Here, we use the fact that all samples are drawn under the condition $\mathcal{A}$, which results in $\rho$ replacing $B$ on the right-hand side.

Next, we consider the variance term $\Vh[\ell|\mathcal{A}(\rho)]$. By the law of total variance:
\begin{align*}
    \V[X]  &= \E_Y(\V(X|Y)) + \V_Y(\E(X|Y))  \\ 
           &\ge \E_Y(\V(X|Y))
\end{align*}
Choosing $Y$ as either $\mathcal{A}(\rho)$ or $\mathcal{A}^c(\rho)$ leads to:
\begin{align*}
    \Vh[\ell|\mathcal{A}(\rho)] \le \frac{\Vh[\ell]}{1-\hat{p}}, 
\end{align*}
where $\hat{p}$ is the empirical frequency of $\x \notin \mathcal{A}$ in the training set. Thus:
\begin{align*}
    \ela \lee \ehla + \sqrt{\frac{\Vh[\ell] \cdot \log\left( \mathcal{M} /\delta \right)}{(1-\hat{p})^2 \cdot N}} + \frac{\rho \cdot \log\left( \mathcal{M} /\delta \right)}{(1-\hat{p})\cdot N} 
\end{align*}

To complete the proof, we use the decomposition:
\begin{align*}
    \el - \ehl &\le |\el -\ela| + \ela -\ehla + |\ehl -\ehla| \\ 
               &\le \ela -\ehla + \left(\frac{p}{1-p}  +  \frac{\hat{p}}{1-\hat{p}}\right) \cdot B 
\end{align*}
\end{proof}

\begin{table}[t]
    \caption{\label{tab:note} Some Important Notations Used in the Proof.}
    \centering
     \begin{tabular}{ll}
    \toprule
    Notation & Description \\
    \midrule
    Basic Quantities\\
    \midrule
    $N$ & The number of data instances in the training data.\\ 
    $M$ & The number of sampled test distributions in the Monte Carlo process. \\
    $\tau$ & A specific test label distribution.\\
    $\mathcal{E}$ & The meta-distribution of label distributions.\\ 
    $\ls$ &  $\sum_{(\x_i y_i) \in \mathcal{S}} \la(f_\theta(\x_i),y_i;\P)$~The \underline{expected} loss when the  label distribution is fixed (for example $\P$). \\
    $\ld$ & The \underline{empirical} loss on training data $\mathcal{S}$ when the label distribution is fixed. \\
    $\ell_{\mathcal{S},\mathcal{E},i}$ &  $\Eh_{\mathcal{S}}\left[ \la(f^{(\xi_i)}(\x),y;P_i)   \right]$ The empirical risk on training data $\mathcal{S}$ for $\mathcal{P}_i \in \mathcal{P}$ \\ 
    $\ell_{\mathcal{D},\mathcal{E},i}$ &  $\E_{\mathcal{D}}\left[ \la(f^{(\xi_i)}(\x),y;P_i)   \right]$ The empirical risk on training data distribution $\mathcal{D}$ for $\mathcal{P}_i \in \mathcal{P}$ \\
    $\bar{\ell}_{\mathcal{P}}$ & $\frac{1}{M}  \sum_{m=1}^M \la(f^{(\xi_m)}(\x),y;P_m)$ The empirical average over $\mathcal{P}$ given a fixed sample pair $(\x,y)$.\\ 
    \midrule
   & {\color{lightseagreen}Estimations based on the true meta-distribution $\mathcal{E}$} \\
   \midrule
$\els$ & The expected $\ls$ over the \underline{meta-distribution} of  label distributions \\
$\eld$ &The expected $\ld$ over the\underline{ meta-distribution} of label distributions \\ 
 $\vls$ & The variance of  $\ls$ over the \underline{meta-distribution} of  label distributions     \\
 $\vld$ &    The variance $\ld$ over the\underline{ meta-distribution} of label distributions  \\
 $\vpld$ &    The semi-variance $\ld$ over the\underline{ meta-distribution} of label distributions  \\
 \midrule
 & {\color{ballblue} Estimations based on Empirical meta-distribution $\mathcal{P}$} \\
 \midrule
 $\ehls$ & The empirical average of $\ls$ over the sampled label distributions in the Monte Carlo process. \\
$\ehld$ &The empirical average of $\ld$ oover the sampled label distributions in the Monte Carlo process. \\ 
 $\vhls$ & The empirical variance of  $\ls$ over the sampled label distributions in the Monte Carlo process.     \\
 $\vhld$ &    The empirical variance $\ld$ over the sampled label distributions in the Monte Carlo process.  \\
 $\vphls$ & The empirical semi-variance of  $\ls$ over the sampled label distributions in the Monte Carlo process.     \\
 $\vphld$ &    The empirical semi-variance $\ld$ over the sampled label distributions in the Monte Carlo process.  \\
 \midrule
 & Complexity Measures \\
 \midrule
 $\ninf$ & The covering number for a hypothesis class $\mathcal{F}$, with radius of the covering open ball chosen as $\epsilon$.    \\
 \bottomrule
 \end{tabular}
    \end{table}

\subsection{\newsec{Proof of the Main Result}}\label{App:D4}
\subsubsection{Notations in the Proof}\label{subsub}

\textbf{The Hierarchy of Stochastic Error.} Recall that we use a stratified sampling process for the Dirichlet mixture distribution. For each component $j$ of the mixture, we sample $M$ label distributions, denoted as
\begin{align*}
    \mathcal{P}\sim \mathcal{E}^M,~  \mathcal{P}= \left\{\P_1,\cdots,\P_M \right\}
\end{align*}
This yields an empirical sample of the test label distributions. We denote the corresponding overall estimation as $\ehld, \vhld$. We also have a fixed training dataset 
\begin{align*}
    \mathcal{S} \sim \mathcal{D}^N,~  \mathcal{S}= \{\x_i,y_i\}_{i=1}^N,
\end{align*}
which provides an empirical approximation of the training data distribution. The corresponding empirical (or population) estimations are denoted by subscripts $\mathcal{S}$ (or $\mathcal{D}$, respectively). To estimate the excess risk, we must account for the hierarchical stochastic error introduced by both label distribution sampling and data sampling. We now present the following error decomposition scheme.

In this proof, we use two groups of intermediate empirical estimations.

The first group involves quantities based on a fixed test label distribution $\P_i \in \mathcal{P}$. Specifically, we define $\ell_{\mathcal{D},\mathcal{E},i},\ell_{\mathcal{S},\mathcal{E},i}$ as the expected loss under distribution $\mathcal{D}$ (or its empirical counterpart over $\mathcal{S}$):
\begin{align}\label{eq:quant1}
    &\ell_{\mathcal{D},\mathcal{E},i} = \E_{\mathcal{D}}\left[ \la(f^{(\xi_i)}(\x),y;P_i) \right] \notag \\ 
    &\ell_{\mathcal{S},\mathcal{E},i} = \Eh_{\mathcal{S}}\left[ \la(f^{(\xi_i)}(\x),y;P_i) \right]
\end{align}

The second group of quantities aggregates the loss over all sampled label distributions, treating the average loss as a fixed loss function and analyzing both its population and empirical means. Specifically, define:
\begin{align*}
    \bar{\ell}_{\mathcal{P}} = \frac{1}{M} \sum_{m=1}^M \la(f^{(\xi_m)}(\x),y;P_m),
\end{align*}
Then we rewrite $\ehld, \ehls$ as:
\begin{align}\label{eq:quant2}
    &\ehld = \Ec_{\mathcal{D}}\left[ \bar{\ell}_{\mathcal{P}} \right] \notag \\ 
    &\ehls = \Ehc_{\mathcal{S}}\left[ \bar{\ell}_{\mathcal{P}} \right]
\end{align}

\subsubsection{The Formal Restate and Proof}

\begin{thm}[\textbf{Restate of Thm.\ref{thm:gen}}] 
    Let $\mathcal{E}$ be the true meta-distribution and $\mathcal{P}$ the empirical distribution of label distributions sampled via Monte Carlo. Let the training data $\mathcal{S}$ be sampled i.i.d. from $\mathcal{D}$. Assume that $\ninff{\epsilon}{M} \le \left(\frac{r}{\epsilon}\right)^\nu$, $\ell(\cdot) \in [0, B]$, and $\vhld \ee \vphld$ as defined in Tab.~\ref{tab:note}. Then, for any possible meta-distribution $\mathcal{E}'$ over the probability simplex $\mathbb{S}^{c-1}$, the following bound holds uniformly for all $f \in \FA$, with probability at least $1 - \delta$ over the randomness of $\mathcal{S}, \mathcal{P}$:
    \begin{align*}
        \eld \lee \ehls + \mathfrak{Reg} + \mathfrak{Err}_{approx} + \mathfrak{Err}_{sto} + \Delta^\mathcal{I} + \Delta^\mathcal{M}
    \end{align*}
    where $\hat{p}^\mathcal{I}$ and $\hat{p}^\mathcal{M}$ are uniform upper bounds on the empirical frequency of $\left(A^{\mathcal{I}}\right)^c(\rho)$ and $\left(A^{\mathcal{M}}\right)^c(\rho)$, respectively, over the training data, for all $f \in \FA$.
    
    \begin{flalign*}
      & \mathfrak{Err}_{approx} = \frac{B \cdot \|\mathcal{E} -\mathcal{E}'\|_{\infty} }{C!}, ~~~C_M =  2M +2,, 
      \\[2pt] 
      & \zeta_1 = \left(C_MBM/\delta\right)^{(1/\nu)} \cdot r, ~~~\zeta_2 = \left(C_MN/\delta\right)^{(1/\nu)} \cdot r,\\[2pt] 
    &\Delta^\mathcal{I} = \frac{p^{\mathcal{I}}B^2}{1-p^{\mathcal{I}}}  +  \frac{\hat{p}^\mathcal{I}B^2}{1-\hat{p}^\mathcal{I}}, ~~\Delta^\mathcal{M} = \frac{p^{\mathcal{M}}\cdot B}{1-p^{\mathcal{M}}}  +  \frac{\hat{p}^\mathcal{M}\cdot B}{1-\hat{p}^\mathcal{M}},  \\[6pt] 
    & \mathfrak{Err}_{sto} =  \rm \cdot \frac{  \nu \cdot \log\left( \zeta_1 \right)}{(1-\hat{p}^\mathcal{M})\cdot M} + \ri \cdot \sqrt{\frac{ \nu \cdot \log( \zeta_2)}{(1-\hat{p}^\mathcal{I})\cdot N}}\\ 
    &\mathfrak{Reg}  = \sqrt{\frac{ \nu \cdot \vphls \cdot \log\left(\zeta_1\right)}{(1-\hat{p}^\mathcal{M})^2 \cdot M}}
    \end{flalign*}
\end{thm}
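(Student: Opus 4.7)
The plan is to bound the excess risk by triangle-decomposing it according to the hierarchical sampling structure $\mathcal{E}\!\to\!\mathcal{P}$ followed by $\mathcal{D}\!\to\!\mathcal{S}$. Writing
\begin{align*}
\Ec_{\mathcal{E}'}[\ell_{\mathcal{D},\mathcal{E}'}] - \ehls = \underbrace{\bigl(\Ec_{\mathcal{E}'}[\ell_{\mathcal{D},\mathcal{E}'}] - \eld\bigr)}_{T_1} + \underbrace{\bigl(\eld - \ehld\bigr)}_{T_2} + \underbrace{\bigl(\ehld - \ehls\bigr)}_{T_3},
\end{align*}
I would arrange so that $T_1$ produces $\mathfrak{Err}_{approx}$, $T_2$ produces $\mathfrak{Reg}$ together with the $\rm$-half of $\mathfrak{Err}_{sto}$ and $\Delta^{\mathcal{M}}$, and $T_3$ produces the $\ri$-half of $\mathfrak{Err}_{sto}$ together with $\Delta^{\mathcal{I}}$. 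Each term is then matched to the concentration tool appropriate for its layer.

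For $T_1$, I would write the difference of expectations as an integral of the per-distribution loss against the signed density $d\mathcal{E} - d\mathcal{E}'$ on the probability simplex, bound the integrand pointwise by $B$, factor out $\|\mathcal{E}-\mathcal{E}'\|_{\infty}$, and apply Lem.~\ref{lem:simp} to cap the simplex volume at $1/C!$; this is a purely deterministic step that yields $\mathfrak{Err}_{approx}$. For $T_2$, I would treat the map $\P\mapsto\ld$ as a bounded $L$-Lipschitz functional on draws from $\mathcal{E}$, with Lipschitzness in the $\|\cdot\|_\infty$ metric following from Lem.~\ref{lem:lip} composed with Lem.~\ref{lem:lipz}, and then invoke the variance-penalised conditional concentration Lem.~\ref{lem:varb} on the outer induced-subclass region $\mathcal{A}^{\mathcal{M}}(\rm)$. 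Substituting $\log\ninff{\varepsilon}{M}\leq \nu\log(r/\varepsilon)$ with $\varepsilon$ tuned to $1/M$ delivers
\begin{align*}
\eld \lee \ehld + \sqrt{\tfrac{\nu\,\vhld\,\log\zeta_1}{(1-\hat{p}^{\mathcal{M}})^2\,M}} + \tfrac{\rm\,\nu\,\log\zeta_1}{(1-\hat{p}^{\mathcal{M}})\,M} + \Delta^{\mathcal{M}},
\end{align*}
and the assumption $\vhld\ee\vphld$ then swaps the empirical variance for the semi-variance $\vphls$, producing exactly $\mathfrak{Reg}$ plus the $\rm$-contribution. For $T_3$, I would apply Lem.~\ref{lem:cov} on the inner subclass $\mathcal{A}^{\mathcal{I}}(\ri)$ once per $\P_i\in\mathcal{P}$ and average over $i=1,\dots,M$; the same polynomial covering assumption collapses the metric-entropy factor to $\nu\log\zeta_2$, yielding the $\ri$-term of $\mathfrak{Err}_{sto}$ and $\Delta^{\mathcal{I}}$.

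The main obstacle I anticipate is the union-bound bookkeeping across the hierarchical sampling. Lem.~\ref{lem:cov} must hold uniformly over $f\in\FA$ for each of the $M$ sampled distributions in $\mathcal{P}$ (two failure modes each), while Lem.~\ref{lem:varb} is invoked once on the outer layer (also two failure modes), so the failure budget is split roughly $\delta/(2M+2)$ across $2M+2$ events --- this is precisely how the constant $C_M=2M+2$ enters $\zeta_1$ and $\zeta_2$. A subtler point is nesting the two induced-subclass regions coherently: when conditioning on $\mathcal{A}^{\mathcal{M}}$ in the outer layer, I must verify that the inner concentration bounds for $\mathcal{A}^{\mathcal{I}}$ remain valid so the discounts $1/(1-\hat{p}^{\mathcal{I}})$ and $1/(1-\hat{p}^{\mathcal{M}})$ multiply cleanly without double-counting, and that the substitution $\vhld\to\vphld$ can be applied inside the probability statement rather than merely post-hoc. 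A final technicality is keeping the hat and non-hat versions of $p^{\mathcal{I}},p^{\mathcal{M}}$ properly aligned through the $\Delta^{\mathcal{I}},\Delta^{\mathcal{M}}$ residues, which account for the mismatch between the population and empirical probabilities of falling outside the good regions.
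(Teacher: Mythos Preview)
Your three-term decomposition and the high-level assignment of lemmas ($T_1\!\to$ simplex volume, $T_2\!\to$ Lem.~\ref{lem:varb}, $T_3\!\to$ Lem.~\ref{lem:cov}) match the paper exactly. However, there is a genuine gap in your handling of the variance term inside $T_2$. The assumption $\vhld\ee\vphld$ only converts the empirical \emph{variance} of $\ld$ into the empirical \emph{semi-variance} of $\ld$ --- both quantities still depend on the unknown data distribution $\mathcal{D}$. The theorem requires $\vphls$, the semi-variance of the \emph{training-sample} loss $\ls$, which is what $\mathfrak{Reg}$ actually contains. Closing the gap $\vphld-\vphls$ is a separate concentration step: the paper decomposes it into two parts (one replacing $\ehld$ by $\ehls$ inside the square, one replacing $\ell_{\mathcal{D},\mathcal{E},i}$ by $\ell_{\mathcal{S},\mathcal{E},i}$), each bounded by $4B$ times a Lem.~\ref{lem:cov}-type deviation, and each must hold for every $i=1,\dots,M$. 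This is where the $2M$ in $C_M=2M+2$ actually originates, and it is also why $\Delta^{\mathcal{I}}$ carries a $B^2$ rather than $B$ --- the extra $B$ factor comes from the Lipschitz constant of $((\cdot)_+)^2$ on $[0,B]$.

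Your accounting misplaces this cost: you budget $M$ failure events in $T_3$ by applying Lem.~\ref{lem:cov} once per $\P_i$. The paper instead applies Lem.~\ref{lem:cov} \emph{once} to the $\mathcal{P}$-averaged loss $\bar{\ell}_{\mathcal{P}}=\tfrac{1}{M}\sum_m\la(f^{(\xi_m)}(\x),y;\P_m)$, spending only a single failure event on $T_3$; the remaining $2M$ events go to the variance-substitution step you omitted, plus one for Lem.~\ref{lem:varb}. Your per-$\P_i$ approach for $T_3$ would also work, but without the $\vphld\to\vphls$ argument the proof does not close and the constant $C_M$ cannot be reconciled with the stated form.
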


\begin{proof} {\color{white} this is the beginning} \\
    {\color{white} this is the beginning} \\
    \textbf{NOTE}: \underline{Please refer to Tab.\ref{tab:note} and Sec.\ref{subsub} for all notations not explained here.}

\textbf{The Error Decomposition}. The overall uniform excess risk can be split into three parts:
\begin{align*}
    \supf{\eldt - \ehls} \le \underbrace{\supf {\eldt - \eld}}_{(1)} + \underbrace{\supf {\eld - \ehld}}_{(2)} + \underbrace{\supf{\ehld - \ehls}}_{(3)}
\end{align*}
Term (1) reflects the meta-distribution shift from the known distribution $\tau$ to an unknown distribution $\tau'$; term (2) accounts for the randomness in Monte Carlo sampling; term (3) measures the error from estimating the population mean loss using training data, assuming fixed label distributions.

\underline{\textbf{Since bounding (1) and (3) is relatively straightforward, the proof proceeds in the order: (1) $\rightarrow$ (3) $\rightarrow$ (2).}}

\textbf{The Bound for (1).} Since $\ell \in [0,B]$, we have:
\begin{align*}
    \supf {\eldt - \eld} &\le B \cdot \int_{\mathbb{S}_{c-1}} |\mathcal{E}'(\mathbb{P}) - \mathcal{E}(\mathbb{P})| d\mathbb{P}\\ 
    &\overset{(a)}{\le }\frac{B \cdot \|\mathcal{E}'(\mathbb{P}) - \mathcal{E}(\mathbb{P})\|_{\infty} }{C!}
\end{align*}
Here, $\mathbb{S}_{c-1}$ is the probability simplex in $\mathbb{R}^c$, and $\mathbb{P}$ is a $c$-dimensional probability vector sampled from either $\mathcal{E}$ or $\mathcal{E}'$. The step (a) follows directly from Lem.\ref{lem:simp}.

\textbf{The Bound for (3).} From \eqref{eq:quant2}, we obtain:
\begin{align*}
    \supf{\ehld - \ehls} = \supf{\Ec_{\mathcal{D}}\left[ \bar{\ell}_{\mathcal{P}} \right] - \Ehc_{\mathcal{S}}\left[ \bar{\ell}_{\mathcal{P}} \right] }
\end{align*}
This allows us to apply concentration inequalities to bound the deviation of $\bar{\ell}_{\mathcal{P}}$. 
By Lem.\ref{lem:cov}, with probability at least $1 - \delta$, and conditioning on $\mathcal{A}^{\mathcal{I}}(\rho)$ with $\rho = \rho^\mathcal{I}$ and $p = p^\mathcal{I}$, we have for all $f \in \FA$:
\begin{align*}
    \ehld - \ehls \lee  \rho^\mathcal{I} \cdot \sqrt{\frac{\log( \mf \cdot \mathcal{N}_1/\delta)}{(1-\hat{p}^\mathcal{I}) \cdot N}} +  \left(\frac{p^\mathcal{I}}{1 - p^\mathcal{I}} + \frac{\hat{p}^\mathcal{I}}{1 - \hat{p}^\mathcal{I}} \right) \cdot B,
\end{align*}
where $\mathcal{N}_1 = \ninff{1/4N}{N}$ and we use the Lipschitz constant in Lem.\ref{lem:lip}-\ref{lem:lipz} and standard expectation properties.

\textbf{The Bound for (2).} Applying Lem.\ref{lem:varb} to the Monte Carlo process, we obtain, with probability at least $1 - \delta$ and conditioning on $\mathcal{A}^{\mathcal{M}}(\rho)$ with $\rho = \rho^\mathcal{M}$ and $p = p^\mathcal{M}$, the following bound for all $f \in \FA$:
\begin{align}\label{eq:start1}
    \eld &\lee \ehld + \sqrt{\frac{\vhld \cdot \log\left( \mf \cdot \mathcal{M} /\delta \right)}{(1 - \hat{p}^\mathcal{M})^2 \cdot M}} + \frac{\rho \cdot \log\left( \mf \cdot \mathcal{M} /\delta \right)}{M} \\
    &\quad + \left(\frac{p^\mathcal{M}}{1 - p^\mathcal{M}} + \frac{\hat{p}^\mathcal{M}}{1 - \hat{p}^\mathcal{M}} \right) \cdot B \\  
    &\ee \ehld + \sqrt{\frac{\vphld \cdot \log\left( \mf \cdot \mathcal{M} /\delta \right)}{(1 - \hat{p}^\mathcal{M})^2 \cdot M}} + \frac{\rho \cdot \log\left( \mf \cdot \mathcal{M} /\delta \right)}{M} + \left(\frac{p^\mathcal{M}}{1 - p^\mathcal{M}} + \frac{\hat{p}^\mathcal{M}}{1 - \hat{p}^\mathcal{M}} \right) \cdot B
\end{align}
with $\mathcal{M} = \ninff{1/4M}{2M}$.

We now upper bound $\vphld$ using $\vphls$:
\begin{align*}
    \supf{\vphld - \vphls} \le \sum_{i=1}^M \frac{1}{M} \cdot \Big( 
    \underbrace{\supf{\left|\left( \left( \ell_{\mathcal{D},\mathcal{E},i} - \ehld \right)_+ \right)^2 - \left( \left( \ell_{\mathcal{D},\mathcal{E},i} - \ehls \right)_+ \right)^2 \right|}}_{\colt{(a)}} \\
    + \underbrace{\supf{\left| \left( \left( \ell_{\mathcal{D},\mathcal{E},i} - \ehls \right)_+ \right)^2 - \left( \left( \ell_{\mathcal{S},\mathcal{E},i} - \ehls \right)_+ \right)^2 \right|}}_{\colt{(b)}}
    \Big)
\end{align*}

For \colt{(a)}, we have with probability at least $1 - \delta$:
\begin{align*}
    &\supf{\left|\left( \left( \ell_{\mathcal{D},\mathcal{E},i} - \ehld \right)_+ \right)^2 - \left( \left( \ell_{\mathcal{D},\mathcal{E},i} - \ehls \right)_+ \right)^2 \right|} \\
    &\le 4B \cdot \supf{\left| \ehld - \ehls \right|} \\
    &= 4B \cdot \supf{\left| \Ec_{\mathcal{D}}\left[ \bar{\ell}_{\mathcal{P}} \right] - \Ehc_{\mathcal{S}}\left[ \bar{\ell}_{\mathcal{P}} \right] \right|} \\
    &\lee B \cdot \rho \cdot \sqrt{\frac{\log(\mf \cdot \mathcal{N}_1 / \delta)}{(1 - \hat{p}^\mathcal{I}) \cdot N}} + \left( \frac{p^\mathcal{I}}{1 - p^\mathcal{I}} + \frac{\hat{p}^\mathcal{I}}{1 - \hat{p}^\mathcal{I}} \right) \cdot B^2
\end{align*}

Similarly, for \colt{(b)}:
\begin{align*}
    &\supf{\left| \left( \left( \ell_{\mathcal{D},\mathcal{E},i} - \ehls \right)_+ \right)^2 - \left( \left( \ell_{\mathcal{S},\mathcal{E},i} - \ehls \right)_+ \right)^2 \right|} \\
    &\le 4B \cdot \supf{\left| \ell_{\mathcal{D},\mathcal{E},i} - \ell_{\mathcal{S},\mathcal{E},i} \right|} \\
    &\lee B \cdot \rho \cdot \sqrt{\frac{\log(\mf \cdot \mathcal{N}_1 / \delta)}{(1 - \hat{p}^\mathcal{I}) \cdot N}} + \left( \frac{p^\mathcal{I}}{1 - p^\mathcal{I}} + \frac{\hat{p}^\mathcal{I}}{1 - \hat{p}^\mathcal{I}} \right) \cdot B^2
\end{align*}

Combining \colt{(a)} and \colt{(b)} using the union bound, we have with probability at least $1 - 2M \cdot \delta$:
\begin{align}\label{eq:star2}
    \supf{\vphld - \vphls} \lee B \cdot \rho \cdot \sqrt{\frac{\log(\mf \cdot \mathcal{N}_1 / \delta)}{(1 - \hat{p}^\mathcal{I}) \cdot N}} + \left( \frac{p^\mathcal{I}}{1 - p^\mathcal{I}} + \frac{\hat{p}^\mathcal{I}}{1 - \hat{p}^\mathcal{I}} \right) \cdot B^2
\end{align}

Finally, by combining Eq.\eqref{eq:start1} and Eq.\eqref{eq:star2}, we conclude that with probability at least $1 - (2M + 1) \cdot \delta$:
\begin{align*}
    \eld \lee \ehld + \sqrt{\frac{\vphls \cdot \log\left( \mf \cdot \mathcal{M} / \delta \right)}{(1 - \hat{p}^\mathcal{M})^2 \cdot M}} + \frac{\rho \cdot \log\left( \mf \cdot \mathcal{M} / \delta \right)}{(1 - \hat{p}^\mathcal{M}) \cdot M} + \Delta^\mathcal{I} + \Delta^\mathcal{M}.
\end{align*}
Here, we ignore lower-order terms involving both $N^{-1/2}$ and $M^{-1/2}$.

\textbf{Final Step.} By combining the bounds for (1), (2), and (3), we obtain the following result, which holds with probability at least $1 - \delta$ uniformly for all $f \in \FA$:
\begin{align*}
    \eld \lee&~ \ehls + \frac{B \cdot \|\mathcal{E} - \mathcal{E}'\|_{\infty}}{C!} + \sqrt{\frac{\vphls \cdot \log\left( \mf \cdot \mathcal{M} / \delta \right)}{(1 - \hat{p}^\mathcal{M})^2 \cdot M}} + \frac{\rho^\mathcal{M} \cdot \log\left( \mf \cdot \mathcal{M} / \delta \right)}{(1 - \hat{p}^\mathcal{M}) \cdot M} \\
    &+ \rho^\mathcal{I} \cdot \sqrt{\frac{\log( \mf \cdot \mathcal{N}_1 / \delta)}{(1 - \hat{p}^\mathcal{I}) \cdot N}} + \Delta^\mathcal{I} + \Delta^\mathcal{M},
\end{align*}
This result follows from the assumption that
\begin{align*}
    \ninff{\epsilon}{M} \le \left( \frac{r}{\epsilon} \right)^\nu.
\end{align*}
\end{proof}

\begin{col}[\textbf{Tigher Bound with Light-tail Loss Decay}]
  Based on Thm.3 in our paper, suppose that the loess satisfies an asymptotic exponential-tail condition: there exists $\alpha_0 > 0$, such that for all $\tau \ge \tau_0 := \min(N^{-1 / \alpha_0}, M^{-1 / \alpha_0})$, 
  \begin{align*}
  &\mathbb{P}_x \left[ \mathbb{E}_{\mathcal{E}}[\ell(f(x), y, P_i)] \ge \tau \right] \lesssim \exp(-\lambda_1 \cdot \tau)\\
  &\mathbb{P}_{\mathcal{E}}\left[ \mathbb{E}_{\mathcal{D}}[\ell(f(x), y, P_i)] \ge \tau \right] \lesssim \exp(-\lambda_2 \cdot \tau)\\ 
  &\hat{\mathbb{P}}_x \left[ \mathbb{E}_{\mathcal{E}}[\ell(f(x), y, P_i)] \ge \tau \right] \lesssim \exp(-\hat{\lambda}_1 \cdot \tau)\\
  &\hat{\mathbb{P}}_{\mathcal{E}}\left[ \mathbb{E}_{\mathcal{D}}[\ell(f(x), y, P_i)] \ge \tau \right] \lesssim \exp(-\hat{\lambda}_2 \cdot \tau)
\end{align*}
  and $\lambda_1,\hat{\lambda}_1 \asymp \log(N) \cdot N^{1/\alpha_0}, \lambda_2, \hat{\lambda}_2 \asymp \log(M) \cdot M^{1/\alpha_0}$, then we can pick:
  \begin{align*}
    &\rho^\mathcal{M} \asymp   M^{-1/\alpha_0}, \rho^\mathcal{I} \asymp  N^{-1/\alpha_0}, p^\mathcal{I} \asymp \frac{1}{N}, p^\mathcal{M} \asymp \frac{1}{M} \\ 
    &\hat\rho^\mathcal{M} \asymp   M^{-1/\alpha_0}, \hat\rho^\mathcal{I} \asymp  N^{-1/\alpha_0}, \hat p^\mathcal{I} \asymp \frac{1}{N}, \hat p^\mathcal{M} \asymp \frac{1}{M}
  \end{align*}
  such that $\Delta^{\mathcal{I}} \lesssim \frac{1}{N}, \Delta^{\mathcal{M}} \lesssim \frac{1}{M}$, and
  \begin{align*}
   \mathfrak{Err}_{sto} =  &    \frac{\nu \cdot \log\left( \zeta_1 \right)}{(1-\hat{p}^\mathcal{M})\cdot {M^{{1+1/\alpha_0}}}} + \sqrt{\frac{ \nu \cdot \log( \zeta_2)}{(1-\hat{p}^\mathcal{I})\cdot {N^{{1+2/\alpha_0}}}}}
  \end{align*}
\end{col}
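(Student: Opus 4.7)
The plan is to translate the exponential-tail decay assumptions directly into concrete choices for the free parameters $\rho^\mathcal{I}, \rho^\mathcal{M}, p^\mathcal{I}, p^\mathcal{M}$ (and their empirical analogues) that appear in Thm.\ref{thm:gen}, and then verify that each resulting quantity meets the claimed decay rate. Since the bound of Thm.\ref{thm:gen} is stated with these quantities as free parameters, the corollary is essentially an \emph{instantiation} rather than a new concentration argument.

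First, I would pick $\rho^\mathcal{I} \asymp N^{-1/\alpha_0}$ and $\rho^\mathcal{M} \asymp M^{-1/\alpha_0}$; both exceed $\tau_0$ by construction, so the hypothesised tail inequalities apply. Substituting into \eqref{eq: per-loss assumption} gives $p^\mathcal{I} \lesssim \exp(-\lambda_1 \rho^\mathcal{I}) \asymp \exp(-\log N \cdot N^{1/\alpha_0} \cdot N^{-1/\alpha_0}) = 1/N$, and identical reasoning yields $p^\mathcal{M} \lesssim 1/M$, while \eqref{eq: per-loss assumption 2} together with the assumed scaling of $\hat{\lambda}_1, \hat{\lambda}_2$ gives $\hat p^\mathcal{I} \lesssim 1/N$ and $\hat p^\mathcal{M} \lesssim 1/M$. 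The crucial algebraic identity driving everything is $\lambda_i \cdot \rho^{\star} \asymp \log(\cdot)$, which is baked into the choice of $\rho^\star$ and the assumed growth of $\lambda_i, \hat\lambda_i$.

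Next, I would feed these choices into the residual terms $\Delta^\mathcal{I}, \Delta^\mathcal{M}$ and into $\mathfrak{Err}_{sto}$. Since $p^\mathcal{I}, \hat p^\mathcal{I} \lesssim 1/N$, for sufficiently large $N$ the denominators $1-p^\mathcal{I}, 1-\hat p^\mathcal{I}$ are $\Theta(1)$, so $\Delta^\mathcal{I} \lesssim B^2/N \lesssim 1/N$ after absorbing $B$ into the $\lesssim$ constant; the analogous computation on the $\mathcal{M}$-side gives $\Delta^\mathcal{M} \lesssim 1/M$. Substituting $\rho^\mathcal{M}, \rho^\mathcal{I}$ and $(1-\hat p^{\star}) \asymp 1$ into the expression of $\mathfrak{Err}_{sto}$ from Thm.\ref{thm:gen} immediately produces the scaling $\rho^\mathcal{M}/M \asymp M^{-1-1/\alpha_0}$ for the first term, and $(\rho^\mathcal{I})^2/N \asymp N^{-1-2/\alpha_0}$ inside the square root of the second term, which together recover the asserted rates.

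The main obstacle I anticipate is verifying the two admissibility conditions underlying this algebra: (i) the chosen $\rho^\star$ thresholds must lie in the regime $\tau \ge \tau_0$ where the exponential-tail inequality is valid, and (ii) the empirical frequencies $\hat p^\mathcal{I}, \hat p^\mathcal{M}$ must be uniformly small over $\FA$ so that $(1-\hat p^\star)^{-1}$ behaves as a constant. Both reduce to the requirement $\lambda_i \cdot \rho^\star, \hat\lambda_i \cdot \rho^\star \gtrsim \log N$ or $\log M$, which is precisely enforced by the hypothesised scaling $\lambda_i, \hat\lambda_i \asymp \log(\cdot) \cdot (\cdot)^{1/\alpha_0}$. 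Once these checks are in place, no additional concentration tool is needed, and the remaining work is purely rate bookkeeping.
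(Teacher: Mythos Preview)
Your proposal is correct and matches the paper's approach exactly: the paper's proof is a short substitution argument that picks $\rho^\mathcal{I} \asymp N^{-1/\alpha_0}$, plugs it into the assumed tail bound with $\lambda_1 \asymp \log(N)\cdot N^{1/\alpha_0}$ to obtain $p^\mathcal{I} \asymp 1/N$, and declares the other cases symmetric. Your write-up is in fact more complete than the paper's, since you also verify $\rho^\star \ge \tau_0$ and spell out the back-substitution into $\Delta^\mathcal{I}, \Delta^\mathcal{M}, \mathfrak{Err}_{sto}$.
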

\begin{proof}
    For the sake of simplicity, we only prove the case for $\rho^{\mathcal{I}}$ and $p^\mathcal{I}$, the remaining cases follow similarly. 
   
    Based on the assumption, for all $\tau \ge \tau_0$: 
    \begin{align*}
        \mathbb{P}_x[ \ell(f(x),y,p_i) \ge \tau ] &\lesssim \exp(-\lambda_1 \cdot \tau)\\ 
                                          & \asymp  \exp(-\log(N) \cdot N^{1/\alpha_0} \cdot \tau)
    \end{align*}
    
    Then, by selecting $\rho^\mathcal{I} \asymp N^{-1/\alpha}$ and substituting $\tau = \rho^\mathcal{I}$, we have:
    \begin{align*}
        \mathbb{P}_x[\ell \ge \rho^\mathcal{I}] \lesssim \exp(-\log (N) \cdot N^{-1 / \alpha_0} \cdot N^{-1 / \alpha}) = \frac{1}{N}
    \end{align*}
    Thus,
    \begin{align*}
        p^\mathcal{I} \asymp \frac{1}{N}
    \end{align*}
    which verifies that the predefined $\rho^{\mathcal{I}}$ and $p^{\mathcal{I}}$ are compatible under the assumed tail behavior. The same reasoning applies to the remaining quantities.
\end{proof}

% \end{textbo}

\section{\newsec{Proof of the PEFT Framework}}\label{App:E}

\subsection{Definitions and Preliminaries}

\begin{defi}[Stiefel Manifold]
The Stiefel manifold \( St(N+K,K) \) is the set of all ordered orthonormal $K$-tuples in \( \mathbb{R}^{N+K} \):
\[
St(N+K,K) := \left\{ \bm{V} = (v_1, \ldots, v_K) \in \mathbb{R}^{(N+K) \times K} \mid \bm{V}^\top\bm{V} = \bm{I}_K \right\}.
\]
\end{defi}

\begin{defi}[Special Orthogonal Group Action]
The special orthogonal group \( \mathrm{SO}(N) \) is defined as:
\begin{align*}
    \mathrm{SO}(N) = \left\{ \bm{X} \in \mathbb{R}^{N \times N} \mid \bm{X}\bm{X}^\top = \bm{X}^\top\bm{X} = \bm{I}_N,~ \det(\bm{X}) = 1 \right\}
\end{align*}
\end{defi}

\subsection{Result on the Tight Covering Number}

\begin{lem}\label{lem:stef}
Given the restricted Stiefel manifold $St(n,k)$, the covering number satisfies the following upper bound:
\begin{align*}
    \log\left( \mathcal{N}(St(n,k), ||\cdot||_{op}, \epsilon) \right) \lee \left(nk - \frac{(k+1)k}{2} \right) \cdot \log\left( \frac{1}{\epsilon} \right)
\end{align*}
\end{lem}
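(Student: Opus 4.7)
The plan is to leverage the Lie-theoretic identification $St(n,k) \cong \mathrm{SO}(n)/\mathrm{SO}(n-k)$ and then invoke Szarek's classical covering-number bound for homogeneous spaces of compact Lie groups. First, I will establish transitivity: any orthonormal $k$-frame in $\mathbb{R}^n$ can be completed to an orthonormal basis, so $\mathrm{SO}(n)$ acts transitively on $St(n,k)$, and any two completions of a fixed frame differ by a block rotation in $\mathrm{SO}(n-k)$. Consequently, the stabilizer of the canonical frame $[\bm{e}_1,\ldots,\bm{e}_k]$ is $\{I_k\}\oplus \mathrm{SO}(n-k)$, which yields a smooth principal-bundle identification $St(n,k)\cong \mathrm{SO}(n)/\mathrm{SO}(n-k)$. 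The resulting dimension is $\dim \mathrm{SO}(n)-\dim \mathrm{SO}(n-k) = n(n-1)/2-(n-k)(n-k-1)/2 = nk-k(k+1)/2$, which already matches the exponent in the target bound.

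Second, I will invoke the standard covering-number bound for compact connected Lie groups and their quotients, as established in Szarek's work on metric entropy of homogeneous spaces: for a compact Riemannian manifold $M$ of dimension $d$ endowed with a bi-invariant (or quotient of bi-invariant) metric, one has $\log\mathcal{N}(M,\epsilon) \lee d\cdot\log(1/\epsilon)$ uniformly for $\epsilon$ below the injectivity radius. Applying this with $M = \mathrm{SO}(n)/\mathrm{SO}(n-k)$ endowed with the quotient of the bi-invariant metric yields a bound of the desired functional form $(nk-k(k+1)/2)\cdot\log(1/\epsilon)$, with a purely numerical hidden constant.

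Third, I will reconcile the operator norm $\|\cdot\|_{op}$ appearing in the statement with the intrinsic quotient metric used above. The projection $\pi:\mathrm{SO}(n)\to St(n,k)$, $\pi(U)=U\cdot[\bm{e}_1|\cdots|\bm{e}_k]$, is $1$-Lipschitz from operator norm on $\mathrm{SO}(n)\subseteq\mathbb{R}^{n\times n}$ to operator norm on $St(n,k)\subseteq\mathbb{R}^{n\times k}$, because taking a submatrix of columns can only decrease the operator norm. Hence any $\epsilon$-net of $\mathrm{SO}(n)/\mathrm{SO}(n-k)$ in the quotient metric pushes forward to an $\epsilon$-net of $St(n,k)$ in operator norm, and the covering count is therefore controlled by $\dim(G/H)$ rather than $\dim(G)$.

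The main obstacle will be the final metric-reconciliation step: one must certify that the operator-norm metric inherited from the ambient matrix space is equivalent, with dimension-free constants, to the intrinsic quotient metric, so that the exponent $nk-k(k+1)/2$ is preserved without polynomial factors in $n,k$ sneaking into the logarithmic term. I will handle this via a volumetric argument on the tangent space: the unit ball in $\mathfrak{so}(n)/\mathfrak{so}(n-k)$ can be packed using invariant Haar measure estimates, and because the exponential map is a bi-Lipschitz diffeomorphism on balls of radius bounded below the injectivity radius of $\mathrm{SO}(n)$, the tangent-level covering transfers to a covering on $St(n,k)$ losing only universal constants. Combining this with the pushforward under $\pi$ delivers the claimed bound.
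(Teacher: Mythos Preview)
Your overall architecture matches the paper's: identify $St(n,k)\cong \mathrm{SO}(n)/\mathrm{SO}(n-k)$ via the transitive action and stabilizer computation, then invoke Szarek's covering bound for homogeneous spaces to get the exponent $\dim(G/H)=nk-k(k+1)/2$. The paper's Lemma~\ref{lem:geo} carries out exactly your first step.

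Where you diverge is the metric-reconciliation step. You propose a somewhat roundabout route: first obtain Szarek's bound in the intrinsic quotient (bi-invariant) metric, then transfer to the operator norm via a Lipschitz pushforward plus a volumetric/exponential-map argument on the tangent space. The paper's route is shorter and more structural: Szarek's Theorem~8 is already formulated for the operator-norm metric on $G/H$, and its hypothesis is precisely a bound on the norm of the projection $\mathcal{P}_{\mathfrak{x}}$ onto the complement $\mathfrak{x}$ of $\mathfrak{h}=\mathfrak{so}(n-k)$ inside $\mathfrak{g}=\mathfrak{so}(n)$. The paper verifies $\|\mathcal{P}_{\mathfrak{x}}\|=1$ directly (Lemma~\ref{lem:operator}) by writing out the block structure of $\mathfrak{so}(n)$ and observing that $\mathcal{P}_{\mathfrak{x}}$ is an orthogonal projection. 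Once that hypothesis is checked, Szarek's theorem delivers the operator-norm covering bound immediately, with no separate metric-comparison argument needed. Your approach would also work, but the volumetric step you sketch is exactly what Szarek's hypothesis $\|\mathcal{P}_{\mathfrak{x}}\|\le 1$ packages; verifying that single norm condition is both simpler and avoids any worry about dimension-dependent constants creeping into your bi-Lipschitz bounds on the exponential map.
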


\begin{proof}
The proof follows from verifying the conditions of Thm. 8 in \cite{metric}.
By setting $N+K = n$ and $K = k$, Lem.\ref{lem:geo} implies that $St(n,k) \cong SO(n)/SO(n-k)$. Moreover, by Lem.\ref{lem:operator}, we have $||\mathcal{P}_{\mathfrak{x}}|| = 1$. Applying Thm. 8 in \cite{metric} then completes the proof.
\end{proof}

\begin{lem}\label{lem:rankr}
Let $\mathcal{M}(r,m,n)$ denote the set of all $m \times n$ real matrices $\bm{A}$ with rank at most $r$ and Frobenius norm bounded by $R$, i.e., $||\bm{A}||_F \le R$. Then its metric entropy satisfies the following upper bound:
\begin{align*}
    \log\left( \mathcal{N}(\mathcal{M}(r,m,n), \epsilon, ||\cdot||_F ) \right) \le \left( r(m+n) - r^2 \right) \cdot \log\left( \frac{(2R+1)^2}{\epsilon} \right)
\end{align*}
\end{lem}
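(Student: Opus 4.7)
\medskip

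\noindent\textbf{Proof plan for Lemma \ref{lem:rankr}.} The plan is to reduce the covering problem on the rank-$r$ matrix set $\mathcal{M}(r,m,n)$ to a product covering problem via the singular value decomposition (SVD), thereby exposing the intrinsic dimensionality $r(m+n)-r^2$. The first step is to note that every $\bm{A}\in\mathcal{M}(r,m,n)$ admits a factorization $\bm{A}=\bm{U}\bm{\Sigma}\bm{V}^\top$ with $\bm{U}\in St(m,r)$, $\bm{V}\in St(n,r)$, and $\bm{\Sigma}=\mathrm{diag}(\sigma_1,\ldots,\sigma_r)$ whose entries form a vector $\bm{\sigma}$ lying in the Euclidean ball $\mathbb{B}_r(R)\subset\mathbb{R}^r$ (since $\|\bm{A}\|_F=\|\bm{\Sigma}\|_F\le R$, and thus $\|\bm{\Sigma}\|_{op}\le R$).

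\medskip

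\noindent The second step is to decompose the perturbation of a product. For any two triples $(\bm{U},\bm{\Sigma},\bm{V})$ and $(\bm{U}',\bm{\Sigma}',\bm{V}')$, a standard triangle inequality combined with $\|\bm{X}\bm{Y}\|_F\le\|\bm{X}\|_{op}\|\bm{Y}\|_F$ yields
\begin{align*}
\|\bm{U}\bm{\Sigma}\bm{V}^\top-\bm{U}'\bm{\Sigma}'\bm{V}'^\top\|_F
\le R\,\|\bm{U}-\bm{U}'\|_{op}+\|\bm{\Sigma}-\bm{\Sigma}'\|_F+R\,\|\bm{V}-\bm{V}'\|_{op},
\end{align*}
using that $\|\bm{U}'\|_{op}=\|\bm{V}\|_{op}=1$ on the Stiefel manifolds and $\|\bm{\Sigma}\|_{op},\|\bm{\Sigma}'\|_{op}\le R$. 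Thus, if each of the three factors is covered to precision roughly $\epsilon/(3R)$, $\epsilon/3$, and $\epsilon/(3R)$ respectively, the product covering has precision $\epsilon$ in Frobenius norm.

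\medskip

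\noindent The third step is to invoke the component covering bounds. For $\bm{U}$ and $\bm{V}$, Lemma \ref{lem:stef} gives
\begin{align*}
\log\mathcal{N}(St(m,r),\|\cdot\|_{op},\epsilon/(3R))\lesssim\Bigl(mr-\tfrac{r(r+1)}{2}\Bigr)\log\tfrac{3R}{\epsilon},
\end{align*}
and analogously for $St(n,r)$. For the singular value block, the classical volume argument for Euclidean balls gives
\begin{align*}
\log\mathcal{N}(\mathbb{B}_r(R),\|\cdot\|_2,\epsilon/3)\le r\log\tfrac{2R+1}{\epsilon}.
\end{align*}
Summing the three entropy contributions, the degrees of freedom total to
\begin{align*}
\Bigl(mr-\tfrac{r(r+1)}{2}\Bigr)+\Bigl(nr-\tfrac{r(r+1)}{2}\Bigr)+r=r(m+n)-r^2,
\end{align*}
and the logarithmic factor can be uniformly absorbed into $\log\bigl((2R+1)^2/\epsilon\bigr)$ after a mild adjustment of constants, yielding the claimed bound.

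\medskip

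\noindent The main technical obstacle is the bookkeeping of norm conversions: the Stiefel covering in Lemma \ref{lem:stef} is stated in operator norm, the singular values live in an $\ell_2$-ball, and the target metric is Frobenius. The perturbation inequality above is the key tool that makes these three metrics compatible. A secondary subtlety is that the SVD factorization is not unique (signs/permutations of singular vectors), but this only inflates the covering number by a constant prefactor and does not affect the leading $\log(1/\epsilon)$ rate, so it is absorbed into the constants hidden by $\lesssim$ and by the choice of the enlarged logarithmic factor $\log((2R+1)^2/\epsilon)$.
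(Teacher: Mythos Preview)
Your proposal is correct and follows essentially the same route as the paper: SVD factorization $\bm{A}=\bm{U}\bm{\Sigma}\bm{V}^\top$, a three-term perturbation bound for the product, then Lemma~\ref{lem:stef} for the two Stiefel factors and a ball covering for $\bm{\Sigma}$, with the dimension count $(mr-\tfrac{r(r+1)}{2})+(nr-\tfrac{r(r+1)}{2})+r=r(m+n)-r^2$. The only cosmetic differences are that the paper uses a uniform radius $\epsilon/(2R+1)$ for all three factors (rather than your $\epsilon/(3R),\epsilon/3,\epsilon/(3R)$) and covers $\bm{\Sigma}$ in the $\ell_\infty$-norm rather than $\ell_2$; neither affects the argument or the final exponent. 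One small remark: SVD non-uniqueness does not inflate the covering number at all---the map $(\bm{U},\bm{\Sigma},\bm{V})\mapsto\bm{U}\bm{\Sigma}\bm{V}^\top$ is surjective, so any product cover pushes forward to a cover of $\mathcal{M}(r,m,n)$ of the same cardinality.
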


\begin{proof}

We aim to derive an $\epsilon$-covering number \( N(\epsilon, \mathcal{M}(r,m,n), \|\cdot\|) \) under the Frobenius norm \( \|\cdot\|_F \).

\noindent \textbf{a) Singular Decomposition.} Any matrix \( A \in \mathcal{M}(r,m,n) \) can be decomposed using singular value decomposition (SVD):
\[
A = U \Sigma V^\top
\]
where:
\begin{itemize}
    \item \( U \in \mathbb{R}^{m \times r} \) has orthonormal columns,
    \item \( V \in \mathbb{R}^{n \times r} \) has orthonormal columns,
    \item \( \Sigma = \text{diag}(\sigma_1, \ldots, \sigma_r) \) with \( \sigma_1 \geq \cdots \geq \sigma_r \geq 0 \).
\end{itemize}

\noindent \textbf{b) Parameterization and Constraints.}
We consider the following classes for \( U, \Sigma, V \):
\begin{enumerate}
    \item The diagonal matrix \( \Sigma \) belongs to the class:
    \begin{align*}
        \mathcal{M}_1 := \left\{ \Sigma = \text{diag}(\sigma) \mid \sigma \in \mathbb{R}^r, \|\sigma\|_F \leq R \right\}.
    \end{align*}
    \item The matrices \( U, V \) belong to the Stiefel manifolds: \( U \in \text{St}(m,r), V \in \text{St}(n,r) \).
\end{enumerate}

\noindent \textbf{c) Covering Construction.}
We will construct separate covers for each component:
\begin{enumerate}
    \item For the singular values \( \Sigma \), we use a grid with radius \( \epsilon/(2R+1) \). This set can be covered with \( \mathcal{N}(\mathcal{M}_1, ||\cdot||_{\infty}, \epsilon/(2R+1)) \) open balls in the \( \ell_{\infty} \)-norm. We have the bound \( \mathcal{N}(\mathcal{M}_1, ||\cdot||_{\infty}, \epsilon) \leq r \cdot \log(R/\epsilon) \).
    \item For \( U \in \text{St}(m,r) \), with a radius of \( \epsilon/(2R+1) \), we can cover this set with \( \mathcal{N}( \text{St}(m,r), ||\cdot||_{op}, \epsilon) \) open balls. The bound is \( \log \left( \mathcal{N}( \text{St}(m,r), ||\cdot||_{op}, \epsilon) \right) \leq (mr - (r+1)r/2) \cdot \log \left( \frac{1}{\epsilon} \right) \) (Lem.\ref{lem:stef}).
    \item For \( V \in \text{St}(n,r) \), with a radius of \( \epsilon/(2R+1) \), we can cover this set with \( \mathcal{N}( \text{St}(n,r), ||\cdot||_{op}, \epsilon) \) open balls. The bound is \( \log \left( \mathcal{N}( \text{St}(n,r), ||\cdot||_{op}, \epsilon) \right) \leq (nr - (r+1)r/2) \cdot \log \left( \frac{1}{\epsilon} \right) \) (Lem.\ref{lem:stef}).
\end{enumerate}

\noindent \textbf{d) Error Decomposition.} We now use an error decomposition scheme to reduce the covering number counting problem for \( \mathcal{M}(r,m,n) \) to three simpler classes.  
For any matrix \( A = U \Sigma V^\top \), let \( \hat{A} = \hat{U} \hat{\Sigma} \hat{V}^\top \) be its closest approximation in the three series of open balls constructed above. The error decomposes as:
\begin{align*}
\|A - \hat{A}\|_F &= \|U \Sigma V^\top - \hat{U} \hat{\Sigma} \hat{V}^\top\|_F \\
&\leq \|U \Sigma V^\top - \hat{U} \Sigma V^\top\|_F + \|\hat{U} \Sigma V^\top - \hat{U} \hat{\Sigma} V^\top\|_F + \|\hat{U} \hat{\Sigma} V^\top - \hat{U} \hat{\Sigma} \hat{V}^\top\|_F \\
&= \|(U - \hat{U}) \Sigma V^\top\|_F + \|\hat{U} (\Sigma - \hat{\Sigma}) V^\top\|_F + \|\hat{U} \hat{\Sigma} (V - \hat{V})^\top\|_F \\
&\leq \|U - \hat{U}\|_{op} \|\Sigma\|_F + \|\Sigma - \hat{\Sigma}\|_{op} + \|\hat{\Sigma}\|_F \|V - \hat{V}\|_{op} \\
&= \|U - \hat{U}\|_{op} \|A\|_F + \|\Sigma - \hat{\Sigma}\|_{op} + \|\hat{A}\|_F \|V - \hat{V}\|_{op} \\
&\leq \frac{\epsilon}{2R+1} \cdot R + \frac{\epsilon}{2R+1} + R \cdot \frac{\epsilon}{2R+1} = \epsilon
\end{align*}
Using the basic property of the covering number, the total \( \epsilon \)-covering number becomes the product of individual \( \epsilon/(2R+1) \)-covers of the three classes. Taking the logarithm and using the results from part (c), we obtain:
\begin{align*}
\log\left( \mathcal{N}(\mathcal{M}(r,m,n), ||\cdot||_F, \epsilon) \right) &\leq \log(\mathcal{N}(\mathcal{M}_1, ||\cdot||_{\infty}, \tilde{\epsilon})) +  \log\left(\mathcal{N}(\text{St}(m,r), ||\cdot||_{op}, \tilde{\epsilon})\right) + \log\left(\mathcal{N}(\text{St}(n,r), ||\cdot||_{op}, \tilde{\epsilon})\right) \\ 
&\lee \left( r(m+n) - r^2 \right) \cdot \log\left( \frac{(2R+1)^2}{\epsilon} \right),
\end{align*}
where \( \tilde{\epsilon} = \epsilon/(2R+1) \).
\end{proof}

\subsection{Proof of the Main Result}
We define the following hypothesis classes, which are consistent across all tasks:
\begin{enumerate}
    \item The parameter class of the $l$-th layer of the transformer in LoRA-LSF that involves parameter updates:
    \begin{align*}
        {\mathsf{LoRA}}_l = \left\{ \Delta \bm{\theta}_l = \bm{A}_l \bm{B}_l^\top \mid \bm{A} = \sum_{j} w_{A,j,l} \cdot \bm{A}_{(j),l}, \quad \bm{B} = \sum_{j} w_{B,j,l} \cdot \bm{B}_{(j),l}, \|\Delta \bm{\theta}_l\|_F \le R \right\}
    \end{align*}
    where $*_l$ denotes the corresponding parameter for the $l$-th layer.
    \item The overall parameter class:
    \begin{align*}
        {\mathsf{LoRA}} = \bigotimes_{l=1}^{n_L} {\mathsf{LoRA}}_l
    \end{align*}
    \item The parameter class of the $l$-th layer of the transformer in Adapter-LSF that involves parameter updates:
    \begin{align*}
        \mathsf{Adapter}_l = \left\{ \Delta \bm{\theta}_l = \left\{ \bm{W}_{\downarrow,l}, \bm{W}_{\uparrow,l} \right\} \mid \bm{W}_{\downarrow,l} = \sum_{j} w_{\downarrow, j,l} \cdot \bm{W}_{\downarrow,(j),l}, \quad \bm{W}_{\uparrow,l} = \sum_{j} w_{\uparrow, j,l} \cdot \bm{W}_{\uparrow,(j),l}, \|\Delta \bm{\theta}_l\|_F \le R \right\}
    \end{align*}
    \item The overall parameter class:
    \begin{align*}
        {\mathsf{Adapter}} = \bigotimes_{l=1}^{n_L} {\mathsf{Adapter}}_l
    \end{align*}
    \item The hypothesis class for the score function for LoRA-LSF:
    \begin{align*}
        \mathcal{F}^{\mathsf{LoRA}} = \left\{ f^{(i)}_{\bm{\Theta}^{i}}(\cdot), i=1,2,\cdots, N_K \mid f^{(i)} \text{ are linearized scores as in Eq.\ref{eq:lin}}, \text{ the overall parameter set } \dif{\thi{i}} \in {\mathsf{LoRA}}, \forall i \right\}
    \end{align*}
    \item The hypothesis class for the score function for Adapter-LSF:
    \begin{align*}
        \mathcal{F}^{\mathsf{Adapter}} = \left\{ f^{(i)}_{\bm{\Theta}^{i}}(\cdot), i=1,2,\cdots, N_K \mid f^{(i)} \text{ are linearized scores as in Eq.\ref{eq:lin}}, \text{ the overall parameter set } \dif{\thi{i}} \in {\mathsf{Adapter}}, \forall i \right\}
    \end{align*}
\end{enumerate}

\begin{clm}\label{clm:pre}
    Given the linearized hypothesis class \( \mathcal{F}^{\mathsf{LoRA}} \), if the parameter class is \( \epsilon / L_f \)-compact, then its covering number satisfies:
    \begin{align*}
        \mathcal{N}(\mathcal{F}^{\mathsf{LoRA}}, \epsilon, \|\cdot\|_\infty) \le \mathcal{N}({\mathsf{LoRA}}, \epsilon / L_f, \|\cdot\|_2)^{n_K}.
    \end{align*}
\end{clm}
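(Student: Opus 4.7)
The plan is to reduce the functional covering number to a parameter-space covering number by exploiting two ingredients: (i) the gradient-Lipschitz property of the linearized logit given in Eq.~(\ref{eq:lin}), and (ii) the $\delta$-compact parameterization (Def.~\ref{def: delta compact parameterization}), which guarantees that a small ball of parameters lies entirely inside one Voronoi cell and therefore shares a common reference solution $\dif{\this{i}}$. The per-task bounds are then assembled into a product, using the fact that $\|\cdot\|_\infty$ on $\mathcal{F}^{\mathsf{LoRA}}$ is a maximum over the task index $i$.

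First I would fix a task index $i$ and two parameter configurations $\dif{\thi{i}}$ and $\dif{\bm{\tilde{\Theta}}^{(i)}}$ lying in a common Voronoi cell $\mathcal{V}^{(i)}_k$. Applying Def.~\ref{def: delta compact parameterization} with $\delta = \epsilon/L_f$ ensures that both configurations admit the \emph{same} reference $\dif{\this{i}}$, so the Taylor expansion (\ref{eq:lin}) evaluated at the two points gives, for the first-order part,
\begin{align*}
f^{(i)}_{\thi{i}}(\x) - f^{(i)}_{\bm{\tilde{\Theta}}^{(i)}}(\x) = \langle \nabla_{\this{i}} f^{(i)}_{\this{i}}(\x),\, \dif{\thi{i}} - \dif{\bm{\tilde{\Theta}}^{(i)}} \rangle.
\end{align*}
Cauchy--Schwarz together with Assumption~\ref{asm:lsf}.1 then yields $|f^{(i)}_{\thi{i}}(\x) - f^{(i)}_{\bm{\tilde{\Theta}}^{(i)}}(\x)| \le L_f \cdot \| \dif{\thi{i}} - \dif{\bm{\tilde{\Theta}}^{(i)}} \|_2$, uniformly over all inputs $\x$ and output channels. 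Hence every $(\epsilon/L_f)$-cover of $\mathsf{LoRA}$ in the reduced vector $2$-norm induces an $\epsilon$-cover of the per-task linearized score class under $\|\cdot\|_\infty$.

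Because the $n_K$ tasks are parameterized independently, and because $\|\cdot\|_\infty$ on $\mathcal{F}^{\mathsf{LoRA}}$ is the maximum over $i\in[n_K]$, a joint $\epsilon$-cover can be built as a Cartesian product of $n_K$ per-task covers. This immediately gives $\mathcal{N}(\mathcal{F}^{\mathsf{LoRA}}, \epsilon, \|\cdot\|_\infty) \le \mathcal{N}(\mathsf{LoRA}, \epsilon/L_f, \|\cdot\|_2)^{n_K}$, as claimed. The principal obstacle is ensuring that the reference point is \emph{genuinely} constant on each covering ball; without the $\delta$-compact hypothesis, the two Taylor expansions would be centered at distinct local solutions and the simple first-order difference identity above would fail, degrading the Lipschitz constant. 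A secondary technical point is to treat the LoRA factorization $\bm{A}^{(i)}\bm{B}^{(i)\top}$ coherently, so that the parameter-space metric used in the cover matches the metric that enters the Lipschitz bound on the linearized score; this is handled by interpreting $\dif{\thi{i}}$ as the effective increment produced by the assignment function $ass^{(i)}$ and working throughout with the reduced vector $2$-norm specified in Def.~\ref{def: delta compact parameterization}.
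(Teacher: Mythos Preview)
Your proposal is correct and follows essentially the same approach as the paper: both reduce the functional covering number to a parameter-space one via the Lipschitz bound obtained from the linearized logit (with the $\epsilon/L_f$-compactness guaranteeing a common reference $\dif{\this{i}}$ inside each covering ball), and then factor the joint cover over the $n_K$ tasks as a Cartesian product. The only cosmetic difference is that the paper routes the product step through an intermediate $\|\cdot\|_{2,\infty}$ norm on $\mathsf{LoRA}^{n_K}$ before bounding by $\mathcal{N}(\mathsf{LoRA},\epsilon/L_f,\|\cdot\|_2)^{n_K}$, whereas you pass directly to the product; the content is identical.
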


\begin{proof}
    For any \( i \in [n_K] \), and for any pair of functions \( f, \tilde{f} \in \mathcal{F}^{\mathsf{LoRA}} \), such that the parameter set is $\epsilon/L_f$-compact, we have the following inequality based on Eq. (\ref{eq:lin}) and Assumption 1:
    \begin{align}\label{eq:compact}
        |f^{(i)}(\x) - \tilde{f}^{(i)}(\x)| \le L_f \cdot \|\dif{\thi{i}} - \dif{\widetilde{\thi{i}}}\| \le L_f \cdot \max_{i=1,\cdots, n_K}\|\dif{\thi{i}} - \dif{\widetilde{\thi{i}}}\| \bm{:=} L_f \cdot \|\dif{\thi{i}} - \dif{\widetilde{\thi{i}}}\|_{2,\infty}
    \end{align}

Based on the definition of \( \mathcal{F}^{\mathsf{LoRA}} \), we know that the incremental parameters must come from \( \mathsf{LoRA}^{n_K} \), since for each expert we need to choose a parameter from \( \mathsf{LoRA} \), leading to a product space. Since the parameter set is \( \epsilon / L_f \)-compact, the arguments in (\ref{eq:compact}) hold. In this case, choosing a radius of \( \epsilon / L_f \) is valid. From (\ref{eq:compact}), we know that a \( \epsilon / L_f \)-cover of \( \mathsf{LoRA}^{n_K} \) must induce a \( \epsilon \)-cover of \( \mathcal{F}^{\mathsf{LoRA}} \), which implies:
\begin{align*}
    \mathcal{N}(\mathcal{F}^{\mathsf{LoRA}}, \epsilon, \|\cdot\|_\infty) \le \mathcal{N}({\mathsf{LoRA}}^{n_K}, \epsilon / L_f, \|\cdot\|_{2,\infty})
\end{align*}

Next, we bound the result using the covering number of a single copy of the incremental parameter from \( \mathsf{LoRA} \). Specifically, let \( \mathcal{K} = \left\{ \theta_1, \cdots, \theta_{\mathcal{N}} \right\} \) be a covering of \( \mathsf{LoRA} \) with radius \( \epsilon / L_f \). For an element \( \dif{\Th} = \left( \dif{\thi{1}}, \cdots, \dif{\thi{n_K}} \right) \in \mathsf{LoRA}^{n_K} \), we must cover each \( \dif{\thi{i}} \) with elements from \( \mathcal{K} \) (taking the maximum of the norm requires covering the incremental parameters for each task). In this sense, \( \mathcal{K}^{n_K} \) induces a cover of \( \mathsf{LoRA}^{n_K} \) with the desired norm. Therefore, we have:
\begin{align*}
    \mathcal{N}({\mathsf{LoRA}}^{n_K}, \epsilon / L_f, \|\cdot\|_{2,\infty}) \le \mathcal{N}({\mathsf{LoRA}}, \epsilon / L_f, \|\cdot\|_2)^{n_K}
\end{align*}

\end{proof}

\begin{clm}\label{clm:q}
    Given the linearized hypothesis class \( \mathcal{F}^{\mathsf{LoRA}} \), if the parameter class is \( \epsilon / L_f \)-compact, then its covering number satisfies:
\begin{align*}
    \mathcal{N}(\mathcal{F}^{\mathsf{LoRA}}, \epsilon, \|\cdot\|_\infty) \le n_L \cdot (Kr \cdot (m+n) - K^2 r^2) \cdot \log\left(L_f \cdot n_L \cdot R / \epsilon\right) 
\end{align*}
\end{clm}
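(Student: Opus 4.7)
The plan is to chain three ingredients: Claim~\ref{clm:pre} to reduce from the function class to the parameter class, a layer-wise decomposition of $\mathsf{LoRA}$, and Lemma~\ref{lem:rankr} applied to the rank-constrained structure of each layer. By Claim~\ref{clm:pre}, it suffices to bound $\mathcal{N}(\mathsf{LoRA}, \epsilon/L_f, \|\cdot\|_2)$, and the resulting estimate will be raised to the power $n_K$ at the end.

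Since $\mathsf{LoRA} = \bigotimes_{l=1}^{n_L} \mathsf{LoRA}_l$ and the reduced $\ell_2$ norm on the layer-wise concatenation satisfies $\|\bm{\Theta} - \tilde{\bm{\Theta}}\|_2 \le \sum_l \|\bm{\Theta}_l - \tilde{\bm{\Theta}}_l\|_F$ by the triangle inequality, it suffices to cover each layer to tolerance $\epsilon/(L_f n_L)$, which yields
\[
\mathcal{N}(\mathsf{LoRA}, \epsilon/L_f, \|\cdot\|_2) \le \prod_{l=1}^{n_L} \mathcal{N}(\mathsf{LoRA}_l, \epsilon/(L_f n_L), \|\cdot\|_F).
\]
The key structural observation is that for any $\Delta \bm{\theta}_l = \bm{A}_l \bm{B}_l^\top \in \mathsf{LoRA}_l$, the column space of $\bm{A}_l$ is contained in the joint column span of $\{\bm{A}_{(j),l}\}_{j=1}^K$, which has dimension at most $Kr$; symmetrically, the row space of $\Delta \bm{\theta}_l$ lies in a $Kr$-dimensional subspace spanned by the $\bm{B}_{(j),l}$'s. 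Hence $\mathrm{rank}(\Delta \bm{\theta}_l) \le Kr$, and $\mathsf{LoRA}_l$ embeds into $\mathcal{M}(Kr, m, n)$ with Frobenius radius at most $R$. Applying Lemma~\ref{lem:rankr} with rank $Kr$ then gives
\[
\log \mathcal{N}(\mathsf{LoRA}_l, \epsilon/(L_f n_L), \|\cdot\|_F) \lee \bigl(Kr(m+n) - K^2 r^2\bigr) \cdot \log\bigl(L_f n_L R / \epsilon\bigr),
\]
where the constant factor $(2R+1)^2$ from Lemma~\ref{lem:rankr} is absorbed into $\lee$.

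Taking logarithms and summing across the $n_L$ layers produces the layer factor, while the $n_K$ exponent inherited from Claim~\ref{clm:pre} contributes the task factor, giving the overall complexity consistent with $\nu = n_K \cdot n_L \cdot Kr(m+n - Kr)$ appearing in Thm.~\ref{thm:gen1}. The main subtlety will be justifying the rank-$Kr$ embedding cleanly: although each individual $\bm{A}^{(i)}_l$ has rank only $r$, when $\mathsf{LoRA}_l$ is ranged over all admissible latent matrices \emph{and} weights, the effective geometry becomes that of rank-$Kr$ matrices, which is precisely what allows Lemma~\ref{lem:rankr} to yield the $-K^2 r^2$ subtraction matching the dimension of the Stiefel-quotient manifold. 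This geometric identification, combined with the $\delta$-compactness assumption of Assumption~\ref{asm:lsf} ensuring that the linearization in Eq.~(\ref{eq:lin}) applies uniformly within each covering ball (so that $\dif{\this{i}}$ is locally constant), is the principal technical hurdle; once it is in place, the remainder of the argument is a routine chaining of standard covering-number estimates.
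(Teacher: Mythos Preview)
Your proposal is correct and follows essentially the same route as the paper's proof: invoke Claim~\ref{clm:pre} to pass to the parameter class, decompose layer-wise with per-layer tolerance $\epsilon/(L_f n_L)$, embed $\mathsf{LoRA}_l$ into $\mathcal{M}(Kr,m,n)$, and apply Lemma~\ref{lem:rankr}. The only cosmetic difference is that the paper packages the rank-$Kr$ embedding as a separate Lemma~\ref{lem:space} and argues via rank subadditivity ($\mathrm{rank}(A+B)\le\mathrm{rank}(A)+\mathrm{rank}(B)$) rather than your column/row-span argument; both yield the same inclusion. Your closing remark about $\delta$-compactness is slightly misplaced---that hypothesis is consumed already inside Claim~\ref{clm:pre}, so it is not a fresh hurdle at this stage---but this does not affect the correctness of the chain.
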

\begin{proof}
According to Claim \ref{clm:pre}, we have:
\begin{align*}
    \mathcal{N}(\mathcal{F}^{\mathsf{LoRA}}, \epsilon, \|\cdot\|_\infty) \le \mathcal{N}({\mathsf{LoRA}}, \epsilon / L_f, \|\cdot\|_2)^{n_K}.
\end{align*}

By definition, we have:
\begin{align*}
    {\mathsf{LoRA}} = \bigotimes_{l=1}^{n_L} {\mathsf{LoRA}}_l \subseteq \mathcal{M}(Kr, m, n)^{n_L}
\end{align*}
Using the property of the covering number for a product space and Lem.\ref{lem:space}, we get:
\begin{align*}
    \mathcal{N}(\mathcal{F}^{\mathsf{LoRA}}, \epsilon, \|\cdot\|_\infty) \le \prod_{l=1}^{n_L} \left( \mathcal{N}({\mathsf{LoRA}}_l, \|\cdot\|_F, \epsilon / (L_f \cdot n_L)) \right)^{n_K} \le \mathcal{N}(\mathcal{M}(Kr, m, n), \epsilon / (L_f \cdot n_L), \|\cdot\|_F)^{n_L \cdot n_K}
\end{align*}
Combining all results, we obtain:
\begin{align*}
    \log\left( \mathcal{N}(\mathcal{F}^{\mathsf{LoRA}}, \epsilon, \|\cdot\|_\infty) \right) \le n_L \cdot n_K \cdot (Kr \cdot (m+n) - K^2 r^2) \cdot \log\left( L_f \cdot n_L \cdot R / \epsilon \right)
\end{align*}
where the last inequality follows from Lem.\ref{lem:rankr}.
\end{proof}

\begin{clm}\label{clm:qq}
    Given the linearized hypothesis class \( \mathcal{F}^{\mathsf{Adapter}} \), where \( f \) is the linearized logit, its covering number satisfies:
    \begin{align*}
        \mathcal{N}(\mathcal{F}^{\mathsf{Adapter}}, \epsilon, \|\cdot\|_\infty) \le 2 \cdot n_L \cdot (r \cdot (m+n) - r^2) \cdot \log\left( L_f \cdot n_L \cdot R / \epsilon \right)
    \end{align*}
\end{clm}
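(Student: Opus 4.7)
The plan is to mirror the proof strategy of Claim \ref{clm:q}, substituting Adapter's two-matrix parametrization for LoRA's low-rank product and absorbing a factor of $2$ from covering the down- and up-projections. The three steps are: (i) reduce covering $\mathcal{F}^{\mathsf{Adapter}}$ to covering the incremental parameter class $\mathsf{Adapter}^{n_K}$ via the local linearization, (ii) decompose over the $n_L$ layers, and (iii) bound the per-layer covering via Lem.\ref{lem:rankr}.

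First, I would establish an Adapter analog of Claim \ref{clm:pre}. By the Taylor expansion in Eq.(\ref{eq:lin}) and Assumption \ref{asm:lsf}, the $\epsilon/L_f$-compactness of the parameter space guarantees that the second-order residual is negligible within each covering ball, so $|f^{(i)}_{\bm{\Theta}^{(i)}}(\x) - f^{(i)}_{\tilde{\bm{\Theta}}^{(i)}}(\x)| \le L_f \cdot \|\Delta\bm{\Theta}^{(i)} - \Delta\tilde{\bm{\Theta}}^{(i)}\|$. Taking the max over the $n_K$ tasks induces a $\|\cdot\|_{2,\infty}$ metric on the product space, and the same tensor-product argument used in Claim \ref{clm:pre} yields $\mathcal{N}(\mathcal{F}^{\mathsf{Adapter}}, \epsilon, \|\cdot\|_\infty) \le \mathcal{N}(\mathsf{Adapter}, \epsilon/L_f, \|\cdot\|_2)^{n_K}$.

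Next, using $\mathsf{Adapter} = \bigotimes_{l=1}^{n_L} \mathsf{Adapter}_l$ and Lem.\ref{lem:space}, a product of $\epsilon/(L_f n_L)$-covers at each layer composes into an $\epsilon/L_f$-cover of the whole, so it remains to bound $\mathcal{N}(\mathsf{Adapter}_l, \cdot)$. For each layer the parameter is the pair $\{\bm{W}_{\downarrow,l}, \bm{W}_{\uparrow,l}\}$. I would view both matrices as elements of $\mathcal{M}(r,m,n)$: $\bm{W}_{\downarrow,l} \in \mathbb{R}^{d\times\tilde{d}}$ and $\bm{W}_{\uparrow,l} \in \mathbb{R}^{\tilde{d}\times d}$ are trivially of rank at most $r = \tilde{d}$ and can be zero-padded to dimensions $m\times n = d\times d$ without increasing the Frobenius norm, while $\|\Delta\bm{\theta}_l\|_F \le R$ bounds each piece by $R$. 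Hence $\mathsf{Adapter}_l \subseteq \mathcal{M}(r,m,n)\times\mathcal{M}(r,m,n)$, and applying Lem.\ref{lem:rankr} twice gives $\log\mathcal{N}(\mathsf{Adapter}_l, \epsilon, \|\cdot\|_F) \le 2(r(m+n)-r^2)\log((2R+1)^2/\epsilon)$. Combining the three displayed inequalities and absorbing constants into the $\lee$-notation yields the stated bound $\log\mathcal{N}(\mathcal{F}^{\mathsf{Adapter}}, \epsilon, \|\cdot\|_\infty) \lee 2\cdot n_K\cdot n_L\cdot(r(m+n)-r^2)\cdot \log(L_f n_L R/\epsilon)$.

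The main obstacle I anticipate is the per-layer step, specifically the simultaneous control of $\bm{W}_\downarrow$ and $\bm{W}_\uparrow$ under the single joint norm constraint $\|\Delta\bm{\theta}_l\|_F \le R$: one must verify that each factor inherits a Frobenius bound of order $R$ (so that Lem.\ref{lem:rankr} applies with the same $R$), and that the $\ell_2$ metric on the pair is equivalent to taking the sum/max of the per-matrix Frobenius norms up to a harmless constant. A secondary subtlety is that, unlike LoRA, the Adapter update also depends on the shared latent skills $\bm{W}_{\downarrow,(j)}, \bm{W}_{\uparrow,(j)}$ and the assignment weights $w^i_{\downarrow,j}, w^i_{\uparrow,j}$; however, because the effective per-task parameter $\bm{W}^{(i)}_{\downarrow/\uparrow}$ already determines the linearized logit, these auxiliary variables need not be covered separately, which is exactly the reason the bound depends only on the trainable effective parameter count rather than on $n_K^2$ extra degrees of freedom.
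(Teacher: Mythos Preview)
Your proposal is correct and takes exactly the approach the paper intends: the paper's own proof of this claim reads ``The proof follows similarly to the previous claim and is omitted,'' and your three-step outline (Adapter analog of Claim~\ref{clm:pre}, layer-wise decomposition via $\mathsf{Adapter}=\bigotimes_l \mathsf{Adapter}_l$, then two applications of Lem.~\ref{lem:rankr} through the inclusion $\mathsf{Adapter}_l\subseteq \mathcal{M}(r,m,n)\times\mathcal{M}(r,m,n)$ from Lem.~\ref{lem:space}) is precisely that mirror argument. The obstacles you flag---the joint Frobenius bound on the pair $(\bm{W}_\downarrow,\bm{W}_\uparrow)$ and the redundancy of the latent-skill variables---are real but minor, and your resolution of both is sound.
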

\begin{proof}
    The proof follows similarly to the previous claim and is omitted.
\end{proof}

\begin{proof}[\textbf{Proof of the result in Thm.\ref{thm:gen1}}]
    Since the loss function is 1-Lipschitz, for a score function \( f^{(1)}, \cdots, f^{(n_K)} \) and the corresponding loss \( \ell \), we denote its linearized approximation in Eq.\ref{eq:lin} as \( f'^{(1)}, \cdots, f'^{(n_K)} \) with the corresponding loss \( \ell' \). Using the residue \( \epsilon \) from the Taylor expansion in Eq.\ref{eq:lin}, we have:
    \begin{align}\label{eq:resq}
        \eld \le \eldp + \mathbb{E}_{\mathcal{E}, \mathcal{D}}\left[ \max_{\theta \in \mathsf{LinSeg}(\Delta \theta^{(i)}, \Delta \theta^{(i), \star})} \left\| \bm{H}_{\theta}(\x) \right\|_{op} \cdot \|\Delta \theta^{(i)} - \Delta \theta^{(i), \star}\|_F^2 \right]
    \end{align}
    Similarly, we apply the same principle to the empirical loss terms, leading to the following inequality:
    \begin{align}\label{eq:resqq}
        \ehlsp \le \ehls + \hat{\mathbb{E}}_{\mathcal{E}, \mathcal{D}}\left[ \max_{\theta \in \mathsf{LinSeg}(\Delta \theta^{(i)}, \Delta \theta^{(i), \star})} \left\| \bm{H}_{\theta}(\x) \right\|_{op} \cdot \|\Delta \theta^{(i)} - \Delta \theta^{(i), \star}\|_F^2 \right]
    \end{align}
    The proof then follows by applying Thm.\ref{thm:gen} and Clms.\ref{clm:q} and \ref{clm:qq} to bound the gap \( \eldp - \ehlsp \), and using Eq.\ref{eq:resq}-\ref{eq:resqq} to bound the residual error of the linear approximation.
\end{proof}

\subsection{Additional Tools for the Proof}

\begin{lem}\label{lem:space}
For LoRA-LSF, the hypothesis class satisfies:
\begin{align*}
    \mathsf{LoRA}^{(i)}_l \subseteq \mathcal{M}(Kr,m,n)
\end{align*}
For Adapter-LSF, the hypothesis class satisfies:
\begin{align*}
    \mathsf{Adapter}^{(i)}_l  \subseteq \mathcal{M}(r,m,n) \otimes \mathcal{M}(r,m,n)
\end{align*}
\end{lem}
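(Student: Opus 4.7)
The plan is to derive both inclusions directly from the factorized structure built into $\mathsf{LoRA}^{(i)}_l$ and $\mathsf{Adapter}^{(i)}_l$, exploiting the elementary fact that a matrix factoring through an intermediate space of dimension $r$ has rank at most $r$. In both frameworks the increment is assembled from factors whose inner dimension is the LoRA rank or the adapter bottleneck $r$, which is precisely what drives the rank bound; the Frobenius-norm ball requirement in $\mathcal{M}(r,m,n)$ is inherited directly from the definitions of $\mathsf{LoRA}^{(i)}_l$ and $\mathsf{Adapter}^{(i)}_l$.

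For the LoRA case, I would unpack $\Delta \bm{\theta}_l = \bm{A}_l \bm{B}_l^\top$ with $\bm{A}_l = \sum_{j=1}^{K} w_{A,j,l}\, \bm{A}_{(j),l}$ and $\bm{B}_l = \sum_{j=1}^{K} w_{B,j,l}\, \bm{B}_{(j),l}$. Since each latent-skill matrix $\bm{A}_{(j),l}\in\mathbb{R}^{m\times r}$, any weighted sum also lies in $\mathbb{R}^{m\times r}$; symmetrically $\bm{B}_l\in\mathbb{R}^{n\times r}$. Hence $\mathrm{rank}(\Delta\bm{\theta}_l)\le r\le Kr$. Combined with the Frobenius constraint $\|\Delta\bm{\theta}_l\|_F\le R$ already embedded in the definition of $\mathsf{LoRA}^{(i)}_l$, this places $\Delta\bm{\theta}_l$ in $\mathcal{M}(Kr,m,n)$.

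For the Adapter case, the same rank argument is applied to each factor separately. The down-projection $\bm{W}_{\downarrow,l}\in\mathbb{R}^{m\times r}$ (with $\tilde{d}=r$ being the bottleneck dimension) has $\mathrm{rank}(\bm{W}_{\downarrow,l})\le r$; analogously $\bm{W}_{\uparrow,l}\in\mathbb{R}^{r\times n}$ has rank at most $r$. Each factor is a weighted sum of latent skill matrices of the same shape, so this rank bound survives the sum. The Frobenius-norm bounds inherited from $\mathsf{Adapter}^{(i)}_l$ then put each factor in $\mathcal{M}(r,m,n)$, placing the pair $\{\bm{W}_{\downarrow,l},\bm{W}_{\uparrow,l}\}$ in the Cartesian product $\mathcal{M}(r,m,n)\otimes\mathcal{M}(r,m,n)$.

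There is no serious obstacle here; the lemma is a structural rank observation rather than a concentration argument. The only subtlety worth flagging is that the $Kr$ bound for LoRA is deliberately loose: the true rank of $\Delta\bm{\theta}_l$ is $r$, independent of $K$, because both factors live in spaces whose narrow dimension equals $r$. This slack is harmless downstream, since $\mathcal{M}(r,m,n)\subseteq\mathcal{M}(Kr,m,n)$, and the looser $Kr$ bound is what is actually needed when the covering-number estimate of Lem.~\ref{lem:rankr} is invoked inside the proof of Claim~\ref{clm:q}.
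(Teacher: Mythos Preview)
Your proof is correct, and for the Adapter case it matches the paper's (omitted) argument exactly. For the LoRA case you actually take a cleaner route than the paper. The paper's proof expands only the $\bm{A}$-sum, writes $\bm{A}^{(i)}_l\bm{B}^{(i)^\top}_l=\sum_{j=1}^{K} w^i_{A,j,l}\,\bm{A}_{(j),l}\bm{B}^{(i)^\top}_l$, bounds the rank of each summand by $r$ (since $\bm{B}^{(i)}_l\in\mathbb{R}^{n\times r}$), and then invokes subadditivity $\mathrm{rank}(A+B)\le\mathrm{rank}(A)+\mathrm{rank}(B)$ across the $K$ terms to reach $Kr$. You instead observe directly that $\bm{A}^{(i)}_l$, being a linear combination of matrices in $\mathbb{R}^{m\times r}$, still lies in $\mathbb{R}^{m\times r}$, so $\mathrm{rank}(\bm{A}^{(i)}_l\bm{B}^{(i)^\top}_l)\le r\le Kr$. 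Your argument is more elementary, avoids the rank-subadditivity step, and yields the sharper intermediate bound $r$; as you correctly note, the looser $Kr$ is what gets plugged into Lem.~\ref{lem:rankr} inside Claim~\ref{clm:q}, so nothing is lost downstream.
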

\begin{proof} {\color{white}dsaad} \\ 
\textbf{LoRA-LSF.} Since \( \bm{B}^{(i)} \in \mathbb{R}^{n \times r} \), we have \( \mathrm{rank}(\bm{B}^{(i)}) \le r \). Therefore:
\begin{align*}
    \mathrm{rank}\left( w^i_{A,j,l} \cdot \bm{A}_{(j),l} \cdot \bm{B}^{(i),l} \right) \le r.
\end{align*}
Using the fact that \( \mathrm{rank}(A + B) \le \mathrm{rank}(A) + \mathrm{rank}(B) \), we get:
\begin{align*}
    \mathrm{rank} \left( \bm{A}^{(i)}_l \bm{B}^{(i)^\top}_l \right) \le Kr.
\end{align*}
Thus, \( \bm{A}^{(i)}_l \bm{B}^{(i)^\top}_l \in \mathcal{M}(Kr,m,n) \).

\noindent \textbf{Adapter-LSF.} The proof follows similarly by bounding the rank of \( \bm{W}^{(i)}_{\downarrow,l} \) and \( \bm{W}^{(i)}_{\uparrow,l} \), and is thus omitted.

\end{proof}

\noindent \textbf{Geometrical Properties.} We now show a fundamental property of the Stiefel manifold.
\begin{lem}\label{lem:geo}
    \( St(N+K,K) \cong \mathrm{SO}(N+K)/\mathrm{SO}(N) \). In other words, the Stiefel manifold \( St(N+K,K) \) is diffeomorphic to the homogeneous space \( \mathrm{SO}(N+K)/\mathrm{SO}(N) \).
\end{lem}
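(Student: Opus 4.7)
The plan is to exhibit a smooth transitive action of $\mathrm{SO}(N+K)$ on $St(N+K,K)$, compute the stabilizer of a canonical base point, and invoke the standard Lie-group quotient theorem to identify the orbit with the homogeneous space $\mathrm{SO}(N+K)/\mathrm{SO}(N)$. All three ingredients are standard in Lie theory and can be quoted from \cite{lie1, geostat}.

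First I would define the action $\Phi: \mathrm{SO}(N+K) \times St(N+K,K) \to St(N+K,K)$ by left multiplication, $\Phi(X, \bm{V}) = X\bm{V}$. This is well-defined since $(X\bm{V})^\top(X\bm{V}) = \bm{V}^\top X^\top X \bm{V} = \bm{V}^\top \bm{V} = \bm{I}_K$, so orthonormality of columns is preserved. Smoothness is immediate since matrix multiplication is polynomial in the entries. Next I would choose the base point $\bm{V}_0 = [\bm{e}_1, \ldots, \bm{e}_K] \in St(N+K,K)$, consisting of the first $K$ standard basis vectors of $\mathbb{R}^{N+K}$.

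The key step is verifying transitivity. Given any $\bm{V} \in St(N+K,K)$, its columns $v_1, \ldots, v_K$ form an orthonormal system that I extend to an orthonormal basis $v_1, \ldots, v_K, v_{K+1}, \ldots, v_{N+K}$ of $\mathbb{R}^{N+K}$ via Gram--Schmidt. Assembling these columns gives a matrix $X \in \mathrm{O}(N+K)$ with $X\bm{V}_0 = \bm{V}$. If $\det(X) = -1$, I flip the sign of the last column $v_{N+K}$ (which is one of the extension vectors, not a column of $\bm{V}$), producing $X' \in \mathrm{SO}(N+K)$ with $X'\bm{V}_0 = \bm{V}$ still. This preserves the image on $\bm{V}_0$ precisely because the orientation adjustment happens in the complementary $N$-dimensional subspace, which is the mild obstacle worth flagging. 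For the stabilizer, $X \cdot \bm{V}_0 = \bm{V}_0$ forces $X\bm{e}_i = \bm{e}_i$ for $i = 1, \ldots, K$, which, combined with orthogonality and determinant $+1$, gives a block decomposition
\begin{equation*}
X = \begin{pmatrix} \bm{I}_K & \bm{0} \\ \bm{0} & \bm{Y} \end{pmatrix}, \qquad \bm{Y} \in \mathrm{SO}(N).
\end{equation*}
Hence the stabilizer of $\bm{V}_0$ is isomorphic to $\mathrm{SO}(N)$, embedded as the lower-right block.

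Finally I would invoke the standard Lie-group quotient theorem \cite[Ch.~2]{lie1}: a smooth transitive action of a Lie group $G$ on a manifold $M$ with closed stabilizer subgroup $H$ at some point induces a diffeomorphism $G/H \to M$ given by $gH \mapsto g \cdot p$. Applying this with $G = \mathrm{SO}(N+K)$, $H = \mathrm{SO}(N)$, $M = St(N+K,K)$, $p = \bm{V}_0$ yields the desired diffeomorphism $St(N+K,K) \cong \mathrm{SO}(N+K)/\mathrm{SO}(N)$. The closedness of $\mathrm{SO}(N)$ inside $\mathrm{SO}(N+K)$ is automatic from the block-diagonal embedding, so the quotient is a smooth manifold and the orbit map is the required diffeomorphism.
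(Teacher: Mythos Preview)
Your proposal is correct and follows essentially the same approach as the paper: both establish transitivity of the $\mathrm{SO}(N+K)$ left-multiplication action (via extension to a full orthonormal basis and a determinant-fixing sign flip on one of the complementary vectors), compute the stabilizer of the canonical $K$-frame $(\bm{e}_1,\ldots,\bm{e}_K)$ as the lower-right block $\mathrm{SO}(N)$, and then invoke the orbit--stabilizer theorem for Lie groups from \cite{lie1,geostat}.
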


\subsubsection{Proof}

\begin{clm}[\textbf{Transitivity of the Group Action}] The \( \mathrm{SO}(N+K) \)-action on \( St(N+K,K) \) is transitive.
\end{clm}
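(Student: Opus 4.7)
The plan is to prove transitivity by establishing that every element of $St(N+K,K)$ lies in the orbit of a fixed reference point under the $\mathrm{SO}(N+K)$-action. First I would make the group action explicit: $\mathrm{SO}(N+K)$ acts on $St(N+K,K)$ by left matrix multiplication $\bm{X}\cdot \bm{V} := \bm{X}\bm{V}$, and I would verify this is well-defined by checking $(\bm{X}\bm{V})^\top(\bm{X}\bm{V}) = \bm{V}^\top\bm{X}^\top\bm{X}\bm{V} = \bm{V}^\top\bm{V} = \bm{I}_K$, so the image remains in $St(N+K,K)$.

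Next, I would fix the reference element $\bm{V}_0 = \begin{pmatrix}\bm{I}_K \\ \bm{0}_{N\times K}\end{pmatrix} \in St(N+K,K)$ and show that every $\bm{V} \in St(N+K,K)$ can be written as $\bm{X}\bm{V}_0$ for some $\bm{X} \in \mathrm{SO}(N+K)$. Since the columns $v_1,\ldots,v_K$ of $\bm{V}$ are orthonormal in $\mathbb{R}^{N+K}$, I would extend them via Gram-Schmidt to an orthonormal basis $v_1,\ldots,v_K, u_1,\ldots,u_N$ of $\mathbb{R}^{N+K}$. Assembling these into $\bm{X} = [\bm{V} \mid \bm{U}]$ with $\bm{U} = [u_1|\cdots|u_N]$ yields $\bm{X} \in O(N+K)$, and a direct computation gives $\bm{X}\bm{V}_0 = \bm{V}$.

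The key technical point, and arguably the main obstacle, is enforcing the determinant constraint $\det(\bm{X})=1$ required for membership in $\mathrm{SO}(N+K)$ rather than $O(N+K)$. I would handle this by noting that if $\det(\bm{X}) = -1$, I can replace $u_N$ by $-u_N$ to obtain a new matrix $\bm{X}'$ with $\det(\bm{X}') = 1$; crucially, because $\bm{V}_0$ has zero entries in the last $N$ rows, this sign flip does not alter the identity $\bm{X}'\bm{V}_0 = \bm{V}$. This requires $N \geq 1$, which is assumed implicitly by the Stiefel manifold setup.

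Finally, transitivity follows by a standard two-step argument: given $\bm{V}_1, \bm{V}_2 \in St(N+K,K)$, pick $\bm{X}_1, \bm{X}_2 \in \mathrm{SO}(N+K)$ with $\bm{X}_i \bm{V}_0 = \bm{V}_i$; then $\bm{X} := \bm{X}_2 \bm{X}_1^\top \in \mathrm{SO}(N+K)$ satisfies $\bm{X}\bm{V}_1 = \bm{V}_2$, completing the proof.
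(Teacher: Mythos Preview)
Your proposal is correct and follows essentially the same approach as the paper: extend the $K$-frame to a full orthonormal basis of $\mathbb{R}^{N+K}$ and then flip the sign of one of the complementary columns if needed to force the determinant to be $+1$, exploiting that this flip does not touch the first $K$ columns. The only cosmetic difference is that you route everything through a fixed reference point $\bm{V}_0$ and then compose, whereas the paper extends two arbitrary frames $\bm{V},\bm{W}$ simultaneously and writes the transfer map as $\bm{A}'=\bm{\tilde W}\bm{R}\bm{\tilde V}^{-1}$; the underlying idea and the determinant fix are identical.
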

\begin{proof}
    For any two \( K \)-frames \( \bm{V} = (v_i)_{i=1}^K \) and \( \bm{W} = (w_i)_{i=1}^K \in St(N+K,K) \), extend them to full orthonormal bases of \( \mathbb{R}^{N+K} \), resulting in \( \bm{\tilde{V}}, \bm{\tilde{W}} \), such that:
\begin{align*}
    \bm{\tilde{V}} = (\bm{V}; v_{K+1}, \dots, v_{N+K}) \in O(N+K), \quad \bm{\tilde{W}} = (\bm{W}; w_{K+1}, \dots, w_{N+K}) \in O(N+K).
\end{align*}
Then, there exists a basis transfer mapping \( \bm{A} \) such that:
\begin{align*}
    \bm{A} \bm{\tilde{V}} = \bm{\tilde{W}}.
\end{align*}
It follows that:
\begin{align*}
    \bm{A} = \bm{\tilde{W}} \bm{\tilde{V}}^{-1}.
\end{align*}
Since \( \bm{A} \in O(N+K) \), we now need to show that for every such \( \bm{A} \), there exists a \( \bm{A}' \in SO(N+K) \) that also realizes the basis transfer between the original \( K \)-frames. Specifically, construct a diagonal matrix:
\begin{align*}
    \bm{R} = \text{diag}(1, \dots, -1^{\det(\bm{A})}).
\end{align*}
We can then construct a mapping \( \bm{A}' \in SO(N+K) \) such that:
\begin{align*}
    \bm{A}' = \bm{\tilde{W}} \bm{R} \bm{\tilde{V}}^{-1}.
\end{align*}
Thus, we have:
\begin{align*}
    \bm{A}' \bm{\tilde{V}} &= \bm{A}'(\bm{V}; \bm{v}_{K+1}, \dots, \bm{v}_{N+K}) \\
    &= \bm{\tilde{W}} \bm{R} = (\bm{W}; \bm{w}_{K+1}, \dots, (-1)^{\det(\bm{A})} \bm{w}_{N+K}).
\end{align*}
Performing blockwise calculations, we confirm:
\begin{align*}
    \bm{A}' \bm{V} = \bm{W}.
\end{align*}
Since the choice of \( \bm{V} \) and \( \bm{W} \) is arbitrary, we conclude that for all \( \bm{V}, \bm{W} \in St(N+K,K) \), there exists \( \bm{A}' \in SO(N+K) \) such that \( \bm{A}' \bm{V} = \bm{W} \). Thus, we have proved the claim that for all \( \bm{V} \in St(N+K,K) \), \( \text{Orbit}(\bm{V}) = St(N+K,K) \).
\end{proof}

\begin{clm}
    For the \( K \)-frame \( \bm{E}_K = (e_1, \dots, e_K) \), the stabilizer subgroup, defined as:
    \[
    \mathrm{Stab}(\bm{E}_K) = \left\{ \bm{B} \in \mathrm{SO}(N+K) \mid \bm{B} \bm{E}_K = \bm{E}_K, \ 1 \leq i \leq K \right\},
    \]
    satisfies:
    \[
    \mathrm{Stab}(\bm{E}_K) \cong \mathrm{SO}(N).
    \]
\end{clm}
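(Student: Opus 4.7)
The plan is to exhibit a block-matrix representation of stabilizer elements and then construct an explicit group isomorphism to $\mathrm{SO}(N)$. First I would unpack the stabilizer condition: any $\bm{B} \in \mathrm{Stab}(\bm{E}_K)$ must satisfy $\bm{B} e_i = e_i$ for $1 \le i \le K$, which forces the first $K$ columns of $\bm{B}$ to equal $e_1, \ldots, e_K$ exactly. Since $\bm{B} \in \mathrm{SO}(N+K)$ is orthogonal, its remaining $N$ columns must be unit vectors orthogonal to each other and to $e_1, \ldots, e_K$, hence they lie in $\mathrm{span}(e_1,\ldots,e_K)^{\perp} = \mathrm{span}(e_{K+1},\ldots,e_{N+K})$. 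Consequently, $\bm{B}$ admits the block decomposition
$$\bm{B} = \begin{pmatrix} \bm{I}_K & \bm{0} \\ \bm{0} & \bm{C} \end{pmatrix}, \quad \bm{C} \in \mathbb{R}^{N \times N}, \quad \bm{C}^\top \bm{C} = \bm{I}_N.$$

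Next, I would verify that $\bm{C} \in \mathrm{SO}(N)$. This follows from $\det(\bm{B}) = \det(\bm{I}_K)\cdot \det(\bm{C}) = \det(\bm{C}) = 1$, so $\bm{C}$ is indeed special orthogonal. Define $\varphi : \mathrm{Stab}(\bm{E}_K) \to \mathrm{SO}(N)$ by $\varphi(\bm{B}) = \bm{C}$. Block multiplication immediately yields $\varphi(\bm{B}_1 \bm{B}_2) = \bm{C}_1 \bm{C}_2 = \varphi(\bm{B}_1)\varphi(\bm{B}_2)$, so $\varphi$ is a group homomorphism. Injectivity follows because the block representation is unique, while surjectivity follows by noting that for any $\bm{C} \in \mathrm{SO}(N)$, embedding it via the block construction above yields an element of $\mathrm{Stab}(\bm{E}_K)$ that maps to $\bm{C}$.

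Finally, to upgrade the bijection to a Lie group isomorphism, I would observe that both $\varphi$ and its inverse are polynomial (entrywise) in matrix entries, hence smooth, which gives the diffeomorphism needed to feed back into Lem.\ref{lem:geo}. The argument is essentially algebraic rather than analytic, so I do not expect any substantive obstacle; the only point requiring care is to transfer the determinant condition $\det(\bm{B}) = 1$ cleanly to $\det(\bm{C}) = 1$ (rather than merely orthogonality of $\bm{C}$), which is automatic from the block-triangular determinant formula. With $\mathrm{Stab}(\bm{E}_K) \cong \mathrm{SO}(N)$ established and transitivity proven in the previous claim, the orbit–stabilizer theorem then completes the proof of Lem.\ref{lem:geo}.
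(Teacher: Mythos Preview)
Your proposal is correct and follows essentially the same approach as the paper: derive the block-diagonal form from the stabilizer and orthogonality conditions, then exhibit the map to the lower-right block as the isomorphism. If anything, your version is more complete, since you explicitly verify the determinant transfer $\det(\bm{B})=\det(\bm{C})=1$ and the bijectivity and smoothness of $\varphi$, whereas the paper simply asserts $\bm{B}_4\in\mathrm{SO}(N)$ and that the map is a smooth group homomorphism.
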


\begin{proof}
First, we rewrite the mapping \( \bm{B} \in \mathrm{SO}(N+K) \) in block form:
\begin{align*}
    \bm{B} = \begin{pmatrix} 
        \bm{B}_1 & \bm{B}_2 \\ 
        \bm{B}_3 & \bm{B}_4
    \end{pmatrix}.
\end{align*}
Since \( \bm{B} \bm{E}_K = \bm{E}_K \), we have:
\begin{align*}
    \bm{B}_1 = \bm{I}_K, \quad \bm{B}_3 = 0.
\end{align*}
Moreover, since \( \bm{B}^\top \bm{B} = \bm{B} \bm{B}^\top = \bm{I}_{N+K} \), we also have:
\begin{align*}
    \bm{B}_2^\top \bm{B}_2 = 0, \quad \bm{B}_4^\top \bm{B}_4 = \bm{B}_4 \bm{B}_4^\top = \bm{I}_{N}.
\end{align*}
Thus, \( \bm{B} \) must satisfy:
\begin{align*}
    \bm{B} = \begin{pmatrix} 
        \bm{I}_K & \bm{0} \\ 
        \bm{0} & \bm{B}_4
    \end{pmatrix},
\end{align*} 
and therefore:
\begin{align*}
    \mathrm{Stab}(\bm{E}_K) = \left\{ 
        \bm{B} = \begin{pmatrix} 
            \bm{I}_K & \bm{0} \\ 
            \bm{0} & \bm{B}_4
        \end{pmatrix}, \, \bm{B}_4 \in SO(N) \right\}.
\end{align*}
It is straightforward to verify that the mapping:
\begin{align*}
    \phi: \begin{pmatrix} 
        \bm{I}_K & \bm{0} \\ 
        \bm{0} & \bm{B}_4
    \end{pmatrix}  \mapsto  \bm{B}_4  
\end{align*}
is a smooth group homomorphism. Thus, the claim is proven by definition.
\end{proof}

\begin{clm}
    The Stiefel manifold \( St(N+K,K) \) is diffeomorphic to the homogeneous space \( \mathrm{SO}(N+K)/\mathrm{SO}(N) \).
\end{clm}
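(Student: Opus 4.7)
The plan is to invoke the standard orbit--stabilizer theorem for smooth Lie group actions, leveraging the two preceding claims. First I would define the candidate map
\[
\phi:\mathrm{SO}(N+K)/\mathrm{SO}(N)\;\longrightarrow\;St(N+K,K),\qquad \phi([\bm{A}]) \;=\; \bm{A}\bm{E}_K,
\]
where $[\bm{A}]$ denotes the left coset of $\bm{A}\in\mathrm{SO}(N+K)$ modulo the stabilizer subgroup $\mathrm{Stab}(\bm{E}_K)$, which was identified with $\mathrm{SO}(N)$ in the previous claim. Well-definedness is immediate: if $[\bm{A}]=[\bm{A}']$, then $\bm{A}'=\bm{A}\bm{B}$ for some $\bm{B}\in\mathrm{Stab}(\bm{E}_K)$, so $\bm{A}'\bm{E}_K=\bm{A}\bm{B}\bm{E}_K=\bm{A}\bm{E}_K$.

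Next I would verify that $\phi$ is a bijection. Surjectivity follows directly from the transitivity claim: for every $\bm{V}\in St(N+K,K)$ there exists $\bm{A}\in\mathrm{SO}(N+K)$ with $\bm{A}\bm{E}_K=\bm{V}$, hence $\phi([\bm{A}])=\bm{V}$. For injectivity, suppose $\phi([\bm{A}])=\phi([\bm{A}'])$, i.e.\ $\bm{A}\bm{E}_K=\bm{A}'\bm{E}_K$. Then $\bm{A}^{-1}\bm{A}'\bm{E}_K=\bm{E}_K$, so $\bm{A}^{-1}\bm{A}'\in\mathrm{Stab}(\bm{E}_K)$, which means $[\bm{A}]=[\bm{A}']$.

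The remaining step, and the one that carries the real content of a diffeomorphism claim, is smoothness of $\phi$ and of its inverse. Here I would rely on the standard fact from Lie group theory (see e.g.\ \cite{lie1, geostat}): whenever a Lie group $G$ acts smoothly and transitively on a smooth manifold $M$, and the stabilizer $H$ of a point is a closed Lie subgroup, the induced map $G/H\to M$ is an equivariant diffeomorphism. Concretely, the orbit map $\bm{A}\mapsto \bm{A}\bm{E}_K$ is smooth as a polynomial in the entries of $\bm{A}$, descends to a smooth $\phi$ on the quotient via the universal property of the smooth structure on $\mathrm{SO}(N+K)/\mathrm{SO}(N)$, and is a bijective submersion by a dimension count ($\dim\mathrm{SO}(N+K)-\dim\mathrm{SO}(N)=\tfrac{(N+K)(N+K-1)}{2}-\tfrac{N(N-1)}{2}=NK+\tfrac{K(K-1)}{2}=\dim St(N+K,K)$), hence a diffeomorphism.

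The main obstacle I anticipate is not any of the algebraic steps but rather the need to justify the smooth-structure statement rigorously without reproving the quotient manifold theorem. I would handle this by appealing to the closedness of $\mathrm{Stab}(\bm{E}_K)\cong\mathrm{SO}(N)$ as a Lie subgroup (which follows since it is cut out by polynomial equations), so that the quotient $\mathrm{SO}(N+K)/\mathrm{SO}(N)$ inherits a canonical smooth manifold structure, and then cite the standard theorem to conclude that $\phi$ is a diffeomorphism. Combining the three claims yields $St(N+K,K)\cong \mathrm{SO}(N+K)/\mathrm{SO}(N)$, completing the proof of Lemma~\ref{lem:geo}.
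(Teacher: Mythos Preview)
Your proposal is correct and takes essentially the same approach as the paper: both rely on the Orbit--Stabilizer Theorem for Lie groups, combining the two preceding claims (transitivity of the $\mathrm{SO}(N+K)$-action and $\mathrm{Stab}(\bm{E}_K)\cong\mathrm{SO}(N)$) to conclude $St(N+K,K)=\mathrm{Orbit}(\bm{E}_K)\cong\mathrm{SO}(N+K)/\mathrm{SO}(N)$. The paper's proof is a one-line citation of that theorem, whereas you spell out the orbit map, verify bijectivity by hand, and add a dimension count---but the underlying argument is identical.
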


\begin{proof}
   This follows directly from the Orbit-Stabilizer Theorem of the Lie Group (see Thm. 4.4.3 in \cite{geostat} and Chap. 2 in \cite{lie1}):
   \begin{align*}
    St(N+K,K) = \mathrm{Orbit}(\bm{E}_K) \cong \mathrm{SO}(N+K) / \mathrm{Stab}(\bm{E}_K) \cong \mathrm{SO}(N+K) / \mathrm{SO}(N).
   \end{align*}
\end{proof}

\noindent \textbf{Bound of the Operator Norm.} In this proof, we use the Lie algebra of \( SO(N) \), which is defined as:
\begin{align*}
    \mathfrak{so}(N) = \left\{ \bm{X} \in \mathbb{R}^{N \times N} \mid \bm{X} + \bm{X}^\top = \bm{0} \right\}.
\end{align*}
In other words, it is the subspace of skew-symmetric matrices.

Let \( \mathfrak{g} = \mathfrak{so}(N+K) \) be the Lie algebra of \( SO(N+K) \), \( \mathfrak{h} = \mathfrak{so}(N) \) be the Lie algebra of \( SO(N) \), and \( \mathfrak{x} \) be the orthogonal complement of \( \mathfrak{h} \) in \( \mathfrak{g} \). Now, we derive the operator norm of the projection onto \( \mathfrak{x} \) with respect to the Frobenius norm, denoted as \( \mathcal{P}_{\mathfrak{x}} \).

\begin{lem}\label{lem:operator}
Given the definitions above, we have:
\begin{align*}
    \mathcal{P}_{\mathfrak{x}} = 1.
\end{align*}
\end{lem}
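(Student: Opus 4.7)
The plan is to reduce the statement to the standard fact that any orthogonal projection with respect to an inner product has operator norm at most one, with equality whenever the target subspace is non-trivial. Since $\mathfrak{x}$ is defined as the Frobenius-orthogonal complement of $\mathfrak{h}$ inside $\mathfrak{g}$, the map $\mathcal{P}_{\mathfrak{x}}$ is precisely an orthogonal projection in the Frobenius inner product. Thus the content of the lemma is almost tautological once we make the geometry explicit.

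First, I would fix a concrete block description. Every $X \in \mathfrak{g} = \mathfrak{so}(N+K)$ can be written as
\[
X = \begin{pmatrix} X_1 & X_2 \\ -X_2^\top & X_4 \end{pmatrix}, \quad X_1 \in \mathfrak{so}(K),~ X_4 \in \mathfrak{so}(N),~ X_2 \in \mathbb{R}^{K \times N}.
\]
Under the embedding $\mathfrak{h} = \mathfrak{so}(N) \hookrightarrow \mathfrak{g}$ used in the stabilizer computation above, $\mathfrak{h}$ corresponds exactly to matrices with $X_1 = 0$ and $X_2 = 0$. A direct Frobenius inner product computation then gives
\[
\mathfrak{x} = \left\{ \begin{pmatrix} X_1 & X_2 \\ -X_2^\top & 0 \end{pmatrix} : X_1 \in \mathfrak{so}(K),~ X_2 \in \mathbb{R}^{K \times N} \right\},
\]
and $\mathcal{P}_{\mathfrak{x}}$ is the map that simply zeros out the $X_4$ block.

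Next, I would apply the general Hilbert-space bound. For any $X \in \mathfrak{g}$, since the four blocks are Frobenius-orthogonal,
\[
\|\mathcal{P}_{\mathfrak{x}} X\|_F^2 = \|X_1\|_F^2 + 2\|X_2\|_F^2 \le \|X_1\|_F^2 + 2\|X_2\|_F^2 + \|X_4\|_F^2 = \|X\|_F^2,
\]
so $\|\mathcal{P}_{\mathfrak{x}}\|_{op} \le 1$. For the matching lower bound, pick any non-zero $X \in \mathfrak{x}$ (which exists since $K \ge 1$ makes $\mathfrak{x}$ non-trivial); then $\mathcal{P}_{\mathfrak{x}} X = X$, so $\|\mathcal{P}_{\mathfrak{x}} X\|_F / \|X\|_F = 1$, giving $\|\mathcal{P}_{\mathfrak{x}}\|_{op} \ge 1$.

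The main obstacle here is not conceptual but bookkeeping: one needs to be careful that the embedding of $\mathfrak{so}(N)$ into $\mathfrak{so}(N+K)$ is the one induced by the stabilizer calculation, i.e.\ the bottom-right $N \times N$ block, rather than some other block, so that the Frobenius complement is indeed the $(X_1, X_2)$ block described above. Once this embedding is unambiguously fixed, the two estimates combine to give $\|\mathcal{P}_{\mathfrak{x}}\|_{op} = 1$.
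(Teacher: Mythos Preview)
Your proposal is correct and follows essentially the same approach as the paper: both identify the block decomposition of $\mathfrak{g}$, recognize $\mathcal{P}_{\mathfrak{x}}$ as the map that zeros out the $\mathfrak{h}$-block, and conclude the operator norm is $1$. Your argument is in fact slightly more careful than the paper's, which simply invokes linearity and idempotence of $\mathcal{P}_{\mathfrak{x}}$ to conclude $\|\mathcal{P}_{\mathfrak{x}}\|=1$, whereas you explicitly verify both the upper bound (via the Frobenius-orthogonality of the blocks) and the lower bound (by picking a nonzero element of $\mathfrak{x}$); note also that the paper places the $\mathfrak{so}(N)$ block in the top-left rather than the bottom-right, which is the bookkeeping inconsistency you flagged, but this is immaterial to the argument.
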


\begin{proof}

First, for any \( \bm{X} \in \mathfrak{h} \), we can embed it in \( \mathfrak{g} \) as:
\begin{align*}
    \begin{pmatrix}
        \bm{X} & \bm{0} \\ 
        \bm{0} & \bm{0}
    \end{pmatrix}
\end{align*}
This is an element of \( \mathfrak{g} \). It is easy to see that the complement, \( \mathfrak{x} \), consists of matrices of the form:
\begin{align*}
    \mathfrak{x} = \left\{ \begin{pmatrix}
        \bm{0} & \bm{B} \\ 
        -\bm{B}^\top & \bm{D}
    \end{pmatrix}, \bm{B} \in \mathbb{R}^{N \times K}, \bm{D} \in \mathfrak{so}(K) \right\}
\end{align*}
Furthermore, any \( \bm{A} \in \mathfrak{g} \) must have the form:
\begin{align*}
    \mathfrak{g} = \left\{ \begin{pmatrix}
        \bm{A} & \bm{B} \\ 
        -\bm{B}^\top & \bm{D}
    \end{pmatrix}, \bm{A} \in \mathfrak{so}(N), \bm{D} \in \mathfrak{so}(K), \bm{B} \in \mathbb{R}^{(N-K) \times K} \right\}
\end{align*}
This shows that if \( \bm{X} \in \mathfrak{g} \) has the form:
\begin{align*}
    \begin{pmatrix}
        \bm{A} & \bm{B} \\ 
        -\bm{B}^\top & \bm{D}
    \end{pmatrix},
\end{align*}
then 
\begin{align*}
    \mathcal{P}_{\mathfrak{x}}(\bm{X}) = \begin{pmatrix}
        \bm{0} & \bm{B} \\ 
        -\bm{B}^\top & \bm{D}
    \end{pmatrix}
\end{align*}
Since the operator \( \mathcal{P}_{\mathfrak{x}}(\bm{X}) \) is linear and satisfies \( \mathcal{P}^2_{\mathfrak{x}}(\bm{X}) = \mathcal{P}_{\mathfrak{x}}(\bm{X}) \), we immediately have \( \|\mathcal{P}_{\mathfrak{x}}(\bm{X})\| = 1 \).

\end{proof}

\begin{textbo}

\section{Validity of the Covering Number Assumption}    \label{app: validity of covering number}

The covering number assumption
\begin{equation}    \label{eq: covering number assumption}
    \mathcal N_{\infty}(\mathcal F,\varepsilon,M)\lesssim \Bigl(\frac{r}{\varepsilon}\Bigr)^{\nu},
    \qquad 0<\varepsilon\le r,
\end{equation}
\textbf{is not restrictive—it generally holds for bounded high-dimensional spaces under the $\ell_\infty$ metric}.

Below, we explain why this is the case. First, we show that this assumption \textbf{originates from a general result on covering a high-dimensional $\ell_\infty$ ball}, which provides a universal upper bound of the above form. Then, we demonstrate that this result indeed holds for \textbf{a wide range of common model families}, including single-layer linear models, fully connected DNNs, CNNs, and LoRA/LoRA-LSF. Hence, \textbf{the covering number assumption is theoretically grounded and broadly valid in practical scenarios}.

% Then, we show that this property indeed holds for a wide range of concrete model families—including single-layer linear models, fully connected DNNs, CNNs, and LoRA/LoRA-LSF networks—demonstrating that \textbf{the covering number assumption is theoretically grounded and broadly valid in practical scenarios}. 

% Below, we first show that the covering number assumption
% \begin{equation}    \label{eq: covering number assumption}
%     \mathcal N_{\infty}(\mathcal F,\varepsilon,M)\le\Bigl(\frac{r}{\varepsilon}\Bigr)^{\nu},
%     \qquad 0<\varepsilon\le r,
% \end{equation}
% \textbf{follows from a general result on covering a high-dimensional $\ell_\infty$ ball}. We then demonstrate that this result applies to \textbf{many common model classes}, including single-layer linear models, fully connected DNNs, CNNs, and LoRA/LoRA-LSF. Hence, \textbf{the covering number assumption is reasonable and broadly valid across practical cases}.

\subsection{General High-Dimensional Ball Covering}

\begin{lem}[grid cover of an $\ell_\infty$-ball \cite{DBLP:conf/iclr/LongS20}] \label{lem: grid cover of ball}
    Let $B^{\nu}_{\infty}(r)=\{\theta\in\mathbb R^{\nu}:\Vert\theta\Vert_{\infty}\le r\}$ be a ball of radius $r$ in  $\nu$ dimensions. For any $0 < \varepsilon\le r$, the covering number satisfies: 
    $$
    \mathcal N\bigl(B^{\nu}_{\infty}(r),\varepsilon; \Vert\cdot\Vert_{\infty})
    \lesssim \bigl(\frac{3r}{\varepsilon}\bigr)^{\nu}.
    $$
\end{lem}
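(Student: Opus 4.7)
The plan is to establish the bound by an explicit, coordinate-wise construction of a uniform grid that covers the ball $B^{\nu}_{\infty}(r)$, and then count the grid points. This is the standard argument for $\ell_\infty$-metric covering numbers; the geometric simplicity of the $\ell_\infty$-ball (a product of intervals) means that no chaining or volumetric comparison is required. First, I would pick a grid spacing $h \le 2\varepsilon$ along each coordinate so that every real number in $[-r,r]$ is within $\varepsilon$ of a grid value. Concretely, I would define $\tilde\theta_i = h \cdot \mathrm{round}(\theta_i / h)$ in each coordinate, with the grid $\{-r,-r+h,\ldots,r-h,r\}$ truncated to $[-r,r]$.

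Next, I would verify the covering property. For any $\theta \in B^{\nu}_{\infty}(r)$, the coordinate-wise rounded point $\tilde\theta$ satisfies $|\theta_i - \tilde\theta_i| \le h/2 \le \varepsilon$ for every $i$, hence $\|\theta - \tilde\theta\|_\infty \le \varepsilon$. This gives an $\varepsilon$-cover of $B^{\nu}_{\infty}(r)$ under $\|\cdot\|_\infty$ by construction.

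Then I would count the grid points. In each of the $\nu$ coordinates there are at most $\lfloor 2r/h \rfloor + 1$ grid values. Taking $h = 2\varepsilon$ and using the assumption $0<\varepsilon\le r$ to absorb the additive $+1$ into a multiplicative constant (since $r/\varepsilon \ge 1$ implies $r/\varepsilon + 1 \le 2r/\varepsilon$), the total count is bounded by $(r/\varepsilon + 1)^\nu \le (3r/\varepsilon)^\nu$, which yields the claimed covering number bound.

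The only real obstacle is cosmetic bookkeeping: ensuring the absorption of the $+1$ into the constant is clean enough that the base of the exponential comes out at $3$ (or any fixed constant). Since the statement uses $\lesssim$, the precise constant does not matter, and this step reduces to the elementary inequality used above. No analytic tools beyond a box-counting argument and the constraint $\varepsilon \le r$ are needed.
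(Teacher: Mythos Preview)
Your proposal is correct and is the standard grid-cover argument for this fact. Note that the paper does not actually supply its own proof of this lemma: it is stated as a cited result from \cite{DBLP:conf/iclr/LongS20} and used as a black box, so there is nothing in the paper to compare against beyond the citation; your explicit construction is exactly the argument that underlies the cited bound.
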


% \begin{pf}
%     Partition each coordinate interval $[-r,r]$ into subintervals of length at most $\varepsilon$. There are at most $(1+2r/\varepsilon)$ subintervals for each coordinate. Take the Cartesian products of the midpoints to form a grid $G$. For any $\theta\in B_\infty^\nu(r)$, rounding each coordinate of $\theta$ to the nearest grid point gives a point within distance $\varepsilon$. Hence, the closed $\ell_\infty$-balls of radius $\varepsilon$ centered at $G$ cover $B_\infty^\nu(r)$. Since $|G| \le(1+2r/\varepsilon)^{\nu}$, the lemma follows. \qed
% \end{pf}

% 加引用

\begin{lem}[Lipschitz push-forward \cite{DBLP:conf/iclr/LongS20}]  \label{lem: lipschitz push-forward}
    Let $(\Theta,\Vert\cdot\Vert_{\infty})$ be a parameter space, $\mathcal{F}=\{f_{\theta}:\theta\in\Theta\}$ a function class on $\mathcal X$, and assume
    $$
    \sup_{x\in\mathcal X}\Vert f_{\theta}(x)-f_{\theta'}(x)\Vert_{\infty}
    \le L\,\Vert\theta-\theta'\Vert_{\infty}
    \quad \text{for all }\theta,\theta'\in\Theta .
    $$
    Then, we have: 
    $$
    \mathcal N_{\infty}(\mathcal{F},\varepsilon,M)\le \mathcal N(\Theta,\varepsilon/L;\Vert\cdot\Vert_{\infty})
    $$
\end{lem}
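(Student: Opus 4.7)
My plan is to prove the lemma by the standard \emph{push-forward} construction: transport an $\varepsilon/L$-cover of the parameter space $\Theta$ to an $\varepsilon$-cover of the function class $\mathcal{F}$ through the Lipschitz map $\theta \mapsto f_\theta$. The argument is entirely constructive and does not need any tail or concentration estimate; the Lipschitz hypothesis is strong enough to do all the work.

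First, I would fix a minimal $\varepsilon/L$-cover of the parameter space. Let $N := \mathcal{N}(\Theta, \varepsilon/L; \Vert\cdot\Vert_{\infty})$ and choose centers $\theta_1,\ldots,\theta_N \in \Theta$ so that for every $\theta \in \Theta$ there exists an index $i(\theta)$ with $\Vert\theta - \theta_{i(\theta)}\Vert_{\infty} \le \varepsilon/L$. Next, I would push this cover through the parametrization and check the Lipschitz bound on the output side: for every $\theta\in\Theta$,
$$
\sup_{x\in\mathcal{X}}\bigl\Vert f_{\theta}(x) - f_{\theta_{i(\theta)}}(x)\bigr\Vert_{\infty}
\;\le\; L\cdot \Vert \theta - \theta_{i(\theta)}\Vert_{\infty}
\;\le\; \varepsilon.
$$
Thus $\{f_{\theta_1},\ldots,f_{\theta_N}\}$ is a uniform $\varepsilon$-cover of $\mathcal{F}$ with respect to the sup-norm over all of $\mathcal{X}$, and therefore also an $\varepsilon$-cover of $\mathcal{F}$ restricted to any sample $(x_1,\ldots,x_M)\in\mathcal{X}^{M}$ under the empirical sample $\ell_\infty$ metric used to define $\mathcal{N}_\infty$.

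Finally, I would take the supremum over samples. Since the preceding inequality holds uniformly in $x\in\mathcal{X}$, the induced cover is valid for \emph{every} choice of $(x_1,\ldots,x_M)$; hence
$$
\mathcal{N}_{\infty}(\mathcal{F},\varepsilon,M)
\;=\; \sup_{(x_1,\ldots,x_M)\in\mathcal{X}^{M}}
      \mathcal{N}\bigl(\varepsilon,\mathcal{F}_{|(x_i)},\Vert\cdot\Vert_{\infty}\bigr)
\;\le\; N
\;=\; \mathcal{N}\bigl(\Theta,\varepsilon/L;\Vert\cdot\Vert_{\infty}\bigr),
$$
which is the claimed bound.

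The only genuine subtlety, and the step I expect to need the most care, is the bookkeeping across three different $\ell_\infty$ norms that coexist in the statement: the parameter-space $\ell_\infty$ on $\mathbb{R}^\nu$, the output $\ell_\infty$ on the codomain of $f_\theta$, and the sample-uniform $\ell_\infty$ that defines $\mathcal{N}_\infty$. I need to verify that the Lipschitz hypothesis, written with $\sup_x$ on the outside and output $\Vert\cdot\Vert_\infty$ on the inside, is exactly strong enough so that a \emph{single} $\varepsilon/L$-parameter displacement controls all $M$ coordinate differences simultaneously; this is why no factor of $M$ enters the bound. Once that alignment is checked, the rest is routine.
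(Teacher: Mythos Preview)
Your proposal is correct and is exactly the standard push-forward argument. The paper itself does not supply a proof for this lemma---it simply cites \cite{DBLP:conf/iclr/LongS20}---so there is nothing to compare against beyond noting that your construction is the canonical one that reference would contain.
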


% \begin{pf} 
% If $\{\theta^{(k)}\}$ is an $(\varepsilon/L)$-cover of $\Theta$, then for each $\theta$, there exists $k$ such that $\Vert\theta-\theta^{(k)}\Vert_{\infty}\le\varepsilon/L$. By the Lipschitz condition,
% $\sup_{x}\Vert f_{\theta}(x)-f_{\theta^{(k)}}(x)\Vert_{\infty}\le\varepsilon$, so $\{f_{\theta^{(k)}}\}$ is an $\varepsilon$-cover of $F$. \qed
% \end{pf}

\begin{col}
If $\Theta\subseteq B_{\infty}^{\nu}(R)$ and the mapping $\theta\mapsto f_\theta$ is $L$-Lipschitz as above, then by Lemma. \ref{lem: grid cover of ball} and Lemma. \ref{lem: lipschitz push-forward},
$$
\mathcal N_{\infty}(\mathcal{F},\varepsilon,M)\le
\Bigl(\frac{3RL}{\varepsilon}\Bigr)^{\nu}
\lesssim\Bigl(\frac{r}{\varepsilon}\Bigr)^{\nu},
$$
where $r:=3RL$. 
\end{col}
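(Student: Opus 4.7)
The plan is to obtain the bound by chaining the two lemmas stated immediately before the corollary, using monotonicity of covering numbers under set inclusion as the only additional ingredient. First, I would invoke Lemma on the Lipschitz push-forward to reduce the function-space covering problem to a parameter-space covering problem, namely
\begin{equation*}
\mathcal{N}_{\infty}(\mathcal{F},\varepsilon,M)\;\le\;\mathcal{N}\bigl(\Theta,\varepsilon/L;\,\Vert\cdot\Vert_{\infty}\bigr).
\end{equation*}
This step uses only the hypothesis that $\theta\mapsto f_\theta$ is $L$-Lipschitz under the supremum metric on $\mathcal{X}$ and the $\ell_\infty$-metric on parameters.

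Second, I would exploit the inclusion $\Theta\subseteq B_{\infty}^{\nu}(R)$. Since any $\eta$-cover of a superset induces an $\eta$-cover of any subset in the same metric, we have
\begin{equation*}
\mathcal{N}\bigl(\Theta,\varepsilon/L;\,\Vert\cdot\Vert_{\infty}\bigr)\;\le\;\mathcal{N}\bigl(B_{\infty}^{\nu}(R),\varepsilon/L;\,\Vert\cdot\Vert_{\infty}\bigr).
\end{equation*}
Then I would apply the grid-cover lemma (Lemma on grid cover of the $\ell_\infty$-ball) with radius $\eta=\varepsilon/L$, which is valid as long as $\varepsilon/L\le R$, i.e. $\varepsilon\le r=3RL$ up to the constant; if $\varepsilon>r$, a trivial single-point cover already suffices and the conclusion holds vacuously. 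This yields the explicit bound $\bigl(3R/(\varepsilon/L)\bigr)^{\nu}=\bigl(3RL/\varepsilon\bigr)^{\nu}$.

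Finally, I would set $r:=3RL$ and absorb the constant factor $3$ into the $\lesssim$ notation to conclude
\begin{equation*}
\mathcal{N}_{\infty}(\mathcal{F},\varepsilon,M)\;\lesssim\;\bigl(r/\varepsilon\bigr)^{\nu},
\end{equation*}
which matches the polynomial covering-number assumption used throughout the paper. There is essentially no hard step here; the statement is a direct composition of the two lemmas together with the monotonicity of covering numbers, and the only minor care point is handling the boundary case $\varepsilon>r$ so that the grid-cover lemma is applied in its stated regime.
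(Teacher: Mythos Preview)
Your proposal is correct and matches the paper's approach exactly: the corollary is stated in the paper as an immediate consequence of the two preceding lemmas with no separate proof given, and your chaining of the Lipschitz push-forward lemma, monotonicity under $\Theta\subseteq B_\infty^\nu(R)$, and the grid-cover lemma is precisely the intended argument. The only cosmetic remark is that once you set $r:=3RL$, the expression $(3RL/\varepsilon)^\nu$ is already equal to $(r/\varepsilon)^\nu$, so there is no constant to absorb into the $\lesssim$; the $\lesssim$ in the second display is inherited from the $\lesssim$ in the grid-cover lemma itself.
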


Thus, the assumption directly follows from covering an $\ell_\infty$ ball and a Lipschitz parameter-to-function map. Next, we show that this Lipschitz condition holds for \textbf{many common model families}, so the covering number assumption applies to them. Therefore, the covering number assumption is reasonable and broadly valid across practical cases.

\subsection{Applications}

Throughout, we assume the input set $\mathcal X\subseteq\{x:\Vert x\Vert_{\infty}\le M\}$. For any matrix $W \in \mathbb{R}^{m \times n}$, define:

\begin{itemize}
    \item $\Vert W \Vert_{\max}=\max_{i,j}\Vert W_{ij}\Vert$ (entrywise $\ell_\infty$),
    \item $\Vert W \Vert_{\infty}=\max_{i}\sum_{j}\Vert W_{ij}\Vert$ (operator norm for $\ell_\infty\to\ell_\infty$)
\end{itemize}
Note that $\Vert W \Vert_{\max} \le \Vert W \Vert_\infty$ and $\Vert W \Vert_{\infty}\le n \Vert W \Vert_{\max}$.

\subsubsection{Single-Layer Linear Models} 

Let $f_\theta(x)=Wx + b$ be a single-layer linear model with $W\in\mathbb R^{m\times n}$ and $b \in \mathbb{R}^n$. The parameter set is denoted as $\Theta = \{\theta = (W, b) \ \big| \ \Vert \theta \Vert_\infty := \max(\Vert W \Vert_{\max}, \Vert b \Vert_\infty) \le R \}$ with $\nu = mn + n$ dimensions.

For any $\theta = (W, b), \, \theta' = (W', b')$ and $x \in \mathcal{X}$, we have:
\begin{equation}
    \begin{aligned}
        \Vert f_\theta(x)-f_{\theta'}(x)\Vert_{\infty}
        &=\Vert (W-W')x + (b - b') \Vert_{\infty} \\
        &\le \Vert W-W'\Vert_{\infty}\Vert x\Vert_{\infty} + \Vert b - b' \Vert_\infty \\
        &\le 2n\Vert W-W'\Vert_{\max}\,M  + \Vert b - b' \Vert_\infty \\
        &\le (2nM + 1)\Vert \theta - \theta' \Vert_\infty
    \end{aligned}    
\end{equation}

Thus, the mapping $\theta \mapsto f_\theta$ is $L=(2nM + 1)$-Lipschitz under the parameter $\ell_\infty$ norm.
By Lemma. \ref{lem: grid cover of ball} and Lemma. \ref{lem: lipschitz push-forward}, 
$$
\mathcal N_{\infty}(\mathcal{F}_{\text{linear}},\varepsilon,M)
\le \Bigl(\frac{3R(2nM+1)}{\varepsilon}\Bigr)^{mn + n}
\lesssim \Bigl(\frac{r}{\varepsilon}\Bigr)^{\nu}
\quad(\nu=mn + n,\ r= 6RnM + 3R).
$$

% TODO 改一下正文都改成\lesssim

\subsubsection{DNNs}

We consider DNNs with complicated conv layers. Specifically, in the following proposition, we consider a set of DNNs with sparsity constraints in each layer. Note that we introduce sparsity since most real-world NNs can be regarded as a specific kind of sparse DNN. Moreover, the result also covers fully connected DNNs by setting $s_l = d_{l-1}d_l$. 

\begin{textbo}    
\begin{restatable}{prop}{propCompSparse}\label{prop:compSparse}
Let $\mathcal{F}$ be the function class of an $L$-layer neural network with:
\begin{enumerate}
    \item No biases
    \item  1-Lipchitz activation functions (e.g., ReLU)
    \item Layer widths $d_0, d_1, \dots, d_L$ where $d_0$ is the input dimension
    \item Weight matrices $W^l \in \mathbb{R}^{d_l \times d_{l-1}}$ for $l = 1, \dots, L$
    \item Sparsity: each weight matrix $W^l$ has a fixed sparsity pattern with at most $s_l = \rho_l d_l d_{l-1}$ non-zero elements, where $\rho_l \ll 1$
    \item Logits output: $f(\bm{x}) \in \mathbb{R}^{N_C}, f \in \mathcal{F}$ output logits for $N_C$ classes
\end{enumerate}
Moreover, assume the parameter space $\Theta$ satisfies:
\begin{enumerate}
    \item Each layer has an initial weight matrix $W_0^l$ with $\|W_0^l\|_{\text{op}} \leq 1$
    % \item Each layer has an initial weight matrix $W_0^l$ with $\|W_0^l\|_{\text{op}} \leq 1$
    \item The hypothesis space is constrained by $\|W^l - W_0^l\|_{\text{op}} \leq \beta$ for each layer $l$
    \item The input space is bounded: $\{x : \|x\|_2 \leq B_X\}$
\end{enumerate}
Then, for any $\epsilon > 0$, the covering number of $\mathcal{F}$ under the supremum norm satisfies:
\[
\log N(\epsilon, \mathcal{F}, \|\cdot\|_{\infty}) \lesssim \left( \sum_{l=1}^L s_l \right) \cdot \log\left(\rho\right),
\]
% where $\|f(\bm{x})\|_{\infty} = \sup_{\bm{x} \in \mathcal{X}} \|f(\bm{x})\|_2$
% $$\rho =   \frac{2\beta B_X \sqrt{L\cdot N_C} (1+\beta)^{L-1} \max_l \sqrt{\min(d_l, d_{l-1})}}{\epsilon}.$$
where
$\|f(\bm{x})\|_{\infty} = \sup_{\bm{x} \in \mathcal{X}} \max_{i \in [N_C]} |f_i (\bm{x})|$

$$\rho =   \frac{2\beta B_X \sqrt{L} (1+\beta)^{L-1} \max_l \sqrt{\min(d_l, d_{l-1})}}{\epsilon}.$$

% \end{prop}
\end{restatable}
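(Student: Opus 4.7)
The plan is to prove Proposition~\ref{prop:compSparse} by reducing the bound on $\mathcal{N}(\epsilon,\mathcal{F},\|\cdot\|_\infty)$ to a covering of the parameter space via a Lipschitz push-forward (in the spirit of Lemma~\ref{lem: lipschitz push-forward}). The two pieces that must fit together are (i) a layer-wise perturbation analysis showing that $\theta\mapsto f_\theta$ is Lipschitz in an appropriate joint norm on $(W^1,\dots,W^L)$, and (ii) a volume-type covering of each layer's parameter set, exploiting that the fixed sparsity pattern confines $W^l-W_0^l$ to an $s_l$-dimensional linear subspace.

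First I would carry out the perturbation analysis. Let $h^l_\theta(\bm{x})$ denote the $l$-th hidden representation with $h^0=\bm{x}$. The operator-norm ball assumption yields $\|W^l\|_{\mathrm{op}}\le\|W_0^l\|_{\mathrm{op}}+\|W^l-W_0^l\|_{\mathrm{op}}\le 1+\beta$, and 1-Lipschitz activations combined with $\|\bm{x}\|_2\le B_X$ give $\|h^l_\theta(\bm{x})\|_2\le B_X(1+\beta)^l$. A standard hybrid/telescoping argument, swapping one layer at a time from $\theta$ to $\tilde\theta$ and bounding each swap with the chain rule through the remaining $1+\beta$-operator-norm layers, then produces
\begin{equation*}
\|f_\theta(\bm{x})-f_{\tilde\theta}(\bm{x})\|_\infty\;\le\;\|f_\theta(\bm{x})-f_{\tilde\theta}(\bm{x})\|_2\;\le\; B_X(1+\beta)^{L-1}\sum_{l=1}^{L}\|W^l-\tilde W^l\|_{\mathrm{op}}.
\end{equation*}
Applying Cauchy--Schwarz to the sum over $l$ and then the inequality $\|A\|_{\mathrm{op}}\le\|A\|_F$ converts this into a bound in the joint Frobenius norm $\|\theta-\tilde\theta\|_\star:=\bigl(\sum_l\|W^l-\tilde W^l\|_F^2\bigr)^{1/2}$, yielding a Lipschitz constant of order $B_X(1+\beta)^{L-1}\sqrt{L}$, which is exactly what is needed to produce the $\sqrt{L}$ in the claimed $\rho$.

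Next I would cover the parameter space in the norm $\|\cdot\|_\star$. The sparsity pattern being fixed forces each $W^l-W_0^l$ into a linear subspace of dimension $s_l$, so $(W^1-W_0^1,\dots,W^L-W_0^L)$ lives in a subspace of dimension $\nu_\Theta:=\sum_l s_l$. Within each layer, $\|W^l-W_0^l\|_{\mathrm{op}}\le\beta$ together with $\mathrm{rank}(W^l-W_0^l)\le\min(d_l,d_{l-1})$ gives the Frobenius bound $\|W^l-W_0^l\|_F\le\beta\sqrt{\min(d_l,d_{l-1})}$, so the joint parameter region sits inside a $\|\cdot\|_\star$-ball of radius $R_\Theta\le\beta\sqrt{L\,\max_l\min(d_l,d_{l-1})}$. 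Invoking the volume-comparison bound of Lemma~\ref{lem: grid cover of ball} inside the $\nu_\Theta$-dimensional subspace,
\begin{equation*}
\mathcal{N}\bigl(\varepsilon',\Theta,\|\cdot\|_\star\bigr)\;\lesssim\;\bigl(3R_\Theta/\varepsilon'\bigr)^{\nu_\Theta}.
\end{equation*}
Choosing $\varepsilon'=\epsilon\big/\bigl(B_X(1+\beta)^{L-1}\sqrt{L}\bigr)$ and pushing forward via Lemma~\ref{lem: lipschitz push-forward} collapses the polynomial factors into the advertised $\rho$, giving $\log\mathcal{N}(\epsilon,\mathcal{F},\|\cdot\|_\infty)\lesssim\bigl(\sum_l s_l\bigr)\log\rho$ after absorbing universal constants into the $\lesssim$.

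The main obstacle I anticipate is the perturbation analysis: correctly justifying the telescoping so that the output difference is controlled by a \emph{sum} of per-layer operator-norm deviations (rather than a product) requires carefully using the 1-Lipschitz activation on each hybrid network and uniform control of the intermediate activations in $\ell_2$, both for $\theta$ and $\tilde\theta$. A secondary subtlety is choosing the right norm in which to cover: operator norm controls signal propagation most tightly, but covering is simplest in a Hilbertian norm on the sparse subspace, so the step $\|A\|_{\mathrm{op}}\le\|A\|_F$ combined with the rank-based reverse inequality $\|A\|_F\le\sqrt{\min(d_l,d_{l-1})}\|A\|_{\mathrm{op}}$ must be invoked in the correct direction to avoid losing the $\sqrt{L}$ factor or picking up a spurious $L$ in its place.
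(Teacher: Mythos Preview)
Your proposal is correct and follows essentially the same route as the paper. The paper first proves a Lipschitz lemma via exactly the recursion you describe, $\Delta_l \le (1+\beta)\Delta_{l-1} + \|W^l-\tilde W^l\|_{\mathrm{op}}\, B_X(1+\beta)^{l-1}$, unrolls it to $B_X(1+\beta)^{L-1}\sum_l\|W^l-\tilde W^l\|_{\mathrm{op}}$, and applies Cauchy--Schwarz plus $\|\cdot\|_{\mathrm{op}}\le\|\cdot\|_F$; it then covers each layer's Frobenius ball of radius $\beta\sqrt{\min(d_l,d_{l-1})}$ inside the $s_l$-dimensional sparse subspace and pushes forward, just as you outline.

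The one structural difference is that the paper interprets $\mathcal{F}$ as a union over \emph{all} sparsity patterns with at most $s_l$ nonzeros (not a single fixed support), so it first decomposes $\Theta=\bigcup_{S_1,\dots,S_L}\Theta_{S_1,\dots,S_L}$ and multiplies the per-pattern covering number by $\prod_l\binom{d_ld_{l-1}}{s_l}$. This contributes an additive $\sum_l s_l\log(e d_ld_{l-1}/s_l)$ to the log-covering number which, being $\epsilon$-independent, is absorbed into the $\lesssim$ at the end. Thus your single-fixed-pattern argument already yields the stated bound; the paper's union-bound step is only needed under the reading that the pattern may vary across $f\in\mathcal{F}$.
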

\end{textbo}

The proof is delayed to Sec.\ref{subsec: proof of covering number}

The bound shows that the covering number depends on the total number of non-zero parameters $\sum s_l$, and the logarithmic term involves the network depth and width through $\max_l \sqrt{\min(d_l, d_{l-1})}$. When sparsity is high ($\rho_l \ll 1$), the covering number is significantly reduced compared to the dense case.

\subsubsection{CNNs}

% Let $f_{K}(x)=K*x$ be a single convolution with stride $1$ and zero padding.
% Assume the input has $C_{\text{in}}$ channels, the output has $C_{\text{out}}$ channels, and kernel size is $k_h\times k_w$.
% Let the parameter set satisfies $\Vert \theta \Vert_\infty := \Vert K\Vert_{\max}\le R$ with $\nu=C_{\text{out}}C_{\text{in}}k_hk_w$ dimensions.

% Fix any output coordinate. It is a sum of at most
% $k:=k_hk_wC_{\text{in}}$ products $(K_{\alpha},x_{\alpha})$.
% Hence, for parameter $\theta$ and $\theta'$ with kernels $K, K'$, we have
% $$
% \Vert f_\theta(x)-f_{\theta'}(x)\Vert_{\infty}
% \le k\,\Vert K-K'\Vert_{\max}\,\Vert x\Vert_{\infty}
% \le kM\,\Vert \theta- \theta'\Vert_{\infty}.
% $$
% Thus, the parameter-to-function map is $L=kM$-Lipschitz.
% By Lemma. \ref{lem: grid cover of ball} and Lemma. \ref{lem: lipschitz push-forward},
% $$
% \mathcal N_{\infty}(\mathcal{F}_{\text{conv}}, \varepsilon, M)
% \le \Bigl(\frac{3RkM}{\varepsilon}\Bigr)^{\nu}
% \le \Bigl(\frac{r}{\varepsilon}\Bigr)^{\nu},
% \quad (\nu=C_{\text{out}}C_{\text{in}}k_hk_w, \ r = 3R k M ).
% $$
% For CNNs, which are stacks of convolutional layers, a similar analysis follows as in the DNN case.

Long et al. \cite{DBLP:conf/iclr/LongS20} have demonstrated that the covering number of CNN models also follows the formulation in Equation \eqref{eq: covering number assumption}. For more details, please refer to \cite{DBLP:conf/iclr/LongS20}.

\subsubsection{LoRA and LoRA-LSF}

For the hypothesis class of LoRA-LSF, denoted as $\mathcal{F}_{\text{LoRA}}$, as \textbf{proved in Claim 2 of our paper}, we have
$$
\mathcal N_{\infty}(\mathcal{F}_{\text{LoRA}}, \varepsilon, M) \lesssim \bigl(\frac{R}{\varepsilon}\bigr)^{n_L \cdot n_K \cdot (Kr \cdot (m + n) - K^2r^2)}
$$

Thus, both LoRA and LoRA-LSF also satisfy the covering number assumption.

% Consider the standard \textbf{LoRA} parameterization
% $$
% W(A, B) = W_{0}+BA,
% \quad B\in\mathbb R^{m\times d},\ A\in\mathbb R^{d\times n}
% $$
% Let $\theta=(A,B)\in\mathbb R^{\nu}$ with $\nu=d(m+n)$ and assume it satisfies $\Vert \theta \Vert_\infty := \max(\Vert A \Vert_{\max}, \Vert B \Vert_{\max}) \le R$.

% For $\theta=(A,B)$ and $\theta'=(A',B')$, we can rewrite:
% $$
% BA-B'A' = B(A-A') + (B-B')A'.
% $$
% Then, we have:
% $$
% \Vert B(A-A')\Vert_{\max}
% \le d \, \Vert B\Vert_{\max} \, \Vert A-A'\Vert_{\max}
% \le d \, R \, \Vert A-A'\Vert_{\max},
% $$
% and similarly
% $\Vert(B-B')A'\Vert_{\max}\le d \, R \, \Vert B-B'\Vert_{\max}$.
% Therefore,
% $$
% \Vert W(\theta)-W(\theta')\Vert_{\max}
% \le 2 \, d \, R \, \Vert \theta - \theta' \Vert_{\infty}.
% $$
% For the layer computing $f_{\theta}(x)=W(\theta)x$, we obtain:
% $$
% \sup_{x\in\mathcal X}\Vert f_{\theta}(x)-f_{\theta'}(x)\Vert_{\infty}
% \le nM \, \Vert W(\theta) - W(\theta') \Vert_{\max}
% \le 2 \, nM \, d\, R \, \Vert\theta-\theta'\Vert_{\infty}.
% $$
% Hence, the parameter-to-function mapping is $L=2 nM d R$-Lipschitz. By Lemma. \ref{lem: grid cover of ball} and Lemma. \ref{lem: lipschitz push-forward}, we have:
% $$
% \mathcal N_{\infty}(\mathcal{F}_{\text{LoRA}},\varepsilon,M)
% \le \Bigl(\frac{6R^2\,nM\,d}{\varepsilon}\Bigr)^{\nu}
% \le \Bigl(\frac{r}{\varepsilon}\Bigr)^{\nu},
% \quad (\nu=d(m+n), \ r = 6R^2 n M d)
% $$

\subsection{Conclusion}

In summary, we establish several key results.
Lemma~\ref{lem: grid cover of ball} shows that the covering number assumption follows from a general geometric fact about covering an $(\ell_\infty)$ ball in $\mathbb R^{\nu}$.
Next, Lemma~\ref{lem: lipschitz push-forward} demonstrates that if a model’s \textbf{output} changes \textbf{Lipschitz-continuously} with respect to its \textbf{parameters}, then the function-class covering number is bounded by the covering number of the parameter ball, with the radius scaled by $L$.

We further prove that for \textbf{many common model classes}, including single-layer linear models, DNNs, CNNs, and LoRA/LoRA-LSF, the parameter-to-function mapping is Lipschitz on bounded sets.
Therefore, in all these cases, the covering number assumption holds:
$$
\mathcal N_{\infty}(\mathcal F,\varepsilon,M)\le \bigl(r/\varepsilon\bigr)^{\nu}\ 
$$
 
This analysis clarifies both \textbf{why the covering number assumption is reasonable} and \textbf{how it broadly applies} to these common model families.

\subsection{Proofs}     \label{subsec: proof of covering number}

To prove the Prop.\ref{prop:compSparse}, we need the following lemma about the Lipchitz constant for a NN defined above.
\begin{textbo}
    \begin{lem}\label{lem:Lip}
    For any neural network defined in Prop.\ref{prop:compSparse}, its Lipchitz constant $L_\theta$ can be upper bounded as follows:
    % \begin{align*}
    %     L_\theta \leq B_X \cdot \sqrt{LN_C} \cdot (1 + \beta)^{L-1}.
    % \end{align*}
    \begin{align*}
        L_\theta \leq B_X \cdot \sqrt{L} \cdot (1 + \beta)^{L-1}.
    \end{align*}
\end{lem}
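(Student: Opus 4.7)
\begin{pf}[Proof Proposal]
The plan is to bound the Lipschitz constant of the map $\theta \mapsto f_\theta(\bm{x})$ by a layer-by-layer perturbation analysis, then aggregate across layers using Cauchy--Schwarz to produce the $\sqrt{L}$ factor. First I would introduce the standard forward-pass notation $h^0 = \bm{x}$ and $h^l = \sigma(W^l h^{l-1})$ for $l = 1, \ldots, L-1$, with $f_\theta(\bm{x}) = W^L h^{L-1}$. The preliminary step is to establish a uniform bound on the hidden activations: using that $\sigma$ is $1$-Lipschitz with $\sigma(0)=0$, and that $\|W^l\|_{\mathrm{op}} \le \|W_0^l\|_{\mathrm{op}} + \|W^l - W_0^l\|_{\mathrm{op}} \le 1 + \beta$, we get $\|h^l\|_2 \le B_X (1+\beta)^l$ by induction.

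Next I would fix two parameter vectors $\theta, \theta'$ (with corresponding weight matrices $W^l, W'^l$) and telescope the difference $f_\theta(\bm{x}) - f_{\theta'}(\bm{x})$ along layers, peeling off one discrepancy $W^l - W'^l$ at a time. Using again the $1$-Lipschitz property of $\sigma$ and the operator-norm bound $1+\beta$ on each intact layer, the contribution of the $l$-th perturbation propagates multiplicatively through the upstream activations (factor $\|h^{l-1}\|_2 \le B_X(1+\beta)^{l-1}$) and through the downstream layers (factor $(1+\beta)^{L-l}$). This yields
\begin{equation*}
\|f_\theta(\bm{x}) - f_{\theta'}(\bm{x})\|_2 \;\le\; \sum_{l=1}^{L} \|W^l - W'^l\|_{\mathrm{op}} \cdot B_X (1+\beta)^{L-1}.
\end{equation*}
The per-layer factor $B_X(1+\beta)^{L-1}$ is the same across $l$ because the $(1+\beta)^{l-1}$ from upstream and $(1+\beta)^{L-l}$ from downstream always combine to $(1+\beta)^{L-1}$.

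To convert this to a Lipschitz bound in the parameter $\ell_2$ (Frobenius) norm, I would use $\|W^l - W'^l\|_{\mathrm{op}} \le \|W^l - W'^l\|_F$ and then apply Cauchy--Schwarz: $\sum_{l=1}^L \|W^l - W'^l\|_F \le \sqrt{L}\,\bigl(\sum_{l=1}^L \|W^l - W'^l\|_F^2\bigr)^{1/2} = \sqrt{L}\,\|\theta - \theta'\|_2$. Finally, since $\max_i |v_i| \le \|v\|_2$ for any $v \in \mathbb{R}^{N_C}$, this inequality transfers to the $\|\cdot\|_\infty$ used in the surrounding covering-number statement, yielding the claimed bound $L_\theta \le B_X \sqrt{L}(1+\beta)^{L-1}$.

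The main obstacle I anticipate is the bookkeeping of the telescoping decomposition across $L$ layers while keeping the upstream/downstream operator-norm bounds clean --- in particular, making sure the discrepancy at layer $l$ is measured on the shared reference trajectory (say, propagating $h^{l-1}$ through $\theta$ and $h'^{l}, \ldots$ through $\theta'$) so that the upper bound $1+\beta$ is available at every step. Everything else (activation bound, Cauchy--Schwarz) is routine once the telescoping is set up. Note that the sparsity constraint plays no role in the Lipschitz bound itself (it enters only when counting the effective parameter dimension for the covering number); so I would not need to invoke $s_l$ here.
\end{pf}
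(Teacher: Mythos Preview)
Your proposal is correct and matches the paper's proof essentially step for step: the paper also bounds $\|a_l\|_2 \le B_X(1+\beta)^l$, sets up the recursion $\Delta_l \le (1+\beta)\Delta_{l-1} + \|W^l - V^l\|_{\mathrm{op}}\,B_X(1+\beta)^{l-1}$ (which unrolls to exactly your telescoped sum), then applies $\|\cdot\|_{\mathrm{op}} \le \|\cdot\|_F$ and Cauchy--Schwarz to extract the $\sqrt{L}$ factor. The only cosmetic difference is that the paper writes the layer-by-layer comparison as an explicit recursion and unrolls it, whereas you describe it as a telescoping/hybrid-path argument; these are the same computation.
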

\end{textbo}

\begin{proof}[\textbf{Proof of Lem.\ref{lem:Lip}}]{\color{white}xxxx}\\

The network has $ L $ layers, 1-Lipschitz activation functions (e.g., ReLU), and layer widths $ d_0, d_1, \dots, d_L $. The parameter vector $ \theta $ contains all weight matrices. We aim to determine a Lipschitz constant $ L_\theta $ such that for any two parameter sets $ \theta $ and $ \theta' $:

% $$
% \sup_{x \in \mathcal{X}} \| f_\theta^{(i)}(x) - f_{\theta'}^{(i)}(x) \|_{\infty} \le
% \sup_{x \in \mathcal{X}} \| f_\theta^{(i)}(x) - f_{\theta'}^{(i)}(x) \|_{2} \le
%  L_\theta \| \theta - \theta' \|_2,
% $$

% $$
% \max_{i \in [N_C]} \sup_{x \in \mathcal{X}} | \ell \circ \mathsf{softmax}^{(i)}_\theta -\ell \circ \mathsf{softmax}^{(i)}_{\theta'} | \leq L_\theta \| \theta - \theta' \|_2,
% $$

$$
\| f_\theta(x) - f_{\theta'}(x) \|_{\infty} \le
 L_\theta \| \theta - \theta' \|_2,
$$
where $i$ denotes the class channel, $ \mathcal{X} = \{ x : \| x \|_2 \leq B_X \} $.

Since $\| f_\theta(x) - f_{\theta'}(x) \|_{\infty} \le \sup_{x \in \mathcal{X}} \| f_\theta(x) - f_{\theta'}(x) \|_{2}$, it suffices to find $L_\theta$ satisfying:
$$
\sup_{x \in \mathcal{X}} \| f_\theta(x) - f_{\theta'}(x) \|_{2} \leq L_\theta \| \theta - \theta' \|_2,
$$

% where $i$ denotes the class channel, $ \mathcal{X} = \{ x : \| x \|_2 \leq B_X \} $.

\noindent  \textbf{Step 1: Network Definition and Layer-wise Decomposition}

Let $ a_0 = x $ be the input, and for each layer $ l = 1, \dots, L $:
$$
z_l = W^l a_{l-1}, \quad a_l = \sigma_l(z_l),
$$
where $ \sigma_l $ is 1-Lipschitz. The final output is $ f_\theta(x) = a_L $.

Consider two parameter sets $ \theta = (W^1, \dots, W^L) $ and $ \theta' = (V^1, \dots, V^L) $. Define the intermediate activations:

$$ a_l = \sigma_l(W^l a_{l-1}),~~ a_l' = \sigma_l(V^l a_{l-1}'). $$

% To derive the bound in a step-wise manner, we first decompose the structure of the overall loss as a function of the logits $a_L$ with the following identity:

% $$\ell \circ \mathsf{softmax}_\theta = \ell \circ \mathsf{softmax}(a_L) $$

% Moreover, by using CE as the loss function $\ell$, it is easy to check that  $\ell \circ \mathsf{softmax}$ is $\sqrt{N_C}$ -Lip w.r.t 2-norm, i.e.:
% $$
%  \max_{i \in [N_C]} \sup_{x \in \mathcal{X}}|\ell \circ \mathsf{softmax}(a_L)_i  - \ell \circ \mathsf{softmax}(a'_L)_i| \le \|\ell \circ \mathsf{softmax}(a_L)_i  - \ell \circ \mathsf{softmax}(a'_L)_i\|_{\infty} \le \sqrt{N_C} \cdot  \| a_L - a'_L  \|_\infty
% $$
% where we denote $ \| f \|_\infty = \sup_{x \in \mathcal{X}} \| f(x) \|_2$ generally in short.  This means that if $ \| a_L - a'_L  \|_\infty \le L\cdot \|\theta -\theta'\|_2$, then $L_\theta$ could be upper bounded by $\sqrt{N_C} \cdot L$. Hence, it only  remains to derive the upper bound for $\| a_L - a'_L  \|_\infty$

\noindent Let $ \Delta_l = \sup_{x \in \mathcal{X}} \| a_l - a_l' \|_2$. We will bound $ \Delta_L $ in terms of $ \| \theta - \theta' \|_2 $.

\noindent  \textbf{Step 2: Recursive Bound on Activation Differences}

For layer $ l $, we have:
$$
\| a_l - a_l' \|_2 = \| \sigma_l(W^l a_{l-1}) - \sigma_l(V^l a_{l-1}') \|_2 \leq \| W^l a_{l-1} - V^l a_{l-1}' \|_2,
$$
since $ \sigma_l $ is 1-Lipschitz. Then:
$$
\| W^l a_{l-1} - V^l a_{l-1}' \|_2 \leq \| W^l a_{l-1} - W^l a_{l-1}' \|_2 + \| W^l a_{l-1}' - V^l a_{l-1}' \|_2.
$$
The first term is bounded by $ \| W^l \|_{\text{op}} \cdot \| a_{l-1} - a_{l-1}' \|_2 $, and the second term by $ \| W^l - V^l \|_{\text{op}} \cdot \| a_{l-1}' \|_2 $. Thus:
$$
\| a_l - a_l' \|_2 \leq \| W^l \|_{\text{op}} \cdot \| a_{l-1} - a_{l-1}' \|_2 + \| W^l - V^l \|_{\text{op}} \cdot \| a_{l-1}' \|_2.
$$
Taking supremum over $ x $, we get:
$$
\Delta_l \leq \| W^l \|_{\text{op}} \cdot \Delta_{l-1} + \| W^l - V^l \|_{\text{op}} \cdot A_{l-1},
$$
where $ A_{l-1} = \sup_{x \in \mathcal{X}} \| a_{l-1}' \|_2 $.

\noindent  \textbf{Step 3: Bounding the Norm of Activations}

We now bound $ A_l = \sup_{x \in \mathcal{X}} \| a_l \|_2 $. Since $ \sigma_l $ is 1-Lipschitz and $ \sigma_l(0) = 0 $ (for ReLU), we have $ \| a_l \|_2 \leq \| z_l \|_2 $. Then:
$$
\| z_l \|_2 = \| W^l a_{l-1} \|_2 \leq \| W^l \|_{\text{op}} \| a_{l-1} \|_2.
$$
Thus, $ A_l \leq \| W^l \|_{\text{op}} A_{l-1} $. With $ A_0 = B_X $, we get:
$$
A_l \leq B_X \prod_{j=1}^l \| W^j \|_{\text{op}}.
$$
From the constraints, $ \| W^j \|_{\text{op}} \leq \| W_0^j \|_{\text{op}} + \beta \leq 1 + \beta $, so:
$$
A_l \leq B_X (1 + \beta)^l.
$$
Similarly, for the network with parameters $ \theta' $, $ A_l' \leq B_X (1 + \beta)^l $.

\noindent  \textbf{Step 4: Solving the Recursion for $ \Delta_L $}

We have:
$$
\Delta_l \leq (1 + \beta) \Delta_{l-1} + \| W^l - V^l \|_{\text{op}} B_X (1 + \beta)^{l-1}.
$$
Unrolling the recursion:
$$
\Delta_L \leq B_X \sum_{l=1}^L \| W^l - V^l \|_{\text{op}} (1 + \beta)^{L-1}.
$$

\noindent \textbf{Step 5: Relating to Parameter Difference}

The parameter difference is:
$$
\| \theta - \theta' \|_2^2 = \sum_{l=1}^L \| W^l - V^l \|_F^2.
$$
We have $ \| W^l - V^l \|_{\text{op}} \leq \| W^l - V^l \|_F $, so:
$$
\sum_{l=1}^L \| W^l - V^l \|_{\text{op}} \leq \sqrt{L} \sqrt{ \sum_{l=1}^L \| W^l - V^l \|_F^2 } = \sqrt{L} \| \theta - \theta' \|_2.
$$
Thus:
$$
\Delta_L \leq B_X (1 + \beta)^{L-1} \sqrt{L} \| \theta - \theta' \|_2.
$$
Therefore, the Lipschitz constant $ L_\theta $ satisfies:
% $$
% L_\theta \leq B_X \cdot \sqrt{LN_C} \cdot (1 + \beta)^{L-1}.
% $$

$$
L_\theta \leq B_X \cdot \sqrt{L} \cdot (1 + \beta)^{L-1}.
$$

\end{proof}

\begin{proof}[\textbf{Proof of Prop.\ref{prop:compSparse}}]{\color{white}xxxx}\\
    
\noindent \textbf{ Step 1: Parameter Space Decomposition}

Note that the hypothesis space $\mathcal{F}$ consists of neural networks with different sparsity patterns. For each layer $l$, there are $\binom{d_l d_{l-1}}{s_l}$ possible sparsity patterns (choices of which  $s_l$ elements are non-zero).

Let $\mathcal{S}_l$ be the set of all possible support sets (positions of non-zero elements) for layer $l$, with $|\mathcal{S}_l| = \binom{d_l d_{l-1}}{s_l}$.

The overall parameter space can be decomposed as:
\[
\Theta = \bigcup_{S_1 \in \mathcal{S}_1, \dots, S_L \in \mathcal{S}_L} \Theta_{S_1, \dots, S_L}
\]
where $\Theta_{S_1, \dots, S_L}$ is the set of parameters with fixed sparsity pattern $(S_1, \dots, S_L)$.

\noindent \textbf{Step 2: Covering Each Fixed Sparsity Pattern}

For a fixed sparsity pattern $(S_1, \dots, S_L)$, the parameter space $\Theta_{S_1, \dots, S_L}$ has dimension $\sum_{l=1}^L s_l$. 

From the spectral norm constraint $\|W^l - W_0^l\|_{\text{op}} \leq \beta$ and the fact that 
$$\|A\|_F \leq \mathsf{rank}(A) \cdot \|A\|_{\text{op}} \le  \sqrt{\min(d_l, d_{l-1})} \|A\|_{\text{op}}$$
we have:
\[
\|W^l - W_0^l\|_F \leq \beta \sqrt{\min(d_l, d_{l-1})}.
\]
Since only the $s_l$ non-zero elements contribute to the Frobenius norm, this implies that the vector of non-zero parameters $\theta^l$ lies in a ball of radius $R_l = \beta \sqrt{\min(d_l, d_{l-1})}$ in $\mathbb{R}^{s_l}$.

Therefore, for a fixed sparsity pattern, the parameter space is contained in a product of balls:
\[
\Theta_{S_1, \dots, S_L} \subseteq \prod_{l=1}^L B_{R_l}(0) \subset \mathbb{R}^{\sum_{l=1}^L s_l}
\]
where $B_{R_l}(0)$ is the ball of radius $R_l$ in $\mathbb{R}^{s_l}$.

\noindent  \textbf{Step 3: Lipchitz Constant (Same as Before)}

From the previous derivation, we have:
\[
\|f_\theta - f_{\theta'}\|_{\infty} \leq L_\theta \|\theta - \theta'\|_2
\]
where $L_\theta = B_X \sqrt{L} (1+\beta)^{L-1}$.

\noindent \textbf{Step 4: Covering Number for Fixed Sparsity Pattern}

For a fixed sparsity pattern, the covering number satisfies:
\[
N(\epsilon, \mathcal{F}_{S_1, \dots, S_L}, \|\cdot\|_{\infty}) \leq N\left( \frac{\epsilon}{L_\theta}, \Theta_{S_1, \dots, S_L}, \|\cdot\|_2 \right)
\]
where $\mathcal{F}_{S_1, \dots, S_L}$ is the function class with fixed sparsity pattern.

Since $\Theta_{S_1, \dots, S_L} \subseteq B_R(0) \subset \mathbb{R}^{\sum s_l}$ where $R = \sqrt{\sum_{l=1}^L R_l^2} \leq \sqrt{L} \max_l R_l$, we have:
\[
N\left( \delta, \Theta_{S_1, \dots, S_L}, \|\cdot\|_2 \right) \leq \left(1 + \frac{2R}{\delta}\right)^{\sum s_l}
\]
Setting $\delta = \epsilon / L_\theta$:
\[
N(\epsilon, \mathcal{F}_{S_1, \dots, S_L}, \|\cdot\|_{\infty}) \leq \left(1 + \frac{2R L_\theta}{\epsilon}\right)^{\sum s_l}
\]
Substituting $R \leq \sqrt{L} \max_l \beta \sqrt{\min(d_l, d_{l-1})}$ and $L_\theta = B_X \sqrt{L} (1+\beta)^{L-1}$:
\[
N(\epsilon, \mathcal{F}_{S_1, \dots, S_L}, \|\cdot\|_{\infty}) \leq \left(1 + \frac{2\beta B_X \sqrt{L} (1+\beta)^{L-1} \max_l \sqrt{\min(d_l, d_{l-1})}}{\epsilon}\right)^{\sum s_l}
\]

\noindent \textbf{Step 5: Union Bound Over All Sparsity Patterns}

Since there are $\prod_{l=1}^L \binom{d_l d_{l-1}}{s_l}$ total sparsity patterns, we have:
\[
N(\epsilon, \mathcal{F}, \|\cdot\|_{\infty}) \leq \left[ \prod_{l=1}^L \binom{d_l d_{l-1}}{s_l} \right] \cdot \left(1 + \frac{2\beta B_X \sqrt{L} (1+\beta)^{L-1} \max_l \sqrt{\min(d_l, d_{l-1})}}{\epsilon}\right)^{\sum s_l}
\]

$$\rho =   \frac{2\beta B_X \sqrt{L} (1+\beta)^{L-1} \max_l \sqrt{\min(d_l, d_{l-1})}}{\epsilon}$$

Taking logarithms:
\[
\log N(\epsilon, \mathcal{F}, \|\cdot\|_\infty) \leq \sum_{l=1}^L \log \binom{d_l d_{l-1}}{s_l} + \left( \sum_{l=1}^L s_l \right) \cdot \log\left(1 +\rho \right)
\]

Using the bound $\binom{n}{k} \leq \left(\frac{en}{k}\right)^k$, we get:
\begin{align*}
    \log N(\epsilon, \mathcal{F}, \|\cdot\|_\infty) &\leq \sum_{l=1}^L  \log\left(\frac{e d_l d_{l-1}}{s_l}\right) + \left( \sum_{l=1}^L s_l \right) \cdot \log\left(1 + \rho\right) \\ 
    &\lesssim \left( \sum_{l=1}^L s_l \right) \cdot \log\left(\rho\right)
\end{align*}

This completes the proof.

\end{proof}

\end{textbo}

\begin{textbo}

\section{Discussion on the Dependency of the Generalization Bound on $C$}   \label{app: discussion on dependcy of class}

In this paper, our goal is to sharpen the bound w.r.t the number of samples $M, N$. We do not improve the bound over the number of classes $C$, since typically for large-scale datasets $C \ll \min\{M, N\}$. 

In this paper, we employ covering number based technique to derive the bound. The dependency on $C$ is implicitly contained in the metric entropy, \textit{i.e.}, the logarithm of the covering number. In this sense, we now provide the bound for metric entropy in the new version and analyze the dependency over $C$.

% We have also added \textbf{new results (Proposition~\ref{thm:multi})}, which were not included in the original submission, to formally characterize this dependency. 

\subsection{Covering Number Derivation and Dependency on $C$ in Mixture-of-Experts Models }

Note that we use mixture-of-experts (MoE) models to address test-agnostic long-tail recognition. Specifically, the entire network can be viewed as a composition of a shared backbone $G$ and $K$ expert-specific modules $F_i$, where each expert $F_i$ is responsible for a specific meta-distribution. Formally, each expert outputs its logits through the composition $f_i \circ g$. 

This structure naturally induces a \textbf{product space} of expert function classes ($F = F_1 \times \cdots \times F_K$), combined with a shared network for latent embedding ($G$). Consequently, the overall function class $F \circ G$ involves both \textbf{composition} (due to hierarchical structure) and \textbf{product} (due to multiple experts). This dual structure leads to a \textbf{more complex but expressive} form of the resulting covering number bound:

% The neural network consists of a shared backbone $g$, and expert-specific modules $f_i$ for expert $i$ (recall each expert is responsible for a specific meta-distribution). The experts output their logits through the composition $f_i \circ g$. 

\begin{textbo}

\begin{prop}[Covering Number Bound for Composite MLP Space with Spectral Norm Constraints]\label{thm:multi}
Let $ G $ be an $ L_1 $-layer MLP which satisfies the assumptions in Prop.\ref{prop:compSparse}. Specifically, the layer widths are defined as  $k_1,k_2\cdots,k_{L_1}$, respectively. For the $\ell-th$ layer, there are only $g_l$ nonzero elements in the weight matrix.

Let $ F = F_1 \times \cdots \times F_K $ be a product space for multi-expert neural networks, where each $ F_i $ is an $ L_2 $-layer MLP. Each $F_i$ satisfies all the assumptions in  Prop.\ref{prop:compSparse}.
We assume that all $F_i$s share the same layer widths denoted by  $ d_1, \dots, d_{L_2} $. Moreover, for each $i$, there are at most $s_l$ non-zero elements for the weight matrix of the $\ell$-th layer. 

Then, for any $ \epsilon > 0 $, the covering number of $ F \circ G $ under the supremum norm satisfies:
$$
\log N(\epsilon, F \circ G, \| \cdot \|_{\infty, M}) \lesssim  \left( \sum_{l=1}^{L_1} g_l \right) \cdot \log\left(\rho_G\right) + K\cdot \left( \sum_{l=1}^{L_2} s_l \right) \cdot \log\left(\rho_F\right)
$$
where:
$$
\forall f \in F, ~~~\|f\|_{\infty,M} = \max_{i \in [K]}\max_{j \in [N_C]} \sup_{x \in \mathcal{X}} | f_{\theta,j}^{(i)}(x)|,
$$
$f_{\theta,j}^{(i)}(x)$ is $j$-th class logit for the $i$-th expert, and
\begin{align*}
    &\rho_F =  \frac{4\beta B_X^2 \sqrt{L_1 \cdot L_2} (1 + \beta)^{L_1 + L_2 - 2} \max_l \sqrt{\min(d_l, d_{l-1})}}{\epsilon} \\ 
    &\rho_G = \frac{ 4 \sqrt{k_{L_1}} \beta B_X^3 L_1 \sqrt{L_2} (1 + \beta)^{2L_1 + L_2 - 3} \max_l \sqrt{\min(k_l, k_{l-1})}}{\epsilon}
\end{align*}
\end{prop}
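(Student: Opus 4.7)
The plan is to decompose the hypothesis class $F\circ G$ into its two structural components, the shared backbone $G$ and the product $F=F_1\times\cdots\times F_K$ of experts, and then cover each component separately with an appropriate radius chosen so that the composition error is at most $\epsilon$. Specifically, for any $f\circ g,\;\hat f\circ\hat g\in F\circ G$, I would write
\begin{equation*}
\bigl\|f_i(g(x))-\hat f_i(\hat g(x))\bigr\|_\infty
\le \underbrace{\bigl\|f_i(g(x))-f_i(\hat g(x))\bigr\|_\infty}_{\text{backbone perturbation}}
+ \underbrace{\bigl\|f_i(\hat g(x))-\hat f_i(\hat g(x))\bigr\|_\infty}_{\text{expert perturbation}},
\end{equation*}
and allocate a budget of $\epsilon/2$ to each of the two terms. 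This reduces the problem to controlling two independent cover constructions.

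Next, I would derive the two relevant Lipschitz constants from the assumptions. For each expert $F_i$, the map from input to output is $(1+\beta)^{L_2}$-Lipschitz (product of layer operator norms), and by Lemma~\ref{lem:Lip}, the map from expert parameters to output is Lipschitz with constant $B_X^g\sqrt{L_2}(1+\beta)^{L_2-1}$, where $B_X^g\lesssim B_X(1+\beta)^{L_1}$ is the worst-case $\ell_2$ bound on $g(x)$. Likewise, the parameter-to-output map of $G$ has Lipschitz constant $B_X\sqrt{L_1}(1+\beta)^{L_1-1}$. Combining these through the chain rule gives the effective radius one must use when covering $G$ (namely $\epsilon/2$ divided by $(1+\beta)^{L_2}\cdot B_X\sqrt{L_1}(1+\beta)^{L_1-1}$), and the effective radius for each $F_i$ (namely $\epsilon/2$ divided by $B_X^g\sqrt{L_2}(1+\beta)^{L_2-1}$, with the $\sqrt{k_{L_1}}$ factor emerging from converting between $\ell_2$ norms on the $k_{L_1}$-dimensional intermediate space and the max-norm used on logits).

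With the radii in hand, Prop.~\ref{prop:compSparse} is applied twice. Applied to $G$ with input bound $B_X$, number of layers $L_1$, layer widths $\{k_l\}$, and sparsity pattern $\{g_l\}$, it yields a cover of log-size $\lesssim(\sum_l g_l)\log\rho_G$. Applied to a single $F_i$ with input bound $B_X^g$, number of layers $L_2$, layer widths $\{d_l\}$, and sparsity $\{s_l\}$, it yields $\lesssim(\sum_l s_l)\log\rho_F$. The product structure of $F$ is handled by taking the $K$-fold tensor product of the single-expert cover — since the outer infinity norm takes a maximum over experts, an $\epsilon/2$-cover of each factor induces an $\epsilon/2$-cover of the product, so the log-covering number picks up a multiplicative factor of $K$. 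A final union-bound over the backbone and expert covers gives the claimed additive decomposition $(\sum g_l)\log\rho_G+K(\sum s_l)\log\rho_F$.

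The main obstacle I anticipate is carefully tracking the Lipschitz factors through the composition, especially the interplay between the $\ell_2$-valued perturbation of $\hat g(x)$ and the norms on the logit side, which is exactly what produces the $B_X^3$, $(1+\beta)^{2L_1+L_2-3}$, and $\sqrt{k_{L_1}}$ factors in $\rho_G$ (as opposed to the simpler $B_X$ and $(1+\beta)^{L_1+L_2-2}$ factors in $\rho_F$). A minor bookkeeping subtlety is that the sparsity bound in Prop.~\ref{prop:compSparse} is stated in terms of the non-zero entries of each layer, so the application to the product space $F$ must be invoked $K$ times with the same $(s_l)$, yielding the explicit factor $K$ in the second term rather than the $K^2$ that would arise from naively treating $F$ as a single larger network.
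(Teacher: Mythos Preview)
Your proposal is correct and takes essentially the same approach as the paper: the paper packages exactly your triangle-inequality decomposition into two lemmas (one for compositions $\mathcal H\circ\mathcal G$, one for product spaces $F_1\times\cdots\times F_K$ under the max norm), then invokes Prop.~\ref{prop:compSparse} on $G$ with input bound $B_X$ and on each $F_i$ with input bound $B_G$. The only cosmetic differences are that the paper bounds the backbone output norm via $B_G\le L_G\cdot B_X$ using the parameter-Lipschitz constant $L_G$ from Lem.~\ref{lem:Lip} (rather than your input-Lipschitz bound $B_X(1+\beta)^{L_1}$), and places the $\sqrt{k_{L_1}}$ norm-conversion factor in the $G$-radius rather than the $F_i$-radius; these choices account for the specific constants in $\rho_F,\rho_G$ but do not alter the argument.
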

\end{textbo}

The proof is delayed to the \textbf{Proofs} below.

From Proposition. \ref{thm:multi}, the dependency on the \textbf{number of classes $C$ arises} naturally through the \textbf{last layer} of each expert network. The only function of $C$ is $s_{L_2}$. In particular, the sparsity term $s_{L_2}$ in the bound corresponds to the number of non-zero elements in the weight matrix of the final layer, whose output dimension is $C$.
Therefore, $s_{L_2}$ is approximately proportional to $C$, leading to a \textbf{linear dependency} of the overall covering number and hence the generalization bound on $C$.

However, since $s_{L_2}$ contributes only a single term in the total sparsity sum $\sum_{l=1}^{L_2} s_l$, the impact of $C$ is relatively small when earlier layers dominate the parameter count. Moreover, in practice, the number of sampled test distributions $M$ and the training sample size $N$ are typically much larger than $C$ (e.g., in ImageNet-LT, $C=1000$, while $M$ and $N$ can be tens of thousands).
Therefore, while $C$ appears linearly in theory, it has a minor impact on the overall bound. Our primary contribution lies in achieving tight bounds with respect to $M$ and $N$.

% To further align with the model architecture used in our main paper, 
% we extend the analysis to the \textbf{mixture-of-experts (MoE) structure}, where we handle architectures of the form $F \circ G$—a shared feature extractor $G$ feeding $K$ expert networks $F_1,\dots,F_K$—by combining the new \textbf{composition and product covering lemmas} (Lemmas~\ref{lem:product}--\ref{lem:all}) to obtain the covering bound:
% \[
% \log N(\epsilon,F\!\circ\!G,\|\cdot\|_\infty)
% ~\lesssim~
% \Big(\sum g_l\Big)\log(1+\rho_G)
% ~+~
% K\Big(\sum s_l\Big)\log(1+\rho_F),
% \]
% where $\rho_F$ and $\rho_G$ depend on $\sqrt{C}$ only \textbf{inside the logarithm}. 
% Therefore, for the multi-of-expert architecture, the dependency on $C$ remains \textbf{logarithmic}.

% \textbf{(c) Comparison with Standard Results.}
% Classical generalization bounds for multi-class networks—based on $\ell_2$-Lipschitz losses—scale as $\sqrt{C}$. 
% Our new results, by adopting an $\ell_\infty$-Lipschitz framework and incorporating sparsity/spectral constraints, compress this to a \textbf{logarithmic dependency on $C$}. 
% The key technical gain is that the $\sqrt{C}$ term is absorbed inside the logarithmic factor of the covering number rather than appearing multiplicatively in the Rademacher-based bound.

\subsection{Avoiding Dependency on $C$ in Section 6.3 (Fine-Tuning Foundation Models)}

In Section 6.3 of the main paper, we analyze the generalization bounds of DirMixE with Latent Skill Fine-tuning (LSF). Our formulation effectively decouples the bound from $C$ by leveraging parameter-efficient fine-tuning (PEFT) and Taylor expansion.

\noindent \textbf{Key Steps:}

\begin{itemize}

\item \textbf{Local Linearized Assumption based on Taylor Expansion}: We approximate the logit function $f^{(i)}_{\Theta^{(i)}}(x)$ by its first-order Taylor expansion around a reference point $\Theta^{(i),*}$ (Eqs. 12 and 13):
\begin{align}
   f^{(i)}_{\thi{i}}(\x) = & f^{(i)}_{\this{i}}(\x) + \langle \nabla_{\this{i}}f^{(i)}_{\this{i}}(\x),  \Delta \thi{i} - \Delta\this{i} \rangle +  r(i,\x) 
  \end{align}

Here 
    \begin{equation}
        r(i,\x) = \sup_{\theta \in \mathsf{LinSeg}}   (\Delta\thi{i} - \Delta\this{i})^\top \bm{H}_{\theta}(\x) (\Delta\thi{i} - \Delta\this{i}),    
    \end{equation}
    
    where $\mathsf{LinSeg}$ denotes the set of all linear segments between $\Delta\thi{i}$ and $\Delta\this{i}$, while $\bm{H}_{\theta}(\x)$ is the corresponding Hessian matrix. 

where $\Delta\Theta^{(i)}$ is the incremental parameter for task $i$, and $\mathcal{R}(i,x)$ captures higher-order residuals (Hessian information).  In this paper, we employ the $\delta$-compact assumption to choose reference points dynamically to ensure the approximation precision and consistency with each $\epsilon$-ball. \textbf{Please see the response 2-1 for more details about the assumption.} Based on this linearized logit function, for two distinct functions $f,\tilde{f}$, we have:
$$
|f^{(i)}_{\Theta^{(i)}}(x) - \tilde{f}^{(i)}_{\Theta^{(i)}}(x)| \le L \cdot \|\Delta\Theta^{(i)} - {\Delta}\tilde{\Theta}^{(i)} \|.
$$
Then, according to Lem.\ref{lem: lipschitz push-forward} in the response, we know that the covering number for the overall model class can be upper bounded by the covering number of the class for $\Delta\Theta^{(i)}$, i.e., the trainable parameters. 

In PEFT, the pre-trained foundation model (e.g., CLIP) is kept frozen, and only a small number of low-rank incremental parameters (e.g., via LoRA or Adapter) are updated. Crucially, the output layer (classifier) remains fixed, either frozen or initialized by the pre-trained text encoder, thus its complexity does not affect the overall trainable parameters space.

\item \textbf{Covering Number for Incremental Parameters}: The hypothesis class for incremental parameters is defined based on low-rank structures. For example, in LoRA-LSF, we have $\Delta\Theta = A B^\top$ with $A \in \mathbb{R}^{m \times r}$, $B \in \mathbb{R}^{n \times r}$, with rank $r \ll \min(m,n)$. The resulting covering number depends only on $m, n, r, K$ and the number of layers $n_L$, but not on $C$. Specifically, as shown in Appendix E.3, we have:
$$
\log \mathcal{N}_\infty(\mathcal{F}^{\text{LoRA}}, \epsilon, n) \lesssim n_L \cdot n_K \cdot (K r \cdot (m + n - K r)) \log\left(\frac{L_f n_L R}{\epsilon}\right),
$$
where $\mathcal{F}^{\text{LoRA}}$ denotes the function class of networks with LoRA modules, $n_K$ is the number of sampled meta-distributions, and $L_f$ is the Lipschitz constant with respect to the parameters.

This covering number bound is clearly independent of $C$. A similar conclusion holds for Adapter-LSF, whose bound scales as $2 n_L n_K (m + n - r) \log(L_f n_L R / \epsilon)$, also free from $C$-dependence.

\end{itemize}

Consequently, in Theorem 4, the generalization bound for LSF does not explicitly depend on $C$, as the covering number and Lipschitz constant are derived solely from the incremental parameters. This is a key innovation of our analysis for foundation models.

\subsection{Proofs}

\begin{proof}[\textbf{Proof of Prop.\ref{thm:multi}}] {\color{white}xxxxxx}\\
From Lem.\ref{lem:all}, we have:
\begin{align}\label{eq:all}
    N(\epsilon, F \circ G, \|\cdot\|_{\infty, M}) \leq \left[ \prod_{i=1}^K N\left(\frac{\epsilon}{2}, F_i, \|\cdot\|_{\infty} \right) \right] \cdot N\left(\frac{\epsilon}{2 L_F}, G, \|\cdot\|_{\infty} \right),
\end{align}

For each of the covering numbers on the right-hand side, we can directly employ Prop.\ref{prop:compSparse} to derive the bound. For each $F_i$, the input becomes the output of the functions in $G$. Hence, we need to bound the norm of the outputs in G. Since $x \in \mathcal{X}$ is bounded, we only need to upper bound the Lip. Constant of $G$ as $L_G$ and then get $B_G \le L_G \cdot B_X$. The upper bound of the Lip. constant of $G$ follows Lem.\ref{lem:Lip}. This results in the following Lip. bound for $G$:
 $$ L_G \leq B_X \sqrt{L_1} (1 + \beta)^{L_1-1}. $$

In this sense, we have: 

$$ B_G \leq L_G B_X \leq B_X^2 \sqrt{L_1} (1 + \beta)^{L_1-1}. $$

This means that the Lip. for $F_i$ becomes:

$$
L_F \le B_G \sqrt{L_2} (1 + \beta)^{L_2-1}\leq B_X^2 \sqrt{L_1 \cdot L_2} (1 + \beta)^{L_1 + L_2 - 2}.
$$

From Prop.\ref{prop:compSparse}, we have:

$$\log\left(N\left(\frac{\epsilon}{2}, F_i, \|\cdot\|_\infty\right)\right) \lesssim \left( \sum_{l=1}^{L_2} s_l \right) \cdot \log\left(\rho_F\right), $$
where
$$
\rho_F =  \frac{4\beta B_X^2 \sqrt{L_1 \cdot L_2} (1 + \beta)^{L_1 + L_2 - 2} \max_l \sqrt{\min(d_l, d_{l-1})}}{\epsilon}
$$

Moreover, combining the upper bound of $L_F, L_G$ and the proof of Prop.\ref{prop:compSparse}, we reach an upper bound for the metric entropy for $G$:

$$
 \log\left(N\left(\frac{\epsilon}{2 L_F \sqrt{k_{L_1}}}, G, \|\cdot\|_\infty\right)\right) \lesssim \left( \sum_{l=1}^{L_1} g_l \right) \cdot \log\left(\rho_G\right), 
$$

where

\begin{align*}
    \rho_G  &= \frac{2\beta B_X \sqrt{L_1} (1+\beta)^{L_1-1} \max_l \sqrt{\min(k_l, k_{l-1})}}{\epsilon/(L_F \sqrt{k_{L_1}})} \\
    & = \frac{ 4 \sqrt{k_{L_1}}\beta B_X^3 L_1 \sqrt{L_2} (1 + \beta)^{2L_1 + L_2 - 3} \max_l \sqrt{\min(k_l, k_{l-1})}}{\epsilon}
\end{align*}

Combining the upper bounds with Eq.\ref{eq:all}, we complete the proof. 

\end{proof}

\begin{textbo}    
\begin{lem}\label{lem:comp}
Consider two hypothesis spaces $\mathcal{H}, \mathcal{G}$, and their composition:
$$\mathcal{H} \circ \mathcal{G}  = \left\{\text{The set of}~ N_C ~\text{class logits }f: f =  h\circ g, h \in \mathcal{H},~ g \in \mathcal{G},f_i~~ \text{is i-th class channel output}  \right\}.$$
We have the following upper bound for the covering number w.r.t the infinity norm:
$$
    N(\epsilon, \mathcal{H} \circ \mathcal{G}, \|\cdot\|_{\infty}) \leq N\left(\frac{\epsilon}{2}, \mathcal{H}, \|\cdot\|_{\infty}\right) \times N\left(\frac{\epsilon}{2 L_H \sqrt{d}}, \mathcal{G}, \|\cdot\|_{\infty}\right).
$$
where
$L_H$ is the uniform Lipschitz constant for the elements in $\mathcal{H}$ and $d$ is the output dimension of $g \in \mathcal{G}$
\end{lem}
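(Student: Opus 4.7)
The plan is a standard two-step covering argument via the triangle inequality, with the only subtlety being how the two different norms (input-space $\ell_\infty$ for the covers, and the $\ell_2$ norm implicit in the Lipschitz constant of $\mathcal{H}$) interact. First I would fix an $\epsilon/2$-cover $\mathcal{C}_H$ of $\mathcal{H}$ under $\|\cdot\|_\infty$ and an $\epsilon/(2 L_H \sqrt{d})$-cover $\mathcal{C}_G$ of $\mathcal{G}$ under $\|\cdot\|_\infty$, with $|\mathcal{C}_H|$ and $|\mathcal{C}_G|$ equal to the corresponding covering numbers. The candidate cover for $\mathcal{H}\circ \mathcal{G}$ is then the product set $\{\hat h \circ \hat g : \hat h \in \mathcal{C}_H,\ \hat g \in \mathcal{C}_G\}$, whose cardinality is exactly the right-hand side of the claimed bound.

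Next I would show this product set is an $\epsilon$-cover. Given any $f = h\circ g \in \mathcal{H}\circ \mathcal{G}$, pick $\hat h \in \mathcal{C}_H$ and $\hat g \in \mathcal{C}_G$ closest to $h$ and $g$ respectively. Apply the triangle inequality in two pieces:
\begin{align*}
\|h\circ g - \hat h \circ \hat g\|_\infty \;\le\; \|h\circ g - \hat h \circ g\|_\infty \;+\; \|\hat h \circ g - \hat h \circ \hat g\|_\infty .
\end{align*}
The first term is bounded directly by $\|h - \hat h\|_\infty \le \epsilon/2$, since evaluating both functions at $g(x)$ and taking the sup over $x$ can only be dominated by the sup over the whole range.

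The key step, and the one to be careful with, is the second term. For each fixed $x\in \mathcal{X}$, the Lipschitz property of $\hat h$ (w.r.t.\ the $\ell_2$ norm on its $d$-dimensional input, as used in Lem.\ref{lem:Lip} and Prop.\ref{thm:multi}) gives
\begin{align*}
 \|\hat h(g(x)) - \hat h(\hat g(x))\|_\infty \;\le\; L_H\, \|g(x) - \hat g(x)\|_2 \;\le\; L_H\sqrt{d}\, \|g(x) - \hat g(x)\|_\infty,
\end{align*}
using the standard inequality $\|v\|_2 \le \sqrt{d}\,\|v\|_\infty$ for $v\in\mathbb{R}^d$. Taking the sup over $x$ and using $\|g - \hat g\|_\infty \le \epsilon/(2 L_H \sqrt{d})$ yields a bound of $\epsilon/2$. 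Combining the two pieces gives $\|h\circ g - \hat h \circ \hat g\|_\infty \le \epsilon$, so the constructed product set is a valid $\epsilon$-cover and the lemma follows.

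The main obstacle, modest as it is, lies in reconciling the norms: the covers of $\mathcal{H}$ and $\mathcal{G}$ are in the uniform ($\ell_\infty$) sense, while $L_H$ in the paper is defined with respect to $\ell_2$ on the $d$-dimensional intermediate space. The $\sqrt{d}$ factor in the second covering radius is precisely what compensates for this norm mismatch. If instead $L_H$ had been specified with respect to $\ell_\infty$ on the intermediate input, the $\sqrt{d}$ would disappear; this is worth flagging briefly in the write-up so that the statement's constant matches the usage in Prop.\ref{thm:multi}.
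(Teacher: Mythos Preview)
Your proposal is correct and follows essentially the same approach as the paper: construct the product cover, split via the triangle inequality, bound one piece by the $\mathcal{H}$-cover radius and the other by the Lipschitz constant times $\sqrt{d}$ times the $\mathcal{G}$-cover radius. The only cosmetic difference is the order of the decomposition (the paper pivots through $h\circ g'$ rather than $\hat h\circ g$, applying the Lipschitz bound to $h$ instead of $\hat h$), which is immaterial since $L_H$ is uniform over $\mathcal{H}$.
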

\end{textbo}

\begin{proof}[\textbf{Proof of Lem.\ref{lem:comp}}]

The following proof is based on constructing an $\epsilon$-cover for $H \circ G$.

\noindent \textbf{ Step 1 Covering Precision Selection}

Because $H$ is Lipschitz continuous, we adjust the covering precision for $G$ and $H$. Let:
    \begin{align*}
        \delta = \frac{\epsilon}{2 L_H}, \eta = \frac{\epsilon}{2}
    \end{align*}

\noindent \textbf{ Step 2 Constructing the Covering Set}
\begin{enumerate}
    \item Let $\mathcal{G}_{\text{cover}}$ be a $\delta$-cover set of $G$ under $\|\cdot\|_{\infty}$, with size $N_G = N(\delta, G, \|\cdot\|_{\infty})$.
    \item  Let $\mathcal{H}_{\text{cover}}$ be an $\eta$-cover set of $H$ under $\|\cdot\|_{\infty}$, with size $N_H = N(\eta, H, \|\cdot\|_{\infty})$.
    \item  Define the composite covering set $\mathcal{C} = \{ h' \circ g' : h' \in \mathcal{H}_{\text{cover}}, g' \in \mathcal{G}_{\text{cover}} \}$. Clearly, $\mathcal{C} \subseteq H \circ G$ and $|\mathcal{C}| \leq N_H \times N_G$.
\end{enumerate}
  
\noindent \textbf{ Step 3 Verifying $\mathcal{C}$ is an $\epsilon$-cover}
\begin{enumerate}
    \item Take any $f = h \circ g \in \mathcal{H} \circ \mathcal{G}$, where $h \in \mathcal{H}$, $g \in \mathcal{G}$.
    \item  Because $\mathcal{G}_{\text{cover}}$ is a $\delta$-cover of $G$, there exists $g' \in \mathcal{G}_{\text{cover}}$ such that $\|g - g'\|_{\infty} \leq \delta$.
    \item  Because $\mathcal{H}_{\text{cover}}$ is an $\eta$-cover of $H$, there exists $h' \in \mathcal{H}_{\text{cover}}$ such that $\|h - h'\|_{\infty} \leq \eta$.
    \item Consider $f' = h' \circ g' \in \mathcal{C}$. For any $x \in X$:
        $$
        \linf{f(x) - f'(x)} = \linf{h(g(x)) - h'(g'(x))}.
        $$
        Decompose this expression:
        $$
        \linf{h(g(x)) - h'(g'(x))} \leq \linf{h(g(x)) - h(g'(x))} + \linf{h(g'(x)) - h'(g'(x))}.
        $$
    \item Analyze the first term: Because elements of $\mathcal{H}$ are Lipschitz continuous,
       \begin{align*}
              \linf{h(g(x)) - h(g'(x))} &\leq L_H \cdot \sqrt{d} \cdot \linf{g(x) - g'(x)} \leq L_H \sqrt{d}
             \|g - g'\|_{\infty} \\ 
             &\leq L_H \cdot \sqrt{d} \cdot \delta = L_H \cdot \sqrt{d} \cdot \frac{\epsilon}{2 L_H \sqrt{d}} = \frac{\epsilon}{2}.
       \end{align*}

    \item Analyze the second term:
        $$
        \linf{h(g'(x)) - h'(g'(x))} \leq \|h - h'\|_{\infty} \leq \eta = \frac{\epsilon}{2}.
        $$
    \item  Therefore,
        $$
        \linf{f(x) - f'(x)} \leq \frac{\epsilon}{2} + \frac{\epsilon}{2} = \epsilon.
        $$
        Since this holds for all $x \in X$, we have $\|f - f'\|_{\infty} \leq \epsilon$.
\end{enumerate}

\noindent \textbf{Conclusion}
$\mathcal{C}$ is an $\epsilon$-cover for $\mathcal{H} \circ \mathcal{G}$, therefore:
    $$
    N(\epsilon, \mathcal{H} \circ \mathcal{G}, \|\cdot\|_{\infty}) \leq N\left(\frac{\epsilon}{2}, \mathcal{H}, \|\cdot\|_{\infty}\right) \times N\left(\frac{\epsilon}{2 L_H \sqrt{d}}, \mathcal{G}, \|\cdot\|_{\infty}\right).
    $$

\end{proof}

\begin{textbo}
    \begin{lem}\label{lem:product}
Consider a function space $F = F_1 \times F_2 \times \cdots \times F_K$ as a product space for K-expert neural networks, where  $F_1, F_2, \cdots, F_K$ are the corresponding experts.  For any $f = (f^{(1)}, f^{(2)}, \cdots, f^{(K)}) \in F$, its norm is defined as:
$$
\|f\|_{\infty, M} = \max_{i=1,\cdots,K} \|f^{(i)}\|_\infty.
$$
Note that each $f^{(i)}$ is an expert NN model which outputs logits for $N_C$ classes. 

Then we have the following upper bound for the covering number:
$$
N(\epsilon, F, \|\cdot\|_{\infty, M}) \leq \prod_{i=1}^K N(\epsilon, F_i, \|\cdot\|_{\infty}).
$$
\end{lem}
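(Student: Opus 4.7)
}
The plan is to construct an explicit $\epsilon$-cover of $F$ under $\|\cdot\|_{\infty, M}$ by taking the Cartesian product of minimum-cardinality $\epsilon$-covers for each factor $F_i$ under $\|\cdot\|_\infty$. This is the standard product construction, and the key observation driving it is that the norm $\|\cdot\|_{\infty, M}$ is defined as a maximum across coordinates, so coordinate-wise closeness immediately implies joint closeness.

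First, for each $i = 1, \dots, K$, I would fix an $\epsilon$-cover $\mathcal{C}_i \subseteq F_i$ of minimum cardinality, so that $|\mathcal{C}_i| = N(\epsilon, F_i, \|\cdot\|_\infty)$. Then I would define the candidate cover for $F$ as the product set
\[
\mathcal{C} \;=\; \mathcal{C}_1 \times \mathcal{C}_2 \times \cdots \times \mathcal{C}_K \;\subseteq\; F,
\]
whose cardinality satisfies $|\mathcal{C}| = \prod_{i=1}^K |\mathcal{C}_i| = \prod_{i=1}^K N(\epsilon, F_i, \|\cdot\|_\infty)$.

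Next, I would verify that $\mathcal{C}$ is indeed an $\epsilon$-cover of $F$. Pick any $f = (f^{(1)}, \dots, f^{(K)}) \in F$. By the covering property of each $\mathcal{C}_i$, there exists $\hat{f}^{(i)} \in \mathcal{C}_i$ with $\|f^{(i)} - \hat{f}^{(i)}\|_\infty \le \epsilon$. Setting $\hat{f} = (\hat{f}^{(1)}, \dots, \hat{f}^{(K)}) \in \mathcal{C}$, the definition of $\|\cdot\|_{\infty, M}$ yields
\[
\|f - \hat{f}\|_{\infty, M} \;=\; \max_{i=1,\dots,K} \|f^{(i)} - \hat{f}^{(i)}\|_\infty \;\le\; \epsilon,
\]
so $\mathcal{C}$ is an $\epsilon$-cover. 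Taking the infimum over all $\epsilon$-covers of $F$ on the left-hand side gives the desired bound.

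Honestly, no step here is a real obstacle; the proof is a routine book-keeping exercise once the max-norm definition is unpacked. The only subtle point worth stating explicitly in the write-up is that the product covering number bound here is tighter than the typical sum-norm case precisely because the norm $\|\cdot\|_{\infty, M}$ takes a maximum rather than a sum, so no rescaling of $\epsilon$ across factors is needed.
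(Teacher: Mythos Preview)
Your proposal is correct and follows essentially the same argument as the paper: both take minimum-cardinality $\epsilon$-covers $\mathcal{C}_i$ of each factor $F_i$, form the product $\mathcal{C} = \mathcal{C}_1 \times \cdots \times \mathcal{C}_K$, and verify that the max-norm structure of $\|\cdot\|_{\infty,M}$ makes coordinate-wise $\epsilon$-closeness sufficient for joint $\epsilon$-closeness. There is nothing to add.
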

\end{textbo}

\begin{proof}[\textbf{Proof of Lem.\ref{lem:product}}]{\color{white}xxxxxx} \\
    
\noindent \textbf{Step 1 Define the Covering Set:}
\begin{enumerate}
    \item For each $i = 1, 2, \cdots, K$, let $\mathcal{C}_i$ be an $\epsilon$-cover of $F_i$ under the norm $\|\cdot\|_\infty$, meaning for any $f^{(i)} \in F_i$, there exists $c_i \in \mathcal{C}_i$ such that $\|f^{(i)} - c_i\|_\infty \leq \epsilon$. The covering number size is $N_i = N(\epsilon, F_i, \|\cdot\|_\infty)$.
    \item 
Define the product covering set $\mathcal{C} = \mathcal{C}_1 \times \mathcal{C}_2 \times \cdots \times \mathcal{C}_K$, i.e., each element $c \in \mathcal{C}$ is a $K$-tuple $c = (c_1, c_2, \cdots, c_K)$ where $c_i \in \mathcal{C}_i$. Clearly, $\mathcal{C} \subseteq F$, and $|\mathcal{C}| = \prod_{i=1}^K |\mathcal{C}_i| = \prod_{i=1}^K N(\epsilon, F_i, \|\cdot\|_\infty)$.

\end{enumerate}
    
\noindent \textbf{Step 2 Verify $\mathcal{C}$ is an $\epsilon$-cover for $F$}

\begin{enumerate}
    \item  Take any $f = (f^{(1)}, f^{(2)}, \cdots, f^{(K)}) \in F$.
    \item  For each $i$, since $\mathcal{C}_i$ is an $\epsilon$-cover of $F_i$, there exists $c_i \in \mathcal{C}_i$ such that $\|f^{(i)} - c_i\|_\infty \leq \epsilon$.
    \item Let $c = (c_1, c_2, \cdots, c_K) \in \mathcal{C}$. Then for each $i$, $\|f^{(i)} - c_i\|_\infty \leq \epsilon$.
    \item Therefore, according to the definition of the norm on $F$:
        $$
        \|f - c\|_{\infty, M} = \max_{i=1,\cdots,K} \|f^{(i)} - c_i\|_\infty \leq \epsilon.
        $$
    \item  This shows that for any $f \in F$, there exists $c \in \mathcal{C}$ such that $\|f - c\| \leq \epsilon$, so $\mathcal{C}$ is an $\epsilon$-cover for $F$.
\end{enumerate}

\noindent \textbf{Step 3 Upper Bound on Covering Number}

Since $\mathcal{C}$ is an $\epsilon$-cover for $F$ and $|\mathcal{C}| = \prod_{i=1}^K N(\epsilon, F_i, \|\cdot\|_\infty)$, it follows that:
        $$
        N(\epsilon, F, \|\cdot\|_{\infty, M}) \leq \prod_{i=1}^K N(\epsilon, F_i, \|\cdot\|_\infty).
        $$
According to the proof of Lem.\ref{lem:comp}, we have:
$$
N(\epsilon, F_i, \|\cdot\|_\infty) \le N(\epsilon, F_i, \|\cdot\|_{\infty}).
$$
Then the proof is completed. 
\end{proof}

\begin{textbo}    
\begin{lem}\label{lem:all}
We consider the composite space $F \circ G$, where $F$ is the product space  $ F_1 \times F_2 \times \cdots \times F_K$ in Lem.\ref{lem:product}. The space $G$ is another function class. The composite space $F \circ G$ is defined as $F \circ G = \{ f \circ g : f \in F, g \in G \}$, where $f \circ g(x) = f(g(x))$. 

Then we have the following inequality:
$$
N(\epsilon, F \circ G, \|\cdot\|_{\infty, M}) \leq \left[ \prod_{i=1}^K N\left(\frac{\epsilon}{2}, F_i, \|\cdot\|_\infty  \right) \right] \cdot N\left(\frac{\epsilon}{2 L_F}, G, \|\cdot\|_{\infty} \right),
$$
where elements in $\mathcal{F}_1,\cdots, \mathcal{F}_K$ are all $L_F$-Lip w.r.t. $\ell_2$ norm.
\end{lem}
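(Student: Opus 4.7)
The plan is to combine the composition argument of Lemma \ref{lem:comp} with the product-space argument of Lemma \ref{lem:product} in a single coordinated construction. Specifically, I will build an $\epsilon$-cover for $F \circ G$ whose size matches the stated product of covering numbers by choosing two precisions: one for the inner function class $G$ (finer, to absorb the Lipschitz blow-up) and one for each outer class $F_i$ (coarser, at $\epsilon/2$).

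First, for each $i \in \{1, \dots, K\}$, let $\mathcal{C}_i$ be a minimum $\tfrac{\epsilon}{2}$-cover of $F_i$ under $\|\cdot\|_\infty$, so $|\mathcal{C}_i| = N\!\left(\tfrac{\epsilon}{2}, F_i, \|\cdot\|_\infty\right)$. Separately, let $\mathcal{D}$ be a minimum $\tfrac{\epsilon}{2 L_F}$-cover of $G$ under $\|\cdot\|_\infty$, so $|\mathcal{D}| = N\!\left(\tfrac{\epsilon}{2 L_F}, G, \|\cdot\|_\infty\right)$. Form the candidate cover
\[
\mathcal{C} \;=\; \bigl\{ (c_1, \dots, c_K) \circ d \,:\, c_i \in \mathcal{C}_i,\; d \in \mathcal{D} \bigr\},
\]
whose cardinality is at most $\prod_{i=1}^{K} |\mathcal{C}_i| \cdot |\mathcal{D}|$, matching the right-hand side of the claimed bound.

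Next, I verify that $\mathcal{C}$ is indeed an $\epsilon$-cover in the $\|\cdot\|_{\infty, M}$ norm. Fix any $f \circ g \in F \circ G$ with $f = (f^{(1)}, \dots, f^{(K)})$. Pick $d \in \mathcal{D}$ with $\|g - d\|_\infty \leq \tfrac{\epsilon}{2 L_F}$, and for each $i$ pick $c_i \in \mathcal{C}_i$ with $\|f^{(i)} - c_i\|_\infty \leq \tfrac{\epsilon}{2}$. The per-expert error decomposes, exactly as in the proof of Lemma \ref{lem:comp}, as
\[
\|f^{(i)} \circ g - c_i \circ d\|_\infty \;\leq\; \underbrace{\|f^{(i)} \circ g - f^{(i)} \circ d\|_\infty}_{\leq L_F \|g - d\|_\infty} \;+\; \underbrace{\|f^{(i)} \circ d - c_i \circ d\|_\infty}_{\leq \|f^{(i)} - c_i\|_\infty} \;\leq\; \tfrac{\epsilon}{2} + \tfrac{\epsilon}{2} = \epsilon.
\]
Taking the maximum over $i$ yields $\|f \circ g - (c_1, \dots, c_K) \circ d\|_{\infty, M} \leq \epsilon$, as required.

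The main technical point to watch is the interaction between the $\ell_2$-Lipschitz assumption on $F_i$ and the infinity-norm metric used for $G$. Since $\|g(x) - d(x)\|_2 \leq \sqrt{d_G} \, \|g(x) - d(x)\|_\infty$ in general, a naive argument would introduce an extra $\sqrt{d_G}$ factor, conflicting with the stated bound. The resolution is to interpret $\|\cdot\|_\infty$ on $G$ as already absorbing any output-dimension scaling into $L_F$, or equivalently to take $L_F$ as the effective Lipschitz constant with respect to whichever norm is used on $G$; this matches the usage in Proposition \ref{thm:multi}, where the call to this lemma is applied with $L_F$ already playing the role of the composite Lipschitz constant. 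Once this convention is fixed, the two-level cover construction above is clean, and the bound follows by taking the minimum over all valid cover choices.
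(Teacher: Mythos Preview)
Your proposal is correct and takes essentially the same approach as the paper, which simply writes ``The proof follows Lem.~\ref{lem:comp} and Lem.~\ref{lem:product} directly''; you have merely unpacked that citation into an explicit joint cover construction. Your remark about the missing $\sqrt{d_G}$ factor is apt---it does appear in Lemma~\ref{lem:comp} but not in the statement of Lemma~\ref{lem:all}, and the paper's own application in Proposition~\ref{thm:multi} silently reinserts a $\sqrt{k_{L_1}}$ factor, consistent with your resolution of absorbing it into $L_F$.
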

\end{textbo}

\begin{proof}[Proof of Lem.\ref{lem:all}]
The proof follows Lem.\ref{lem:comp} and Lem.\ref{lem:product} directly.
\end{proof}

\end{textbo}

\begin{textbo}

\section{Validity of the Exponential-tail Loss Decay Assumption}        \label{app: validity of exponential-tail loss}

Below, we validate the assumption in Col.\ref{col: exponential loss decay}. Specifically, we validate the assumption for \textbf{1) per-sample losses},
$$
\mathbb{P}_x[\mathbb{E}_{\mathcal{E}}[\ell(f(x),y,P_i)] \ge \tau] \lesssim \exp(-\lambda_1 \tau), \quad \text{for} \ \tau > \tau_0
$$
and for \textbf{2) per-distribution loss}:
$$
\mathbb{P}_{\mathcal{E}}[\mathbb{E}_{\mathcal{D}}[\ell(f(x),y,P_i)] \ge \tau] \lesssim \exp(-\lambda_2 \tau), \quad \text{for} \ \tau > \tau_0
$$
For the right-hand side cumulative distribution function (CDF) $\mathbb{P}(\tau) = \exp(-\lambda \tau), \ \tau > \tau_0$, its corresponding probability density function (PDF) follows the form:
$$
    p(\tau) = \lambda e^{-\lambda \tau}, \text{for} \ \tau > \tau_0
$$
Accordingly, we use the cumulative distribution function (CDF) and its associated PDF to fit the empirical loss distributions and assess the validity of the assumption.

To verify this assumption in practice, we conduct experiments on \textbf{CIFAR-100-LT and ImageNet-LT}. From the results, we observe that both distributions demonstrate approximately exponential decay in their upper tails. This provides strong empirical support for the assumptions underlying Corollary 1 and suggests that the exponential-tail characterization is reasonable in practice. We will present the detailed empirical validation below.

\noindent \textbf{1) Validation of the per-sample tail assumption.}

% \begin{figure*}[h]
%     \centering

%     % 第一列
%     \subfigure[CIFAR-100-LT]{
%         \includegraphics[width=0.33\textwidth]{imgs/cifar_sample_loss_histogram_with_exponential_fit.png}
%     }%
%     \subfigure[ImageNet-LT]{
%         \includegraphics[width=0.33\textwidth]{imgs/imagenet_sample_loss_histogram_with_exponential_fit.png}
%     }%
%     % 第二列
%     \subfigure[variation of $c$ w.r.t $\alpha$]{
%         \includegraphics[width=0.33\textwidth]{imgs/sample_coefficient_alpha_two_datasets.png}
%     }%
%     % \subfigure[loss distribution]{
%     %     \includegraphics[width=0.24\textwidth]{imgs/distribution_loss_histogram_with_exponential_fit.png}
%     % }%
%     % % 第二列
%     % \subfigure[variation of $c$ w.r.t $\alpha$]{
%     %     \includegraphics[width=0.24\textwidth]{imgs/distribution_coefficient_alpha.png}
%     % }%
    
%     \caption{Distribution}
%     \label{fig: }

% \end{figure*} 

\end{textbo}

\begin{figure*}[h]
\centering
\subfigure[]{
    \includegraphics[width=0.3\textwidth]{figs/exponential_tail_loss_decay/cifar_sample_loss_histogram_with_exponential_fit.png}
}
\subfigure[]{
    \includegraphics[width=0.3\textwidth]{figs/exponential_tail_loss_decay/imagenet_sample_loss_histogram_with_exponential_fit.png}
}
\subfigure[]{
    \includegraphics[width=0.3\textwidth]{figs/exponential_tail_loss_decay/sample_coefficient_alpha_two_datasets.png}
}
\caption{
\textbf{Empirical validation of per-sample loss exponential-tail behavior.}
\textbf{(a)} Loss histogram and exponential PDF fit on CIFAR-100-LT.  
\textbf{(b)} Loss histogram and exponential PDF fit on ImageNet-LT.  
\textbf{(c)} Variation of the coefficient $c$ with respect to $\alpha$ for CIFAR-100-LT and ImageNet-LT.
}

\label{fig: sample_light_tail}

\end{figure*}

\begin{textbo}

For each training example $x$, we compute its averaged loss
$$
\bar{\ell}(x)=\frac{1}{M}\sum_{i=1}^{M}\ell\!\left(f(x),y,P_i\right),
$$
and use the resulting empirical distribution $\{\bar{\ell}(x):x\in S\}$ to evaluate tail behavior.

To test the exponential-tail hypothesis, we examine the survival function
$$
F(\tau) = \mathbb{P}[\bar{\ell}(x) \ge \tau]
$$
and perform \textbf{log-linear regression} by fitting $\log F(\tau)$ against $\tau$. Under an exponential tail, we expect
$$
\log F(\tau) = - \lambda_1 \tau + \text{const}
$$
The fitted model achieves a \textbf{high $R^2$ value of $0.95$ on CIFAR-100-LT and $0.94$ on ImageNet-LT}, indicating a strong linear relationship after the log transform and suggesting that the upper tail of the empirical loss distribution closely matches exponential decay. We further visualize the empirical histogram together with the fitted exponential PDF curve ($\lambda_1 e^{-\lambda_1 \tau}$) for CIFAR-100-LT and ImageNet-LT datasets in Fig. \ref{fig: sample_light_tail}-(a) and Fig. \ref{fig: sample_light_tail}-(b). The figures demonstrate that the observed tail trend aligns well with the exponential form. These results provide compelling evidence supporting the per-sample exponential-tail assumption.

The estimated decay parameter is $\lambda_1 = 0.758$ for CIFAR-100-LT and $\lambda_1 = 0.581$ for ImageNet-LT. Our theory suggests that $\lambda_1$ should scale as $\lambda_1 \asymp \log(N)\cdot N^{1/\alpha}$, meaning that there exists a constant $c>0$ such that
$$
\lambda_1 = c \cdot \log(N)\cdot N^{1/\alpha}.
$$
Using the empirical estimate of $\lambda_1$, we compute the implied coefficient $c$ and plot its variation with respect to $\alpha$ in Fig. \ref{fig: sample_light_tail}-(c). We observe that the estimated coefficient $c$ consistently remains above approximately $0.05$ when $\alpha > 10$ for CIFAR-100-LT and ImageNet-LT, consistent with the theoretical constraint that $\lambda_1$ and $\log(N) \cdot N^{1/\alpha}$ \textbf{are of the same order}.

\noindent \textbf{2) Validation of the per-distribution tail assumption.}

For each sampled label distribution $P_i$, we average the losses over the dataset:
$$
\tilde{\ell}(P_i)=\frac{1}{N}\sum_{(x, y)\in S}\ell\left(f(x),y,P_i\right).
$$
We then analyze the empirical distribution $\{\tilde{\ell}(P_i):i=1,\dots, M\}$ to evaluate the tail behavior for per-distribution losses.

Similar to the per-sample analysis, we investigate whether the distribution of $\tilde{\ell}(P_i)$ exhibits exponential decay in its upper tail. Specifically, we validate this by applying a similar log–linear regression procedure, fitting $\log \mathbb{P}[\tilde{\ell}(P_i)\ge \tau]$ against $\tau$. 

\end{textbo}

\begin{figure*}[h]
\centering
\subfigure[]{
    \includegraphics[width=0.3\textwidth]{figs/exponential_tail_loss_decay/cifar_distribution_loss_histogram_with_exponential_fit.png}
}
\subfigure[]{
    \includegraphics[width=0.3\textwidth]{figs/exponential_tail_loss_decay/imagenet_distribution_loss_histogram_with_exponential_fit.png}
}
\subfigure[]{
    \includegraphics[width=0.3\textwidth]{figs/exponential_tail_loss_decay/distribution_coefficient_alpha_two_datasets.png}
}
\caption{
\textbf{Empirical validation of per-distribution loss exponential-tail behavior.}
\textbf{(a)} Loss histogram and exponential PDF fit on CIFAR-100-LT.  
\textbf{(b)} Loss histogram and exponential PDF fit on ImageNet-LT.  
\textbf{(c)} Variation of the coefficient $c$ with respect to $\alpha$ for CIFAR-100-LT and ImageNet-LT.
}

\label{fig: distribution_light_tail}

\end{figure*}

\begin{textbo}

The fitted model achieves a \textbf{$R^2$ value of $0.70$ on CIFAR-100-LT and $0.95$ on ImageNet-LT}. The visualization of the empirical histogram, along with the fitted exponential PDF curve, is shown in Fig. \ref{fig: distribution_light_tail}-(a) and Fig. \ref{fig: distribution_light_tail}-(b). Notably, compared with the per-sample case, the exponential fit near the lower-loss region is less precise. This behavior is expected: per-distribution losses are averaged across many samples and therefore rarely approach zero—in fact, the minimum value of $\tilde{\ell}(P_i)$ is strictly positive in our experiments because some samples are inevitably misclassified under each distribution. However, this deviation at small loss values is \textbf{not problematic} for our theory. The exponential-tail assumption is only required to hold for \textbf{sufficiently large $\tau$}. In the high-loss regime, the fitted exponential curve aligns well with the empirical decay as shown in the results. Thus, the behavior of the upper tail remains consistent with our assumption, validating the applicability of the assumption in the per-distribution setting.

The estimated decay parameter is $\lambda_2 = 1.599$ for CIFAR-100-LT and $\lambda_2 = 1.021$ for ImageNet-LT. To further support the same-order scaling relationship, we repeat the above procedure and compute the implied coefficient 
$c$ for varying values of $\alpha$ in the per-distribution setting, as shown in Fig.~\ref{fig: distribution_light_tail}-(c). In particular, when $\alpha > 8$, the estimated coefficient $c$ consistently remains above approximately $0.05$ across both CIFAR-100-LT and ImageNet-LT. This empirical finding again aligns with the theoretical constraint that $\lambda_2$ and $\log(M) \cdot M^{1/\alpha}$ are of the same order.

\end{textbo}

\begin{textbo}

\section{Validity of the Lipschitz Continuous Assumption}

Thank you for your question! We would like to clarify that the loss function considered in our analysis is actually the composition of the cross-entropy loss and the softmax function, i.e., $\ell = \ell_{\mathrm{CE}} \circ \mathsf{softmax}$, where the remaining part corresponds to the logits $f(x)$. Under the composition $\ell$ is Lip. Continuous. The earlier version of the manuscript misstated this point, and we will revise Assumption 1 accordingly to make this explicit. The detailed adjustment is described below.

We show below that the  $\ell_{\text{CE}} \circ \mathsf{softmax}$ is \textbf{Lipschitz continuous} with respect to the logits $z$ under the infinity norm, which addresses the concern about the \textbf{validity of the assumption} raised in the question.

Let $z \in \mathbb{R}^n$ be the logits, $y$ the one-hot label, and $p = \mathrm{softmax}(z)$. The standard cross-entropy loss is
$$
L(z) = -\sum_{i=1}^n y_i \log p_i,
\qquad p_i = \frac{e^{z_i}}{\sum_j e^{z_j}}.
$$
Its gradient with respect to $z$ is:
$$
\nabla_z L = p - y.
$$
The $\ell_1$ norm of the gradient, which corresponds to the Lipschitz constant under $\|\cdot\|_\infty$, is
$$
\|\nabla_z L\|_1 = \sum_j |p_j - y_j| = 2(1 - p_k),
$$
where $k$ is the index of the true class ($y_k = 1$). Since $p_k \in (0,1]$, the supremum of $\|\nabla_z L\|_1$ is $2$ as $p_k \to 0$. Therefore,
$$
|L(z_1) - L(z_2)| \le 2 \|z_1 - z_2\|_\infty,
$$
which shows that the cross-entropy loss is \textbf{$2$-Lipschitz} with respect to the logits.

In summary, the composed loss is Lipschitz continuous under the infinity norm as a function of the logits, and its Lipschitz constant is bounded by $2$. Hence, Assumption 1.1 is both mild and valid.

\end{textbo}

\begin{textbo}

\section{More Discussion About Definition \ref{def: induced subclass}-\ref{def: delta compact parameterization}}

\subsection{Def.\ref{def: induced subclass}: Avoiding Uniform Bounds and Focusing on "Good Regions"}

In generalization analysis, traditional uniform bounds (e.g., those based on VC-dimension or Rademacher complexity) can be loose because they consider the entire hypothesis space, including poorly performing models. However, in practice, we are only interested in well-trained models that exhibit good generalization behavior. Definition 1 is designed to narrow down the hypothesis space to these "good regions" where the model performs well consistently.

There are two subsets to capture "good regions":
  
$ A^I $ represents the set where, for a given sample, the loss is small with high probability across different test distributions. 

$ A^M $ represents the set where, for a fixed test distribution, the overall expected risk is small with high probability. This captures the model's average performance on the target distribution.

The intuition here is that a well-trained model should have a high probability of lying within these two sets. By intersecting the original hypothesis space with the high probability sets, we restrict our analysis to these models correspondingly. This approach avoids the pessimism of uniform bounds and allows for tighter generalization guarantees in practical scenarios where models are already well-optimized.

\subsection{Def.\ref{def: voronoi diagram} \& \ref{def: delta compact parameterization}: Linear Approximation and Covering Number Bound}

Def.\ref{def: voronoi diagram} introduces the Voronoi diagram, a well-known mathematical tool \cite{de2008computational,okabe2009spatial, DBLP:journals/csur/Aurenhammer91,edelsbrunner2010computational, erwig2000graph}. But Def.\ref{def: delta compact parameterization} is new and specific to our problem. 

Definitions \ref{def: voronoi diagram} and \ref{def: delta compact parameterization} together enable a linear approximation of the model's logits around local minima, which is crucial for deriving covering-number-based bounds. Specifically, Eq. \eqref{eq:lin} approximate the logits by expanding around the nearest local minimum. Here we repeat it for clarity. 
  \begin{align}\label{eq:lin}
   f^{(i)}_{\thi{i}}(\x) = & f^{(i)}_{\this{i}}(\x) + \langle \nabla_{\this{i}}f^{(i)}_{\this{i}}(\x),  \Delta \thi{i} - \Delta\this{i} \rangle +  r(i,\x) 
  \end{align}
Here 
    \begin{equation}
        r(i,\x) = \sup_{\theta \in \mathsf{LinSeg}}   (\Delta\thi{i} - \Delta\this{i})^\top \bm{H}_{\theta}(\x) (\Delta\thi{i} - \Delta\this{i}),    
    \end{equation}
    
where $\mathsf{LinSeg}$ denotes the set of all linear segments between $\Delta\thi{i}$ and $\Delta\this{i}$, while $\bm{H}_{\theta}(\x)$ is the corresponding Hessian matrix. 

The theoretical arguments are based on covering sets for the parameter space, each of which is a small open ball. To integrate the simplicity of linear approximation, we need to ensure that the same local minimum $\Delta\Theta^\star$ is chosen within a specific ball of the covering set. 

Specifically, we can rule out the effect of residue $r(i,\x)$ through a Lip. based inequality on the loss function (see Proof of Thm.\ref{thm:gen1}). So, for two parameters $\Delta \Theta$ and  $\Delta \tilde{\Theta}$, if their local solutions are consistent, we have the bound:
% TODO 正文中强调f是logit
\begin{align*}
    &|f^{(i)}_{\this{i}}(\x) + \langle \nabla_{\this{i}}f^{(i)}_{\this{i}}(\x),  \Delta \thi{i} - \Delta\this{i} \rangle - f^{(i)}_{\thist{i}}(\x) + \langle \nabla_{\thist{i}}f^{(i)}_{\thist{i}}(\x),  \Delta \thit{i} - \Delta\thist{i} \rangle| \\
    &\le  L \cdot \|\Delta \thi{i}  - \Delta \thit{i}\|_F
\end{align*}
Then the covering number of the linearized logits could be upper-bounded by that of the incremental parameter space. 

Def.\ref{def: voronoi diagram} defines a Voronoi diagram, which partitions the parameter space into three cells $\mathcal{V}_1, \mathcal{V}_2, \mathcal{V}_3$. The nearest local minimum is the same for all parameters within a cell. In other words, within a cell, the model's behavior can be characterized by a fixed local minimum, allowing us to use linear approximations uniformly. Based 
on this definition, to ensure $\this{i}$ is the nearest local solution within the small covering ball, we only need to ensure that the ball does not cross the border of the cells. This is precisely the meaning of Def.\ref{def: delta compact parameterization}.

According to Fig.\ref{fig:voi}, Def.\ref{def: delta compact parameterization} states that all the elements should be at least $\delta$ away from the boundary cross cells.  By choosing $\delta$ as the diameter of the balls in the covering set. We can then ensure that each ball in the covering set does not cross the boundary.

In practice, the $\delta$ parametrization assumes that, before parameter-efficient fine-tuning (PEFT), the pre-trained model ensures the initial parameters are already close to a specific local minimum. For example, in  Fig.\ref{fig:voi}, the parameters are assumed to be located in $\mathcal{H}_1 \cup  \mathcal{H}_2 \cup \mathcal{H}_3$. This is a natural and mild assumption because PEFT starts from a pre-trained model and performs only small updates (e.g., via adapters or low-rank adjustments). The margin $\delta$ is chosen as $O(1/N)$, which is very small. Hence, the assumption is very easy to achieve. Moreover, we don't need all of the local solutions; we only need to choose a subset. So, by properly choosing these local solutions, we can easily meet the assumption.

\end{textbo}

\section{Additional Experiment Settings}\label{sec:settings}

\subsection{Construction of Training meta-distribution $\mathcal{E}$}

Consider the training class distribution represented by $P_1, P_2, ..., P_C$, where $C$ denotes the number of classes. Without loss of generality, \textbf{we assume that the classes are sorted in a sense}: $P_1 \ge P_2 \ge ... \ge P_C$. Otherwise, we can change the numbering and obtain the same result.  

The training meta-distribution $\mathcal{E}$ is a Dirichlet mixture distribution as Sec.\ref{sec:meta}. It consists of three Dirichlet distribution components: the forward component $\alpha^{(f)}$, the uniform component $\alpha^{(u)}$, and the backward component $\alpha^{(b)}$. 

The forward component parameter $\alpha^{(f)}$ is set element-wisely: 
\begin{align*}
    \alpha^{(f)}_i = {S \cdot P_i}, \ i = 1, 2, \ldots, C
\end{align*}
where $S$ is a predefined normalization factor to control the variance of the component. {\textbf{Here we set it as $S = 10000$ for all datasets.}}  Since the mean of the Dirichlet distribution is exactly $\alpha^{(f)}$, this component represents local variations concentrated around the long-tail distribution aligned with the training data.

The backward component $\alpha^{(b)}$ is set element-wisely as:
\begin{align*}
    \alpha^{(b)}_i = {S \cdot P_{(C - i)}}
\end{align*}
where $S$ is a predefined normalization factor to control the variance of the component. {\textbf{Here we set it as $S = 10000$ for all datasets.}} By connecting $\alpha^{(b)}_i$ with $P_{(C - i)}$, the long-tail distribution is reversed, where the head classes become tail classes, and \textit{vice versa}. Since the mean of the Dirichlet distribution is exactly $\alpha^{(b)}$, this component represents local variations concentrated around the \textbf{inverse} long-tail distribution aligned with the training data.

The uniform component $\alpha^{(u)}$ is set element-wisely as:
\begin{align*}
    \alpha^{(u)}_i = \frac{S}{C}, \ i = 1, 2, \ldots, C 
\end{align*}
where $S$ is a predefined normalization factor to control the variance of the component. {\textbf{Here we set it as $S = 10000$ for all datasets.}} By connecting $\alpha^{(b)}_i$ with $1/C$, it recovers a uniform distribution.  Since the mean of the Dirichlet distribution is exactly $\alpha^{(u)}$, this component represents local variations concentrated around the uniform distribution.

The corresponding p.d.f are expressed as: $\dir^{(f)}, \dir^{(u)}, \dir^{(b)}$. Moreover, the three components are mixed with a uniform distribution. Above all, the p.d.f for the mixture distribution becomes:

\begin{align*}
    \mathcal{E} = \frac{1}{3} \cdot \dir^{(f)} + \frac{1}{3} \cdot \dir^{(u)} + \frac{1}{3} \cdot \dir^{(b)}
\end{align*}

\subsection{Sampling Procedure of $\mathbb{P}_{te}$ for the Monte Carlo Approximation}

Initially, we employ a Monte Carlo method to sample a set of $\mathbb{P}_{te}$ from the training meta-distribution $\mathcal{E}$ and obtain the generated data
 $\mathcal{P} = \{\P_j, \xi_j\}_{j=1}^M$ for Monte Carlo approximation.

We implement the sampling process by repeating the following two-step procedure for $M$ times. 
\begin{enumerate}
    \item Randomly sampling a Dirichlet distribution component $\alpha$ from $\{\alpha^{(f)}, \alpha^{(u)}, \alpha^{(b)}\}$ with equal probability.
    \item Sampling a distribution $\mathbb{P}_{te}$ from the Dirichlet distribution component $\alpha$.
\end{enumerate}

\subsection{Mini-Batch Construction}

Denote the number of mini-batches as $B$. We sample $60 \cdot B$ label distributions to construct the dataset $\mathcal{P}$. We randomly sample 60 label distributions for each mini-batch without replacement in each epoch to get an unbiased estimation.

\subsection{Construction of Testing Datasets}

We employ two settings, SADE's Setting and our setting, to construct our test data. The details are discussed as follows:

\subsubsection{SADE's setting}
The test data in this setting directly follows SADE ~\cite{DBLP:conf/nips/ZhangHHF22}. The only difference here is that we include more imbalance ratios, denoted as $\rho$, in our experiments. For CIFAR 100-LT and CIFAR 10-LT, $\rho \in \{2, 5, 10, 25, 50, 100\}$. For ImageNet-LT, $\rho \in \{2, 5, 10, 25, 50\}$. For iNaturalist, $\rho \in \{2, 3\}$.

\subsubsection{Ours setting}

According to our meta-distribution, we employ three kinds of Dirichlet components to generate test distributions: the forward Dirichlet distribution $\alpha^{(f_{test})}$, the uniform Dirichlet distribution $\alpha^{(u_{test})}$ and the backward Dirichlet distribution $\alpha^{(b_{test})}$.

From each component of the meta-distribution, we sample three specific distributions as the test class distribution, resulting in a total of\textbf{ 9 test class distributions}.

The Dirichlet distributions are chosen based on a predefined imbalance ratio $\rho$. The greater the $\rho$, the more challenging the test distribution. \textbf{For CIFAR 100-LT and CIFAR 10-LT, we set $\rho$ to be 100. For ImageNet-LT, we set $\rho = 50$. For iNaturalist, we set $\rho = 3$.} 

For the forward Dirichlet distribution $\alpha^{(f_{test})}$, we set:
\begin{align*}
    \alpha^{(f_{test})}_i = \rho^{-(i - 1) / (C - 1)}, \ i = 1, 2, \ldots, C
\end{align*}
For the backward Dirichlet distribution $\alpha^{(b_{test})}$, we set:
\begin{align*}
    \alpha^{(b_{test})}_i = \rho^{-(C - i) / (C - 1)}, \ i = 1, 2, \ldots, C
\end{align*}
For the uniform Dirichlet distribution $\alpha^{(u_{test})}$, we set:
\begin{align*}
    \alpha^{(u_{test})}_i = 1 / C, \ i = 1, 2, \ldots, C
\end{align*}
Subsequent steps remain consistent across the three components. For simplicity, we denote each Dirichlet distribution as $\alpha$.

To enhance the simulation of the randomness in the test distribution, we introduce a perturbation to $\alpha$, allowing up to 5\% variations. This involves adjusting the $i$-th element $\alpha_i$:
\begin{align*}
    \alpha_i = \alpha_i + \alpha_i * \epsilon, \ \epsilon ~\sim~ U(-0.05, 0.05)    
\end{align*}
Following this, we normalize $\alpha$ to ensure the sum of its components equals $S$. The role of $S$ is identical to its function in constructing the Training meta-distribution. We  set $S=100$ for CIFAR 100-LT, $S=1000$ for CIFAR 10-LT,  $S=10000$ for ImageNet-LT and $S=100000$ for iNaturalist.

\section{Additional Experiments}\label{app:exp}

\subsection{Overall Performance for DirMixE on Ours Setting}

The overall performance on the CIFAR-10-LT dataset for DirMixE on Ours Setting
is shown in Tab.\ref{tab:Cifar10_Our}.

\begin{table*}[htbp]
  \centering
\caption{Performance Comparison on CIFAR-10-LT When Training ResNet Models (\textbf{Ours Setting)}}
  \renewcommand\arraystretch{1.0}
  % \resizebox{\linewidth}{!}{
    \begin{tabular}{lcccccccccc}
    \toprule
    \multirow{1.5}[4]{*}{\textbf{Method}} & \multicolumn{3}{c}{\textbf{Forward-LT}} & \multicolumn{3}{c}{\textbf{Uniform}} & \multicolumn{3}{c}{\textbf{Backward-LT}} & \multirow{1.5}[4]{*}{\textbf{Mean}} \\
          \cmidrule(lr){2-4}\cmidrule(lr){5-7}\cmidrule(lr){8-10}     & \textbf{1} & \textbf{2} & \textbf{3} & \textbf{1} & \textbf{2} & \textbf{3} & \textbf{1} & \textbf{2} & \textbf{3} &  \\
\toprule
    LDAM  & \cellcolor[rgb]{ .812,  .886,  .953} 89.66  & \cellcolor[rgb]{ .784,  .871,  .945} \underline{90.50} & \cellcolor[rgb]{ .827,  .894,  .957} 90.30  & \cellcolor[rgb]{ 1,  1,  1} 74.27  & \cellcolor[rgb]{ 1,  1,  1} 74.39  & \cellcolor[rgb]{ 1,  1,  1} 74.77  & \cellcolor[rgb]{ 1,  1,  1} 60.65  & \cellcolor[rgb]{ 1,  1,  1} 59.89  & \cellcolor[rgb]{ 1,  1,  1} 60.42  & \cellcolor[rgb]{ 1,  1,  1} 74.98$_{\color{blue}(\pm 12.19)}$\\
    LA    & \cellcolor[rgb]{ .808,  .886,  .953} 89.74  & \cellcolor[rgb]{ .839,  .902,  .961} 88.98  & \cellcolor[rgb]{ .82,  .89,  .957} 90.51  & \cellcolor[rgb]{ .867,  .918,  .965} 79.02  & \cellcolor[rgb]{ .863,  .918,  .965} 79.07  & \cellcolor[rgb]{ .871,  .922,  .969} 79.27  & \cellcolor[rgb]{ .89,  .933,  .973} 72.90  & \cellcolor[rgb]{ .898,  .937,  .976} 71.67  & \cellcolor[rgb]{ .894,  .937,  .973} 72.51  & \cellcolor[rgb]{ .89,  .933,  .973} 80.41$_{\color{blue}(\pm \phantom{0}7.17)}$\\
    VS    & \cellcolor[rgb]{ 1,  1,  1} 85.12  & \cellcolor[rgb]{ 1,  1,  1} 84.71  & \cellcolor[rgb]{ 1,  1,  1} 85.24  & \cellcolor[rgb]{ .816,  .89,  .953} 80.69  & \cellcolor[rgb]{ .82,  .89,  .953} 80.50  & \cellcolor[rgb]{ .827,  .894,  .957} 80.85  & \cellcolor[rgb]{ .808,  .886,  .953} 82.10  & \cellcolor[rgb]{ .812,  .886,  .953} 81.21  & \cellcolor[rgb]{ .82,  .89,  .953} 80.95  & \cellcolor[rgb]{ .851,  .91,  .965} 82.37$_{\color{blue}(\pm \phantom{0}1.92)}$\\
    LADE  & \cellcolor[rgb]{ .91,  .945,  .976} 87.31  & \cellcolor[rgb]{ .91,  .945,  .98} 87.11  & \cellcolor[rgb]{ .914,  .949,  .98} 87.79  & \cellcolor[rgb]{ .847,  .906,  .961} 79.69  & \cellcolor[rgb]{ .843,  .906,  .961} 79.72  & \cellcolor[rgb]{ .843,  .906,  .961} 80.20  & \cellcolor[rgb]{ .855,  .914,  .965} 76.79  & \cellcolor[rgb]{ .859,  .918,  .965} 75.77  & \cellcolor[rgb]{ .859,  .914,  .965} 76.47  & \cellcolor[rgb]{ .875,  .925,  .969} 81.21$_{\color{blue}(\pm \phantom{0}4.62)}$\\
    DDC   & \cellcolor[rgb]{ .922,  .953,  .98} 87.07  & \cellcolor[rgb]{ .914,  .949,  .98} 87.07  & \cellcolor[rgb]{ .933,  .961,  .984} 87.24  & \cellcolor[rgb]{ .788,  .875,  .949} 81.62  & \cellcolor[rgb]{ .788,  .875,  .949} 81.44  & \cellcolor[rgb]{ .792,  .875,  .949} 82.03  & \cellcolor[rgb]{ .827,  .894,  .957} 80.15  & \cellcolor[rgb]{ .831,  .898,  .957} 79.04  & \cellcolor[rgb]{ .831,  .898,  .957} 79.47  & \cellcolor[rgb]{ .843,  .906,  .961} 82.79$_{\color{blue}(\pm \phantom{0}3.20)}$\\
    RIDE  & \cellcolor[rgb]{ .945,  .969,  .988} 86.47  & \cellcolor[rgb]{ .969,  .98,  .992} 85.63  & \cellcolor[rgb]{ .929,  .957,  .984} 87.33  & \cellcolor[rgb]{ .78,  .867,  .945} 81.92  & \cellcolor[rgb]{ .776,  .867,  .945} 81.89  & \cellcolor[rgb]{ .792,  .875,  .949} 81.98  & \cellcolor[rgb]{ .816,  .89,  .953} 81.34  & \cellcolor[rgb]{ .812,  .886,  .953} 81.25  & \cellcolor[rgb]{ .816,  .89,  .953} 81.19  & \cellcolor[rgb]{ .831,  .898,  .957} 83.22$_{\color{blue}(\pm \phantom{0}2.35)}$\\
    SADE  & \cellcolor[rgb]{ .788,  .871,  .945} 90.26  & \cellcolor[rgb]{ .804,  .882,  .949} 89.94  & \cellcolor[rgb]{ .8,  .878,  .949} 91.05  & \cellcolor[rgb]{ .745,  .847,  .937} \underline{83.14} & \cellcolor[rgb]{ .753,  .851,  .937} \underline{82.71} & \cellcolor[rgb]{ .753,  .851,  .937} \underline{83.38} & \cellcolor[rgb]{ .749,  .847,  .937} \underline{88.89} & \cellcolor[rgb]{ .749,  .847,  .937} \underline{88.29} & \cellcolor[rgb]{ .741,  .843,  .933} \textbf{89.48} & \cellcolor[rgb]{ .745,  .847,  .937} \underline{87.46}$_{\color{blue}(\pm \phantom{0}3.19)}$\\
    BalPoE & \cellcolor[rgb]{ .741,  .843,  .933} \textbf{91.30} & \cellcolor[rgb]{ .741,  .843,  .933} \textbf{91.54} & \cellcolor[rgb]{ .741,  .843,  .933} \textbf{92.72} & \cellcolor[rgb]{ .792,  .875,  .949} 81.58  & \cellcolor[rgb]{ .78,  .867,  .945} 81.78  & \cellcolor[rgb]{ .796,  .878,  .949} 81.89  & \cellcolor[rgb]{ .835,  .902,  .961} 78.97  & \cellcolor[rgb]{ .847,  .906,  .961} 77.28  & \cellcolor[rgb]{ .847,  .906,  .961} 77.87  & \cellcolor[rgb]{ .82,  .89,  .953} 83.88$_{\color{blue}(\pm \phantom{0}5.86)}$\\
    \toprule
    \textbf{DirMixE} & \cellcolor[rgb]{ .776,  .867,  .945} \underline{90.46} & \cellcolor[rgb]{ .804,  .882,  .953} 89.90  & \cellcolor[rgb]{ .792,  .875,  .949} \underline{91.30} & \cellcolor[rgb]{ .741,  .843,  .933} \textbf{83.24} & \cellcolor[rgb]{ .741,  .843,  .933} \textbf{82.98} & \cellcolor[rgb]{ .741,  .843,  .933} \textbf{83.71} & \cellcolor[rgb]{ .741,  .843,  .933} \textbf{89.39} & \cellcolor[rgb]{ .741,  .843,  .933} \textbf{88.78} & \cellcolor[rgb]{ .753,  .851,  .937} \underline{88.40} & \cellcolor[rgb]{ .741,  .843,  .933} \textbf{87.57}$_{\color{blue}(\pm \phantom{0}3.12)}$\\
    \bottomrule
    \end{tabular}%
    % }
  \label{tab:Cifar10_Our}%
\end{table*}%

\subsection{Overall Performance for DirMixE on SADE Setting}

The overall performance on the CIFAR-10-LT, CIFAR-100-LT, ImageNet-LT and iNaturalist datasets for DirMixE on Ours Setting
is shown in Tab.\ref{tab:Cifar10_Sade}-Tab.\ref{tab:inat-sade}.

\begin{table*}[h]       
    \centering
    \caption{Performance Comparison on CIFAR-10 When Training ResNet Models (\textbf{SADE's Setting)}}
    \renewcommand\arraystretch{1.0}
    \resizebox{\linewidth}{!}{
      \begin{tabular}{lcccccccccccccc}
      \toprule
      \multirow{1.5}[4]{*}{\textbf{Method}} & \multicolumn{6}{c}{\textbf{Forward-LT}}       & \textbf{Uni.} & \multicolumn{6}{c}{\textbf{Backward-LT}}      & \multirow{1.5}[4]{*}{\textbf{Mean}} \\
            \cmidrule(lr){2-7} \cmidrule(lr){8-8} \cmidrule(lr){9-14}      & \textbf{100} & \textbf{50} & \textbf{25} & \textbf{10} & \textbf{5} & \textbf{2} & \textbf{1} & \textbf{2} & \textbf{5} & \textbf{10} & \textbf{25} & \textbf{50} & \textbf{100} &  \\
    \toprule
      LDAM  & \cellcolor[rgb]{ .796,  .875,  .949} 90.56  & \cellcolor[rgb]{ .82,  .89,  .957} 89.12  & \cellcolor[rgb]{ .855,  .914,  .965} 87.18  & \cellcolor[rgb]{ .918,  .949,  .98} 84.28  & \cellcolor[rgb]{ 1,  1,  1} 81.58  & \cellcolor[rgb]{ 1,  1,  1} 77.60  & \cellcolor[rgb]{ 1,  1,  1} 74.49  & \cellcolor[rgb]{ 1,  1,  1} 71.41  & \cellcolor[rgb]{ 1,  1,  1} 68.06  & \cellcolor[rgb]{ 1,  1,  1} 65.35  & \cellcolor[rgb]{ 1,  1,  1} 62.40  & \cellcolor[rgb]{ 1,  1,  1} 60.86  & \cellcolor[rgb]{ 1,  1,  1} 60.09  & \cellcolor[rgb]{ 1,  1,  1} 74.84$_{\color{blue}(\pm 10.60)}$\\
      LA    & \cellcolor[rgb]{ .82,  .89,  .957} 89.79  & \cellcolor[rgb]{ .839,  .906,  .961} 88.55  & \cellcolor[rgb]{ .855,  .914,  .965} 87.18  & \cellcolor[rgb]{ .859,  .914,  .965} 85.50  & \cellcolor[rgb]{ .878,  .925,  .969} 83.70  & \cellcolor[rgb]{ .851,  .91,  .965} 81.30  & \cellcolor[rgb]{ .863,  .918,  .965} 79.17  & \cellcolor[rgb]{ .878,  .925,  .969} 77.59  & \cellcolor[rgb]{ .878,  .929,  .969} 76.13  & \cellcolor[rgb]{ .882,  .929,  .973} 74.93  & \cellcolor[rgb]{ .882,  .929,  .973} 73.52  & \cellcolor[rgb]{ .882,  .929,  .973} 73.38  & \cellcolor[rgb]{ .89,  .933,  .973} 72.76  & \cellcolor[rgb]{ .882,  .929,  .973} 80.27$_{\color{blue}(\pm \phantom{0}5.89)}$\\
      VS    & \cellcolor[rgb]{ 1,  1,  1} 84.26  & \cellcolor[rgb]{ 1,  1,  1} 83.97  & \cellcolor[rgb]{ 1,  1,  1} 83.49  & \cellcolor[rgb]{ 1,  1,  1} 82.52  & \cellcolor[rgb]{ .996,  1,  1} 81.66  & \cellcolor[rgb]{ .871,  .922,  .969} 80.78  & \cellcolor[rgb]{ .824,  .894,  .957} 80.59  & \cellcolor[rgb]{ .824,  .894,  .957} 80.11  & \cellcolor[rgb]{ .812,  .886,  .953} 80.73  & \cellcolor[rgb]{ .812,  .886,  .953} 80.66  & \cellcolor[rgb]{ .812,  .886,  .953} 80.30  & \cellcolor[rgb]{ .812,  .886,  .953} 80.75  & \cellcolor[rgb]{ .812,  .886,  .953} 81.23  & \cellcolor[rgb]{ .855,  .914,  .965} 81.62$_{\color{blue}(\pm \phantom{0}1.39)}$\\
      LADE  & \cellcolor[rgb]{ .898,  .937,  .976} 87.45  & \cellcolor[rgb]{ .906,  .941,  .976} 86.73  & \cellcolor[rgb]{ .925,  .957,  .98} 85.41  & \cellcolor[rgb]{ .937,  .961,  .984} 83.91  & \cellcolor[rgb]{ .929,  .957,  .984} 82.82  & \cellcolor[rgb]{ .859,  .914,  .965} 81.14  & \cellcolor[rgb]{ .847,  .906,  .961} 79.77  & \cellcolor[rgb]{ .847,  .91,  .961} 79.02  & \cellcolor[rgb]{ .847,  .91,  .961} 78.21  & \cellcolor[rgb]{ .851,  .91,  .961} 77.64  & \cellcolor[rgb]{ .847,  .91,  .961} 76.74  & \cellcolor[rgb]{ .851,  .91,  .965} 76.74  & \cellcolor[rgb]{ .855,  .914,  .965} 76.55  & \cellcolor[rgb]{ .871,  .922,  .969} 80.93$_{\color{blue}(\pm \phantom{0}3.78)}$\\
      DDC   & \cellcolor[rgb]{ .922,  .953,  .98} 86.72  & \cellcolor[rgb]{ .922,  .953,  .98} 86.23  & \cellcolor[rgb]{ .929,  .957,  .984} 85.35  & \cellcolor[rgb]{ .918,  .949,  .98} 84.30  & \cellcolor[rgb]{ .882,  .929,  .973} 83.63  & \cellcolor[rgb]{ .8,  .882,  .949} 82.49  & \cellcolor[rgb]{ .792,  .875,  .949} 81.64  & \cellcolor[rgb]{ .8,  .878,  .949} 81.31  & \cellcolor[rgb]{ .812,  .886,  .953} 80.61  & \cellcolor[rgb]{ .82,  .89,  .953} 80.12  & \cellcolor[rgb]{ .82,  .894,  .957} 79.31  & \cellcolor[rgb]{ .824,  .894,  .957} 79.32  & \cellcolor[rgb]{ .831,  .898,  .957} 79.38  & \cellcolor[rgb]{ .839,  .902,  .961} 82.34$_{\color{blue}(\pm \phantom{0}2.56)}$\\
      RIDE  & \cellcolor[rgb]{ .91,  .949,  .98} 87.01  & \cellcolor[rgb]{ .922,  .953,  .98} 86.26  & \cellcolor[rgb]{ .929,  .957,  .984} 85.32  & \cellcolor[rgb]{ .902,  .941,  .976} 84.60  & \cellcolor[rgb]{ .863,  .918,  .965} 83.94  & \cellcolor[rgb]{ .8,  .878,  .949} 82.57  & \cellcolor[rgb]{ .784,  .871,  .945} 81.80  & \cellcolor[rgb]{ .792,  .875,  .949} 81.74  & \cellcolor[rgb]{ .796,  .878,  .949} 81.64  & \cellcolor[rgb]{ .8,  .882,  .949} 81.44  & \cellcolor[rgb]{ .804,  .882,  .953} \underline{80.92} & \cellcolor[rgb]{ .812,  .886,  .953} 80.89  & \cellcolor[rgb]{ .812,  .886,  .953} 81.28  & \cellcolor[rgb]{ .824,  .894,  .957} 83.03$_{\color{blue}(\pm \phantom{0}2.05)}$\\
      SADE  & \cellcolor[rgb]{ .808,  .886,  .953} 90.15  & \cellcolor[rgb]{ .82,  .89,  .953} 89.19  & \cellcolor[rgb]{ .824,  .894,  .957} 87.95  & \cellcolor[rgb]{ .824,  .894,  .957} 86.24  & \cellcolor[rgb]{ .812,  .886,  .953} 84.83  & \cellcolor[rgb]{ .761,  .855,  .941} 83.51  & \cellcolor[rgb]{ .749,  .847,  .937} \underline{83.10} & \cellcolor[rgb]{ .757,  .851,  .937} \underline{83.55} & \cellcolor[rgb]{ .757,  .851,  .937} \underline{84.37} & \cellcolor[rgb]{ .757,  .855,  .937} \underline{85.01} & \cellcolor[rgb]{ .741,  .843,  .933} \textbf{86.62} & \cellcolor[rgb]{ .741,  .843,  .933} \textbf{87.91} & \cellcolor[rgb]{ .741,  .843,  .933} \textbf{89.10} & \cellcolor[rgb]{ .753,  .851,  .937} \underline{86.27}$_{\color{blue}(\pm \phantom{0}2.31)}$\\
      BalPoE & \cellcolor[rgb]{ .741,  .843,  .933} \textbf{92.13} & \cellcolor[rgb]{ .741,  .843,  .933} \textbf{91.31} & \cellcolor[rgb]{ .741,  .843,  .933} \textbf{89.97} & \cellcolor[rgb]{ .741,  .843,  .933} \textbf{87.93} & \cellcolor[rgb]{ .741,  .843,  .933} \textbf{86.01} & \cellcolor[rgb]{ .741,  .843,  .933} \textbf{83.92} & \cellcolor[rgb]{ .788,  .875,  .949} 81.70  & \cellcolor[rgb]{ .816,  .89,  .953} 80.57  & \cellcolor[rgb]{ .824,  .894,  .957} 79.94  & \cellcolor[rgb]{ .82,  .89,  .953} 80.14  & \cellcolor[rgb]{ .835,  .902,  .957} 78.10  & \cellcolor[rgb]{ .839,  .902,  .961} 77.89  & \cellcolor[rgb]{ .843,  .906,  .961} 77.80  & \cellcolor[rgb]{ .812,  .886,  .953} 83.65$_{\color{blue}(\pm \phantom{0}5.05)}$\\
      \toprule
      \textbf{DirMixE} & \cellcolor[rgb]{ .784,  .871,  .945} \underline{90.92} & \cellcolor[rgb]{ .784,  .871,  .945} \underline{90.16} & \cellcolor[rgb]{ .78,  .867,  .945} \underline{89.04} & \cellcolor[rgb]{ .784,  .871,  .945} \underline{87.10} & \cellcolor[rgb]{ .753,  .851,  .937} \underline{85.83} & \cellcolor[rgb]{ .753,  .851,  .937} \underline{83.66} & \cellcolor[rgb]{ .741,  .843,  .933} \textbf{83.26} & \cellcolor[rgb]{ .741,  .843,  .933} \textbf{84.16} & \cellcolor[rgb]{ .741,  .843,  .933} \textbf{85.16} & \cellcolor[rgb]{ .741,  .843,  .933} \textbf{86.17} & \cellcolor[rgb]{ .741,  .843,  .933} \textbf{86.62} & \cellcolor[rgb]{ .745,  .847,  .937} \underline{87.55} & \cellcolor[rgb]{ .749,  .851,  .937} \underline{88.30} & \cellcolor[rgb]{ .741,  .843,  .933} \textbf{86.76}$_{\color{blue}(\pm \phantom{0}2.31)}$\\
      \bottomrule
      \end{tabular}%
      }
    \label{tab:Cifar10_Sade}%
    \end{table*}%

    \begin{table*}[h!]
    \centering
    \caption{Performance Comparison on CIFAR-100-LT When Training ResNet Models (\textbf{SADE's Setting)}}
    \renewcommand\arraystretch{1.0}
    \resizebox{\linewidth}{!}{
      \begin{tabular}{lcccccccccccccc}
      \toprule
      \multirow{1.5}[4]{*}{\textbf{Method}} & \multicolumn{6}{c}{\textbf{Forward-LT}}       & \textbf{Uni.} & \multicolumn{6}{c}{\textbf{Backward-LT}}      & \multirow{1.5}[4]{*}{\textbf{Mean}} \\
            \cmidrule(lr){2-7} \cmidrule(lr){8-8} \cmidrule(lr){9-14}      & \textbf{100} & \textbf{50} & \textbf{25} & \textbf{10} & \textbf{5} & \textbf{2} & \textbf{1} & \textbf{2} & \textbf{5} & \textbf{10} & \textbf{25} & \textbf{50} & \textbf{100} &  \\
      \toprule
      LDAM  & \cellcolor[rgb]{ .847,  .918,  .796} 66.03  & \cellcolor[rgb]{ .871,  .929,  .827} 63.03  & \cellcolor[rgb]{ .89,  .941,  .851} 59.98  & \cellcolor[rgb]{ .918,  .957,  .89} 55.37  & \cellcolor[rgb]{ .961,  .98,  .949} 51.01  & \cellcolor[rgb]{ 1,  1,  1} 45.02  & \cellcolor[rgb]{ 1,  1,  1} 39.99  & \cellcolor[rgb]{ 1,  1,  1} 35.04  & \cellcolor[rgb]{ 1,  1,  1} 28.75  & \cellcolor[rgb]{ 1,  1,  1} 24.33  & \cellcolor[rgb]{ 1,  1,  1} 19.49  & \cellcolor[rgb]{ 1,  1,  1} 16.49  & \cellcolor[rgb]{ 1,  1,  1} 13.66  & \cellcolor[rgb]{ 1,  1,  1} 39.86$_{\color{blue}(\pm 17.66)}$\\
      LA    & \cellcolor[rgb]{ .961,  .98,  .949} 60.68  & \cellcolor[rgb]{ .961,  .98,  .949} 58.89  & \cellcolor[rgb]{ .965,  .98,  .953} 56.90  & \cellcolor[rgb]{ .953,  .976,  .937} 54.08  & \cellcolor[rgb]{ .941,  .969,  .922} 51.74  & \cellcolor[rgb]{ .902,  .949,  .871} 48.45  & \cellcolor[rgb]{ .875,  .933,  .835} 45.40  & \cellcolor[rgb]{ .859,  .925,  .816} 42.30  & \cellcolor[rgb]{ .855,  .922,  .808} 39.44  & \cellcolor[rgb]{ .863,  .925,  .816} 37.23  & \cellcolor[rgb]{ .871,  .929,  .831} 33.69  & \cellcolor[rgb]{ .878,  .933,  .839} 31.98  & \cellcolor[rgb]{ .882,  .937,  .843} 30.22  & \cellcolor[rgb]{ .902,  .949,  .871} 45.46$_{\color{blue}(\pm 10.12)}$\\
      VS    & \cellcolor[rgb]{ 1,  1,  1} 58.80  & \cellcolor[rgb]{ 1,  1,  1} 57.16  & \cellcolor[rgb]{ 1,  1,  1} 55.38  & \cellcolor[rgb]{ .996,  1,  .996} 52.55  & \cellcolor[rgb]{ 1,  1,  1} 49.66  & \cellcolor[rgb]{ .98,  .992,  .973} 45.74  & \cellcolor[rgb]{ .933,  .965,  .914} 42.84  & \cellcolor[rgb]{ .914,  .953,  .886} 39.46  & \cellcolor[rgb]{ .918,  .957,  .89} 35.04  & \cellcolor[rgb]{ .914,  .953,  .886} 32.33  & \cellcolor[rgb]{ .918,  .957,  .894} 28.52  & \cellcolor[rgb]{ .922,  .957,  .898} 26.43  & \cellcolor[rgb]{ .922,  .957,  .894} 24.78  & \cellcolor[rgb]{ .961,  .98,  .949} 42.21$_{\color{blue}(\pm 11.58)}$\\
      LADE  & \cellcolor[rgb]{ .992,  .996,  .988} 59.22  & \cellcolor[rgb]{ .988,  .992,  .98} 57.84  & \cellcolor[rgb]{ .996,  1,  .996} 55.55  & \cellcolor[rgb]{ 1,  1,  1} 52.40  & \cellcolor[rgb]{ 1,  1,  1} 49.70  & \cellcolor[rgb]{ .949,  .973,  .933} 46.82  & \cellcolor[rgb]{ .898,  .945,  .863} 44.47  & \cellcolor[rgb]{ .875,  .933,  .835} 41.53  & \cellcolor[rgb]{ .871,  .929,  .827} 38.46  & \cellcolor[rgb]{ .871,  .929,  .827} 36.46  & \cellcolor[rgb]{ .875,  .933,  .831} 33.49  & \cellcolor[rgb]{ .878,  .933,  .839} 31.98  & \cellcolor[rgb]{ .878,  .937,  .839} 30.41  & \cellcolor[rgb]{ .922,  .957,  .894} 44.49$_{\color{blue}(\pm \phantom{0}9.60)}$\\
      DDC   & \cellcolor[rgb]{ .988,  .996,  .984} 59.36  & \cellcolor[rgb]{ .976,  .988,  .969} 58.33  & \cellcolor[rgb]{ .973,  .984,  .961} 56.60  & \cellcolor[rgb]{ .957,  .976,  .945} 53.95  & \cellcolor[rgb]{ .949,  .973,  .933} 51.48  & \cellcolor[rgb]{ .886,  .937,  .847} 49.01  & \cellcolor[rgb]{ .839,  .914,  .788} 46.98  & \cellcolor[rgb]{ .824,  .906,  .769} 44.01  & \cellcolor[rgb]{ .831,  .91,  .78} 41.06  & \cellcolor[rgb]{ .839,  .914,  .788} 39.22  & \cellcolor[rgb]{ .847,  .918,  .8} 36.20  & \cellcolor[rgb]{ .859,  .922,  .812} 34.55  & \cellcolor[rgb]{ .859,  .925,  .816} 33.13  & \cellcolor[rgb]{ .886,  .937,  .851} 46.45$_{\color{blue}(\pm \phantom{0}8.82)}$\\
      RIDE  & \cellcolor[rgb]{ .878,  .933,  .839} 64.57  & \cellcolor[rgb]{ .871,  .929,  .831} 62.91  & \cellcolor[rgb]{ .863,  .925,  .82} 61.06  & \cellcolor[rgb]{ .839,  .914,  .788} 58.02  & \cellcolor[rgb]{ .835,  .91,  .784} 55.33  & \cellcolor[rgb]{ .808,  .894,  .745} 51.67  & \cellcolor[rgb]{ .804,  .894,  .741} 48.40  & \cellcolor[rgb]{ .812,  .898,  .753} 44.66  & \cellcolor[rgb]{ .843,  .914,  .792} 40.43  & \cellcolor[rgb]{ .859,  .922,  .812} 37.54  & \cellcolor[rgb]{ .867,  .929,  .824} 34.10  & \cellcolor[rgb]{ .875,  .933,  .835} 32.46  & \cellcolor[rgb]{ .878,  .937,  .839} 30.41  & \cellcolor[rgb]{ .863,  .925,  .816} 47.81$_{\color{blue}(\pm 11.62)}$\\
      SADE  & \cellcolor[rgb]{ .808,  .898,  .749} 67.81  & \cellcolor[rgb]{ .816,  .898,  .753} 65.45  & \cellcolor[rgb]{ .82,  .902,  .765} 62.75  & \cellcolor[rgb]{ .82,  .902,  .765} 58.69  & \cellcolor[rgb]{ .816,  .898,  .757} 56.04  & \cellcolor[rgb]{ .8,  .89,  .737} 51.91  & \cellcolor[rgb]{ .776,  .878,  .706} \textbf{49.53} & \cellcolor[rgb]{ .788,  .886,  .722} \underline{45.90} & \cellcolor[rgb]{ .792,  .886,  .725} \underline{44.04} & \cellcolor[rgb]{ .796,  .89,  .729} \underline{43.34} & \cellcolor[rgb]{ .792,  .886,  .725} \underline{42.32} & \cellcolor[rgb]{ .792,  .89,  .729} \underline{42.48} & \cellcolor[rgb]{ .788,  .886,  .722} \underline{42.75} & \cellcolor[rgb]{ .792,  .886,  .725} \underline{51.77}$_{\color{blue}(\pm \phantom{0}9.02)}$\\
      BalPoE & \cellcolor[rgb]{ .776,  .878,  .706} \textbf{69.22} & \cellcolor[rgb]{ .776,  .878,  .706} \textbf{67.02} & \cellcolor[rgb]{ .776,  .878,  .706} \textbf{64.45} & \cellcolor[rgb]{ .776,  .878,  .706} \textbf{60.19} & \cellcolor[rgb]{ .776,  .878,  .706} \textbf{57.26} & \cellcolor[rgb]{ .796,  .89,  .729} \underline{52.05} & \cellcolor[rgb]{ .8,  .89,  .733} \underline{48.66} & \cellcolor[rgb]{ .816,  .902,  .757} 44.54  & \cellcolor[rgb]{ .839,  .914,  .788} 40.57  & \cellcolor[rgb]{ .863,  .925,  .82} 37.13  & \cellcolor[rgb]{ .875,  .933,  .835} 33.25  & \cellcolor[rgb]{ .882,  .937,  .843} 31.58  & \cellcolor[rgb]{ .886,  .941,  .851} 29.24  & \cellcolor[rgb]{ .843,  .918,  .792} 48.86$_{\color{blue}(\pm 13.44)}$\\
          \toprule
      \textbf{DirMixE} & \cellcolor[rgb]{ .796,  .89,  .733} \underline{68.32} & \cellcolor[rgb]{ .796,  .89,  .733} \underline{66.21} & \cellcolor[rgb]{ .812,  .898,  .753} \underline{63.09} & \cellcolor[rgb]{ .8,  .89,  .733} \underline{59.49} & \cellcolor[rgb]{ .804,  .894,  .741} \underline{56.35} & \cellcolor[rgb]{ .776,  .878,  .706} \textbf{52.62} & \cellcolor[rgb]{ .804,  .894,  .745} 48.38  & \cellcolor[rgb]{ .776,  .878,  .706} \textbf{46.40} & \cellcolor[rgb]{ .776,  .878,  .706} \textbf{45.05} & \cellcolor[rgb]{ .776,  .878,  .706} \textbf{44.79} & \cellcolor[rgb]{ .776,  .878,  .706} \textbf{43.71} & \cellcolor[rgb]{ .776,  .878,  .706} \textbf{44.41} & \cellcolor[rgb]{ .776,  .878,  .706} \textbf{44.25} & \cellcolor[rgb]{ .776,  .878,  .706} \textbf{52.54}$_{\color{blue}(\pm \phantom{0}8.74)}$\\
      \bottomrule
      \end{tabular}%
      }
    \label{tab:Cifar100_Sade}%
  \end{table*}%

  \begin{table*}[h!]
  \centering
  \caption{Performance Comparison on ImageNet-LT When Training ResNet Models (\textbf{SADE's Setting)}}
  \renewcommand\arraystretch{1.0}
  \resizebox{\linewidth}{!}{
    \begin{tabular}{lcccccccccccc}
    \toprule
    \multirow{1.5}[4]{*}{\textbf{Method}} & \multicolumn{5}{c}{\textbf{Forward-LT}} & \textbf{Uni.} & \multicolumn{5}{c}{\textbf{Backward-LT}} & \multirow{1.5}[4]{*}{\textbf{Mean}} \\
    \cmidrule(lr){2-6}\cmidrule(lr){7-7}\cmidrule(lr){8-12}          & \textbf{50} & \textbf{25} & \textbf{10} & \textbf{5} & \textbf{2} & \textbf{1} & \textbf{2} & \textbf{5} & \textbf{10} & \textbf{25} & \textbf{50} &  \\
    \toprule
    LDAM  & \cellcolor[rgb]{ .992,  .929,  .89} 63.18  & \cellcolor[rgb]{ .992,  .941,  .91} 61.31  & \cellcolor[rgb]{ 1,  .976,  .965} 58.10  & \cellcolor[rgb]{ 1,  1,  .996} 55.29  & \cellcolor[rgb]{ 1,  1,  1} 51.28  & \cellcolor[rgb]{ 1,  1,  1} 47.92  & \cellcolor[rgb]{ 1,  1,  1} 44.63  & \cellcolor[rgb]{ 1,  1,  1} 40.19  & \cellcolor[rgb]{ 1,  1,  1} 37.11  & \cellcolor[rgb]{ 1,  1,  1} 33.80  & \cellcolor[rgb]{ 1,  1,  1} 31.17  & \cellcolor[rgb]{ 1,  1,  1} 47.63$_{\color{blue}(\pm 10.65)}$\\
    LA    & \cellcolor[rgb]{ 1,  .98,  .969} 60.57  & \cellcolor[rgb]{ .996,  .973,  .957} 59.83  & \cellcolor[rgb]{ 1,  .988,  .98} 57.63  & \cellcolor[rgb]{ 1,  .992,  .984} 55.64  & \cellcolor[rgb]{ .996,  .969,  .949} 52.92  & \cellcolor[rgb]{ .996,  .953,  .922} 50.66  & \cellcolor[rgb]{ .992,  .941,  .91} 48.39  & \cellcolor[rgb]{ .992,  .933,  .894} 45.43  & \cellcolor[rgb]{ .992,  .929,  .89} 43.35  & \cellcolor[rgb]{ .992,  .933,  .894} 41.08  & \cellcolor[rgb]{ .992,  .933,  .898} 39.07  & \cellcolor[rgb]{ .996,  .957,  .933} 50.42$_{\color{blue}(\pm \phantom{0}7.21)}$\\
    VS    & \cellcolor[rgb]{ 1,  .988,  .976} 60.27  & \cellcolor[rgb]{ 1,  .984,  .973} 59.33  & \cellcolor[rgb]{ 1,  .984,  .976} 57.77  & \cellcolor[rgb]{ 1,  .98,  .969} 56.05  & \cellcolor[rgb]{ .996,  .949,  .918} 53.76  & \cellcolor[rgb]{ .992,  .922,  .878} 52.20  & \cellcolor[rgb]{ .988,  .91,  .859} 50.42  & \cellcolor[rgb]{ .988,  .894,  .835} 48.36  & \cellcolor[rgb]{ .988,  .894,  .831} 46.64  & \cellcolor[rgb]{ .988,  .898,  .835} 44.97  & \cellcolor[rgb]{ .988,  .898,  .835} 43.55  & \cellcolor[rgb]{ .992,  .929,  .89} 52.12$_{\color{blue}(\pm \phantom{0}5.56)}$\\
    LADE  & \cellcolor[rgb]{ .992,  .918,  .867} 63.92  & \cellcolor[rgb]{ .992,  .922,  .875} 62.41  & \cellcolor[rgb]{ .992,  .929,  .89} 60.17  & \cellcolor[rgb]{ .992,  .937,  .898} 57.91  & \cellcolor[rgb]{ .992,  .925,  .878} 54.98  & \cellcolor[rgb]{ .988,  .914,  .863} 52.71  & \cellcolor[rgb]{ .988,  .914,  .863} 50.29  & \cellcolor[rgb]{ .988,  .914,  .859} 47.10  & \cellcolor[rgb]{ .988,  .91,  .859} 45.19  & \cellcolor[rgb]{ .992,  .918,  .867} 42.89  & \cellcolor[rgb]{ .992,  .922,  .878} 40.55  & \cellcolor[rgb]{ .992,  .925,  .878} 52.56$_{\color{blue}(\pm \phantom{0}7.66)}$\\
    DDC   & \cellcolor[rgb]{ 1,  1,  1} 59.49  & \cellcolor[rgb]{ 1,  1,  1} 58.37  & \cellcolor[rgb]{ 1,  1,  1} 57.02  & \cellcolor[rgb]{ 1,  1,  1} 55.16  & \cellcolor[rgb]{ .996,  .961,  .941} 53.15  & \cellcolor[rgb]{ .992,  .941,  .902} 51.32  & \cellcolor[rgb]{ .992,  .925,  .882} 49.45  & \cellcolor[rgb]{ .988,  .91,  .855} 47.39  & \cellcolor[rgb]{ .988,  .906,  .851} 45.53  & \cellcolor[rgb]{ .988,  .906,  .847} 44.10  & \cellcolor[rgb]{ .988,  .906,  .851} 42.46  & \cellcolor[rgb]{ .996,  .945,  .914} 51.22$_{\color{blue}(\pm \phantom{0}5.64)}$\\
    RIDE  & \cellcolor[rgb]{ .988,  .898,  .835} 64.98  & \cellcolor[rgb]{ .988,  .894,  .831} 63.73  & \cellcolor[rgb]{ .984,  .886,  .816} 62.20  & \cellcolor[rgb]{ .984,  .882,  .816} 60.15  & \cellcolor[rgb]{ .984,  .878,  .808} 57.09  & \cellcolor[rgb]{ .984,  .871,  .796} 54.98  & \cellcolor[rgb]{ .984,  .878,  .804} 52.53  & \cellcolor[rgb]{ .984,  .878,  .808} 49.59  & \cellcolor[rgb]{ .984,  .882,  .812} 47.75  & \cellcolor[rgb]{ .988,  .898,  .835} 44.86  & \cellcolor[rgb]{ .988,  .902,  .847} 42.88  & \cellcolor[rgb]{ .988,  .89,  .827} 54.61$_{\color{blue}(\pm \phantom{0}7.35)}$\\
    SADE  & \cellcolor[rgb]{ .976,  .8,  .686} \underline{69.93} & \cellcolor[rgb]{ .976,  .804,  .69} 68.19  & \cellcolor[rgb]{ .973,  .796,  .678} \textbf{66.02} & \cellcolor[rgb]{ .976,  .804,  .69} \underline{63.53} & \cellcolor[rgb]{ .973,  .796,  .678} \textbf{60.94} & \cellcolor[rgb]{ .973,  .796,  .678} \textbf{59.00} & \cellcolor[rgb]{ .973,  .796,  .678} \textbf{57.53} & \cellcolor[rgb]{ .973,  .796,  .678} \textbf{55.91} & \cellcolor[rgb]{ .976,  .804,  .69} \underline{54.60} & \cellcolor[rgb]{ .976,  .816,  .706} \underline{53.58} & \cellcolor[rgb]{ .976,  .816,  .71} \underline{53.15} & \cellcolor[rgb]{ .976,  .804,  .686} \underline{60.22}$_{\color{blue}(\pm \phantom{0}5.69)}$\\
    BalPoE & \cellcolor[rgb]{ .976,  .808,  .694} 69.66  & \cellcolor[rgb]{ .976,  .8,  .686} \underline{68.28} & \cellcolor[rgb]{ .976,  .8,  .686} 65.87  & \cellcolor[rgb]{ .973,  .796,  .678} \textbf{63.77} & \cellcolor[rgb]{ .976,  .8,  .682} \underline{60.91} & \cellcolor[rgb]{ .976,  .8,  .682} \underline{58.95} & \cellcolor[rgb]{ .976,  .808,  .694} 57.00  & \cellcolor[rgb]{ .976,  .808,  .694} 55.22  & \cellcolor[rgb]{ .976,  .812,  .702} 53.85  & \cellcolor[rgb]{ .976,  .824,  .722} 52.59  & \cellcolor[rgb]{ .976,  .827,  .725} 51.88  & \cellcolor[rgb]{ .976,  .808,  .698} 59.82$_{\color{blue}(\pm \phantom{0}6.05)}$\\
    \toprule
    \textbf{DirMixE}  & \cellcolor[rgb]{ .973,  .796,  .678} \textbf{70.09} & \cellcolor[rgb]{ .973,  .796,  .678} \textbf{68.46} & \cellcolor[rgb]{ .976,  .8,  .682} \underline{65.93} & \cellcolor[rgb]{ .976,  .812,  .702} 63.22  & \cellcolor[rgb]{ .976,  .808,  .694} 60.50  & \cellcolor[rgb]{ .976,  .804,  .69} 58.61  & \cellcolor[rgb]{ .976,  .804,  .686} \underline{57.27} & \cellcolor[rgb]{ .976,  .808,  .694} \underline{55.27} & \cellcolor[rgb]{ .973,  .796,  .678} \textbf{55.04} & \cellcolor[rgb]{ .973,  .796,  .678} \textbf{55.38} & \cellcolor[rgb]{ .973,  .796,  .678} \textbf{55.33} & \cellcolor[rgb]{ .973,  .796,  .678} \textbf{60.46}$_{\color{blue}(\pm \phantom{0}5.36)}$\\
    \bottomrule
    \end{tabular}%
    }
  \label{tab:ImageNet_Sade}%
  \end{table*}%

  \begin{table}[h!]
    \centering
    \caption{Performance Comparison on iNaturalist When Training ResNet Models (\textbf{SADE's Setting)}}
    % \renewcommand\arraystretch{1.0}
    % \small
    % \resizebox{\linewidth}{!}{
      \begin{tabular}{lcccccc}
      \toprule
      \multirow{1.5}[4]{*}{\textbf{Method}} & \multicolumn{2}{c}{\textbf{Forward-LT}} & \textbf{Uni.} & \multicolumn{2}{c}{\textbf{Backward-LT}} & \multirow{1.5}[4]{*}{\textbf{Mean}} \\
      \cmidrule(lr){2-3}\cmidrule(lr){4-4}\cmidrule(lr){5-6}          & \textbf{3} & \textbf{2} & \textbf{1} & \textbf{2} & \textbf{3} &  \\
      \midrule
      LDAM  & \cellcolor[rgb]{ .953,  .976,  .937}65.95  & \cellcolor[rgb]{ .961,  .98,  .949}66.21  & \cellcolor[rgb]{ .965,  .98,  .953}66.66  & \cellcolor[rgb]{ .965,  .98,  .953}67.45  & \cellcolor[rgb]{ .969,  .984,  .957}67.23  & \cellcolor[rgb]{ .961,  .98,  .949}66.70$_{\color{blue}(\pm 0.52)}$  \\
      LA    & \cellcolor[rgb]{ .945,  .973,  .929}66.17  & \cellcolor[rgb]{ .957,  .98,  .945}66.31  & \cellcolor[rgb]{ .976,  .988,  .969}66.28  & \cellcolor[rgb]{ .996,  1,  .996}66.27  & \cellcolor[rgb]{ .996,  1,  .996}66.26  & \cellcolor[rgb]{ .973,  .988,  .965}66.26$_{\color{blue}(\pm 0.04)}$  \\
      VS    & 64.06  & 64.71  & 65.34  & 66.10  & 66.05  & 65.25$_{\color{blue}(\pm 0.72)}$  \\
      LADE  & \cellcolor[rgb]{ .859,  .922,  .812}69.53  & \cellcolor[rgb]{ .867,  .925,  .82}69.80  & \cellcolor[rgb]{ .871,  .929,  .827}70.06  & \cellcolor[rgb]{ .878,  .937,  .843}70.41  & \cellcolor[rgb]{ .878,  .937,  .843}70.36  & \cellcolor[rgb]{ .867,  .929,  .827}70.03$_{\color{blue}(\pm 0.30)}$  \\
      DDC   & \cellcolor[rgb]{ .996,  1,  .992}64.31  & \cellcolor[rgb]{ .996,  1,  .996}64.92  & \cellcolor[rgb]{ .984,  .992,  .98}65.93  & \cellcolor[rgb]{ .98,  .992,  .973}66.86  & \cellcolor[rgb]{ .976,  .988,  .969}66.90  & \cellcolor[rgb]{ .988,  .992,  .984}65.78$_{\color{blue}(\pm 0.94)}$  \\
      RIDE  & \cellcolor[rgb]{ .816,  .902,  .757}71.10  & \cellcolor[rgb]{ .82,  .902,  .761}71.52  & \cellcolor[rgb]{ .827,  .906,  .773}71.59  & \cellcolor[rgb]{ .831,  .91,  .776}72.17  & \cellcolor[rgb]{ .843,  .914,  .792}71.69  & \cellcolor[rgb]{ .824,  .906,  .769}71.61$_{\color{blue}(\pm 0.31)}$  \\
      SADE  & \cellcolor[rgb]{ .792,  .89,  .729}71.95  & \cellcolor[rgb]{ .788,  .886,  .722}72.58  & \cellcolor[rgb]{ .796,  .89,  .729}72.70  & \cellcolor[rgb]{ .8,  .894,  .737}73.16  & \cellcolor[rgb]{ .796,  .89,  .733}73.27  & \cellcolor[rgb]{ .792,  .886,  .725}72.73$_{\color{blue}(\pm 0.43)}$  \\
      BalPoE & \cellcolor[rgb]{ .788,  .886,  .718}\underline{72.23}  & \cellcolor[rgb]{ .776,  .878,  .706}\textbf{73.00} & \cellcolor[rgb]{ .776,  .878,  .706}\textbf{73.31} & \cellcolor[rgb]{ .776,  .878,  .706}\textbf{73.99} & \cellcolor[rgb]{ .788,  .886,  .722}\underline{73.55}  & \cellcolor[rgb]{ .78,  .882,  .71}\underline{73.22}$_{\color{blue}(\pm 0.54)}$  \\
      \midrule
      \textbf{DirMixE} & \cellcolor[rgb]{ .776,  .878,  .706}\textbf{72.53} & \cellcolor[rgb]{ .78,  .882,  .714}\underline{72.88}  & \cellcolor[rgb]{ .78,  .882,  .71}\underline{73.21}  & \cellcolor[rgb]{ .788,  .886,  .722}\underline{73.66}  & \cellcolor[rgb]{ .776,  .878,  .706}\textbf{73.94} & \cellcolor[rgb]{ .776,  .878,  .706}\textbf{73.24}$_{\color{blue}(\pm 0.47)}$ \\
      \bottomrule
      \end{tabular}%
    % }
    \label{tab:inat-sade}%
  \end{table}%

\subsection{\newsec{Overall Performance for DirMixE-LSF on Ours Setting}}   \label{exp:ours_setting}

\newcont{

The overall performance on the CIFAR-10-LT and CIFAR-100-LT dataset for LoRA and AdaptFormer on Ours Setting
is shown in Tab.\ref{tab:performance_comparison_cifar10_lora}-Tab.\ref{tab:performance_comparison_cifar100_adaptformer}, respectively.

}

\begin{table*}[h!]
  \centering
  \caption{Performance Comparison on CIFAR-10 When Fine-tuning with LoRA (\textbf{Ours Setting})}
  \small
  % \resizebox{\linewidth}{!}{
      % Table generated by Excel2LaTeX from sheet 'CIFAR-10'
      \begin{tabular}{lcccccccccc}
          \toprule
          \multirow{1.5}[4]{*}{\textbf{Method}} & \multicolumn{3}{c}{\textbf{Forward-LT}} & \multicolumn{3}{c}{\textbf{Uniform}} & \multicolumn{3}{c}{\textbf{Backward-LT}} & \multirow{1.5}[4]{*}{\textbf{Mean}} \\
          \cmidrule(lr){2-4}\cmidrule(lr){5-7}\cmidrule(lr){8-10}     & \textbf{1} & \textbf{2} & \textbf{3} & \textbf{1} & \textbf{2} & \textbf{3} & \textbf{1} & \textbf{2} & \textbf{3} &  \\
          \midrule
          LIFT-LoRA & 97.33 & 96.89 & 97.45 & \cellcolor[rgb]{ .741,  .843,  .933}\textbf{96.54} & \cellcolor[rgb]{ .741,  .843,  .933}\textbf{96.43} & \cellcolor[rgb]{ .741,  .843,  .933}\textbf{96.56} & 97.21 & 97.05 & 97.00 & 96.94$_{\color{blue}(\pm 0.34)}$ \\
          SADE-LoRA & \cellcolor[rgb]{ .812,  .886,  .953}98.12 & \cellcolor[rgb]{ .788,  .871,  .945}98.08 & \cellcolor[rgb]{ .835,  .902,  .961}98.12 & 96.07 & 96.10 & 95.83 & \cellcolor[rgb]{ .996,  .996,  1}97.25 & \cellcolor[rgb]{ .843,  .906,  .961}98.12 & \cellcolor[rgb]{ .816,  .89,  .953}98.04 & \cellcolor[rgb]{ .898,  .937,  .976}97.30$_{\color{blue}(\pm 0.96)}$ \\
          DirMixE-LoRA & \cellcolor[rgb]{ .8,  .878,  .949}\underline{98.16} & \cellcolor[rgb]{ .757,  .855,  .937}\underline{98.24} & \cellcolor[rgb]{ .784,  .871,  .945}\underline{98.33} & \cellcolor[rgb]{ .875,  .925,  .969}\underline{96.30} & \cellcolor[rgb]{ .922,  .953,  .98}96.20 & \cellcolor[rgb]{ .945,  .969,  .988}95.99 & \cellcolor[rgb]{ .804,  .882,  .949}\underline{98.36} & \cellcolor[rgb]{ .808,  .886,  .953}\underline{98.36} & \cellcolor[rgb]{ .784,  .871,  .945}\underline{98.20} & \cellcolor[rgb]{ .82,  .89,  .953}\underline{97.57}$_{\color{blue}(\pm 1.00)}$ \\
          % DirMixE-SLoRA & \cellcolor[rgb]{ .8,  .878,  .949}\underline{98.16} & \cellcolor[rgb]{ .757,  .855,  .937}\underline{98.24} & \cellcolor[rgb]{ .765,  .859,  .941}\underline{98.41} & \cellcolor[rgb]{ .749,  .847,  .937}\underline{96.53} & \cellcolor[rgb]{ .765,  .859,  .941}96.40 & \cellcolor[rgb]{ .784,  .871,  .945}\underline{96.44} & \cellcolor[rgb]{ .808,  .886,  .953}\underline{98.32} & \cellcolor[rgb]{ .765,  .859,  .941}\underline{98.65} & \cellcolor[rgb]{ .757,  .855,  .937}\underline{98.36} & \cellcolor[rgb]{ .776,  .863,  .945}\underline{97.72} \\
          \midrule
          \textbf{DirMixE-LoRA-LSF} & \cellcolor[rgb]{ .741,  .843,  .933}\textbf{98.40} & \cellcolor[rgb]{ .741,  .843,  .933}\textbf{98.32} & \cellcolor[rgb]{ .741,  .843,  .933}\textbf{98.49} & \cellcolor[rgb]{ .741,  .843,  .933}\textbf{96.54} & \cellcolor[rgb]{ .753,  .851,  .937}\underline{96.42} & \cellcolor[rgb]{ .804,  .882,  .949}\underline{96.39} & \cellcolor[rgb]{ .741,  .843,  .933}\textbf{98.70} & \cellcolor[rgb]{ .741,  .843,  .933}\textbf{98.81} & \cellcolor[rgb]{ .741,  .843,  .933}\textbf{98.44} & \cellcolor[rgb]{ .741,  .843,  .933}\textbf{97.83}$_{\color{blue}(\pm 0.99)}$ \\
          \bottomrule
      \end{tabular}%
  % }
  \label{tab:performance_comparison_cifar10_lora}%
\end{table*}%

% Table generated by Excel2LaTeX from sheet 'CIFAR-10'
\begin{table*}[h!]
    \centering
    \caption{Performance Comparison on CIFAR-10 When Fine-tuning with AdaptFormer (\textbf{Ours Setting})}
    \small
    % \resizebox{\linewidth}{!}{
        \begin{tabular}{lcccccccccc}
        \toprule
        \multirow{1.5}[4]{*}{\textbf{Method}} & \multicolumn{3}{c}{\textbf{Forward-LT}} & \multicolumn{3}{c}{\textbf{Uniform}} & \multicolumn{3}{c}{\textbf{Backward-LT}} & \multirow{1.5}[4]{*}{\textbf{Mean}} \\
            \cmidrule(lr){2-4}\cmidrule(lr){5-7}\cmidrule(lr){8-10}     & \textbf{1} & \textbf{2} & \textbf{3} & \textbf{1} & \textbf{2} & \textbf{3} & \textbf{1} & \textbf{2} & \textbf{3} &  \\
        \midrule
        LIFT-AF & 97.41 & 97.05 & 97.62 & 96.20 & 96.17 & 96.31 & 96.83 & 96.52 & 96.72 & 96.76$_{\color{blue}(\pm 0.49)}$ \\
        SADE-AF & \cellcolor[rgb]{ .773,  .863,  .941}98.08 & \cellcolor[rgb]{ .753,  .851,  .937}\underline{98.24} & \cellcolor[rgb]{ .769,  .859,  .941}98.41 & \cellcolor[rgb]{ .741,  .843,  .933}\textbf{96.60} & \cellcolor[rgb]{ .741,  .843,  .933}\textbf{96.47} & \cellcolor[rgb]{ .741,  .843,  .933}\underline{96.53} & \cellcolor[rgb]{ .851,  .91,  .965}97.90 & \cellcolor[rgb]{ .757,  .855,  .937}98.57 & \cellcolor[rgb]{ .804,  .882,  .949}98.20 & \cellcolor[rgb]{ .773,  .863,  .941}97.67$_{\color{blue}(\pm 0.82)}$ \\
        DirMixE-AF & \cellcolor[rgb]{ .757,  .855,  .937}\underline{98.12} & \cellcolor[rgb]{ .753,  .851,  .937}\underline{98.24} & \cellcolor[rgb]{ .757,  .851,  .937}\underline{98.45} & \cellcolor[rgb]{ .769,  .859,  .941}\underline{96.56} & \cellcolor[rgb]{ .741,  .843,  .933}\textbf{96.47} & \cellcolor[rgb]{ .8,  .882,  .949}\textbf{96.48} & \cellcolor[rgb]{ .741,  .843,  .933}\textbf{98.66} & \cellcolor[rgb]{ .753,  .851,  .937}\underline{98.61} & \cellcolor[rgb]{ .741,  .843,  .933}\textbf{98.64} & \cellcolor[rgb]{ .765,  .859,  .941}\underline{97.70}$_{\color{blue}(\pm 0.94)}$ \\
        \midrule
        \textbf{DirMixE-AF-LSF} & \cellcolor[rgb]{ .741,  .843,  .933}\textbf{98.16} & \cellcolor[rgb]{ .741,  .843,  .933}\textbf{98.28} & \cellcolor[rgb]{ .741,  .843,  .933}\textbf{98.49} & \cellcolor[rgb]{ .835,  .902,  .957}96.46 & \cellcolor[rgb]{ .847,  .91,  .961}\underline{96.35} & \cellcolor[rgb]{ .859,  .918,  .965}96.43 & \cellcolor[rgb]{ .749,  .847,  .937}\underline{98.63} & \cellcolor[rgb]{ .741,  .843,  .933}\textbf{98.69} & \cellcolor[rgb]{ .753,  .851,  .937}\underline{98.56} & \cellcolor[rgb]{ .741,  .843,  .933}\textbf{97.78}$_{\color{blue}(\pm 0.98)}$ \\
        \bottomrule
        \end{tabular}%
    % }
    \label{tab:performance_comparison_cifar10_adaptformer}%
\end{table*}%

\begin{table*}[h!]
    \centering
    \caption{Performance Comparison on CIFAR-100 When Fine-tuning with LoRA (\textbf{Ours Setting)}}
    \small
    % \resizebox{\linewidth}{!}{
        \begin{tabular}{lcccccccccc}
        \toprule
        \multirow{1.5}[4]{*}{\textbf{Method}} & \multicolumn{3}{c}{\textbf{Forward-LT}} & \multicolumn{3}{c}{\textbf{Uniform}} & \multicolumn{3}{c}{\textbf{Backward-LT}} & \multirow{1.5}[4]{*}{\textbf{Mean}} \\
    \cmidrule(lr){2-4}\cmidrule(lr){5-7}\cmidrule(lr){8-10}            & \textbf{1} & \textbf{2} & \textbf{3} & \textbf{1} & \textbf{2} & \textbf{3} & \textbf{1} & \textbf{2} & \textbf{3} &  \\
        \midrule
        LIFT-LoRA & 84.32 & 81.19 & 82.76 & 79.49 & 78.49 & \cellcolor[rgb]{ .851,  .922,  .804}\underline{80.89} & 76.44 & 71.45 & 72.38 & 78.60$_{\color{blue}(\pm 4.18)}$ \\
        SADE-LoRA & \cellcolor[rgb]{ .776,  .878,  .706}\textbf{89.17} & \cellcolor[rgb]{ .776,  .878,  .706}\textbf{87.08} & \cellcolor[rgb]{ .808,  .894,  .745}\underline{87.92} & \cellcolor[rgb]{ .827,  .906,  .773}\underline{80.77} & \cellcolor[rgb]{ .776,  .878,  .706}\textbf{80.49} & 78.51 & \cellcolor[rgb]{ .839,  .914,  .788}83.61 & \cellcolor[rgb]{ .824,  .906,  .769}80.77 & \cellcolor[rgb]{ .808,  .894,  .745}82.64 & \cellcolor[rgb]{ .808,  .894,  .745}83.44$_{\color{blue}(\pm 3.56)}$ \\
        DirMixE-LoRA & \cellcolor[rgb]{ .776,  .878,  .706}\textbf{89.17} & \cellcolor[rgb]{ .792,  .89,  .729}\underline{86.68} & \cellcolor[rgb]{ .804,  .894,  .741}\textbf{87.97} & \cellcolor[rgb]{ .82,  .902,  .765}\textbf{80.83} & \cellcolor[rgb]{ .792,  .886,  .725}\underline{80.38} & \cellcolor[rgb]{ .976,  .988,  .969}78.89 & \cellcolor[rgb]{ .808,  .894,  .745}\underline{85.01} & \cellcolor[rgb]{ .808,  .894,  .745}\underline{81.79} & \cellcolor[rgb]{ .788,  .886,  .722}\textbf{83.58} & \cellcolor[rgb]{ .792,  .886,  .725}\underline{83.81}$_{\color{blue}(\pm 3.41)}$ \\
        % DirMixE-SLoRA & \cellcolor[rgb]{ .82,  .902,  .761}88.28 & \cellcolor[rgb]{ .792,  .89,  .729}\underline{86.68} & \cellcolor[rgb]{ .776,  .878,  .706}\textbf{88.67} & \cellcolor[rgb]{ .776,  .878,  .706}\textbf{81.14} & \cellcolor[rgb]{ .851,  .922,  .804}79.84 & \cellcolor[rgb]{ .973,  .984,  .965}78.97 & \cellcolor[rgb]{ .8,  .89,  .737}\underline{85.29} & \cellcolor[rgb]{ .776,  .878,  .706}\textbf{83.25} & \cellcolor[rgb]{ .776,  .878,  .706}\textbf{84.10} & \cellcolor[rgb]{ .784,  .882,  .714}\underline{84.02} \\
        \midrule
        \textbf{DirMixE-LoRA-LSF} & \cellcolor[rgb]{ .792,  .886,  .725}\underline{88.90} & \cellcolor[rgb]{ .796,  .89,  .729}86.61 & \cellcolor[rgb]{ .804,  .894,  .741}\textbf{87.97} & \cellcolor[rgb]{ .863,  .925,  .816}80.53 & \cellcolor[rgb]{ .937,  .965,  .914}79.08 & \cellcolor[rgb]{ .776,  .878,  .706}\textbf{82.04} & \cellcolor[rgb]{ .776,  .878,  .706}\textbf{86.22} & \cellcolor[rgb]{ .784,  .882,  .714}\textbf{82.99} & \cellcolor[rgb]{ .8,  .89,  .733}\underline{83.05} & \cellcolor[rgb]{ .776,  .878,  .706}\textbf{84.15}$_{\color{blue}(\pm 3.22)}$ \\
        \bottomrule
        \end{tabular}%
    % }
    \label{tab:performance_comparison_cifar100_lora}%
  \end{table*}%

  % Table generated by Excel2LaTeX from sheet 'CIFAR-100'
  \begin{table*}[h!]
  \centering
      \caption{Performance Comparison on CIFAR-100 When Fine-tuning with AdaptFormer (\textbf{Ours Setting)}}
      \small
      % \resizebox{\linewidth}{!}{
          \begin{tabular}{lcccccccccc}
          \toprule
          \multirow{1.5}[4]{*}{\textbf{Method}} & \multicolumn{3}{c}{\textbf{Forward-LT}} & \multicolumn{3}{c}{\textbf{Uniform}} & \multicolumn{3}{c}{\textbf{Backward-LT}} & \multirow{1.5}[4]{*}{\textbf{Mean}} \\
      \cmidrule(lr){2-4}\cmidrule(lr){5-7}\cmidrule(lr){8-10}            & \textbf{1} & \textbf{2} & \textbf{3} & \textbf{1} & \textbf{2} & \textbf{3} & \textbf{1} & \textbf{2} & \textbf{3} &  \\
          \midrule
          LIFT-AF & 85.27 & 83.47 & 84.64 & \cellcolor[rgb]{ .776,  .878,  .706}\textbf{80.59} & \cellcolor[rgb]{ .776,  .878,  .706}\textbf{80.05} & \cellcolor[rgb]{ .776,  .878,  .706}\textbf{82.43} & 78.03 & 73.93 & 76.26 & 80.52$_{\color{blue}(\pm 3.65)}$ \\
          SADE-AF & \cellcolor[rgb]{ .8,  .894,  .737}88.90 & \cellcolor[rgb]{ .788,  .886,  .722}\underline{85.88} & \cellcolor[rgb]{ .804,  .894,  .741}\underline{87.81} & 78.94 & \cellcolor[rgb]{ .882,  .937,  .847}\underline{78.76} & 78.89 & \cellcolor[rgb]{ .863,  .925,  .82}82.03 & \cellcolor[rgb]{ .898,  .945,  .863}77.26 & \cellcolor[rgb]{ .776,  .878,  .706}\textbf{81.07} & \cellcolor[rgb]{ .855,  .922,  .812}82.17$_{\color{blue}(\pm 4.07)}$ \\
          DirMixE-AF & \cellcolor[rgb]{ .776,  .878,  .706}\textbf{89.29} & \cellcolor[rgb]{ .776,  .878,  .706}\textbf{86.01} & \cellcolor[rgb]{ .776,  .878,  .706}\textbf{88.24} & \cellcolor[rgb]{ .976,  .988,  .969}79.12 & \cellcolor[rgb]{ .992,  .996,  .992}77.41 & \cellcolor[rgb]{ .945,  .969,  .925}79.82 & \cellcolor[rgb]{ .812,  .898,  .749}\underline{83.61} & \cellcolor[rgb]{ .843,  .914,  .792}\underline{78.97} & \cellcolor[rgb]{ .792,  .89,  .725}\underline{80.75} & \cellcolor[rgb]{ .82,  .902,  .765}\underline{82.58}$_{\color{blue}(\pm 4.12)}$ \\
          \midrule
          \textbf{DirMixE-AF-LSF} & \cellcolor[rgb]{ .784,  .882,  .718}\underline{89.17} & \cellcolor[rgb]{ .824,  .906,  .769}85.48 & \cellcolor[rgb]{ .804,  .894,  .741}\underline{87.81} & \cellcolor[rgb]{ .863,  .925,  .816}\underline{79.98} & 77.30 & \cellcolor[rgb]{ .831,  .91,  .78}\underline{81.58} & \cellcolor[rgb]{ .776,  .878,  .706}\textbf{84.54} & \cellcolor[rgb]{ .776,  .878,  .706}\textbf{81.03} & \cellcolor[rgb]{ .796,  .89,  .733}80.65 & \cellcolor[rgb]{ .776,  .878,  .706}\textbf{83.06}$_{\color{blue}(\pm 3.69)}$ \\
          \bottomrule
          \end{tabular}%
      % }
      \label{tab:performance_comparison_cifar100_adaptformer}%
  \end{table*}%

\subsection{\newsec{Overall Performance for DirMixE-LSF on SADE's Setting}}   \label{exp:sade_setting}

\newcont{

The overall performance on the CIFAR-10-LT, CIFAR 100-LT, ImageNet-LT and iNaturalist-LT datasets for LoRA and AdaptFormer on SADE's Setting are shown in Tab.\ref{tab:performance_comparison_cifar10_lora_sade}-Tab.\ref{tab:performance_comparison_inaturalist_adaptformer_sade}, respectively.

}

\begin{table*}[h!]
  \centering
  \caption{Performance Comparison on CIFAR-10 When Fine-tuning with LoRA (\textbf{SADE's Setting})}
  \small
  \resizebox{\linewidth}{!}{
      % Table generated by Excel2LaTeX from sheet 'CIFAR-10'
      \begin{tabular}{lcccccccccccccc}
          \toprule
          \multirow{1.5}[4]{*}{\textbf{Method}} & \multicolumn{6}{c}{\textbf{Forward-LT}}       & \textbf{Uni.} & \multicolumn{6}{c}{\textbf{Backward-LT}}      & \multirow{1.5}[4]{*}{\textbf{Mean}} \\
          \cmidrule(lr){2-7} \cmidrule(lr){8-8} \cmidrule(lr){9-14}      & \textbf{100} & \textbf{50} & \textbf{25} & \textbf{10} & \textbf{5} & \textbf{2} & \textbf{1} & \textbf{2} & \textbf{5} & \textbf{10} & \textbf{25} & \textbf{50} & \textbf{100} &  \\
          \midrule
          LIFT-LoRA & 97.22 & 97.25 & 97.06 & 96.69 & 96.50 & 96.51 & 96.50 & 96.49 & 96.63 & \cellcolor[rgb]{ .816,  .89,  .953}96.96 & 97.09 & 97.25 & \underline{97.30} & 96.88$_{\color{blue}(\pm 0.31)}$ \\
          SADE-LoRA & \cellcolor[rgb]{ .827,  .898,  .957}97.98 & \cellcolor[rgb]{ .859,  .914,  .965}97.78 & \cellcolor[rgb]{ .859,  .914,  .965}97.52 & \cellcolor[rgb]{ .82,  .89,  .957}97.21 & \cellcolor[rgb]{ .898,  .937,  .976}96.73 & \cellcolor[rgb]{ .812,  .886,  .953}\underline{96.87} & \cellcolor[rgb]{ .741,  .843,  .933}\textbf{96.93} & \cellcolor[rgb]{ .776,  .867,  .945}\underline{96.77} & \cellcolor[rgb]{ .976,  .984,  .996}96.67 & 96.13 & \cellcolor[rgb]{ .961,  .976,  .992}97.21 & \cellcolor[rgb]{ .812,  .886,  .953}98.14 & \cellcolor[rgb]{ .765,  .859,  .941}\textbf{98.71} & \cellcolor[rgb]{ .859,  .918,  .965}97.28$_{\color{blue}(\pm 0.66)}$ \\
          DirMixE-LoRA & \cellcolor[rgb]{ .765,  .859,  .941}\underline{98.26} & \cellcolor[rgb]{ .741,  .843,  .933}\textbf{98.21} & \cellcolor[rgb]{ .741,  .843,  .933}\textbf{97.89} & \cellcolor[rgb]{ .804,  .882,  .949}\underline{97.26} & \cellcolor[rgb]{ .863,  .918,  .965}\underline{96.81} & \cellcolor[rgb]{ .824,  .894,  .957}96.85 & \cellcolor[rgb]{ .847,  .906,  .961}\underline{96.76} & \cellcolor[rgb]{ .776,  .867,  .945}\underline{96.77} & \cellcolor[rgb]{ .961,  .976,  .992}\underline{96.69} & \cellcolor[rgb]{ .761,  .855,  .941}\underline{97.21} & \cellcolor[rgb]{ .824,  .894,  .957}\underline{97.62} & \cellcolor[rgb]{ .788,  .871,  .945}\underline{98.25} & \cellcolor[rgb]{ .765,  .859,  .941}\textbf{98.71} & \cellcolor[rgb]{ .788,  .875,  .949}\underline{97.48}$_{\color{blue}(\pm 0.66)}$ \\
          % DirMixE-SLoRA & \cellcolor[rgb]{ .78,  .871,  .945}98.18 & \cellcolor[rgb]{ .792,  .875,  .949}98.03 & \cellcolor[rgb]{ .753,  .851,  .937}\underline{97.86} & \cellcolor[rgb]{ .741,  .843,  .933}\textbf{97.43} & \cellcolor[rgb]{ .808,  .882,  .953}\underline{96.93} & \cellcolor[rgb]{ .8,  .878,  .949}\underline{96.89} & \cellcolor[rgb]{ .847,  .906,  .961}\underline{96.76} & \cellcolor[rgb]{ .741,  .843,  .933}\textbf{96.81} & \cellcolor[rgb]{ .812,  .886,  .953}\underline{96.91} & \cellcolor[rgb]{ .741,  .843,  .933}\textbf{97.28} & \cellcolor[rgb]{ .765,  .859,  .941}\underline{97.80} & \cellcolor[rgb]{ .757,  .855,  .937}\underline{98.39} & \cellcolor[rgb]{ .741,  .843,  .933}\textbf{98.83} & \cellcolor[rgb]{ .765,  .859,  .941}\underline{97.55} \\
          \midrule
          \textbf{DirMixE-LoRA-LSF} & \cellcolor[rgb]{ .741,  .843,  .933}\textbf{98.35} & \cellcolor[rgb]{ .761,  .855,  .941}\underline{98.14} & \cellcolor[rgb]{ .741,  .843,  .933}\textbf{97.89} & \cellcolor[rgb]{ .741,  .843,  .933}\textbf{97.43} & \cellcolor[rgb]{ .741,  .843,  .933}\textbf{97.07} & \cellcolor[rgb]{ .741,  .843,  .933}\textbf{97.00} & \cellcolor[rgb]{ .741,  .843,  .933}\textbf{96.93} & \cellcolor[rgb]{ .741,  .843,  .933}\textbf{96.81} & \cellcolor[rgb]{ .741,  .843,  .933}\textbf{97.01} & \cellcolor[rgb]{ .741,  .843,  .933}\textbf{97.28} & \cellcolor[rgb]{ .741,  .843,  .933}\textbf{97.86} & \cellcolor[rgb]{ .741,  .843,  .933}\textbf{98.46} & \cellcolor[rgb]{ .765,  .859,  .941}\textbf{98.71} & \cellcolor[rgb]{ .741,  .843,  .933}\textbf{97.61}$_{\color{blue}(\pm 0.61)}$ \\
          \bottomrule
      \end{tabular}%
  }
  \label{tab:performance_comparison_cifar10_lora_sade}%
\end{table*}%

\begin{table*}[h!]
  \centering
  \caption{Performance Comparison on CIFAR-10 When Fine-tuning with AdaptFormer (\textbf{SADE's Setting})}
  \small
  \resizebox{\linewidth}{!}{
      \begin{tabular}{lcccccccccccccc}
      \toprule
      \multirow{1.5}[4]{*}{\textbf{Method}} & \multicolumn{6}{c}{\textbf{Forward-LT}}       & \textbf{Uni.} & \multicolumn{6}{c}{\textbf{Backward-LT}}      & \multirow{1.5}[4]{*}{\textbf{Mean}} \\
          \cmidrule(lr){2-7} \cmidrule(lr){8-8} \cmidrule(lr){9-14}      & \textbf{100} & \textbf{50} & \textbf{25} & \textbf{10} & \textbf{5} & \textbf{2} & \textbf{1} & \textbf{2} & \textbf{5} & \textbf{10} & \textbf{25} & \textbf{50} & \textbf{100} &  \\
      \midrule
      LIFT-AF & 97.38 & 97.35 & 97.09 & 96.60 & \cellcolor[rgb]{ .976,  .988,  .996}96.36 & 96.15 & 96.23 & 96.08 & \cellcolor[rgb]{ .741,  .843,  .933}\textbf{96.20} & \cellcolor[rgb]{ .827,  .898,  .957}\underline{96.45} & 96.59 & 96.89 & 96.89 & 96.64$_{\color{blue}(\pm 0.50)}$ \\
      SADE-AF & \cellcolor[rgb]{ .741,  .843,  .933}\textbf{98.10} & \cellcolor[rgb]{ .741,  .843,  .933}\textbf{97.85} & \cellcolor[rgb]{ .741,  .843,  .933}\textbf{97.52} & \cellcolor[rgb]{ .741,  .843,  .933}\underline{96.96} & \cellcolor[rgb]{ .741,  .843,  .933}\textbf{96.54} & \cellcolor[rgb]{ .78,  .867,  .945}96.34 & \cellcolor[rgb]{ .741,  .843,  .933}\textbf{96.51} & \cellcolor[rgb]{ .741,  .843,  .933}\textbf{96.35} & 95.43 & 95.71 & \cellcolor[rgb]{ .98,  .988,  .996}96.66 & \cellcolor[rgb]{ .859,  .914,  .965}97.60 & \cellcolor[rgb]{ .824,  .894,  .957}98.10 & \cellcolor[rgb]{ .859,  .914,  .965}96.90$_{\color{blue}(\pm 0.80)}$ \\
      DirMixE-AF & \cellcolor[rgb]{ .784,  .871,  .945}\underline{97.98} & \cellcolor[rgb]{ .741,  .843,  .933}\textbf{97.85} & \cellcolor[rgb]{ .796,  .878,  .949}\underline{97.43} & \cellcolor[rgb]{ .878,  .929,  .969}96.77 & \cellcolor[rgb]{ .875,  .925,  .969}\underline{96.44} & \cellcolor[rgb]{ .769,  .859,  .941}\underline{96.35} & \cellcolor[rgb]{ .863,  .918,  .965}96.38 & \cellcolor[rgb]{ .914,  .949,  .98}96.17 & \cellcolor[rgb]{ .863,  .918,  .965}95.85 & \cellcolor[rgb]{ .839,  .906,  .961}96.40 & \cellcolor[rgb]{ .835,  .902,  .961}\underline{97.15} & \cellcolor[rgb]{ .773,  .863,  .941}\underline{98.03} & \cellcolor[rgb]{ .741,  .843,  .933}\textbf{98.63} & \cellcolor[rgb]{ .788,  .871,  .945}\underline{97.03}$_{\color{blue}(\pm 0.72)}$ \\
      \midrule
      \textbf{DirMixE-AF-LSF} & \cellcolor[rgb]{ .827,  .898,  .957}97.86 & \cellcolor[rgb]{ .757,  .855,  .941}\underline{97.82} & \cellcolor[rgb]{ .851,  .91,  .965}97.34 & \cellcolor[rgb]{ .827,  .898,  .957}\textbf{96.84} & 96.34 & \cellcolor[rgb]{ .741,  .843,  .933}\textbf{96.37} & \cellcolor[rgb]{ .808,  .886,  .953}\underline{96.44} & \cellcolor[rgb]{ .792,  .875,  .949}\underline{96.30} & \cellcolor[rgb]{ .757,  .855,  .937}\underline{96.16} & \cellcolor[rgb]{ .741,  .843,  .933}\textbf{96.82} & \cellcolor[rgb]{ .741,  .843,  .933}\textbf{97.46} & \cellcolor[rgb]{ .741,  .843,  .933}\textbf{98.18} & \cellcolor[rgb]{ .757,  .851,  .937}\underline{98.55} & \cellcolor[rgb]{ .741,  .843,  .933}\textbf{97.11}$_{\color{blue}(\pm 0.64)}$ \\
      \bottomrule
      \end{tabular}%
  }
  \label{tab:performance_comparison_cifar10_adaptformer_sade}%
\end{table*}%

% Table generated by Excel2LaTeX from sheet 'CIFAR-100'
\begin{table*}[h!]
  \centering
  \caption{Performance Comparison on CIFAR-100 When Fine-tuning with LoRA (\textbf{SADE's Setting})}
  \small
  \resizebox{\linewidth}{!}{
      \begin{tabular}{lcccccccccccccc}
      \toprule
      \multirow{1.5}[4]{*}{\textbf{Method}} & \multicolumn{6}{c}{\textbf{Forward-LT}}       & \textbf{Uni.} & \multicolumn{6}{c}{\textbf{Backward-LT}}      & \multirow{1.5}[4]{*}{\textbf{Mean}} \\
  \cmidrule(lr){2-7}\cmidrule(lr){8-8}\cmidrule(lr){9-14}          & \textbf{100} & \textbf{50} & \textbf{25} & \textbf{10} & \textbf{5} & \textbf{2} & \textbf{1} & \textbf{2} & \textbf{5} & \textbf{10} & \textbf{25} & \textbf{50} & \textbf{100} &  \\
      \midrule
      LIFT-LoRA & 83.72 & 83.43 & 82.81 & 82.28 & 81.51 & 80.73 & 79.98 & \cellcolor[rgb]{ .863,  .925,  .82}78.96 & \cellcolor[rgb]{ .902,  .949,  .871}78.02 & 77.45 & 75.71 & 75.22 & 74.66 & 79.58$_{\color{blue}(\pm 3.04)}$ \\
      SADE-LoRA & \cellcolor[rgb]{ .784,  .886,  .718}\textbf{89.07} & \cellcolor[rgb]{ .792,  .886,  .725}\textbf{87.93} & \cellcolor[rgb]{ .788,  .886,  .722}\textbf{86.84} & \cellcolor[rgb]{ .792,  .89,  .729}\underline{85.14} & \cellcolor[rgb]{ .776,  .878,  .706}\textbf{84.29} & \cellcolor[rgb]{ .812,  .898,  .753}\textbf{82.33} & \cellcolor[rgb]{ .918,  .957,  .894}80.64 & 75.43 & 76.01 & \cellcolor[rgb]{ .922,  .957,  .898}78.87 & \cellcolor[rgb]{ .847,  .918,  .796}80.65 & \cellcolor[rgb]{ .824,  .906,  .769}82.38 & \cellcolor[rgb]{ .816,  .898,  .757}83.48 & \cellcolor[rgb]{ .847,  .918,  .8}82.54$_{\color{blue}(\pm 4.06)}$ \\
      DirMixE-LoRA & \cellcolor[rgb]{ .796,  .89,  .729}\underline{88.83} & \cellcolor[rgb]{ .804,  .894,  .741}87.69 & \cellcolor[rgb]{ .8,  .894,  .737}86.60 & \cellcolor[rgb]{ .788,  .886,  .722}\textbf{85.22} & \cellcolor[rgb]{ .808,  .898,  .745}\underline{83.92} & \cellcolor[rgb]{ .831,  .91,  .776}82.17 & \cellcolor[rgb]{ .882,  .937,  .847}\underline{80.92} & \cellcolor[rgb]{ .855,  .922,  .808}\underline{79.15} & \cellcolor[rgb]{ .882,  .937,  .847}\underline{78.37} & \cellcolor[rgb]{ .843,  .918,  .792}\underline{80.26} & \cellcolor[rgb]{ .82,  .902,  .761}\underline{81.50} & \cellcolor[rgb]{ .808,  .898,  .745}\underline{83.02} & \cellcolor[rgb]{ .808,  .898,  .749}\underline{83.72} & \cellcolor[rgb]{ .816,  .902,  .757}\underline{83.18}$_{\color{blue}(\pm 3.11)}$ \\
      % DirMixE-SLoRA & \cellcolor[rgb]{ .776,  .878,  .706}\textbf{89.25} & \cellcolor[rgb]{ .776,  .878,  .706}\textbf{88.21} & \cellcolor[rgb]{ .776,  .878,  .706}\textbf{87.04} & \cellcolor[rgb]{ .776,  .878,  .706}\textbf{85.35} & \cellcolor[rgb]{ .78,  .882,  .71}\underline{84.27} & \cellcolor[rgb]{ .776,  .878,  .706}\textbf{82.61} & \cellcolor[rgb]{ .776,  .878,  .706}\textbf{81.75} & \cellcolor[rgb]{ .812,  .898,  .753}\underline{80.20} & \cellcolor[rgb]{ .855,  .922,  .808}\underline{78.95} & \cellcolor[rgb]{ .839,  .914,  .788}\underline{80.34} & \cellcolor[rgb]{ .824,  .906,  .769}81.33 & \cellcolor[rgb]{ .812,  .898,  .749}82.90 & \cellcolor[rgb]{ .808,  .894,  .745}\underline{83.86} & \cellcolor[rgb]{ .796,  .89,  .729}\underline{83.54} \\
      \midrule
      \textbf{DirMixE-LoRA-LSF} & \cellcolor[rgb]{ .804,  .894,  .741}88.60 & \cellcolor[rgb]{ .796,  .89,  .733}\underline{87.81} & \cellcolor[rgb]{ .796,  .89,  .729}\underline{86.71} & \cellcolor[rgb]{ .816,  .898,  .753}84.86 & \cellcolor[rgb]{ .824,  .906,  .769}83.72 & \cellcolor[rgb]{ .816,  .898,  .753}\underline{82.31} & \cellcolor[rgb]{ .831,  .91,  .776}\textbf{81.34} & \cellcolor[rgb]{ .776,  .878,  .706}\textbf{81.09} & \cellcolor[rgb]{ .776,  .878,  .706}\textbf{80.49} & \cellcolor[rgb]{ .776,  .878,  .706}\textbf{81.42} & \cellcolor[rgb]{ .776,  .878,  .706}\textbf{82.78} & \cellcolor[rgb]{ .776,  .878,  .706}\textbf{84.19} & \cellcolor[rgb]{ .776,  .878,  .706}\textbf{85.17} & \cellcolor[rgb]{ .776,  .878,  .706}\textbf{83.88}$_{\color{blue}(\pm 2.54)}$ \\
      \bottomrule
      \end{tabular}%
  }
  \label{tab:performance_comparison_cifar100_lora_sade}%
\end{table*}%

% Table generated by Excel2LaTeX from sheet 'CIFAR-100'
\begin{table*}[h!]
  \centering
  \caption{Performance Comparison on CIFAR-100 When Fine-tuning with AdaptFormer (\textbf{SADE's Setting})}
  \small
  \resizebox{\linewidth}{!}{
      \begin{tabular}{lcccccccccccccc}
      \toprule
      \multirow{1.5}[4]{*}{\textbf{Method}} & \multicolumn{6}{c}{\textbf{Forward-LT}}       & \textbf{Uni.} & \multicolumn{6}{c}{\textbf{Backward-LT}}      & \multirow{1.5}[4]{*}{\textbf{Mean}} \\
  \cmidrule(lr){2-7}\cmidrule(lr){8-8}\cmidrule(lr){9-14}          & \textbf{100} & \textbf{50} & \textbf{25} & \textbf{10} & \textbf{5} & \textbf{2} & \textbf{1} & \textbf{2} & \textbf{5} & \textbf{10} & \textbf{25} & \textbf{50} & \textbf{100} &  \\
      \midrule
      LIFT-AF & 84.89 & 84.75 & 84.20 & 83.46 & \cellcolor[rgb]{ .882,  .937,  .843}82.77 & \cellcolor[rgb]{ .776,  .878,  .706}\textbf{82.18} & \cellcolor[rgb]{ .776,  .878,  .706}\textbf{81.58} & \cellcolor[rgb]{ .776,  .878,  .706}\textbf{80.83} & \cellcolor[rgb]{ .776,  .878,  .706}\textbf{79.64} & \cellcolor[rgb]{ .894,  .945,  .859}78.97 & 78.11 & 77.63 & 77.19 & 81.25$_{\color{blue}(\pm 2.63)}$ \\
      SADE-AF & \cellcolor[rgb]{ .776,  .878,  .706}\textbf{88.46} & \cellcolor[rgb]{ .776,  .878,  .706}\textbf{87.37} & \cellcolor[rgb]{ .776,  .878,  .706}\textbf{86.23} & \cellcolor[rgb]{ .776,  .878,  .706}\textbf{84.70} & \cellcolor[rgb]{ .776,  .878,  .706}\textbf{83.19} & \cellcolor[rgb]{ .824,  .906,  .769}\underline{81.96} & \cellcolor[rgb]{ .973,  .988,  .965}80.05 & 76.25 & 75.49 & 77.94 & \cellcolor[rgb]{ .796,  .89,  .729}\underline{81.17} & \cellcolor[rgb]{ .808,  .898,  .749}\underline{82.17} & \cellcolor[rgb]{ .847,  .918,  .796}81.87 & \cellcolor[rgb]{ .91,  .953,  .878}81.87$_{\color{blue}(\pm 3.86)}$ \\
      DirMixE-AF & \cellcolor[rgb]{ .8,  .89,  .733}88.13 & \cellcolor[rgb]{ .804,  .894,  .741}87.09 & \cellcolor[rgb]{ .843,  .914,  .792}85.66 & \cellcolor[rgb]{ .925,  .961,  .902}83.88 & 82.28 & 81.09 & 79.83 & \cellcolor[rgb]{ .847,  .918,  .8}\underline{79.39} & \cellcolor[rgb]{ .8,  .89,  .737}\underline{79.24} & \cellcolor[rgb]{ .784,  .886,  .718}\underline{80.01} & \cellcolor[rgb]{ .843,  .918,  .796}80.45 & \cellcolor[rgb]{ .827,  .91,  .776}81.70 & \cellcolor[rgb]{ .831,  .91,  .776}\underline{82.36} & \cellcolor[rgb]{ .831,  .91,  .776}\underline{82.39}$_{\color{blue}(\pm 2.84)}$ \\
      \midrule
      \textbf{DirMixE-AF-LSF} & \cellcolor[rgb]{ .788,  .886,  .718}\underline{88.32} & \cellcolor[rgb]{ .784,  .882,  .718}\underline{87.29} & \cellcolor[rgb]{ .812,  .898,  .753}\underline{85.93} & \cellcolor[rgb]{ .89,  .941,  .855}\underline{84.08} & \cellcolor[rgb]{ .929,  .961,  .906}\underline{82.58} & \cellcolor[rgb]{ .949,  .973,  .933}81.34 & \cellcolor[rgb]{ .902,  .949,  .875}\underline{80.60} & \cellcolor[rgb]{ .878,  .933,  .839}78.80 & \cellcolor[rgb]{ .851,  .922,  .804}78.30 & \cellcolor[rgb]{ .776,  .878,  .706}\textbf{80.08} & \cellcolor[rgb]{ .776,  .878,  .706}\textbf{81.43} & \cellcolor[rgb]{ .776,  .878,  .706}\textbf{82.90} & \cellcolor[rgb]{ .776,  .878,  .706}\textbf{83.90} & \cellcolor[rgb]{ .776,  .878,  .706}\textbf{82.73}$_{\color{blue}(\pm 2.98)}$ \\
      \bottomrule
      \end{tabular}%
  }
  \label{tab:performance_comparison_cifar100_adaptformer_sade}%
\end{table*}%

\begin{table*}[h!]
  \centering
  \caption{Performance Comparison on ImageNet-LT When Fine-tuning with LoRA (\textbf{SADE's Setting})}
  \small
  % \resizebox{\linewidth}{!}{
      \begin{tabular}{lcccccccccccc}
      \toprule
      \multirow{1.5}[4]{*}{\textbf{Method}} & \multicolumn{5}{c}{\textbf{Forward-LT}} & \textbf{Uni.} & \multicolumn{5}{c}{\textbf{Backward-LT}} & \multirow{1.5}[4]{*}{\textbf{Mean}} \\
  \cmidrule(lr){2-6}\cmidrule(lr){7-7}\cmidrule(lr){8-12}          & \textbf{50} & \textbf{25} & \textbf{10} & \textbf{5} & \textbf{2} & \textbf{1} & \textbf{2} & \textbf{5} & \textbf{10} & \textbf{25} & \textbf{50} &  \\
      \midrule
      LIFT-LoRA & 78.69 & 77.95 & 77.52 & \cellcolor[rgb]{ .996,  .957,  .933}76.86 & \cellcolor[rgb]{ .98,  .827,  .729}\underline{76.28} & \cellcolor[rgb]{ .973,  .796,  .678}\textbf{75.64} & \cellcolor[rgb]{ .984,  .875,  .8}\underline{75.01} & \cellcolor[rgb]{ .984,  .875,  .804}74.09 & \cellcolor[rgb]{ 1,  .984,  .973}73.52 & 72.92 & 71.64 & 75.47$_{\color{blue}(\pm 2.14)}$ \\
      SADE-LoRA & \cellcolor[rgb]{ .976,  .816,  .706}\underline{82.58} & \cellcolor[rgb]{ .976,  .804,  .694}\underline{81.11} & \cellcolor[rgb]{ .973,  .796,  .678}\underline{79.01} & \cellcolor[rgb]{ .973,  .796,  .678}\textbf{77.53} & \cellcolor[rgb]{ .973,  .796,  .678}\textbf{76.46} & \cellcolor[rgb]{ .984,  .871,  .792}75.50 & 74.57 & 72.45 & 73.31 & \cellcolor[rgb]{ .988,  .906,  .851}74.70 & \cellcolor[rgb]{ .98,  .843,  .753}75.71 & \cellcolor[rgb]{ .984,  .878,  .808}76.63$_{\color{blue}(\pm 3.02)}$ \\
      DirMixE-LoRA & \cellcolor[rgb]{ .976,  .816,  .71}82.52 & \cellcolor[rgb]{ .976,  .824,  .722}80.83 & \cellcolor[rgb]{ .984,  .863,  .78}78.54 & 76.68 & 75.29 & 75.24 & \cellcolor[rgb]{ .984,  .878,  .808}74.99 & \cellcolor[rgb]{ .98,  .851,  .765}\underline{74.39} & \cellcolor[rgb]{ .98,  .851,  .769}\underline{75.08} & \cellcolor[rgb]{ .98,  .831,  .737}\underline{76.09} & \cellcolor[rgb]{ .976,  .816,  .706}\underline{76.46} & \cellcolor[rgb]{ .98,  .851,  .761}\underline{76.92}$_{\color{blue}(\pm 2.51)}$ \\
      % DirMixE-LoRA-LSF & \cellcolor[rgb]{ .976,  .808,  .694}\underline{82.71} & \cellcolor[rgb]{ .976,  .812,  .706}80.99 & \cellcolor[rgb]{ .98,  .831,  .729}78.78 & \cellcolor[rgb]{ .992,  .929,  .89}76.98 & \cellcolor[rgb]{ .992,  .933,  .89}75.69 & \cellcolor[rgb]{ .988,  .894,  .831}75.45 & \cellcolor[rgb]{ .98,  .831,  .729}\underline{75.16} & \cellcolor[rgb]{ .976,  .816,  .71}\underline{74.84} & \cellcolor[rgb]{ .98,  .839,  .741}\underline{75.26} & \cellcolor[rgb]{ .976,  .827,  .725}\underline{76.23} & \cellcolor[rgb]{ .976,  .824,  .722}76.24 & \cellcolor[rgb]{ .98,  .827,  .729}\underline{77.12} \\
      \midrule
      \textbf{DirMixE-LoRA-LSF} & \cellcolor[rgb]{ .973,  .796,  .678}\textbf{82.91} & \cellcolor[rgb]{ .973,  .796,  .678}\textbf{81.23} & \cellcolor[rgb]{ .976,  .8,  .682}\textbf{79.00} & \cellcolor[rgb]{ .984,  .871,  .792}\underline{77.23} & \cellcolor[rgb]{ .988,  .91,  .859}75.81 & \cellcolor[rgb]{ .976,  .808,  .698}\underline{75.62} & \cellcolor[rgb]{ .973,  .796,  .678}\textbf{75.27} & \cellcolor[rgb]{ .973,  .796,  .678}\textbf{75.09} & \cellcolor[rgb]{ .973,  .796,  .678}\textbf{75.73} & \cellcolor[rgb]{ .973,  .796,  .678}\textbf{76.75} & \cellcolor[rgb]{ .973,  .796,  .678}\textbf{76.89} & \cellcolor[rgb]{ .973,  .796,  .678}\textbf{77.41}$_{\color{blue}(\pm 2.46)}$ \\
      \bottomrule
      \end{tabular}%
  % }
  \label{tab:performance_comparison_imagenet_lora_sade}%
\end{table*}%

% Table generated by Excel2LaTeX from sheet 'ImageNet'
\begin{table*}[h!]
  \centering
  \caption{Performance Comparison on ImageNet-LT When Fine-tuning with AdaptFormer (\textbf{SADE's Setting})}
  \small
  % \resizebox{\linewidth}{!}{
      \begin{tabular}{lcccccccccccc}
      \toprule
      \multirow{1.5}[4]{*}{\textbf{Method}} & \multicolumn{5}{c}{\textbf{Forward-LT}} & \textbf{Uni.} & \multicolumn{5}{c}{\textbf{Backward-LT}} & \multirow{1.5}[4]{*}{\textbf{Mean}} \\
  \cmidrule(lr){2-6}\cmidrule(lr){7-7}\cmidrule(lr){8-12}          & \textbf{50} & \textbf{25} & \textbf{10} & \textbf{5} & \textbf{2} & \textbf{1} & \textbf{2} & \textbf{5} & \textbf{10} & \textbf{25} & \textbf{50} &  \\
      \midrule
      LIFT-AF & 80.42 & 79.67 & 79.07 & \cellcolor[rgb]{ .996,  .965,  .941}\underline{78.54} & \cellcolor[rgb]{ .973,  .796,  .678}\textbf{77.82} & \cellcolor[rgb]{ .973,  .796,  .678}\textbf{77.28} & \cellcolor[rgb]{ .973,  .796,  .678}\textbf{76.65} & \cellcolor[rgb]{ .984,  .871,  .796}75.86 & 75.31 & 74.72 & 73.55 & 77.17$_{\color{blue}(\pm 1.81)}$ \\
      SADE-AF & \cellcolor[rgb]{ .976,  .812,  .698}83.64 & \cellcolor[rgb]{ .973,  .796,  .678}\textbf{82.29} & \cellcolor[rgb]{ .973,  .796,  .678}\textbf{80.31} & \cellcolor[rgb]{ .973,  .796,  .678}\textbf{79.14} & \cellcolor[rgb]{ .98,  .835,  .737}\underline{77.70} & 76.79 & 75.93 & 75.43 & \cellcolor[rgb]{ .996,  .945,  .914}75.64 & \cellcolor[rgb]{ .98,  .851,  .765}76.66 & \cellcolor[rgb]{ .976,  .808,  .694}77.11 & \cellcolor[rgb]{ .976,  .82,  .714}78.24$_{\color{blue}(\pm 2.75)}$ \\
      DirMixE-AF & \cellcolor[rgb]{ .976,  .808,  .694}\underline{83.70} & \cellcolor[rgb]{ .976,  .812,  .702}\underline{82.11} & \cellcolor[rgb]{ .98,  .851,  .765}\underline{79.99} & \cellcolor[rgb]{ 1,  .984,  .973}78.47 & 77.16 & \cellcolor[rgb]{ 1,  .98,  .969}76.84 & \cellcolor[rgb]{ .984,  .871,  .792}76.40 & \cellcolor[rgb]{ .976,  .824,  .718}\underline{76.02} & \cellcolor[rgb]{ .98,  .831,  .729}\underline{76.34} & \cellcolor[rgb]{ .976,  .812,  .706}\underline{77.16} & \cellcolor[rgb]{ .976,  .8,  .686}\underline{77.22} & \cellcolor[rgb]{ .976,  .808,  .698}\underline{78.31}$_{\color{blue}(\pm 2.53)}$ \\
      \midrule
      \textbf{DirMixE-AF-LSF} & \cellcolor[rgb]{ .973,  .796,  .678}\textbf{83.84} & \cellcolor[rgb]{ .976,  .82,  .714}82.02 & \cellcolor[rgb]{ .98,  .855,  .769}79.97 & 78.40 & \cellcolor[rgb]{ 1,  .984,  .973}77.22 & \cellcolor[rgb]{ .992,  .929,  .89}\underline{76.96} & \cellcolor[rgb]{ .984,  .859,  .78}\underline{76.43} & \cellcolor[rgb]{ .973,  .796,  .678}\textbf{76.10} & \cellcolor[rgb]{ .973,  .796,  .678}\textbf{76.53} & \cellcolor[rgb]{ .973,  .796,  .678}\textbf{77.36} & \cellcolor[rgb]{ .973,  .796,  .678}\textbf{77.28} & \cellcolor[rgb]{ .973,  .796,  .678}\textbf{78.37}$_{\color{blue}(\pm 2.50)}$ \\
      \bottomrule
      \end{tabular}%
  % }
  \label{tab:performance_comparison_imagenet_adaptformer_sade}%
\end{table*}%

\begin{table}[h!]
  \centering
  \caption{Performance Comparison on iNaturalist When Fine-tuning with AdaptFormer (\textbf{SADE's Setting})}
  \small
  % \resizebox{\linewidth}{!}{
      \begin{tabular}{lcccccc}
      \toprule
      \multirow{1.5}[4]{*}{\textbf{Method}} & \multicolumn{2}{c}{\textbf{Forward-LT}} & \textbf{Uni.} & \multicolumn{2}{c}{\textbf{Backward-LT}} & \multirow{1.5}[4]{*}{\textbf{Mean}} \\
  \cmidrule(lr){2-3}\cmidrule(lr){4-4}\cmidrule(lr){5-6}          & \textbf{3} & \textbf{2} & \textbf{1} & \textbf{2} & \textbf{3} &  \\
      \midrule
      LIFT-AF & \cellcolor[rgb]{ .922,  .953,  .98}76.84 & \cellcolor[rgb]{ .988,  .992,  1}77.28 & \cellcolor[rgb]{ .965,  .976,  .992}\underline{77.86} & \cellcolor[rgb]{ .882,  .929,  .973}78.23 & 78.16 & \cellcolor[rgb]{ .98,  .988,  .996}77.67$_{\color{blue}(\pm 0.53)}$ \\
      SADE-AF & \cellcolor[rgb]{ .859,  .918,  .965}\underline{77.03} & \cellcolor[rgb]{ .957,  .976,  .992}\underline{77.37} & 77.77 & 77.70 & \cellcolor[rgb]{ .988,  .992,  1}78.20 & 77.61$_{\color{blue}(\pm 0.39)}$ \\
      DirMixE-AF & 76.59 & 77.24 & \cellcolor[rgb]{ .969,  .98,  .992}77.85 & \cellcolor[rgb]{ .824,  .894,  .957}\underline{78.49} & \cellcolor[rgb]{ .933,  .961,  .984}\underline{78.37} & \cellcolor[rgb]{ .965,  .98,  .992}\underline{77.71}$_{\color{blue}(\pm 0.71)}$ \\
      \midrule
      \textbf{DirMixE-AF-LSF} & \cellcolor[rgb]{ .741,  .843,  .933}\textbf{77.39} & \cellcolor[rgb]{ .741,  .843,  .933}\textbf{77.99} & \cellcolor[rgb]{ .741,  .843,  .933}\textbf{78.37} & \cellcolor[rgb]{ .741,  .843,  .933}\textbf{78.85} & \cellcolor[rgb]{ .741,  .843,  .933}\textbf{78.95} & \cellcolor[rgb]{ .741,  .843,  .933}\textbf{78.31}$_{\color{blue}(\pm 0.58)}$ \\
      \bottomrule
      \end{tabular}%
  % }
  \label{tab:performance_comparison_inaturalist_adaptformer_sade}%
\end{table}%

  \subsection{Experts Assignment}   \label{exp:weight}
  In this part, we validate the ability of the test-time self-supervised aggregation by visualizing the weight assignments for different label distributions in Fig.\ref{fig:weight10}-\ref{fig:weightI}. The forward and backward experts always tend to have a significant weight for their corresponding distributions. Uniform distributions tend to utilize all three experts. This is because tail and head classes are equally crucial for uniform distribution.

  % \begin{figure*}[h!]  
  %   \centering
  %   \subfigure[Our's Setting]{
    
  %     \includegraphics[width=0.44\columnwidth]{./figs/weights_figure(1)/weights_figure/cifar100_ours_setting.png} 
  %   }
  %   \subfigure[SADE's Setting]{
  %     \includegraphics[width=0.44\columnwidth]{./figs/weights_figure(1)/weights_figure/cifar100_sade_setting.png} 
  %   }
    
  %   \caption{\label{fig:weight100} \textbf{Weight Assignment in the Self-supervised Aggregations on CIFAR-100.} \textbf{F,U,B} represent the forward, uniform and backward distributions. For the x-axis, \textbf{F-1,F-2,F-3} in \textbf{(a)} denote the three observed label distributions in the test data respectively. \textbf{F-2,F-5,$\cdots$,F-100} represents the corresponding imbalance ratio of the forward distribution under SADE's setting. The case for \textbf{the suffices of U and B} are similar. \textbf{Expert-F,U,B} represents the experts assigned to the forward, backward, uniform Dirichlet distribution, respectively.}
  % \end{figure*}

  \begin{figure*}[h!]  
    \centering
    \subfigure[Our's Setting]{
    
      \includegraphics[width=0.44\columnwidth]{./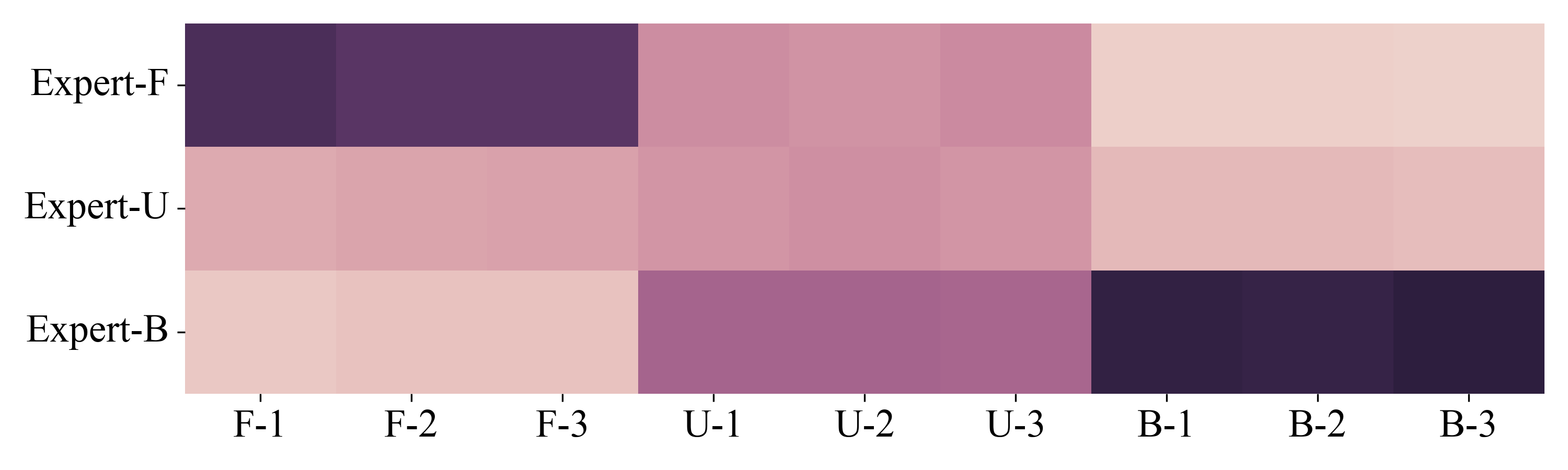} 
    }
    \subfigure[SADE's Setting]{
      \includegraphics[width=0.44\columnwidth]{./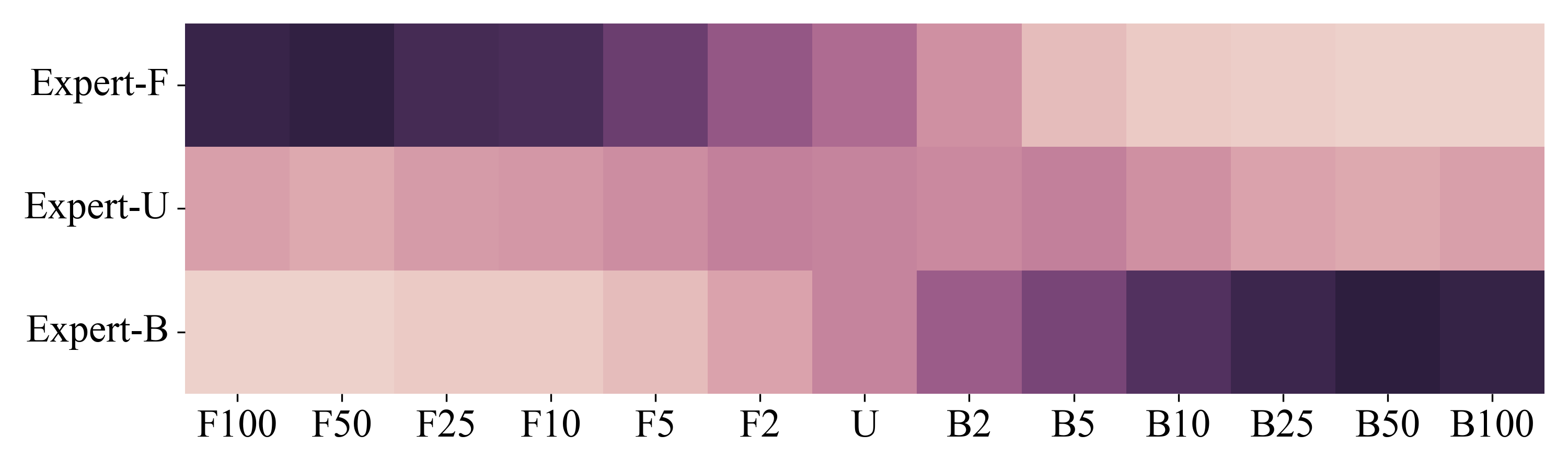} 
    }
    
    \caption{\label{fig:weight10} \textbf{Weight Assignment in the Self-supervised Aggregations on CIFAR-10.} \textbf{F,U,B} represent the forward, uniform and backward distributions. For the x-axis, \textbf{F-1,F-2,F-3} in \textbf{(a)} denote the three observed label distributions in the test data respectively. \textbf{F-2,F-5,$\cdots$,F-100} represents the corresponding imbalance ratio of the forward distribution under SADE's setting. The case for \textbf{the suffices of U and B} are similar. \textbf{Expert-F,U,B} represents the experts assigned to the forward, backward, uniform Dirichlet distribution, respectively.}
  \end{figure*}

  \begin{figure*}[h!]  
    \centering
    \subfigure[Our's Setting]{
    
      \includegraphics[width=0.44\columnwidth]{./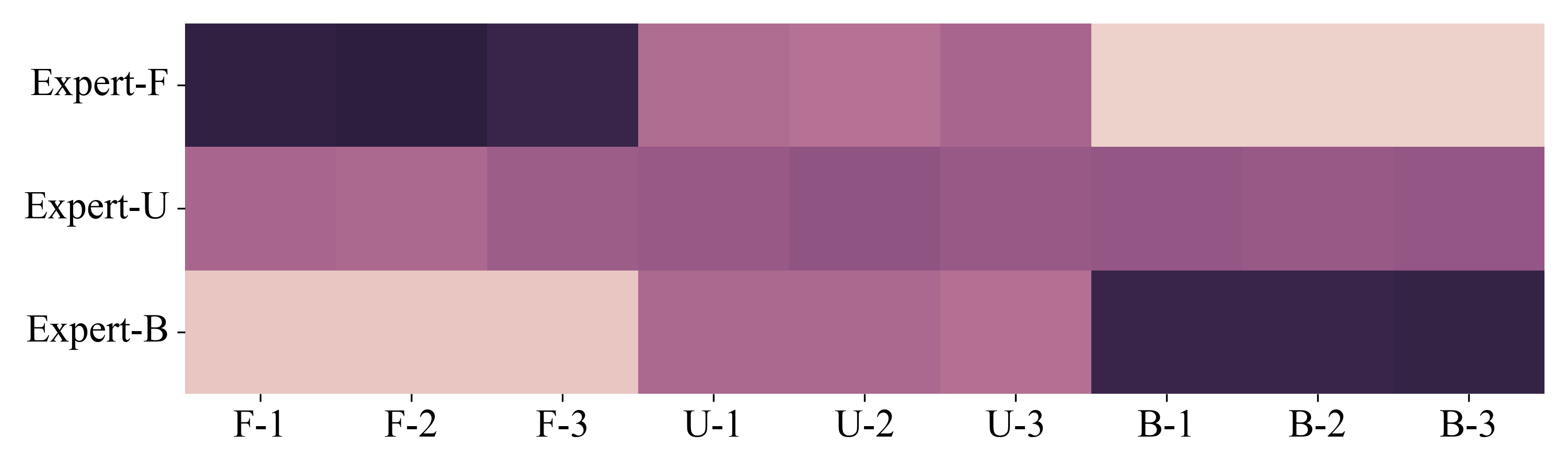} 
    }
    \subfigure[SADE's Setting]{
      \includegraphics[width=0.44\columnwidth]{./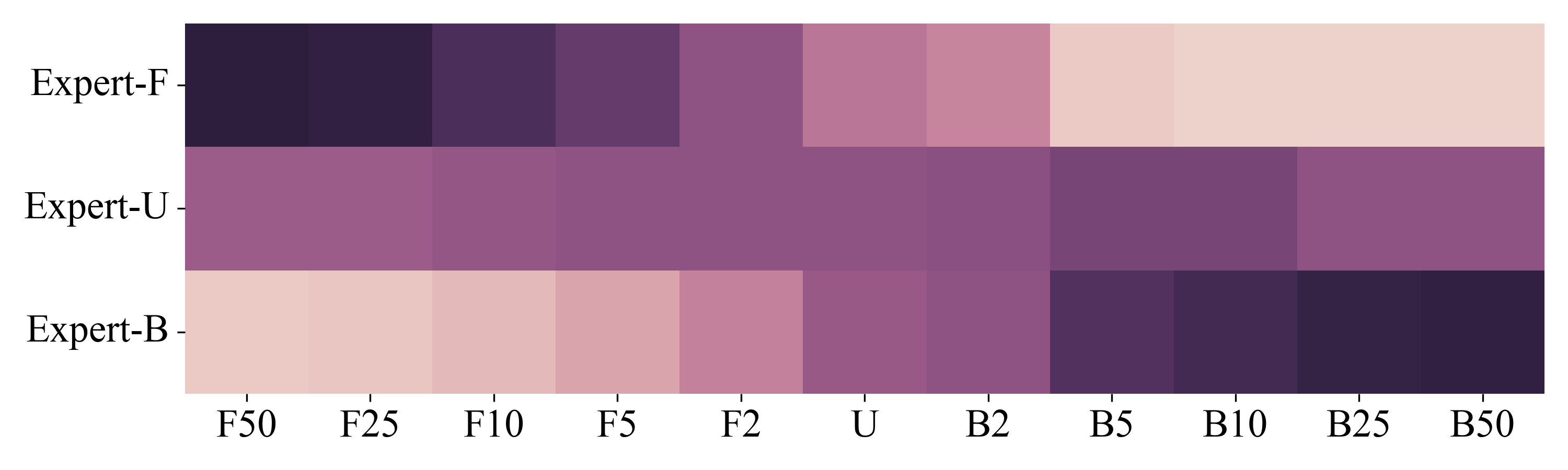} 
    }
    
    \caption{\label{fig:weightI} \textbf{Weight Assignment in the Self-supervised Aggregations on ImageNet.} \textbf{F,U,B} represent the forward, uniform and backward distributions. For the x-axis, \textbf{F-1,F-2,F-3} in \textbf{(a)} denote the three observed label distributions in the test data respectively. \textbf{F-2,F-5,$\cdots$,F-100} represents the corresponding imbalance ratio of the forward distribution under SADE's setting. The case for \textbf{the suffices of U and B} are similar. \textbf{Expert-F,U,B} represents the experts assigned to the forward, backward, uniform Dirichlet distribution, respectively.}
  \end{figure*}

  \subsection{Fine-grained Performance}  \label{exp:shot_acc}
  In addition to the overall accuracy of test datasets,  we also examine the effectiveness of DirMixE across many-shot, medium-shot, and few-shot classes. For CIFAR 100-LT and ImageNet-LT, classes are divided into many-shot ($> 100$), medium-shot ($20 \sim 100$), and few-shot ($< 20$) categories. In the case of CIFAR 10-LT, classes are split into many-shot ($> 1000$), medium-shot ($200 \sim 1000$), and few-shot ($< 200$) categories. All these statistics are based on the training label distribution. The results are illustrated in heat maps in Fig.\ref{fig:fine-cifar10} for CIFAR-10.

  \begin{figure}[h!]  
    \centering
    \subfigure[Our's Setting]{
    
      \includegraphics[width=0.88\columnwidth]{./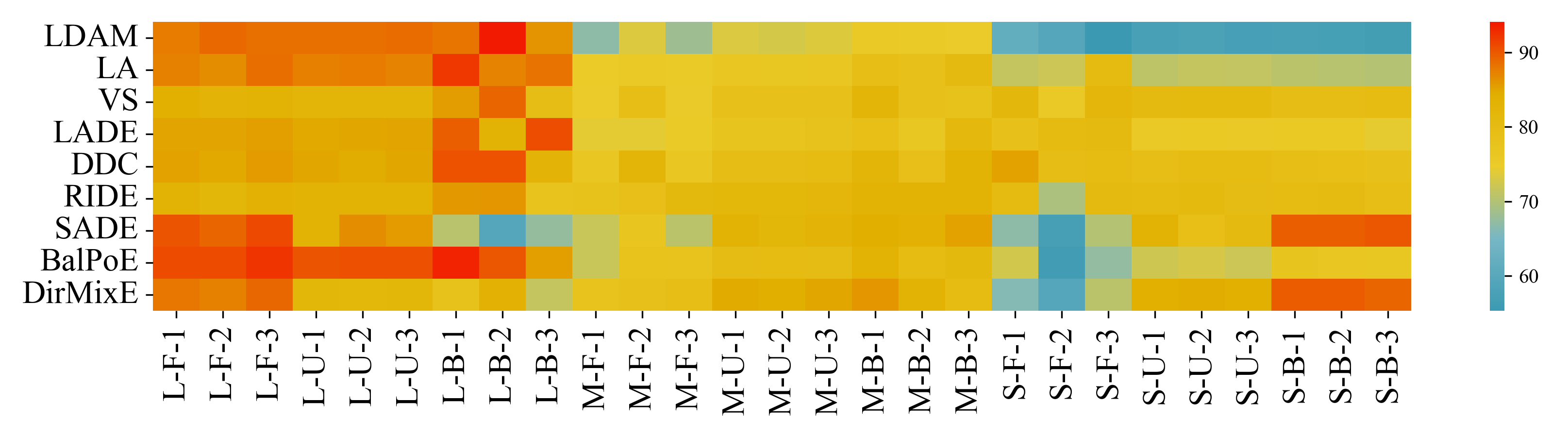} 
    }
    \subfigure[SADE's Setting]{
      \includegraphics[width=0.88\columnwidth]{./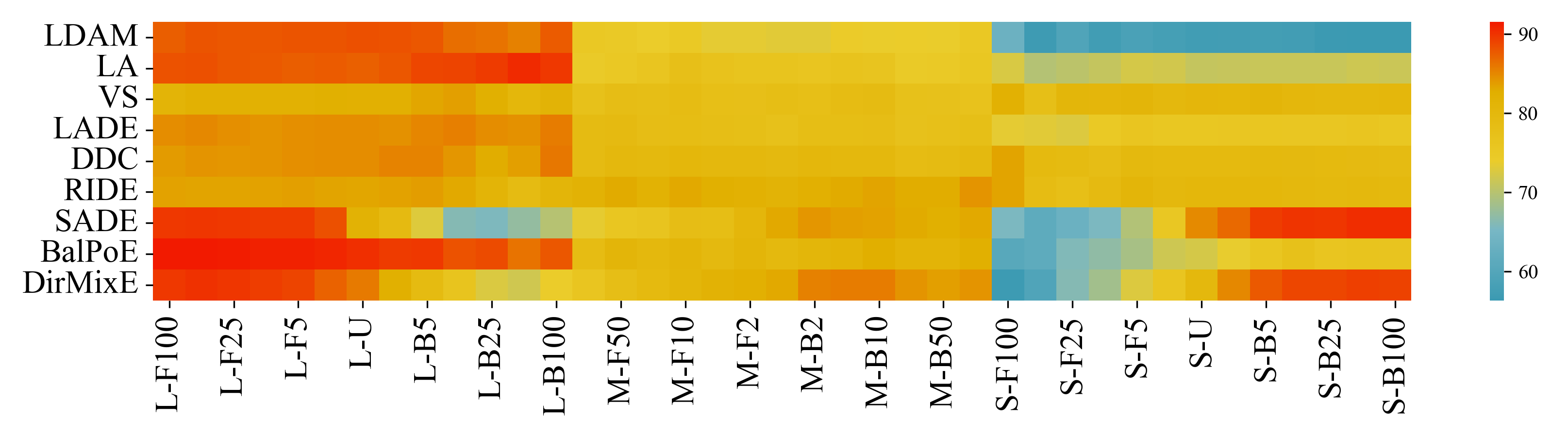} 
    }
    
    \caption{\label{fig:fine-cifar10} \textbf{Fine-grained Performance of DirMixE on CIFAR-10-LT} \textbf{F,U,B} represent the forward, uniform and backward distributions. For the x-axis, \textbf{F-1,F-2,F-3} in \textbf{(a)} denote the three observed label distributions in the test data respectively. \textbf{F2,F5,$\cdots$,F100} in \textbf{(b)} represents the corresponding imbalance ratio of the forward distribution under SADE's setting. The case for \textbf{the suffixes of U and B} are similar. \textbf{L, M, S} denote the many-shot, medium-shot and few-shot classes respectively. For example, \textbf{L-B100} indicates the performance of medium-shot classes on the backward distribution with an imbalance ratio of 100.}
  \end{figure}

  \subsection{Comparison with Stronger Baselines}\label{exp:strong}
  In this subsection, we further compare our method with two stronger basline: GPACO \cite{GPACO}, and BCL \cite{BCL}. The results are shown in Tab.\ref{tab:strong-100-sade}-\ref{tab:strong-IMG-ours}.

  \begin{table*}[h!]
    \centering
    \caption{Performance Comparison on CIFAR-100-LT (SADE's Setting)}
    \renewcommand\arraystretch{1.0}
    \resizebox{\linewidth}{!}{
      \begin{tabular}{lcccccccccccccc}
      \toprule
      \multicolumn{1}{r}{\multirow{2}[3]{*}{Method}} & \multicolumn{6}{c}{Foward-LT}                 & Uni.  & \multicolumn{6}{c}{Backward-LT}               & \multirow{2}[3]{*}{Mean} \\
  \cmidrule(lr){2-7}\cmidrule(lr){8-8}\cmidrule(lr){9-14}          & 100   & 50    & 25    & 10    & 5     & 2     & 1     & 2     & 5     & 10    & 25    & 50    & 100   &  \\
      \midrule
      GPaCo-ResNet32 & 64.01 & 62.59 & 61.06 & \cellcolor[rgb]{ .961,  .98,  .945}58.49 & \cellcolor[rgb]{ .776,  .878,  .706}56.55 & \cellcolor[rgb]{ .776,  .878,  .706}53.71 & \cellcolor[rgb]{ .776,  .878,  .706}51.53 & \cellcolor[rgb]{ .776,  .878,  .706}48.8 & \cellcolor[rgb]{ .776,  .878,  .706}45.36 & \cellcolor[rgb]{ .843,  .918,  .796}43.19 & \cellcolor[rgb]{ .878,  .933,  .839}40.02 & \cellcolor[rgb]{ .89,  .941,  .855}38.86 & \cellcolor[rgb]{ .902,  .949,  .871}37.07 & \cellcolor[rgb]{ .882,  .937,  .843}50.86$_{\color{blue}(\pm \phantom{0}9.02)}$ \\
      BCL-ResNet32 & \cellcolor[rgb]{ .933,  .965,  .91}65.37 & \cellcolor[rgb]{ .941,  .969,  .922}63.56 & \cellcolor[rgb]{ .961,  .98,  .949}61.43 & 58.26 & 56.00    & 52.43 & \cellcolor[rgb]{ .898,  .945,  .867}49.82 & 46.43 & 42.19 & 39.37 & 35.35 & 33.43 & 31.30  & 48.84$_{\color{blue}(\pm 11.33)}$ \\
      Ours-ResNet32 & \cellcolor[rgb]{ .776,  .878,  .706}68.32 & \cellcolor[rgb]{ .776,  .878,  .706}66.21 & \cellcolor[rgb]{ .776,  .878,  .706}63.09 & \cellcolor[rgb]{ .776,  .878,  .706}59.49 & \cellcolor[rgb]{ .859,  .925,  .816}56.35 & \cellcolor[rgb]{ .792,  .886,  .725}53.63 & 48.38 & 46.40  & \cellcolor[rgb]{ .8,  .894,  .737}45.05 & \cellcolor[rgb]{ .776,  .878,  .706}44.79 & \cellcolor[rgb]{ .776,  .878,  .706}43.71 & \cellcolor[rgb]{ .776,  .878,  .706}44.41 & \cellcolor[rgb]{ .776,  .878,  .706}44.25 & \cellcolor[rgb]{ .776,  .878,  .706}52.62$_{\color{blue}(\pm \phantom{0}8.75)}$ \\
      \bottomrule
      \end{tabular}%
    }
    \label{tab:strong-100-sade}%
  \end{table*}%

  % Table generated by Excel2LaTeX from sheet 'Baseline_Ours_CIFAR-100 LT'
  \begin{table*}[h!]
    \centering
    \caption{Performance Comparison on CIFAR-100-LT (Ours Setting)}
    \renewcommand\arraystretch{1.0}
    \small
    % \resizebox{\linewidth}{!}{
      \begin{tabular}{lcccccccccc}
      \toprule
      \multicolumn{1}{r}{\multirow{2}[4]{*}{Method}} & \multicolumn{3}{c}{Foward-LT} & \multicolumn{3}{c}{Uniform} & \multicolumn{3}{c}{Backward-LT} & \multirow{2}[4]{*}{Mean} \\
  \cmidrule(lr){2-4}\cmidrule(lr){5-7}\cmidrule(lr){8-10}          & 1     & 2     & 3     & 1     & 2     & 3     & 1     & 2     & 3     &  \\
      \midrule
      GPaCo-ResNet32 & 62.89  & 60.31  & 64.50  & \cellcolor[rgb]{ .776,  .878,  .706}50.18  & \cellcolor[rgb]{ .843,  .914,  .792}48.86  & \cellcolor[rgb]{ .776,  .878,  .706}49.58  & \cellcolor[rgb]{ .918,  .957,  .89}37.15  & \cellcolor[rgb]{ .91,  .953,  .882}37.09  & \cellcolor[rgb]{ .961,  .98,  .945}34.94  & \cellcolor[rgb]{ .941,  .969,  .922}49.50$_{\color{blue}(\pm 10.75)}$  \\
      BCL-ResNet32 & \cellcolor[rgb]{ .906,  .949,  .878}64.56  & \cellcolor[rgb]{ .91,  .953,  .878}62.85  & \cellcolor[rgb]{ .914,  .953,  .882}66.49  & 47.37  & 47.51  & \cellcolor[rgb]{ .831,  .91,  .78}48.27  & 32.59  & 30.17  & 32.74  & 48.06$_{\color{blue}(\pm 13.44)}$  \\
      Ours-ResNet32 & \cellcolor[rgb]{ .776,  .878,  .706}66.85  & \cellcolor[rgb]{ .776,  .878,  .706}66.40  & \cellcolor[rgb]{ .776,  .878,  .706}69.44  & \cellcolor[rgb]{ .953,  .976,  .937}47.99  & \cellcolor[rgb]{ .776,  .878,  .706}49.41  & 44.21  & \cellcolor[rgb]{ .776,  .878,  .706}44.41  & \cellcolor[rgb]{ .776,  .878,  .706}47.01  & \cellcolor[rgb]{ .776,  .878,  .706}44.35  & \cellcolor[rgb]{ .776,  .878,  .706}53.34$_{\color{blue}(\pm 10.22)}$  \\
      \bottomrule
      \end{tabular}%
    % }
    \label{tab:strong-100-ours}%
  \end{table*}%

  % Table generated by Excel2LaTeX from sheet 'Baseline_SADE_ImageNet-LT'
  \begin{table*}[h!]
    \centering
    \caption{Performance Comparison on ImageNet-LT (SADE's Setting)}
    \renewcommand\arraystretch{1.0}
    \small
      \begin{tabular}{lcccccccccccc}
      \toprule
      \multicolumn{1}{r}{\multirow{2}[4]{*}{Method}} & \multicolumn{5}{c}{Foward-TL}         & Uni.  & \multicolumn{5}{c}{Backward-LT}       & \multirow{2}[4]{*}{Mean} \\
  \cmidrule(lr){2-6}\cmidrule(lr){7-7}\cmidrule(lr){8-12}          & 50    & 25    & 10    & 5     & 2     & 1     & 2     & 5     & 10    & 25    & 50    &  \\
      \midrule
      GPaCo-ResNeXt50 & 65.68 & 64.67 & 63.91 & \cellcolor[rgb]{ .996,  1,  1}62.31 & \cellcolor[rgb]{ .804,  .882,  .949}60.27 & \cellcolor[rgb]{ .741,  .843,  .933}58.69 & \cellcolor[rgb]{ .788,  .871,  .945}56.86 & \cellcolor[rgb]{ .812,  .886,  .953}54.3 & \cellcolor[rgb]{ .855,  .914,  .965}52.56 & \cellcolor[rgb]{ .898,  .937,  .976}50.49 & \cellcolor[rgb]{ .91,  .945,  .98}48.37 & \cellcolor[rgb]{ .918,  .949,  .98}58.01$_{\color{blue}(\pm 5.69)}$ \\
      BCL-ResNeXt50 & \cellcolor[rgb]{ .898,  .941,  .976}67.44 & \cellcolor[rgb]{ .882,  .929,  .973}66.41 & \cellcolor[rgb]{ .949,  .969,  .988}64.33 & 62.29 & 59.49 & 57.25 & 54.83 & 51.63 & 49.39 & 47.2  & 44.6  & 56.81$_{\color{blue}(\pm 7.55)}$ \\
      Ours-ResNeXt50 & \cellcolor[rgb]{ .741,  .843,  .933}70.09 & \cellcolor[rgb]{ .741,  .843,  .933}68.48 & \cellcolor[rgb]{ .741,  .843,  .933}65.93 & \cellcolor[rgb]{ .741,  .843,  .933}63.22 & \cellcolor[rgb]{ .741,  .843,  .933}60.5 & \cellcolor[rgb]{ .757,  .855,  .937}58.61 & \cellcolor[rgb]{ .741,  .843,  .933}57.27 & \cellcolor[rgb]{ .741,  .843,  .933}55.27 & \cellcolor[rgb]{ .741,  .843,  .933}55.04 & \cellcolor[rgb]{ .741,  .843,  .933}55.38 & \cellcolor[rgb]{ .741,  .843,  .933}55.33 & \cellcolor[rgb]{ .741,  .843,  .933}60.47$_{\color{blue}(\pm 5.37)}$ \\
      \bottomrule
      \end{tabular}%
    \label{tab:strong-IMG-sade}%
  \end{table*}%

  % Table generated by Excel2LaTeX from sheet 'Baseline_Ours_ImageNet-LT'
  \begin{table*}[h!]
    \centering
    \caption{Performance Comparison on ImageNet-LT (Ours Setting)}
    \renewcommand\arraystretch{1.0}
    \small
      \begin{tabular}{lcccccccccc}
      \toprule
      \multicolumn{1}{r}{\multirow{2}[4]{*}{Method}} & \multicolumn{3}{c}{Foward-LT} & \multicolumn{3}{c}{Uniform} & \multicolumn{3}{c}{Backward-LT} & \multirow{2}[4]{*}{Mean} \\
  \cmidrule(lr){2-4}\cmidrule(lr){5-7}\cmidrule(lr){8-10}          & 1     & 2     & 3     & 1     & 2     & 3     & 1     & 2     & 3     &  \\
      \midrule
      GPaCo-ResNeXt50 & 65.74  & 64.69  & 64.92  & \cellcolor[rgb]{ .741,  .843,  .933}59.52  & \cellcolor[rgb]{ .745,  .847,  .937}58.83  & \cellcolor[rgb]{ .741,  .843,  .933}59.06  & \cellcolor[rgb]{ .898,  .941,  .976}49.43  & \cellcolor[rgb]{ .898,  .941,  .976}50.04  & \cellcolor[rgb]{ .902,  .941,  .976}49.08  & \cellcolor[rgb]{ .933,  .961,  .984}57.92$_{\color{blue}(\pm 6.44)}$  \\
      BCL-ResNeXt50 & \cellcolor[rgb]{ .941,  .965,  .988}66.75  & \cellcolor[rgb]{ .918,  .953,  .98}66.69  & \cellcolor[rgb]{ .933,  .961,  .984}66.31  & 57.41  & 57.52  & \cellcolor[rgb]{ .898,  .941,  .976}58.43  & 45.40  & 46.71  & 44.64  & 56.65$_{\color{blue}(\pm 8.63)}$  \\
      Ours-ResNext50 & \cellcolor[rgb]{ .741,  .843,  .933}70.13  & \cellcolor[rgb]{ .741,  .843,  .933}70.88  & \cellcolor[rgb]{ .741,  .843,  .933}70.29  & \cellcolor[rgb]{ .882,  .929,  .973}58.38  & \cellcolor[rgb]{ .741,  .843,  .933}58.85  & 58.02  & \cellcolor[rgb]{ .741,  .843,  .933}55.59  & \cellcolor[rgb]{ .741,  .843,  .933}55.09  & \cellcolor[rgb]{ .741,  .843,  .933}56.25  & \cellcolor[rgb]{ .741,  .843,  .933}61.50$_{\color{blue}(\pm 6.43)}$  \\
      \bottomrule
      \end{tabular}%
    \label{tab:strong-IMG-ours}%
  \end{table*}%

  \subsection{Ablation Study on the Effect of Semi-Variance}
  We show the performance with and without employing the semi-variance regularization term in Tab.\ref{tab:semi-cifar-100-ours}-Tab.\ref{tab:semi:cifar-10:sade}.The results consistently show that the proposed regularization induces a better average performance on all the datasets. 

  \begin{table}[h!]
    \centering
    \caption{Ablation of Semi-Variance on CIFAR-100 \textbf{(Ours Setting)}}
    \begin{tabular}{ccccccccccc}
        \toprule
        \multicolumn{1}{c}{\multirow{2}[4]{*}{\textbf{Semi-Var}}} & \multicolumn{3}{c}{\textbf{Uniform}} & \multicolumn{3}{c}{\textbf{Forward-LT}} & \multicolumn{3}{c}{\textbf{Backward-LT}} & \multirow{2}[4]{*}{\textbf{Mean}} \\
    \cmidrule{2-10}    \multicolumn{1}{c}{} & \textbf{1} & \textbf{2} & \textbf{3} & \textbf{1} & \textbf{2} & \textbf{3} & \textbf{1} & \textbf{2} & \multicolumn{1}{c}{\textbf{3}} &  \\
        \toprule
        w/o   & 47.31  & 48.76  & \cellcolor[rgb]{ .776,  .878,  .706}44.67  & 66.41  & 65.80  & 69.33  & 43.20  & 45.47  & 42.89  & 52.65  \\
        \midrule
        w     & \cellcolor[rgb]{ .776,  .878,  .706}47.99  & \cellcolor[rgb]{ .776,  .878,  .706}49.41  & 44.21  & \cellcolor[rgb]{ .776,  .878,  .706}66.85  & \cellcolor[rgb]{ .776,  .878,  .706}66.40  & \cellcolor[rgb]{ .776,  .878,  .706}69.44  & \cellcolor[rgb]{ .776,  .878,  .706}44.41  & \cellcolor[rgb]{ .776,  .878,  .706}47.01  & \cellcolor[rgb]{ .776,  .878,  .706}44.35  & \cellcolor[rgb]{ .776,  .878,  .706}53.34  \\
        \bottomrule
    \end{tabular}
    \label{tab:semi-cifar-100-ours}%
  \end{table}%

  \begin{table}[h!]
    \centering
    \caption{Ablation of Semi-Variance on CIFAR-100 \textbf{(SADE Setting)}}
    \renewcommand\arraystretch{1.0}
  \resizebox{\linewidth}{!}{
      \begin{tabular}{ccccccccccccccc}
      \toprule
      \multirow{2}[4]{*}{\textbf{Semi-Var}} & \multicolumn{6}{c}{\textbf{Forward-LT}}       & \textbf{Uni.} & \multicolumn{6}{c}{\textbf{Backward-LT}}      & \multirow{2}[4]{*}{\textbf{Mean}} \\
  \cmidrule{2-14}          & \textbf{100} & \textbf{50} & \textbf{25} & \textbf{10} & \textbf{5} & \textbf{2} & \textbf{1} & \textbf{2} & \textbf{5} & \textbf{10} & \textbf{25} & \textbf{50} & \textbf{100} &  \\
      \toprule
      w/o   & 68.18  & 65.73  & 62.69  & 58.85  & 55.60  & 52.11  & \cellcolor[rgb]{ .776,  .878,  .706}48.85  & \cellcolor[rgb]{ .776,  .878,  .706}46.89  & \cellcolor[rgb]{ .776,  .878,  .706}45.13  & 43.60  & 42.96  & 43.00  & 43.13  & 52.06  \\
      \toprule
      w     & \cellcolor[rgb]{ .776,  .878,  .706}68.32  & \cellcolor[rgb]{ .776,  .878,  .706}66.21  & \cellcolor[rgb]{ .776,  .878,  .706}63.09  & \cellcolor[rgb]{ .776,  .878,  .706}59.49  & \cellcolor[rgb]{ .776,  .878,  .706}56.35  & \cellcolor[rgb]{ .776,  .878,  .706}52.62  & 48.38  & 46.40  & 45.05  & \cellcolor[rgb]{ .776,  .878,  .706}44.79  & \cellcolor[rgb]{ .776,  .878,  .706}43.71  & \cellcolor[rgb]{ .776,  .878,  .706}44.41  & \cellcolor[rgb]{ .776,  .878,  .706}44.25  & \cellcolor[rgb]{ .776,  .878,  .706}52.54  \\
      \bottomrule
      \end{tabular}%
  }
    \label{tab:semi:cifar-100:sade}%
  \end{table}%
  
  \begin{table}[h!]
    \centering
    \caption{Ablation of Semi-Variance on CIFAR-10 \textbf{(Ours Setting)}}
      \begin{tabular}{ccccccccccc}
      \toprule
      \multirow{2}[4]{*}{\textbf{Semi-Var}} & \multicolumn{3}{c}{\textbf{Uniform}} & \multicolumn{3}{c}{\textbf{Forward-LT}} & \multicolumn{3}{c}{\textbf{Backward-LT}} & \multirow{2}[4]{*}{\textbf{Mean}} \\
  \cmidrule{2-10}          & \textbf{1} & \textbf{2} & \textbf{3} & \textbf{1} & \textbf{2} & \textbf{3} & \textbf{1} & \textbf{2} & \textbf{3} &  \\
      \midrule
      w/o   & 82.87  & \cellcolor[rgb]{ .741,  .843,  .933}82.99  & 83.36  & 89.86  & 89.58  & 90.80  & \cellcolor[rgb]{ .741,  .843,  .933}90.04  & \cellcolor[rgb]{ .741,  .843,  .933}89.15  & \cellcolor[rgb]{ .741,  .843,  .933}88.96  & 87.51  \\
      w     & \cellcolor[rgb]{ .741,  .843,  .933}83.24  & 82.98  & \cellcolor[rgb]{ .741,  .843,  .933}83.71  & \cellcolor[rgb]{ .741,  .843,  .933}90.46  & \cellcolor[rgb]{ .741,  .843,  .933}89.90  & \cellcolor[rgb]{ .741,  .843,  .933}91.30  & 89.39  & 88.78  & 88.40  & \cellcolor[rgb]{ .741,  .843,  .933}87.57  \\
      \bottomrule
      \end{tabular}%
    \label{tab:semi:CIFAR-10:ours}%
  \end{table}%

  \begin{table}[h!]
    \centering
    \caption{Ablation of Semi-Variance on CIFAR-10 \textbf{(SADE Setting)}}
    \renewcommand\arraystretch{1.0}
    \resizebox{\linewidth}{!}{
      \begin{tabular}{ccccccccccccccc}
      \toprule
      \multirow{2}[4]{*}{\textbf{Semi-Var}} & \multicolumn{6}{c}{\textbf{Forward-LT}}      & \textbf{Uni.} & \multicolumn{6}{c}{\textbf{Backward-LT}}      & \multirow{2}[4]{*}{\textbf{Mean}} \\
  \cmidrule{2-14}          & \textbf{100} & \textbf{50} & \textbf{25} & \textbf{10} & \textbf{5} & \textbf{2} & \textbf{1} & \textbf{2} & \textbf{5} & \textbf{10} & \textbf{25} & \textbf{50} & \textbf{100} &  \\
  \cmidrule{1-15}    w/o   & 90.72  & 90.02  & 88.91  & 87.07  & 85.65  & \cellcolor[rgb]{ .741,  .843,  .933}84.01  & \cellcolor[rgb]{ .741,  .843,  .933}83.36  & 83.80  & 84.53  & 84.82  & 85.48  & 86.37  & 86.84  & 86.28  \\
      w     & \cellcolor[rgb]{ .741,  .843,  .933}90.92  & \cellcolor[rgb]{ .741,  .843,  .933}90.16  & \cellcolor[rgb]{ .741,  .843,  .933}89.04  & \cellcolor[rgb]{ .741,  .843,  .933}87.10  & \cellcolor[rgb]{ .741,  .843,  .933}85.83  & 83.66  & 83.26  & \cellcolor[rgb]{ .741,  .843,  .933}84.16  & \cellcolor[rgb]{ .741,  .843,  .933}85.16  & \cellcolor[rgb]{ .741,  .843,  .933}86.17  & \cellcolor[rgb]{ .741,  .843,  .933}86.62  & \cellcolor[rgb]{ .741,  .843,  .933}87.55  & \cellcolor[rgb]{ .741,  .843,  .933}88.30  & \cellcolor[rgb]{ .741,  .843,  .933}86.76  \\
      \bottomrule
      \end{tabular}%
    }
    \label{tab:semi:cifar-10:sade}%
  \end{table}%

  \subsection{\newsec{Ablation Study on the Effect of Initialization Regime for LoRA-LSF}}\label{app:init}

  \newcont{

  We show the performance with and without employing the initialization regime for LoRA-LSF in Tab.\ref{tab:LoRA-init-cifar10}-Tab.\ref{tab:LoRA-init-IMG-sade}. The results consistently show that the proposed initialization regime induces a better average performance on all the datasets.

  }

  % Table generated by Excel2LaTeX from sheet 'LoRA init'
  \begin{table}[h!]
    \centering
    \caption{Ablation of Initialization Regime for LoRA-LSF on CIFAR-10 \textbf{(Ours Setting)}}
      \begin{tabular}{ccccccccccc}
      \toprule
      \multirow{2}[4]{*}{\textbf{Initialization Regime}} & \multicolumn{3}{c}{\textbf{Forward-LT}} & \multicolumn{3}{c}{\textbf{Uniform}} & \multicolumn{3}{c}{\textbf{Backward-LT}} & \multirow{2}[4]{*}{\textbf{Mean}} \\
  \cmidrule{2-10}          & \textbf{1} & \textbf{2} & \textbf{3} & \textbf{1} & \textbf{2} & \textbf{3} & \textbf{1} & \textbf{2} & \textbf{3} &  \\
      \midrule
      w/o   & 98.16 & 98.24 & 98.41 & 96.53 & 96.40 & \cellcolor[rgb]{ .741,  .843,  .933}96.44 & 98.32 & 98.65 & 98.36 & 97.72 \\
      \midrule
      w     & \cellcolor[rgb]{ .741,  .843,  .933}98.40 & \cellcolor[rgb]{ .741,  .843,  .933}98.32 & \cellcolor[rgb]{ .741,  .843,  .933}98.49 & \cellcolor[rgb]{ .741,  .843,  .933}96.54 & \cellcolor[rgb]{ .741,  .843,  .933}96.42 & 96.39 & \cellcolor[rgb]{ .741,  .843,  .933}98.70 & \cellcolor[rgb]{ .741,  .843,  .933}98.81 & \cellcolor[rgb]{ .741,  .843,  .933}98.44 & \cellcolor[rgb]{ .741,  .843,  .933}97.83 \\
      \bottomrule
      \end{tabular}%
    \label{tab:LoRA-init-cifar10}%
  \end{table}%

  % Table generated by Excel2LaTeX from sheet 'LoRA init'
  \begin{table}[h!]
    \centering
    \caption{Ablation of Initialization Regime for LoRA-LSF on CIFAR-10 \textbf{(SADE's Setting)}}
      \begin{tabular}{ccccccccccccccc}
      \toprule
      \multirow{2}[4]{*}{\textbf{Initialization Regime}} & \multicolumn{6}{c}{\textbf{Forward-LT}}       & \textbf{Uni.} & \multicolumn{6}{c}{\textbf{Backward-LT}}      & \multirow{2}[4]{*}{\textbf{Mean}} \\
  \cmidrule{2-14}          & \textbf{100} & \textbf{50} & \textbf{25} & \textbf{10} & \textbf{5} & \textbf{2} & \textbf{1} & \textbf{2} & \textbf{5} & \textbf{10} & \textbf{25} & \textbf{50} & \textbf{100} &  \\
      \midrule
      w/o   & 98.18 & 98.03 & 97.86 & \cellcolor[rgb]{ .741,  .843,  .933}97.43 & 96.93 & 96.89 & 96.76 & \cellcolor[rgb]{ .741,  .843,  .933}96.81 & 96.91 & \cellcolor[rgb]{ .741,  .843,  .933}97.28 & 97.80 & 98.39 & \cellcolor[rgb]{ .741,  .843,  .933}98.83 & 97.55 \\
      \midrule
      w     & \cellcolor[rgb]{ .741,  .843,  .933}98.35 & \cellcolor[rgb]{ .741,  .843,  .933}98.14 & \cellcolor[rgb]{ .741,  .843,  .933}97.89 & \cellcolor[rgb]{ .741,  .843,  .933}97.43 & \cellcolor[rgb]{ .741,  .843,  .933}97.07 & \cellcolor[rgb]{ .741,  .843,  .933}97.00 & \cellcolor[rgb]{ .741,  .843,  .933}96.93 & \cellcolor[rgb]{ .741,  .843,  .933}96.81 & \cellcolor[rgb]{ .741,  .843,  .933}97.01 & \cellcolor[rgb]{ .741,  .843,  .933}97.28 & \cellcolor[rgb]{ .741,  .843,  .933}97.86 & \cellcolor[rgb]{ .741,  .843,  .933}98.46 & 98.71 & \cellcolor[rgb]{ .741,  .843,  .933}97.61 \\
      \bottomrule
      \end{tabular}%
    \label{tab:LoRA-init-cifar10-sade}%
  \end{table}%

  % Table generated by Excel2LaTeX from sheet 'LoRA init'
  \begin{table}[h!]
    \centering
    \caption{Ablation of Initialization Regime for LoRA-LSF on CIFAR-100 \textbf{(Ours Setting)}}
      \begin{tabular}{ccccccccccc}
      \toprule
      \multirow{2}[4]{*}{\textbf{Initialization Regime}} & \multicolumn{3}{c}{\textbf{Forward-LT}} & \multicolumn{3}{c}{\textbf{Uniform}} & \multicolumn{3}{c}{\textbf{Backward-LT}} & \multirow{2}[4]{*}{\textbf{Mean}} \\
  \cmidrule{2-10}          & \textbf{1} & \textbf{2} & \textbf{3} & \textbf{1} & \textbf{2} & \textbf{3} & \textbf{1} & \textbf{2} & \textbf{3} &  \\
      \midrule
      w/o   & 88.28 & \cellcolor[rgb]{ .776,  .878,  .706}86.68 & \cellcolor[rgb]{ .776,  .878,  .706}88.67 & \cellcolor[rgb]{ .776,  .878,  .706}81.14 & \cellcolor[rgb]{ .776,  .878,  .706}79.84 & 78.97 & 85.29 & \cellcolor[rgb]{ .776,  .878,  .706}83.25 & \cellcolor[rgb]{ .776,  .878,  .706}84.10 & 84.02 \\
      \midrule
      w     & \cellcolor[rgb]{ .776,  .878,  .706}88.90 & 86.61 & 87.97 & 80.53 & 79.08 & \cellcolor[rgb]{ .776,  .878,  .706}82.04 & \cellcolor[rgb]{ .776,  .878,  .706}86.22 & 82.99 & 83.05 & \cellcolor[rgb]{ .776,  .878,  .706}84.15 \\
      \bottomrule
      \end{tabular}%
    \label{tab:LoRA-init-cifar100}%
  \end{table}%

  % Table generated by Excel2LaTeX from sheet 'LoRA init'
  \begin{table}[h!]
    \centering
    \caption{Ablation of Initialization Regime for LoRA-LSF on CIFAR-100 \textbf{(SADE's Setting)}}
      \begin{tabular}{ccccccccccccccc}
      \toprule
      \multirow{2}[4]{*}{\textbf{Initialization Regime}} & \multicolumn{6}{c}{\textbf{Forward-LT}}       & \textbf{Uni.} & \multicolumn{6}{c}{\textbf{Backward-LT}}      & \multirow{2}[4]{*}{\textbf{Mean}} \\
  \cmidrule{2-14}          & \textbf{100} & \textbf{50} & \textbf{25} & \textbf{10} & \textbf{5} & \textbf{2} & \textbf{1} & \textbf{2} & \textbf{5} & \textbf{10} & \textbf{25} & \textbf{50} & \textbf{100} &  \\
      \midrule
      w/o   & \cellcolor[rgb]{ .776,  .878,  .706}89.25 & \cellcolor[rgb]{ .776,  .878,  .706}88.21 & \cellcolor[rgb]{ .776,  .878,  .706}87.04 & \cellcolor[rgb]{ .776,  .878,  .706}85.35 & \cellcolor[rgb]{ .776,  .878,  .706}84.27 & \cellcolor[rgb]{ .776,  .878,  .706}82.61 & \cellcolor[rgb]{ .776,  .878,  .706}81.75 & 80.20 & 78.95 & 80.34 & 81.33 & 82.90 & 83.86 & 83.54 \\
      \midrule
      w     & 88.60 & 87.81 & 86.71 & 84.86 & 83.72 & 82.31 & 81.34 & \cellcolor[rgb]{ .776,  .878,  .706}81.09 & \cellcolor[rgb]{ .776,  .878,  .706}80.49 & \cellcolor[rgb]{ .776,  .878,  .706}81.42 & \cellcolor[rgb]{ .776,  .878,  .706}82.78 & \cellcolor[rgb]{ .776,  .878,  .706}84.19 & \cellcolor[rgb]{ .776,  .878,  .706}85.17 & \cellcolor[rgb]{ .776,  .878,  .706}83.88 \\
      \bottomrule
      \end{tabular}%
    \label{tab:LoRA-init-cifar100-sade}%
  \end{table}%

  % Table generated by Excel2LaTeX from sheet 'LoRA init'
  \begin{table}[h!]
    \centering
    \caption{Ablation of Initialization Regime for LoRA-LSF on ImageNet \textbf{(Ours Setting)}}
      \begin{tabular}{ccccccccccc}
      \toprule
      \multirow{2}[4]{*}{\textbf{Initialization Regime}} & \multicolumn{3}{c}{\textbf{Forward-LT}} & \multicolumn{3}{c}{\textbf{Uniform}} & \multicolumn{3}{c}{\textbf{Backward-LT}} & \multirow{2}[4]{*}{\textbf{Mean}} \\
  \cmidrule{2-10}          & \textbf{1} & \textbf{2} & \textbf{3} & \textbf{1} & \textbf{2} & \textbf{3} & \textbf{1} & \textbf{2} & \textbf{3} &  \\
      \midrule
      w/o   & 82.32 & \cellcolor[rgb]{ .973,  .796,  .678}82.20 & 82.37 & 74.54 & 75.29 & 74.99 & 76.84 & 76.32 & 76.65 & 77.95 \\
      \midrule
      w     & \cellcolor[rgb]{ .973,  .796,  .678}82.87 & 82.03 & \cellcolor[rgb]{ .973,  .796,  .678}82.51 & \cellcolor[rgb]{ .973,  .796,  .678}74.96 & \cellcolor[rgb]{ .973,  .796,  .678}75.66 & \cellcolor[rgb]{ .973,  .796,  .678}75.08 & \cellcolor[rgb]{ .973,  .796,  .678}77.36 & \cellcolor[rgb]{ .973,  .796,  .678}76.61 & \cellcolor[rgb]{ .973,  .796,  .678}77.06 & \cellcolor[rgb]{ .973,  .796,  .678}78.24 \\
      \bottomrule
      \end{tabular}%
    \label{tab:LoRA-init-IMG}%
  \end{table}%

  % Table generated by Excel2LaTeX from sheet 'LoRA init'
  \begin{table}[h!]
    \centering
    \caption{Ablation of Initialization Regime for LoRA-LSF on ImageNet \textbf{(SADE's Setting)}}
      \begin{tabular}{ccccccccccccc}
      \toprule
      \multirow{2}[4]{*}{\textbf{Initialization Regime}} & \multicolumn{5}{c}{\textbf{Forward-LT}} & \textbf{Uni.} & \multicolumn{5}{c}{\textbf{Backward-LT}} & \multirow{2}[4]{*}{\textbf{Mean}} \\
  \cmidrule{2-12}          & \textbf{50} & \textbf{25} & \textbf{10} & \textbf{5} & \textbf{2} & \textbf{1} & \textbf{2} & \textbf{5} & \textbf{10} & \textbf{25} & \textbf{50} &  \\
      \midrule
      w/o   & 82.71 & 80.99 & 78.78 & 76.98 & 75.69 & 75.45 & 75.16 & 74.84 & 75.26 & 76.23 & 76.24 & 77.12 \\
      \midrule
      w     & \cellcolor[rgb]{ .973,  .796,  .678}82.91 & \cellcolor[rgb]{ .973,  .796,  .678}81.23 & \cellcolor[rgb]{ .973,  .796,  .678}79.00 & \cellcolor[rgb]{ .973,  .796,  .678}77.23 & \cellcolor[rgb]{ .973,  .796,  .678}75.81 & \cellcolor[rgb]{ .973,  .796,  .678}75.62 & \cellcolor[rgb]{ .973,  .796,  .678}75.27 & \cellcolor[rgb]{ .973,  .796,  .678}75.09 & \cellcolor[rgb]{ .973,  .796,  .678}75.73 & \cellcolor[rgb]{ .973,  .796,  .678}76.75 & \cellcolor[rgb]{ .973,  .796,  .678}76.89 & \cellcolor[rgb]{ .973,  .796,  .678}77.41 \\
      \bottomrule
      \end{tabular}%
    \label{tab:LoRA-init-IMG-sade}%
  \end{table}%

  \subsection{Ablation Study on the Effect of the Normalization Factor $S$}

  As shown in Tab.\ref{tab:S-100-SADE}-Tab.\ref{tab:IMG-Ours}, our investigation into $S$, the sum of the Dirichlet distribution hyperparameter $\alpha$, shows performance improves with higher $S$ values before a slight decline. A larger $S$ reduces the randomness in the sampled test-label distributions. When $S$ is too small, local variation overshadows global trends, leading to misassigned experts and performance loss. Conversely, excessively large $S$ values minimize local variation, making the meta-distribution collapse to fix distributions. Thus, we find a moderate value is optimal.

  \begin{table*}[h!]
    \centering
    \caption{Ablation of $S$ on CIFAR-100-LT (SADE's Setting)}
    \renewcommand\arraystretch{1.0}
    \resizebox{\linewidth}{!}{
      \begin{tabular}{ccccccccccccccc}
      \toprule
      \multirow{2}[4]{*}{$S$} & \multicolumn{6}{c}{Foward-LT}                 & Uni.  & \multicolumn{6}{c}{Backward-LT}               & \multirow{2}[4]{*}{Mean} \\
  \cmidrule(lr){2-7}\cmidrule(lr){8-8}\cmidrule(lr){9-14}          & 100   & 50    & 25    & 10    & 5     & 2     & 1     & 2     & 5     & 10    & 25    & 50    & 100   &  \\
      \midrule
      100   & 55.33  & 52.29  & 49.22  & 44.07  & 39.31  & 33.31  & 28.27  & 23.53  & 17.60  & 14.50  & 9.64  & 7.32  & 5.40  & 29.21$_{\color{blue}(\pm 17.03)}$  \\
      500   & \cellcolor[rgb]{ .973,  .984,  .996}56.92  & \cellcolor[rgb]{ .961,  .976,  .992}54.51  & \cellcolor[rgb]{ .957,  .973,  .988}51.89  & \cellcolor[rgb]{ .937,  .965,  .984}47.99  & \cellcolor[rgb]{ .925,  .957,  .984}44.34  & \cellcolor[rgb]{ .922,  .953,  .98}39.17  & \cellcolor[rgb]{ .933,  .961,  .984}34.18  & \cellcolor[rgb]{ .937,  .965,  .984}29.31  & \cellcolor[rgb]{ .925,  .957,  .98}25.93  & \cellcolor[rgb]{ .953,  .973,  .988}20.51  & \cellcolor[rgb]{ .949,  .969,  .988}16.85  & \cellcolor[rgb]{ .949,  .969,  .988}15.12  & \cellcolor[rgb]{ .91,  .945,  .976}19.43  & \cellcolor[rgb]{ .937,  .961,  .984}35.09$_{\color{blue}(\pm 14.44)}$  \\
      1000  & \cellcolor[rgb]{ .741,  .843,  .933}69.73  & \cellcolor[rgb]{ .741,  .843,  .933}66.81  & \cellcolor[rgb]{ .741,  .843,  .933}64.17  & \cellcolor[rgb]{ .753,  .851,  .937}59.26  & \cellcolor[rgb]{ .757,  .855,  .937}55.64  & \cellcolor[rgb]{ .78,  .867,  .945}49.87  & \cellcolor[rgb]{ .8,  .878,  .949}45.47  & \cellcolor[rgb]{ .8,  .882,  .949}41.68  & \cellcolor[rgb]{ .792,  .875,  .949}40.39  & \cellcolor[rgb]{ .804,  .882,  .949}37.90  & \cellcolor[rgb]{ .8,  .878,  .949}36.33  & \cellcolor[rgb]{ .82,  .894,  .957}33.27  & \cellcolor[rgb]{ .827,  .894,  .957}31.72  & \cellcolor[rgb]{ .788,  .871,  .945}48.63$_{\color{blue}(\pm 12.70)}$  \\
      5000  & \cellcolor[rgb]{ .765,  .859,  .941}68.51  & \cellcolor[rgb]{ .757,  .855,  .941}65.93  & \cellcolor[rgb]{ .753,  .851,  .937}63.53  & \cellcolor[rgb]{ .749,  .847,  .937}59.52  & \cellcolor[rgb]{ .749,  .851,  .937}56.06  & \cellcolor[rgb]{ .753,  .851,  .937}51.90  & \cellcolor[rgb]{ .741,  .843,  .933}50.37  & \cellcolor[rgb]{ .741,  .843,  .933}46.99  & \cellcolor[rgb]{ .741,  .843,  .933}45.52  & \cellcolor[rgb]{ .741,  .843,  .933}45.05  & \cellcolor[rgb]{ .741,  .843,  .933}43.84  & \cellcolor[rgb]{ .749,  .847,  .937}43.72  & \cellcolor[rgb]{ .749,  .847,  .937}43.64  & \cellcolor[rgb]{ .741,  .843,  .933}52.66$_{\color{blue}(\pm \phantom{0}8.73)}$  \\
      10000 & \cellcolor[rgb]{ .769,  .859,  .941}68.32  & \cellcolor[rgb]{ .753,  .851,  .937}66.21  & \cellcolor[rgb]{ .761,  .855,  .941}63.09  & \cellcolor[rgb]{ .749,  .851,  .937}59.49  & \cellcolor[rgb]{ .745,  .847,  .937}56.35  & \cellcolor[rgb]{ .741,  .843,  .933}52.62  & \cellcolor[rgb]{ .765,  .859,  .941}48.38  & \cellcolor[rgb]{ .749,  .851,  .937}46.40  & \cellcolor[rgb]{ .749,  .847,  .937}45.05  & \cellcolor[rgb]{ .745,  .847,  .937}44.79  & \cellcolor[rgb]{ .745,  .847,  .937}43.71  & \cellcolor[rgb]{ .741,  .843,  .933}44.41  & \cellcolor[rgb]{ .741,  .843,  .933}44.25  & \cellcolor[rgb]{ .745,  .847,  .937}52.54$_{\color{blue}(\pm \phantom{0}8.74)}$  \\
      50000 & \cellcolor[rgb]{ .765,  .859,  .941}68.51  & \cellcolor[rgb]{ .753,  .851,  .937}66.21  & \cellcolor[rgb]{ .757,  .851,  .937}63.46  & \cellcolor[rgb]{ .749,  .847,  .937}59.57  & \cellcolor[rgb]{ .753,  .851,  .937}55.86  & \cellcolor[rgb]{ .749,  .847,  .937}52.14  & \cellcolor[rgb]{ .757,  .851,  .937}49.35  & \cellcolor[rgb]{ .745,  .847,  .937}46.71  & \cellcolor[rgb]{ .753,  .851,  .937}44.44  & \cellcolor[rgb]{ .757,  .855,  .937}43.42  & \cellcolor[rgb]{ .757,  .851,  .937}42.25  & \cellcolor[rgb]{ .757,  .851,  .937}42.60  & \cellcolor[rgb]{ .757,  .855,  .937}42.14  & \cellcolor[rgb]{ .749,  .851,  .937}52.05$_{\color{blue}(\pm \phantom{0}9.30)}$  \\
      100000 & \cellcolor[rgb]{ .78,  .871,  .945}67.57  & \cellcolor[rgb]{ .773,  .863,  .941}65.12  & \cellcolor[rgb]{ .769,  .863,  .941}62.65  & \cellcolor[rgb]{ .749,  .847,  .937}59.65  & \cellcolor[rgb]{ .741,  .843,  .933}56.53  & \cellcolor[rgb]{ .749,  .847,  .937}52.14  & \cellcolor[rgb]{ .753,  .851,  .937}49.42  & \cellcolor[rgb]{ .753,  .851,  .937}46.27  & \cellcolor[rgb]{ .749,  .851,  .937}44.77  & \cellcolor[rgb]{ .749,  .847,  .937}44.32  & \cellcolor[rgb]{ .749,  .851,  .937}42.83  & \cellcolor[rgb]{ .753,  .851,  .937}42.92  & \cellcolor[rgb]{ .753,  .851,  .937}42.61  & \cellcolor[rgb]{ .749,  .851,  .937}52.06$_{\color{blue}(\pm \phantom{0}8.82)}$  \\
      500000 & \cellcolor[rgb]{ .776,  .867,  .945}67.81  & \cellcolor[rgb]{ .757,  .855,  .937}66.05  & \cellcolor[rgb]{ .757,  .851,  .937}63.46  & \cellcolor[rgb]{ .741,  .843,  .933}59.91  & \cellcolor[rgb]{ .741,  .843,  .933}56.53  & \cellcolor[rgb]{ .749,  .847,  .937}52.14  & \cellcolor[rgb]{ .757,  .855,  .937}49.12  & \cellcolor[rgb]{ .761,  .855,  .941}45.48  & \cellcolor[rgb]{ .753,  .851,  .937}44.28  & \cellcolor[rgb]{ .753,  .851,  .937}43.83  & \cellcolor[rgb]{ .749,  .851,  .937}42.93  & \cellcolor[rgb]{ .753,  .851,  .937}43.28  & \cellcolor[rgb]{ .745,  .847,  .937}43.83  & \cellcolor[rgb]{ .749,  .847,  .937}52.20$_{\color{blue}(\pm \phantom{0}9.04)}$  \\
      1000000 & \cellcolor[rgb]{ .784,  .871,  .945}67.43  & \cellcolor[rgb]{ .765,  .859,  .941}65.57  & \cellcolor[rgb]{ .773,  .863,  .941}62.52  & \cellcolor[rgb]{ .761,  .855,  .941}58.75  & \cellcolor[rgb]{ .757,  .855,  .937}55.66  & \cellcolor[rgb]{ .753,  .851,  .937}51.76  & \cellcolor[rgb]{ .769,  .863,  .941}48.03  & \cellcolor[rgb]{ .765,  .859,  .941}45.13  & \cellcolor[rgb]{ .769,  .859,  .941}42.98  & \cellcolor[rgb]{ .765,  .859,  .941}42.31  & \cellcolor[rgb]{ .765,  .859,  .941}41.14  & \cellcolor[rgb]{ .765,  .859,  .941}41.27  & \cellcolor[rgb]{ .761,  .855,  .941}41.34  & \cellcolor[rgb]{ .761,  .855,  .941}51.07$_{\color{blue}(\pm \phantom{0}9.47)}$  \\
      \bottomrule
      \end{tabular}%
    }
    \label{tab:S-100-SADE}%
  \end{table*}%

  % Table generated by Excel2LaTeX from sheet 'S_Ours_CIFAR-100 LT'
  \begin{table*}[h!]
    \centering
    \caption{Ablation of $S$ on CIFAR-100-LT (Ours Setting)}
    \renewcommand\arraystretch{1.0}
    \small
      \begin{tabular}{ccccccccccc}
      \toprule
      \multirow{2}[4]{*}{$S$} & \multicolumn{3}{c}{Foward-LT} & \multicolumn{3}{c}{Uniform} & \multicolumn{3}{c}{Backward} & \multirow{2}[4]{*}{Mean} \\
  \cmidrule(lr){2-4}\cmidrule(lr){5-7}\cmidrule(lr){8-10}          & 1     & 2     & 3     & 1     & 2     & 3     & 1     & 2     & 3     &  \\
      \midrule
      100   & 54.41  & 53.35  & 58.11  & 29.24  & 26.00  & 28.70  & 9.12  & 5.81  & 9.41  & 30.46$_{\color{blue}(\pm 19.42)}$  \\
      500   & \cellcolor[rgb]{ .98,  .988,  .996}55.47  & \cellcolor[rgb]{ .98,  .988,  .996}54.42  & \cellcolor[rgb]{ .976,  .984,  .996}59.24  & \cellcolor[rgb]{ .976,  .984,  .996}31.50  & \cellcolor[rgb]{ .906,  .945,  .976}34.43  & \cellcolor[rgb]{ .914,  .949,  .98}35.69  & \cellcolor[rgb]{ .855,  .914,  .965}28.77  & \cellcolor[rgb]{ .929,  .957,  .984}16.75  & \cellcolor[rgb]{ .894,  .937,  .973}23.64  & \cellcolor[rgb]{ .918,  .949,  .98}37.77$_{\color{blue}(\pm 14.25)}$  \\
      1000  & \cellcolor[rgb]{ .741,  .843,  .933}67.47  & \cellcolor[rgb]{ .741,  .843,  .933}66.47  & \cellcolor[rgb]{ .741,  .843,  .933}69.23  & \cellcolor[rgb]{ .808,  .886,  .953}45.18  & \cellcolor[rgb]{ .812,  .886,  .953}42.86  & \cellcolor[rgb]{ .757,  .855,  .937}48.20  & \cellcolor[rgb]{ .839,  .902,  .961}31.01  & \cellcolor[rgb]{ .863,  .918,  .965}26.50  & \cellcolor[rgb]{ .824,  .894,  .957}32.53  & \cellcolor[rgb]{ .804,  .882,  .949}47.72$_{\color{blue}(\pm 15.62)}$  \\
      5000  & \cellcolor[rgb]{ .757,  .855,  .937}66.80  & \cellcolor[rgb]{ .745,  .847,  .937}66.40  & \cellcolor[rgb]{ .749,  .847,  .937}68.96  & \cellcolor[rgb]{ .765,  .859,  .941}48.78  & \cellcolor[rgb]{ .753,  .851,  .937}47.89  & \cellcolor[rgb]{ .741,  .843,  .933}49.35  & \cellcolor[rgb]{ .753,  .851,  .937}42.64  & \cellcolor[rgb]{ .761,  .855,  .941}41.88  & \cellcolor[rgb]{ .749,  .847,  .937}42.47  & \cellcolor[rgb]{ .741,  .843,  .933}52.80$_{\color{blue}(\pm 10.66)}$  \\
      10000 & \cellcolor[rgb]{ .773,  .863,  .941}65.96  & \cellcolor[rgb]{ .757,  .855,  .937}65.80  & \cellcolor[rgb]{ .769,  .863,  .941}68.10  & \cellcolor[rgb]{ .741,  .843,  .933}50.55  & \cellcolor[rgb]{ .753,  .851,  .937}48.00  & \cellcolor[rgb]{ .773,  .863,  .941}47.12  & \cellcolor[rgb]{ .749,  .851,  .937}42.83  & \cellcolor[rgb]{ .741,  .843,  .933}44.53  & \cellcolor[rgb]{ .753,  .851,  .937}41.84  & \cellcolor[rgb]{ .745,  .847,  .937}52.75$_{\color{blue}(\pm 10.14)}$  \\
      50000 & \cellcolor[rgb]{ .757,  .851,  .937}66.85  & \cellcolor[rgb]{ .765,  .859,  .941}65.33  & \cellcolor[rgb]{ .753,  .851,  .937}68.74  & \cellcolor[rgb]{ .784,  .871,  .945}47.25  & \cellcolor[rgb]{ .753,  .851,  .937}48.00  & \cellcolor[rgb]{ .761,  .855,  .941}47.89  & \cellcolor[rgb]{ .769,  .863,  .941}40.22  & \cellcolor[rgb]{ .765,  .859,  .941}41.28  & \cellcolor[rgb]{ .765,  .859,  .941}40.48  & \cellcolor[rgb]{ .757,  .851,  .937}51.78$_{\color{blue}(\pm 11.15)}$  \\
      100000 & \cellcolor[rgb]{ .788,  .871,  .945}65.23  & \cellcolor[rgb]{ .745,  .847,  .937}66.33  & \cellcolor[rgb]{ .78,  .867,  .945}67.62  & \cellcolor[rgb]{ .78,  .867,  .945}47.37  & \cellcolor[rgb]{ .741,  .843,  .933}48.86  & \cellcolor[rgb]{ .761,  .855,  .941}48.04  & \cellcolor[rgb]{ .741,  .843,  .933}43.85  & \cellcolor[rgb]{ .749,  .847,  .937}43.68  & \cellcolor[rgb]{ .745,  .847,  .937}42.68  & \cellcolor[rgb]{ .745,  .847,  .937}52.63$_{\color{blue}(\pm \phantom{0}9.95)}$  \\
      500000 & \cellcolor[rgb]{ .761,  .855,  .941}66.63  & \cellcolor[rgb]{ .749,  .847,  .937}66.20  & \cellcolor[rgb]{ .769,  .863,  .941}68.10  & \cellcolor[rgb]{ .769,  .863,  .941}48.35  & \cellcolor[rgb]{ .753,  .851,  .937}48.16  & \cellcolor[rgb]{ .761,  .855,  .941}47.81  & \cellcolor[rgb]{ .757,  .851,  .937}42.27  & \cellcolor[rgb]{ .753,  .851,  .937}43.25  & \cellcolor[rgb]{ .741,  .843,  .933}43.10  & \cellcolor[rgb]{ .745,  .847,  .937}52.65$_{\color{blue}(\pm 10.37)}$  \\
      1000000 & \cellcolor[rgb]{ .776,  .863,  .945}65.85  & \cellcolor[rgb]{ .796,  .878,  .949}63.72  & \cellcolor[rgb]{ .788,  .871,  .945}67.29  & \cellcolor[rgb]{ .792,  .875,  .949}46.52  & \cellcolor[rgb]{ .757,  .851,  .937}47.78  & \cellcolor[rgb]{ .788,  .871,  .945}45.82  & \cellcolor[rgb]{ .757,  .855,  .937}41.99  & \cellcolor[rgb]{ .765,  .859,  .941}41.03  & \cellcolor[rgb]{ .776,  .867,  .945}38.70  & \cellcolor[rgb]{ .765,  .859,  .941}50.97$_{\color{blue}(\pm 10.73)}$  \\
      \bottomrule
      \end{tabular}%
    \label{tab:S-100-Ours}%
  \end{table*}%

  % Table generated by Excel2LaTeX from sheet 'S_SADE_ImageNet-LT'
  \begin{table*}[h!]
    \centering
    \caption{Ablation of $S$ on ImageNet-LT (SADE's Setting)}
    \renewcommand\arraystretch{1.0}
    \small
    % \resizebox{\linewidth}{!}{
      \begin{tabular}{ccccccccccccc}
      \toprule
      \multirow{2}[4]{*}{$S$} & \multicolumn{5}{c}{Foward-LT}         & Uni.  & \multicolumn{5}{c}{Backward-LT}       & \multirow{2}[4]{*}{Mean} \\
  \cmidrule(lr){2-6}\cmidrule(lr){7-7}\cmidrule(lr){8-12}          & 50    & 25    & 10    & 5     & 2     & 1     & 2     & 5     & 10    & 25    & 50    &  \\
      \midrule
      100   & \cellcolor[rgb]{ .98,  .831,  .733}70.24  & \cellcolor[rgb]{ .988,  .89,  .824}68.02  & \cellcolor[rgb]{ .992,  .933,  .894}64.64  & \cellcolor[rgb]{ .988,  .91,  .855}61.60  & \cellcolor[rgb]{ .996,  .953,  .925}57.17  & \cellcolor[rgb]{ .992,  .925,  .878}54.06  & \cellcolor[rgb]{ .992,  .933,  .898}49.91  & \cellcolor[rgb]{ .992,  .929,  .886}47.10  & \cellcolor[rgb]{ .992,  .941,  .906}42.88  & \cellcolor[rgb]{ .988,  .902,  .843}43.14  & \cellcolor[rgb]{ .988,  .902,  .847}41.13  & \cellcolor[rgb]{ .992,  .918,  .871}54.54$_{\color{blue}(\pm 10.04)}$  \\
      500   & \cellcolor[rgb]{ .976,  .812,  .702}70.42  & \cellcolor[rgb]{ .992,  .918,  .871}67.87  & \cellcolor[rgb]{ 1,  .988,  .98}64.12  & \cellcolor[rgb]{ 1,  .992,  .988}60.31  & 56.09  & 51.27  & 46.23  & \cellcolor[rgb]{ 1,  .984,  .973}43.07  & 37.40  & 30.67  & 27.25  & 50.43$_{\color{blue}(\pm 14.11)}$  \\
      1000  & \cellcolor[rgb]{ .973,  .796,  .678}70.53  & \cellcolor[rgb]{ .984,  .882,  .812}68.05  & 64.00  & 60.18  & \cellcolor[rgb]{ 1,  .992,  .984}56.33  & \cellcolor[rgb]{ 1,  .98,  .969}52.03  & \cellcolor[rgb]{ 1,  .984,  .976}47.13  & 41.79  & \cellcolor[rgb]{ .984,  .863,  .78}50.14  & \cellcolor[rgb]{ .984,  .859,  .773}48.37  & \cellcolor[rgb]{ .984,  .863,  .784}46.71  & \cellcolor[rgb]{ .988,  .91,  .855}55.02$_{\color{blue}(\pm \phantom{0}9.06)}$  \\
      5000  & \cellcolor[rgb]{ .984,  .867,  .792}69.94  & \cellcolor[rgb]{ .984,  .878,  .808}68.07  & \cellcolor[rgb]{ .984,  .859,  .776}65.35  & \cellcolor[rgb]{ .984,  .859,  .776}62.35  & \cellcolor[rgb]{ .98,  .847,  .761}59.42  & \cellcolor[rgb]{ .98,  .831,  .733}57.40  & \cellcolor[rgb]{ .976,  .82,  .718}56.15  & \cellcolor[rgb]{ .98,  .827,  .729}54.30  & \cellcolor[rgb]{ .976,  .816,  .71}54.31  & \cellcolor[rgb]{ .976,  .808,  .698}54.25  & \cellcolor[rgb]{ .976,  .816,  .706}53.66  & \cellcolor[rgb]{ .976,  .816,  .71}59.56$_{\color{blue}(\pm \phantom{0}5.69)}$  \\
      10000 & \cellcolor[rgb]{ .98,  .851,  .761}70.09  & \cellcolor[rgb]{ .973,  .796,  .678}68.46  & \cellcolor[rgb]{ .973,  .796,  .678}65.93  & \cellcolor[rgb]{ .976,  .804,  .686}63.22  & \cellcolor[rgb]{ .976,  .8,  .682}60.50  & \cellcolor[rgb]{ .976,  .8,  .682}58.61  & \cellcolor[rgb]{ .976,  .8,  .686}57.27  & \cellcolor[rgb]{ .976,  .816,  .706}55.27  & \cellcolor[rgb]{ .976,  .808,  .698}55.04  & \cellcolor[rgb]{ .976,  .8,  .682}55.38  & \cellcolor[rgb]{ .976,  .8,  .686}55.33  & \cellcolor[rgb]{ .973,  .796,  .678}60.46$_{\color{blue}(\pm \phantom{0}5.36)}$  \\
      50000 & \cellcolor[rgb]{ .992,  .929,  .886}69.42  & \cellcolor[rgb]{ .988,  .894,  .831}67.99  & \cellcolor[rgb]{ .98,  .839,  .745}65.54  & \cellcolor[rgb]{ .976,  .808,  .694}63.15  & \cellcolor[rgb]{ .976,  .812,  .698}60.27  & \cellcolor[rgb]{ .976,  .8,  .682}58.55  & \cellcolor[rgb]{ .976,  .8,  .682}57.36  & \cellcolor[rgb]{ .976,  .804,  .69}55.96  & \cellcolor[rgb]{ .976,  .808,  .694}55.26  & \cellcolor[rgb]{ .976,  .804,  .69}55.00  & \cellcolor[rgb]{ .976,  .804,  .686}55.23  & \cellcolor[rgb]{ .976,  .8,  .682}60.34$_{\color{blue}(\pm \phantom{0}5.11)}$  \\
      100000 & \cellcolor[rgb]{ .992,  .941,  .91}69.31  & \cellcolor[rgb]{ .996,  .965,  .945}67.65  & \cellcolor[rgb]{ .98,  .843,  .753}65.50  & \cellcolor[rgb]{ .98,  .827,  .729}62.81  & \cellcolor[rgb]{ .976,  .804,  .69}60.41  & \cellcolor[rgb]{ .976,  .8,  .682}58.56  & \cellcolor[rgb]{ .976,  .8,  .682}57.36  & \cellcolor[rgb]{ .976,  .804,  .686}56.15  & \cellcolor[rgb]{ .976,  .804,  .69}55.47  & \cellcolor[rgb]{ .976,  .804,  .686}55.11  & \cellcolor[rgb]{ .976,  .804,  .686}55.25  & \cellcolor[rgb]{ .976,  .8,  .686}60.33$_{\color{blue}(\pm \phantom{0}4.98)}$  \\
      500000 & \cellcolor[rgb]{ .992,  .941,  .91}69.31  & \cellcolor[rgb]{ .996,  .945,  .91}67.75  & \cellcolor[rgb]{ .98,  .831,  .733}65.61  & \cellcolor[rgb]{ .973,  .796,  .678}63.28  & \cellcolor[rgb]{ .973,  .796,  .678}60.53  & \cellcolor[rgb]{ .976,  .8,  .682}58.59  & \cellcolor[rgb]{ .976,  .8,  .686}57.29  & \cellcolor[rgb]{ .976,  .8,  .686}56.21  & \cellcolor[rgb]{ .976,  .8,  .686}55.66  & \cellcolor[rgb]{ .976,  .8,  .682}55.40  & \cellcolor[rgb]{ .976,  .804,  .686}55.23  & \cellcolor[rgb]{ .976,  .8,  .682}60.44$_{\color{blue}(\pm \phantom{0}4.99)}$  \\
      1000000 & 68.81  & 67.47  & \cellcolor[rgb]{ .984,  .878,  .804}65.18  & \cellcolor[rgb]{ .98,  .835,  .737}62.73  & \cellcolor[rgb]{ .976,  .8,  .686}60.45  & \cellcolor[rgb]{ .973,  .796,  .678}58.63  & \cellcolor[rgb]{ .973,  .796,  .678}57.44  & \cellcolor[rgb]{ .973,  .796,  .678}56.49  & \cellcolor[rgb]{ .973,  .796,  .678}55.98  & \cellcolor[rgb]{ .973,  .796,  .678}55.63  & \cellcolor[rgb]{ .973,  .796,  .678}55.88  & \cellcolor[rgb]{ .976,  .8,  .682}60.43$_{\color{blue}(\pm \phantom{0}4.66)}$  \\
      \bottomrule
      \end{tabular}%
    % }
    \label{tab:IMG-SADE}%
  \end{table*}%

  % Table generated by Excel2LaTeX from sheet 'S_Ours_ImageNet-LT'
  \begin{table*}[h!]
    \centering
    \caption{Ablation of $S$ on ImageNet-LT (Ours Setting) }
    \renewcommand\arraystretch{1.0}
    \small
      \begin{tabular}{ccccccccccc}
      \toprule
      \multirow{2}[4]{*}{$S$} & \multicolumn{3}{c}{Forward-LT} & \multicolumn{3}{c}{Uniform} & \multicolumn{3}{c}{Backward-LT} & \multirow{2}[4]{*}{Mean} \\
  \cmidrule(lr){2-4}\cmidrule(lr){5-7}\cmidrule(lr){8-10}          & 1     & 2     & 3     & 1     & 2     & 3     & 1     & 2     & 3     &  \\
      \midrule
      100   & \cellcolor[rgb]{ .98,  .855,  .769}70.21  & \cellcolor[rgb]{ .984,  .875,  .8}70.26  & \cellcolor[rgb]{ .973,  .796,  .678}70.74  & \cellcolor[rgb]{ .992,  .925,  .878}53.82  & \cellcolor[rgb]{ .996,  .961,  .937}53.36  & \cellcolor[rgb]{ .992,  .941,  .906}53.22  & \cellcolor[rgb]{ .988,  .894,  .831}42.44  & \cellcolor[rgb]{ .988,  .898,  .835}40.73  & \cellcolor[rgb]{ .988,  .89,  .824}42.38  & \cellcolor[rgb]{ .988,  .902,  .843}55.24$_{\color{blue}(\pm 11.73)}$  \\
      500   & \cellcolor[rgb]{ .98,  .843,  .749}70.30  & \cellcolor[rgb]{ .984,  .867,  .784}70.33  & \cellcolor[rgb]{ .984,  .863,  .78}70.24  & 50.90  & 51.89  & 51.10  & 27.32  & 24.87  & 24.96  & 49.10$_{\color{blue}(\pm 18.28)}$  \\
      1000  & \cellcolor[rgb]{ .973,  .796,  .678}70.65  & \cellcolor[rgb]{ .98,  .855,  .769}70.42  & \cellcolor[rgb]{ .98,  .835,  .737}70.45  & \cellcolor[rgb]{ 1,  1,  .996}51.01  & \cellcolor[rgb]{ 1,  .996,  .992}52.14  & 51.04  & \cellcolor[rgb]{ .984,  .867,  .784}46.58  & \cellcolor[rgb]{ .984,  .859,  .776}46.42  & \cellcolor[rgb]{ .984,  .863,  .78}46.42  & \cellcolor[rgb]{ .988,  .886,  .82}56.13$_{\color{blue}(\pm 10.37)}$  \\
      5000  & \cellcolor[rgb]{ .984,  .875,  .804}70.03  & \cellcolor[rgb]{ .988,  .914,  .859}69.94  & \cellcolor[rgb]{ .98,  .831,  .733}70.47  & \cellcolor[rgb]{ .98,  .847,  .757}56.75  & \cellcolor[rgb]{ .98,  .847,  .757}57.53  & \cellcolor[rgb]{ .98,  .847,  .761}56.55  & \cellcolor[rgb]{ .976,  .812,  .702}54.11  & \cellcolor[rgb]{ .976,  .812,  .702}53.72  & \cellcolor[rgb]{ .976,  .808,  .694}55.03  & \cellcolor[rgb]{ .976,  .816,  .706}60.46$_{\color{blue}(\pm \phantom{0}6.95)}$  \\
      10000 & \cellcolor[rgb]{ .984,  .863,  .784}70.13  & \cellcolor[rgb]{ .973,  .796,  .678}70.88  & \cellcolor[rgb]{ .98,  .855,  .769}70.29  & \cellcolor[rgb]{ .976,  .804,  .69}58.38  & \cellcolor[rgb]{ .976,  .812,  .698}58.85  & \cellcolor[rgb]{ .976,  .808,  .694}58.02  & \cellcolor[rgb]{ .976,  .8,  .686}55.59  & \cellcolor[rgb]{ .976,  .804,  .686}55.09  & \cellcolor[rgb]{ .973,  .796,  .678}56.25  & \cellcolor[rgb]{ .973,  .796,  .678}61.50$_{\color{blue}(\pm \phantom{0}6.43)}$  \\
      50000 & \cellcolor[rgb]{ .992,  .941,  .906}69.50  & \cellcolor[rgb]{ .996,  .961,  .933}69.55  & \cellcolor[rgb]{ .992,  .929,  .886}69.70  & \cellcolor[rgb]{ .973,  .796,  .678}58.58  & \cellcolor[rgb]{ .973,  .796,  .678}59.29  & \cellcolor[rgb]{ .973,  .796,  .678}58.37  & \cellcolor[rgb]{ .976,  .804,  .694}54.96  & \cellcolor[rgb]{ .976,  .808,  .694}54.52  & \cellcolor[rgb]{ .976,  .804,  .69}55.31  & \cellcolor[rgb]{ .976,  .804,  .69}61.09$_{\color{blue}(\pm \phantom{0}6.21)}$  \\
      100000 & \cellcolor[rgb]{ .996,  .949,  .918}69.44  & 69.20  & \cellcolor[rgb]{ .996,  .969,  .953}69.37  & \cellcolor[rgb]{ .976,  .804,  .694}58.29  & \cellcolor[rgb]{ .976,  .808,  .698}58.90  & \cellcolor[rgb]{ .976,  .8,  .682}58.36  & \cellcolor[rgb]{ .976,  .804,  .69}55.15  & \cellcolor[rgb]{ .976,  .8,  .682}55.53  & \cellcolor[rgb]{ .976,  .8,  .682}56.14  & \cellcolor[rgb]{ .976,  .804,  .69}61.15$_{\color{blue}(\pm \phantom{0}5.91)}$  \\
      500000 & \cellcolor[rgb]{ .992,  .918,  .871}69.68  & \cellcolor[rgb]{ .988,  .894,  .831}70.09  & \cellcolor[rgb]{ .988,  .902,  .847}69.90  & \cellcolor[rgb]{ .976,  .804,  .686}58.43  & \cellcolor[rgb]{ .976,  .804,  .69}59.10  & \cellcolor[rgb]{ .976,  .8,  .686}58.26  & \cellcolor[rgb]{ .976,  .804,  .694}54.95  & \cellcolor[rgb]{ .976,  .808,  .694}54.61  & \cellcolor[rgb]{ .976,  .8,  .682}55.87  & \cellcolor[rgb]{ .976,  .804,  .686}61.21$_{\color{blue}(\pm \phantom{0}6.31)}$  \\
      1000000 & 69.02  & \cellcolor[rgb]{ 1,  .996,  .996}69.24  & 69.12  & \cellcolor[rgb]{ .976,  .808,  .694}58.26  & \cellcolor[rgb]{ .976,  .82,  .718}58.47  & \cellcolor[rgb]{ .976,  .8,  .686}58.26  & \cellcolor[rgb]{ .973,  .796,  .678}56.02  & \cellcolor[rgb]{ .973,  .796,  .678}55.83  & \cellcolor[rgb]{ .976,  .8,  .682}56.18  & \cellcolor[rgb]{ .976,  .804,  .69}61.16$_{\color{blue}(\pm \phantom{0}5.72)}$  \\
      \bottomrule
      \end{tabular}%
    \label{tab:IMG-Ours}%
  \end{table*}%

  \subsection{The Effect of The Number of Experts}
  \begin{figure}[t]  
    \centering
    \subfigure[Our's Setting]{
    
      \includegraphics[width=0.44\columnwidth]{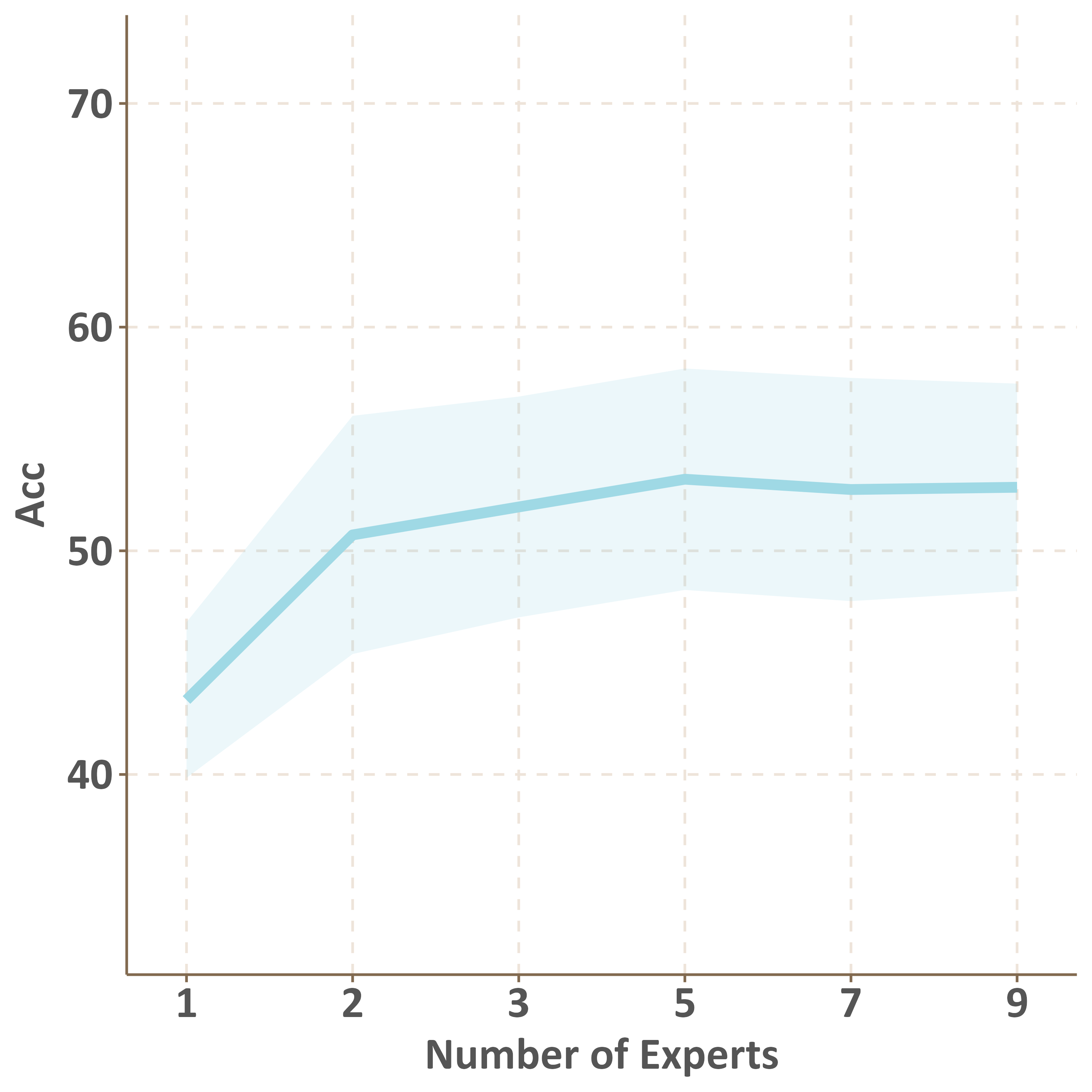} 
    }
    \subfigure[SADE's Setting]{
      \includegraphics[width=0.44\columnwidth]{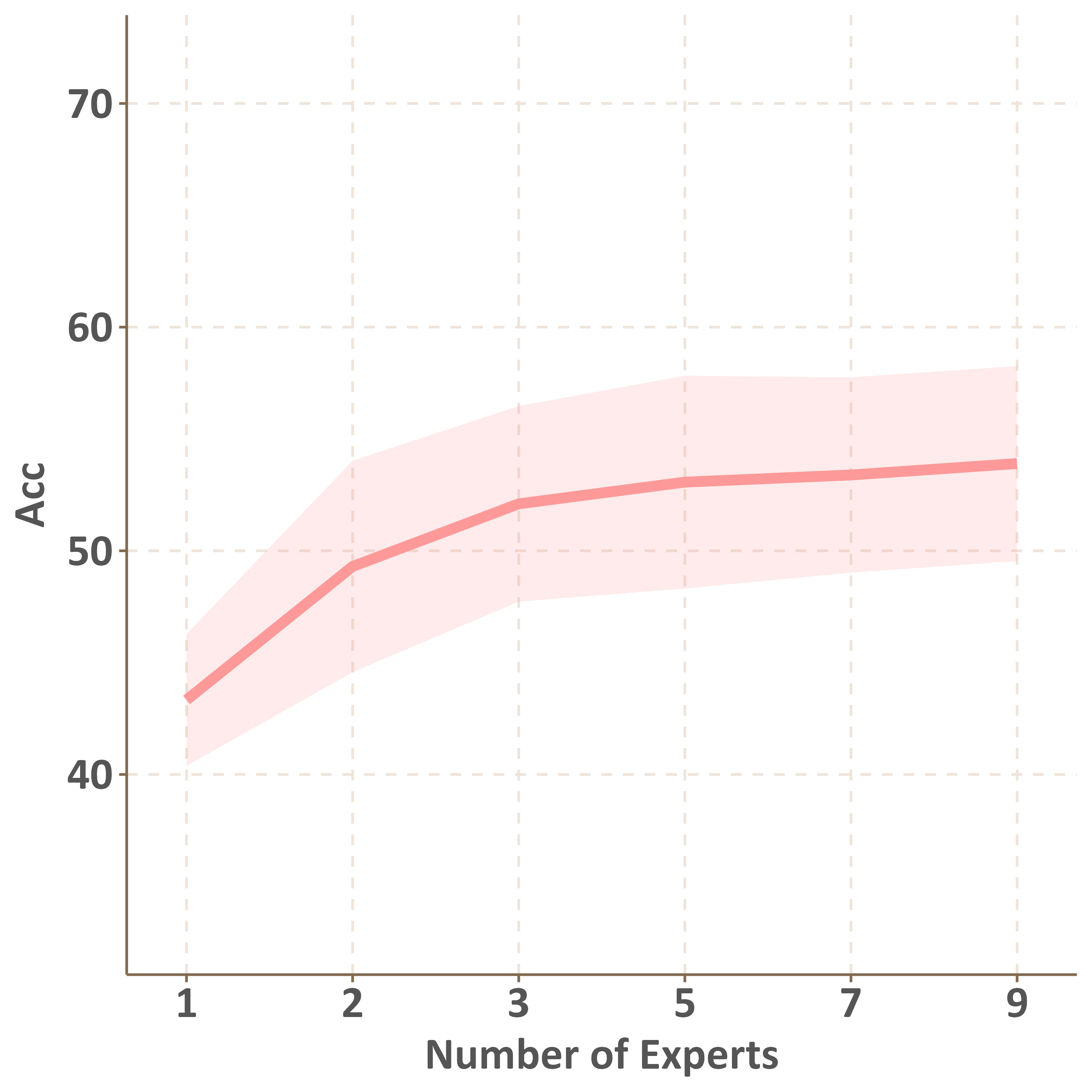} 
    }
    
    \caption{\label{fig:experts} \textbf{The Effect of Using Different Number of Experts.} }
  \end{figure}   

  In the \ours~ method, three experts are initially considered. However, the \ours~ framework can be extended to accommodate varying numbers of experts. We explore the impact of the number of experts on CIFAR 100-LT, specifically examining cases where the number is set to 1, 2, 3, 5, 7, and 9, where each expert corresponds to a distinct Dirichlet distribution component.
  
  The forward and backward components are characterized by an imbalance ratio denoted as $\rho$.
  
  Denote the training class distribution as $P_1, P_2, \ldots, P_C$, where $C$ denotes the number of classes.
  
  Specifically, we will need the following notations for experts handling different levels of imbalance ratio.
  
  The forward component $\alpha^{(f_\rho)}$ with imbalance ratio $\rho$ is set element-wisely as:
  \begin{align*}
      \alpha^{(f_\rho)}_i = \frac{S * \rho^{(i - 1) / (C - 1)} }{\sum_{j=1}^C \rho^{(j - 1) / (C - 1)}}, \ i = 1, 2, \ldots, C,
  \end{align*}
  where $S$ is the normalization factor. 
  
  The backward component $\alpha^{(b_\rho)}$ with imbalance ratio $\rho$ is set element-wisely as:
  \begin{align*}
      \alpha^{(b_\rho)}_i = \frac{S * \rho^{(C - i) / (C - 1)} }{\sum_{j=1}^C \rho^{(j - 1) / (C - 1)}}, \ i = 1, 2, \ldots, C
  \end{align*}
  where $S$ is the normalization factor. 
  
  The uniform component $\alpha^{(u)}$ is set element-wisely as:
  \begin{align*}
      \alpha^{(u)}_i = \frac{S}{C}, i = 1, 2, \ldots, C
  \end{align*}
  where $S$ is the normalization factor. 
  
  In the case of a single expert, we utilize one Dirichlet component $\alpha^{(u)}$. For two experts, we employ two components, $\alpha^{(f_{100})}$ and $\alpha^{(b_{100})}$. For three experts, we select the uniform component $\alpha^{(u)}$ and the forward and backward component with $\rho = 100$. For scenarios with more than three experts, we will symmetrically add the forward components and backward components based on the 3-experts configuration. For five experts, we select the uniform component $\alpha^{(u)}$ and the forward/backward components with $\rho = 100, 50$. For seven experts, we select the uniform component $\alpha^{(u)}$ and the forward/backward component with $\rho = 100, 50, 25$. For nine experts, we select the uniform component $\alpha^{(u)}$ and the forward/backward components with $\rho = 100, 50, 25, 10$. 
  
  The results are reported in Fig.\ref{fig:experts}. This indicates that the average ACC is monotonically increasing as more and more experts are employed.

\end{document}